\newtheorem{theorem}{Theorem}[section]
\newtheorem{corollary}{Corollary}[theorem]
\newtheorem{lemma}[theorem]{Lemma}
\theoremstyle{definition}
\newtheorem{definition}{Definition}[section]
\theoremstyle{remark}
\newtheorem*{remark}{Remark}
\algrenewcommand\algorithmicindent{1.0em}%
\begin{document}

\title{On Solving the Dynamics of Constrained Rigid Multi-Body Systems with Kinematic Loops}

\author{Vassilios~Tsounis,~Ruben~Grandia,~and~Moritz~B\"acher~
\thanks{All authors are with Disney Research Imagineering,\\ Zurich, Switzerland. e-mail: \textit{vassilios.tsounis@disney.com}}}

\maketitle

\begin{abstract}
This technical report provides an in-depth evaluation of both established and state-of-the-art methods for simulating constrained rigid multi-body systems with hard-contact dynamics, using formulations of Nonlinear Complementarity Problems (NCPs). We are particularly interest in examining the simulation of highly coupled mechanical systems with multitudes of closed-loop bilateral kinematic joint constraints in the presence of additional unilateral constraints such as joint limits and frictional contacts with restitutive impacts. This work thus presents an up-to-date literature survey of the relevant fields, as well as an in-depth description of the approaches used for the formulation and solving of the numerical time-integration problem in a maximal coordinate setting. More specifically, our focus lies on a version of the overall problem that decomposes it into the forward dynamics problem followed by a time-integration using the states of the bodies and the constraint reactions rendered by the former. We then proceed to elaborate on the formulations used to model frictional contact dynamics and define a set of solvers that are representative of those currently employed in the majority of the established physics engines. A key aspect of this work is the definition of a benchmarking framework that we propose as a means to both qualitatively and quantitatively evaluate the performance envelopes of the set of solvers on a diverse set of challenging simulation scenarios. We thus present an extensive set of experiments that aim at highlighting the absolute and relative performance of all solvers on particular problems of interest as well as aggravatingly over the complete set defined in the suite. 
\end{abstract}

\IEEEpeerreviewmaketitle

\section{Introduction}
\label{sec:introduction}

\noindent
Simulating complex mechanical systems such as multi-limb robots that can walk and manipulate is a key element in today's workflows for the development and testing of such systems. Indeed the availability of fast and accurate physical simulation has recently proved to be fundamental in applying state-of-the-art methods such as Deep Reinforcement Learning (DRL) for their control~\cite{hwangbo2019,isaacgym2023}. One important challenge the field currently faces is being able to design and control robots with highly coupled mechanical assemblies that exhibit intrinsic closed kinematic loops~\cite{dematteis2025optimal}. Such configurations are often used as means for transferring power and motion with reduced requirements on actuation compared to serial chains. 

Traditionally, however, the latter approach has been employed and the systems have been designed to be as simple as possible, both mechanically and morphologically. Examples include: a) actuating all joints, i.e. having no passive Degrees-of-Freedom (DoF), b) simplifying mechanical power transmission by placing actuators directly at or as near to the joint DoF as possible, and c) avoiding the use of parallel actuation that induces closed-loop kinematics. Besides the obvious benefits in terms of hardware design, in large part, the aforementioned conventions have also served to simplify simulating and controlling walking systems. In the presence of singularities and hyperstaticity, the system dynamics can become exceptionally challenging to solve, whether it be their forward dynamics for simulation, or inverse dynamics for control. As more advanced control methods that circumvent these challenges by employing nonlinear optimization and/or DRL have become widely available, it remains an open problem of how can we make our physics engines capable of accurately and efficiently simulating systems with arbitrary mechanical assemblies. 

Such a capability would have the potential to revolutionize the kinds of robots that could be developed. In the case of DRL in particular, which makes no assumptions on the model of the system, it is only limited in practice by the throughput of the simulation due to the large sample complexity required for convergence~\cite{hwangbo2019, rudin2022, hoeller2024}. Therefore if physically accurate and fast simulation for complex mechanisms can achieve an effective throughput on par with that of articulated systems, it would open up the possibility of applying DRL to any type of robot, and even more types of mechanical systems in general.

Broadly speaking, the requirements placed on such universal simulators could be considered in terms of two categories: 
\begin{itemize}
\item Modeling requirements: these are the types of constraints that can be supported. These include bilateral kinematic constraints such as passive, actuated, binary or unary joints that are common in mechanical assemblies, closed-loop kinematics and redundancy, as well as unilateral constraints such as frictional contacts and joint limits.
\item Performance requirements: physical plausibility\footnote{Physical realism is a difficult assertion to make when simulating rigid-body systems with discrete contacts. A more conservative and fair view would be to assert physical plausibility in terms of respecting kinematic constraints and based on the regularity of the computed constraint forces and torques.} and the relative trade-off between accuracy and speed that a given approach would exhibit. In this context, accuracy is defined strictly in terms of constraint satisfaction and sensitivity of the resulting constraint reactions (i.e. forces and torques) w.r.t the state of the system. Simulation speed is often conflated with sample throughput. In this work we consider speed strictly in the single-instance sense, i.e. in the absence of mass parallelization, for example on a GPU.
\end{itemize}

Satisfying all of the aforementioned modeling requirements, however, bears significant numerical challenges. Firstly, the forward dynamics problem can be ill-conditioned in the presence of large mass ratios and hyperstaticity due to highly coupled constraints such as closed-loop kinematics and underactuation. Secondly, constraint satisfaction at configuration-level (i.e. joints breaking, limits being exceeded and body inter-penetrations) can occur as a result of inaccurately estimating constraint reactions, and can accumulate prohibitively over multiple simulation steps. Moreover, another consequence of the ill-conditioning is the potential irregularity of the yielded constraint reactions, which can become extremely sensitive to small changes in the state of the system and thus change erratically over subsequent time-steps although the state has experienced changes on scales close to the numerical precision. Lastly, the stability of the solution can deteriorate when using larger time-steps, which is often necessary for increasing simulation throughput in applications such as DRL.

In this paper we present an empirical evaluation of existing approaches for solving the forward dynamics of constrained rigid-body systems. We consider cases that include bilateral kinematic constraints, of arbitrary connectivity and under/over actuation, as well as additional unilateral constraints that model restitutive impacts, frictional contacts and joint limits. Specifically, we compare both established and state-of-the-art algorithms for solving the Nonlinear Complementarity Problem (NCP)~\cite{facchinei2003finite} derived from the \textit{dual problem} of forward dynamics, i.e. the resolution of constraint reactions, and posited as a Nonlinear Second-Order Cone Program (NSOCP)~\cite{acary2018comparisons}.

Our evaluation consists of a benchmark suite of systems of varying complexity that range from primitive toy problems to full-scale robotic and \textit{Audio-Animatronic}\textregistered \,\,systems. Moreover, we also evaluate techniques such such as constraint relaxation and proximal optimization in order to deal with the aforementioned types of ill-conditioning. Our comparisons are performed using set of performance metrics based on first principles as well as practical considerations, that summarily speaking, cover physical plausibility, accuracy, speed and robustness in as fair terms as possible.

As done in previous work~\cite{schumacher2021versatile, maloisel2023optimal, maloisel2025versatile}, we employ a \textit{maximal-coordinate} formulation of Constrained Rigid-Body Dynamics (CRBD). In contrast to using \textit{minimal-coordinates}, this results in expressing kinematic constraints explicitly and makes it very straight-forward to model systems with kinematic loops and passive joints. In addition, it facilitates a versatile parameterization of the system kinematics that can support both unary and binary joints over a very extensive set of mechanically feasible joint types. Conversely, a potentially more computationally efficient approach could be presented in combining minimal-coordinates with constraints, i.e. Constrained Articulated-Body Dynamics (CABD)~\cite{featherstone2014rigid, todorov2014analytical, carpentier2021proximal, sathya2024constrained, sathya2025constrained}. However, as the scope of this study lies mainly in comparing algorithms specialized for dynamics problems, using CRBD instead of CABD serves merely as a benchmark to better highlight differences between solvers. We assert that if a method can be made to work effectively in the more challenging case of CRBD, then employing CABD can only serve to further simplify the problem by reducing its overall dimensionality.\\

\paragraph*{\textbf{Contributions}}
\label{par:contributions}
This work thus makes the following contributions within the context of robotics and control:
\begin{enumerate}
\item We provide an extensive survey of the current state-of-the-art in physical simulation of rigid-body systems and evaluate the reproducibility of highly-cited works.
\item We define a suite of benchmark problems and performance metrics that can be used to systematically evaluate the relative performance of forward dynamics solvers. The suite consists of a set of problems that are relevant and meaningful within the context of robotics but have applicability to even broader classes of systems.
\item We present an extensive evaluation of relevant algorithms on the aforementioned benchmark suite and describe practical recommendations on using them to realize physics simulators for constrained systems.
\end{enumerate}

\section{Constrained Rigid-Body Mechanics}
\label{sec:constrained-rigidbody-mechanics}

This section provides an overview of the mechanics of constrained rigid-body systems, and outlines the problems that arise within the context of physical simulation. The process of physical simulation itself, is most often viewed as solving a single high-level problem, comprised of three constituent sub-problems, namely: Event Detection (ED), Forward Dynamics (FD), and Time Integration (TI). Indeed, this decomposition is ubiquitous across all known physics engines~\cite{erez2015simulation,siggraph2022contact}. The first essentially corresponds to Collision Detection (CD), but can be viewed in the general sense of detecting events where the set of active constraints changes. In this way we can also incorporate detection of active joint limits, and possibly other elements that are evaluated at configuration-level. However, this work does not cover CD methods themselves. We recommend interested readers to consider~\cite{ericson2004real, pan2012fcl, coumans2022bullet3, coal2024hithub} for more details on the topic of CD. The FD and TI sub-problems are described in the next section.

\subsection{Forward Dynamics}
\label{sec:constrained-rigidbody-mechanics:dynamics}
The forward dynamics problem is one of evaluating how the motion of the system will evolve at a certain time $t \in \mathbb{R}$ given its state. Broadly speaking, assuming the set of constraints is immutable, and for a given choice of generalized coordinates $\mathbf{s} := \mathbf{s}(t) \in \mathbb{R}^{n_s}$ and generalized velocities $\mathbf{u} := \mathbf{u}(t) \in \mathbb{R}^{n_u}$ with $n_s \geq n_u$, the FD problem can be stated generically as:\\

\textbf{Find} $\,\, \dot{\mathbf{u}} \in \mathbb{R}^{n_u}\,\,,\dot{\mathbf{s}} \in \mathbb{R}^{n_s} \,\,$ s.t. :
\begin{subequations}
\begin{equation}
\dot{\mathbf{s}} = \mathbf{H}(\mathbf{s}) \, \mathbf{u}
\end{equation}
\begin{equation}
\mathbf{M}(\mathbf{s}) \, \dot{\mathbf{u}}
= \mathbf{h}(\mathbf{s},\mathbf{u}) + \mathbf{J}(\mathbf{s})^{T} \, \boldsymbol{\lambda}
\end{equation}
\begin{equation}
\mathbf{f}(\mathbf{s},\mathbf{u}) = 0
\end{equation}
\begin{equation}
\mathcal{K}^{*} \ni \mathbf{g}(\mathbf{s},\mathbf{u}) \perp \boldsymbol{\lambda} \in \mathcal{K}
\,\,\,\,.
\end{equation}
\label{eq:prelim:constrained-system-ncp}
\end{subequations}
$\mathbf{H}(\mathbf{s}) \in \mathbb{R}^{n_s \times n_u}$ is a matrix mapping generalized velocities to the time-derivative of the generalized coordinates. Such a mapping is often necessary for floating-base systems, where $\dot{\mathbf{s}} \neq \mathbf{u}$ due to the parameterization of rotations, e.g. quaternions, Euler angles, etc. $\mathbf{M}(\mathbf{s}) \in \mathbb{S}_{++}^{n_u}$ is the generalized mass matrix, where $\mathbb{S}_{++}^{n_u}$ denotes the set of $n \times n$ symmetric positive-definite (SPD)\footnote{When $\mathbf{s}$ is defined with either minimal (independent) coordinates or maximal coordinates then $\mathbf{M}(\mathbf{s})$ is always SPD. However, if the coordinates of $\mathbf{s}$ are dependent or redundant, then it is possible for $\mathbf{M}(\mathbf{s})$ to be singular, and thus positive semi-definite. See \cite{udwadia2006explicit} for a detailed analysis on the matter.} matrices. $\mathbf{h}(\mathbf{s}, \mathbf{u}) \in \mathbb{R}^{n_u}$ is the vector of non-linear generalized force terms, which, can include gravity, Coriolis, centrifugal, actuation and external effects. $\mathbf{f} : \mathbb{R}^{n_s} \times \mathbb{R}^{n_u} \rightarrow \mathbb{R}^{n_e}$ and $\mathbf{g} : \mathbb{R}^{n_s} \times \mathbb{R}^{n_u} \rightarrow \mathbb{R}^{n_i}$ are respectively the set of equality and inequality constraints in implicit form. $\mathbf{J}(\mathbf{s}) \in \mathbb{R}^{n_d \times n_u}$ is the constraint Jacobian\footnote{With the given form for $\mathbf{f}(\mathbf{s},\mathbf{u})$ and $\mathbf{g}(\mathbf{s},\mathbf{u})$, $\mathbf{J}(\mathbf{s}) = \begin{bmatrix} \frac{\partial \mathbf{f}}{\partial \mathbf{u}} & \frac{\partial \mathbf{g}}{\partial \mathbf{u}} \end{bmatrix}$. Alternatively, if the constraints were formed at configuration level, i.e. were in the form of $\mathbf{f}(\mathbf{s})$ and $\mathbf{g}(\mathbf{s})$, and $\mathbf{s}$ did not employ reduced parameterizations of rotation, then the Jacobian would be defined as $\mathbf{J}(\mathbf{s}) = \begin{bmatrix} \frac{\partial \mathbf{f}}{\partial \mathbf{s}} & \frac{\partial \mathbf{g}}{\partial \mathbf{s}} \end{bmatrix}$.} matrix, where $n_d = n_e + n_i$. $\boldsymbol{\lambda} \in \mathcal{K} \subseteq \mathbb{R}^{n_d}$ is the vector of Lagrange multipliers corresponding to constraint forces, where $\mathcal{K}$ is a closed and convex cone and $\mathcal{K}^{*}$ denotes its dual.

Fundamentally, (\ref{eq:prelim:constrained-system-ncp}a-b) form a system of Differential Algebraic Equations (DAE). However, since $\boldsymbol{\lambda} \in \mathcal{K}$ is bound by set-valued force laws~\cite{glocker2001setvalued} (e.g. unilateral contact and Coulomb friction), the system becomes a Differential Inclusion (DI). Thus, (\ref{eq:prelim:constrained-system-ncp}) overall is a Variational Inequality (VI)~\cite{facchinei2003finite}, that due to the feasible set being a \textit{cone}, forms a Complementarity Problem (CP). Lastly, due to the structure of the complementarity constraints (\ref{eq:prelim:constrained-system-ncp}d), which in general may be nonlinear, the FD problem (\ref{eq:prelim:constrained-system-ncp}) is therefore an NCP. 

\subsection{Impulsive Dynamics}
\label{sec:constrained-rigidbody-mechanics:impulsive}
One of the primary difficulties in modeling the contact dynamics of rigid bodies is that impacts and friction induce instantaneous changes to $\mathbf{u}(t)$ and $\dot{\mathbf{u}}(t)$ as well as to the set of active constraints. The discontinuity of the former thus impacts their integrability, as they cease to be continuous functions in $t$. In order to deal with this non-smoothness, we can employ techniques from \textit{non-smooth dynamics} \cite{glocker2001setvalued, stewart2011dynamics}. Specifically, \textit{differential measures}~\cite{moreau1988unilateral} provide a means to decompose the velocity differential into smooth and impulsive terms as
\begin{equation}
{d}\mathbf{u}
= 
\dot{\mathbf{u}}(t)\,{dt} 
+ 
(\mathbf{u}^{+} - \mathbf{u}^{-}) \, {d\eta}
\,\, ,
\label{eq:prelim:impulsive-velocity}
\end{equation}
where $\dot{\mathbf{u}}(t)$ is the smooth acceleration and $\mathbf{u}^{-}, \mathbf{u}^{+}$ are the pre- and post-event impulsive velocities, respectively. The key ingredient in this model is the impulsive differential ${d\eta}$. Assuming $n_p \in \mathbb{N}_{\geq 0}$ impulsive events occur at times $t_i \in [t_0, t] \,\,\forall \, i \in \{1, \dots, n_p\}$, the set of discrete impulsive events can be denoted as $I_p = \bigcup_{i=1}^{n_p} \{ t_{i} \}$. Then, ${d\eta}$ represents a finite sum of so-called \textit{Dirac point measures} over $t \in \mathbb{R}$:
\begin{equation}
{d\eta} := \sum_{t_i \in I_p} d\delta_{t_i}
\,\,,\,
\,\,
\int_{t_0}^{t} d\delta_{t_i} = 
\begin{cases} 
1 \,,\,\, t_i \in [t_0,\,t]  \\
0 \,,\,\, t_i \not\in [t_0,\,t]
\end{cases}
\,\,,
\end{equation}
thus allowing the time-integration of the impulsive velocity as 
\begin{equation}
\int_{t_0}^{t} \, \left(\mathbf{u}^{+}(\tau) - \mathbf{u}^{-}(\tau)\right) \, {d\eta} 
= 
\sum_{t_i \in I_p} \left( \mathbf{u}^{+}(t_i) - \mathbf{u}^{-}(t_i) \right)
\,\,.
\end{equation}
Equivalently, the differential measure over generalized forces
\begin{equation}
{d\mathbf{p}} = \mathbf{w}_{s}\,{dt} + \mathbf{w}_{ns} \, {d\eta}
\,\, ,
\label{eq:impulsive-forces}
\end{equation}
is often referred to as \textit{percussion}, 
where $\mathbf{w}_{s}, \mathbf{w}_{ns}$ correspond to the smooth and non-smooth (i.e. impulsive) generalized forces, respectively. Thus, (\ref{eq:prelim:impulsive-velocity}) and (\ref{eq:impulsive-forces}) enable the decomposition of the system dynamics into respective smooth and impulsive parts
\begin{subequations}
\begin{equation}
\mathbf{M}(\mathbf{s}) \, \dot{\mathbf{u}} \, {dt} 
=
\mathbf{h}(\mathbf{s}, \mathbf{u}) \, {dt} 
+
\mathbf{J}_{s}(\mathbf{s})^{T} \, \boldsymbol{\lambda}_{s} \, {dt} 
\,\,\,,
\label{eq:prelim:smooth-dynamics}
\end{equation}
\begin{equation}
\mathbf{M}(\mathbf{s}) \, 
\left( \mathbf{u}^{+} - \mathbf{u}^{-} \right)
\, {d\eta} 
=
\mathbf{J}_{ns}(\mathbf{s})^{T} \, \boldsymbol{\lambda}_{ns} \, {d\eta}
\,\,\,,
\label{eq:prelim:impuslive-dynamics}
\end{equation}
\label{eq:prelim:precussion-dynamics}
\end{subequations}
where $\mathbf{w}_{s} = \mathbf{J}_{s}(\mathbf{s})^{T} \, \boldsymbol{\lambda}_{s}$ and $\mathbf{w}_{ns} = \mathbf{J}_{ns}(\mathbf{s})^{T} \, \boldsymbol{\lambda}_{ns}$ are necessarily distinct as the set of active constraints may differ between the continuous dynamics and those at the instants of discontinuity. Moreover, while $\boldsymbol{\lambda}_{s}$ represent forces, $\boldsymbol{\lambda}_{ns}$ now represent \textit{impulses}. Herein we refer to both as \textit{constraint reactions}.

Therefore, whenever an impulsive event occurs at some time $t_i$, the FD problem (\ref{eq:prelim:constrained-system-ncp}) must be rephrased to employ (\ref{eq:prelim:impuslive-dynamics}) in-place of (\ref{eq:prelim:smooth-dynamics}). This leads to the impulsive FD problem\\

\textbf{Find} $\,\, \mathbf{u}^{+} \in \mathbb{R}^{n_u}\,\,,\dot{\mathbf{s}} \in \mathbb{R}^{n_s} \,\,$ s.t. :
\begin{subequations}
\begin{equation}
\dot{\mathbf{s}} = \mathbf{H}(\mathbf{s}) \, \mathbf{u}^{+}
\end{equation}
\begin{equation}
\mathbf{M}(\mathbf{s}) \, \left( \mathbf{u}^{+} - \mathbf{u}^{-} \right)
= \mathbf{J}(\mathbf{s})^{T} \, \boldsymbol{\lambda}
\end{equation}
\begin{equation}
\mathbf{f}(\mathbf{s},\mathbf{u}^{-},\mathbf{u}^{+}) = 0
\end{equation}
\begin{equation}
\mathcal{K}^{*} \ni \mathbf{g}(\mathbf{s},\mathbf{u}^{-},\mathbf{u}^{+}) \perp \boldsymbol{\lambda} \in \mathcal{K}
\,\,\,\,,
\end{equation}
\label{eq:prelim:impulsive-constrained-system-ncp}
\end{subequations}
where all time-dependent configuration-level quantities are evaluated at time $t_i$, e.g. $\mathbf{s} := \mathbf{s}(t_i)$, while the pre-event generalized velocity $\mathbf{u}^{-} = \mathbf{u}^{-}(t_i)$ is corresponds to the lower limit value $\lim_{\tau  \rightarrow 0} \mathbf{u}(t_i - \tau)$ resulting from the time-integration of the smooth dynamics (\ref{eq:prelim:constrained-system-ncp}). Crucially, (\ref{eq:prelim:impulsive-constrained-system-ncp}) can be understood as a boundary-value problem w.r.t time, as it is only admissible at the discrete atomic time-points $t_i$. Overall, the impulsive FD problem (\ref{eq:prelim:impulsive-constrained-system-ncp}), like its smooth counterpart (\ref{eq:prelim:constrained-system-ncp}), is also a NCP, but one that now yields the system's generalized velocities directly, instead of the generalized accelerations.
\begin{figure}[!b]
\centering
\includegraphics[width=1.0\linewidth]{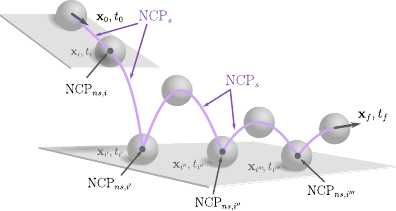}
\caption{A visual depiction of the non-smooth initial-value problem (\ref{eq:prelim:initial-value-problem}) using the example of a rigid sphere rolling-off of one plane and then bouncing on another. When the sphere drops the first plane, the impact would result in a contact that is initially open, then closes momentarily and opens once again. Although the position and orientation of the sphere would remain continuous functions of time, the linear and angular velocities would change instantaneously in order to satisfy the contact constraints. This means that the velocities are in effect not continuous, and nor are the positions and orientations smooth functions of time. Therefore, the sphere's trajectory consists of the piece-wise smooth segments (violet lines) and atomic points $\bigcup_{i}{t_{i}}$ at the moments of impact (gray nodes). The former is determined by the smooth dynamics represented by the acceleration-level $\text{NCP}_{s,t}$ of (\ref{eq:prelim:constrained-system-ncp}), while the latter defined at each instance $t_{i}$ is determined by the impulsive $\text{NCP}_{ns,t_{i}}$ of (\ref{eq:prelim:impulsive-constrained-system-ncp}). Direct time-integration is intractable analytically thus necessitating the application of time-discretization to approximate the integral in (\ref{eq:prelim:time-integration-problem}).}
\label{fig:ball-drop}
\end{figure}

\newpage
\subsection{Time Integration}
\label{sec:constrained-rigidbody-mechanics:simulation}

Given means to evaluate the forward dynamics of the constrained rigid-body system, pointwise for any time $t$, simulation amounts to time-integrating the former over some finite time interval $I = [t_0, t_f]$. This corresponds to solving an \textit{Initial Value Problem} (IVP) in the form of
\begin{subequations}
\begin{equation*}
\textbf{For} \,\, t \in I := [t_0, t_f], 
\,\, \text{and} \,\, I_p := \bigcup_{i=1}^{n_p} \{ t_{i} \} \subset I \\[4pt]
\end{equation*}
\begin{equation}
\textbf{With} \,\,\, 
\mathbf{x}(t) :=
\begin{bmatrix}
    \mathbf{s}(t)\\
    \mathbf{u}(t)
\end{bmatrix} 
\\
\end{equation}
\begin{equation*}
\textbf{Find} \,\, \mathbf{x}(t) \,, \\
\,\,
\textrm{s.t.} 
\end{equation*}
\begin{equation}
\mathbf{x}(t_0) = \mathbf{x}_{0} \\[4pt]
\end{equation}
\begin{equation}
\mathbf{x}(t)
= 
\mathbf{x}_{0} 
+ 
\begin{bmatrix}
\displaystyle \int_{\,t_0}^{\,t} \, \mathbf{H}(\mathbf{s}(\tau)) \, \mathbf{u}(\tau) \, {d}{\tau} \\
\displaystyle \sum_{t_i \in I_p}{\left(\mathbf{u}^{+}(t_i) - \mathbf{u}^{-}(t_i)\right)}
+
\int_{\,t_0}^{\,t} \, \dot{\mathbf{u}}(t) \, {d}{\tau}    
\end{bmatrix}\\[10pt]
\label{eq:prelim:time-integration-problem}
\end{equation}
\begin{empheq}[left={\text{NCP}_{ns,t_i} \, (\ref{eq:prelim:impulsive-constrained-system-ncp}) }\empheqlbrace\,\,\,\,]{align}
\mathbf{M} \, (\mathbf{u}^{+}(t_i) - \mathbf{u}^{-}(t_i)) 
= \mathbf{J}_{ns}^{T} \, \boldsymbol{\lambda}_{ns}\\[4pt]
\mathbf{f}_{ns}(\mathbf{s}(t_i), \mathbf{u}^{+}(t_i)) = 0\\[4pt]
\mathcal{K}_{ns}^{*} \ni \mathbf{g}_{ns}(\mathbf{s}(t_i), \mathbf{u}^{+}(t_i)) \perp \boldsymbol{\lambda}_{ns} \in \mathcal{K}_{ns}
\end{empheq}
\begin{empheq}[left={\text{NCP}_{s,t} \, (\ref{eq:prelim:constrained-system-ncp})}\empheqlbrace\,\,\,\,]{align}
\mathbf{M} \, \dot{\mathbf{u}}(t) 
= \mathbf{h} + \mathbf{J}_{s}^{T} \, \boldsymbol{\lambda}_{s}\\[4pt]
\mathbf{f}_{s}(\mathbf{s}(t), \mathbf{u}(t), \dot{\mathbf{u}}(t)) = 0\\[4pt]
\mathcal{K}_{s}^{*} \ni \mathbf{g}_{s}(\mathbf{s}(t), \mathbf{u}(t), \dot{\mathbf{u}}(t)) \perp \boldsymbol{\lambda}_{s} \in \mathcal{K}_{s}
\end{empheq}
\label{eq:prelim:initial-value-problem}
\end{subequations}
Fig.~\ref{fig:ball-drop} provides a visual reference for understanding the structure of the IVP (\ref{eq:prelim:initial-value-problem}). Essentially, it involves three distinct sub-problems. The first and second, we recognize to be the aforedescribed FD problems $\text{NCP}_{ns,t_i}$ (\ref{eq:prelim:impulsive-constrained-system-ncp}) and $\text{NCP}_{s,t}$ (\ref{eq:prelim:constrained-system-ncp}), which yield $\dot{\mathbf{u}}(\tau)$ and $\mathbf{u}^{+}(t_i)$ pointwise for any $t$ and $t_i$, respectively, and the third corresponds to evaluating the time-integrals in (\ref{eq:prelim:time-integration-problem}). However, as the $\text{NCP}_{ns,t_i}$ and $\text{NCP}_{s,t}$ can only be evaluated pointwise, they cannot be solved in closed-form to yield $\dot{\mathbf{u}}(\tau)$ and $\mathbf{u}(t)$ as integrable functions. Therefore, the IVP (\ref{eq:prelim:initial-value-problem}) is analytically intractable in general, and solving it necessitates the use of time-discretization and the application of numerical techniques.

\section{Problem Formulation}
\label{sec:problem-formulation}
\noindent
This section outlines the derivation of a computationally tractable approximation of the IVP (\ref{eq:prelim:initial-value-problem}) summarized previously. Specifically, we describe the set of approximations, w.r.t time-discretization, that are required to transcribe the integration and NCP sub-problems, and introduce a reduced form of the latter that is amenable to numerical optimization techniques.

\subsection{Time Discretization}
\label{sec:formulation:time-discretization}
\noindent
In order to render the IVP (\ref{eq:prelim:initial-value-problem}) tractable, appropriate time-discretization and time-integration schemes are required. Among the most widely adopted are the so-called \textit{time-stepping methods}~\cite{moreau1988unilateral,stewart1996implicit}. These progress the state over series of pre-determined time-intervals and evaluate the system dynamics only on impulse-velocity level. In doing so, every simulation step is able to handle all types of impulsive events in a unified fashion. They are also able to resolve phenomena such as Painlev\'e's paradox~\cite{genot1999} that occur when evaluating the dynamics at force-acceleration level.

Assuming a fixed time-step ${\Delta}{t} \in \mathbb{R}_{\geq 0}$, and considering all impulsive events occurring within the closed interval $[t_0, t_0 + {\Delta}{t}]$ to happen simultaneously, the smooth and impulsive dynamics of (\ref{eq:prelim:precussion-dynamics}) can be combined as
\begin{equation}
\mathbf{M} \, \left( \mathbf{u}^{+} - \mathbf{u}^{-} \right) =  
\Delta{t} \, \mathbf{h}
+ \mathbf{J}^{T} \, \boldsymbol{\lambda}
\,\, ,
\label{eq:formulation:impulsive-constrained-dynamics}
\end{equation}
and the IVP over a single-step interval $[t_0, t_0 + {\Delta}{t}]$ becomes
\begin{subequations}
\begin{equation*}
\textbf{For} \,\, t \in I := [t_0, t_0 + {\Delta}{t}]\\
\end{equation*}
\begin{equation*}
\textbf{Find} \,\, \mathbf{x}(t_0 + {\Delta}{t}) \,, \\
\,\,
\textrm{s.t.} 
\end{equation*}
\begin{equation}
\mathbf{x}(t_0) = \mathbf{x}_{0} \\[4pt]
\end{equation}
\begin{equation}
\mathbf{x}(t_0 + {\Delta}{t})
= 
\begin{bmatrix}
\mathbf{s}(t_0) \, \boxplus \,\mathbf{H}(\mathbf{s}(t_0)) \, \mathbf{u}^{+}(t_0 + {\Delta}{t})   \\
\mathbf{u}^{+}(t_0 + {\Delta}{t})    
\end{bmatrix}\\[10pt]
\label{eq:formulation:time-stepping-euler}
\end{equation}
\begin{empheq}[left={\text{NCP}_{{\Delta}{t}}} \empheqlbrace\,\,\,\,]{align}
\label{eq:formulation:time-stepping-ncp}
\mathbf{M} \, (\mathbf{u}^{+} - \mathbf{u}^{-}) 
= \Delta{t} \, \mathbf{h} + \mathbf{J}^{T} \, \boldsymbol{\lambda}\\[4pt]
\mathbf{f}(\mathbf{s}, \mathbf{u}^{+}) = 0\\[4pt]
\mathcal{K}^{*} \ni \mathbf{g}(\mathbf{s}, \mathbf{u}^{+}) \perp \boldsymbol{\lambda} \in \mathcal{K}
\,\,,
\end{empheq}
\label{eq:formulation:single-step-initial-value-problem}
\end{subequations}
where the $\boxplus$ denotes the addition operator specific to the parameterization of rotations used to define $\mathbf{s}$. In this work we make use of the \textit{exponential map} to compute integrals of orientation as the concatenation of rotation operators~\cite{bloesch2016primer}.

Thus the single-step IVP (\ref{eq:formulation:single-step-initial-value-problem}) formulated over the interval $[t_0, t_0 + {\Delta}{t}]$, amounts to solving a single NCP to yield the generalized velocities $\mathbf{u}(t_0 + {\Delta}{t})$ and then forward integrating to render the next-step generalized coordinates $\mathbf{s}(t_0 + {\Delta}{t})$. This integration scheme is often referred to as the \textit{semi-implicit Euler} method, as velocities are computed implicitly via the NCP, while generalized positions are integrated explicitly. 

Although in this work we only consider the first-order forward integration of the generalized coordinates, higher-order integration methods such as Runge-Kutta can be conceived, that would solve the NCP at multiple internal steps to render more accurate, and possibly more stable state integrators. Furthermore, another integration method commonly used in non-smooth mechanics formulations is the mid-point time-stepping scheme of J.J. Moreau~\cite{moreau1988unilateral}. It is similar to the semi-implicit Euler method, except that all intermediate quantities, including event detection, are evaluated from the state at the middle of the time-step computed using backward Euler.

\subsection{The Dual Problem}
\label{sec:formulation:dual-problem}
\noindent
A common strategy for solving the NCP sub-problem (\ref{eq:formulation:time-stepping-ncp}), is by taking an alternative view that rewrites the problem in terms of the constraint reactions $\boldsymbol{\lambda}$. Generally speaking, this alternative formulation can be referred to as the \textit{dual problem of forward dynamics}. Doing so, the FD problem can be solved by simple back-substitution to the respective equations of motion (\ref{eq:formulation:impulsive-constrained-dynamics}), to yield $\mathbf{u}^{+}$. Although a multitude of approaches exist for deriving the dual problem, we consider the most principled to be that which employs \textit{Gauss' principle of duality} and the \textit{extended principle of least constraint}~\cite{udwadia1992,glocker2001setvalued}. At this stage, we will only outline the formulation of the dual FD problem, while its construction will be detailed in Sec.~\ref{sec:construction}, and a complete derivation is provided in Appendix.~\ref{sec:apndx:ncp-derivation}. 

Projecting (\ref{eq:formulation:time-stepping-ncp}) to the \textit{space of the constraints}, the dynamics of the system are summarized by the following quantities:
\begin{enumerate}
\item the inverse apparent inertia, a.k.a the \textit{Delassus} matrix
\begin{equation}
\mathbf{D} := \mathbf{J}^{T} \, \mathbf{M}^{-1} \, \mathbf{J}
\label{eq:delassus-matrix}
\end{equation}
\item the unconstrained, a.k.a free, velocity
\begin{equation}
\mathbf{v}_{f} 
:= \mathbf{J} \, \left( \mathbf{u}^{-} + \Delta{t} \,\mathbf{M}^{-1} \mathbf{h}\right) + \mathbf{v}^{*}
\label{eq:free-velocity}
\end{equation}
\item post-event constraint velocity 
\begin{equation}
\mathbf{v}^{+}(\boldsymbol{\lambda}) := \mathbf{D} \, \boldsymbol{\lambda} + \mathbf{v}_{f}
\label{eq:post-event-constraint-velocity}
\end{equation}
\item augmented post-event constraint velocity
\begin{equation}
\hat{\mathbf{v}}(\boldsymbol{\lambda}) := \mathbf{v}^{+}(\boldsymbol{\lambda}) + \boldsymbol{\Gamma}(\mathbf{v}^{+}(\boldsymbol{\lambda}))
\label{eq:augmented-constraint-velocity}
\end{equation}
\end{enumerate}
where $\mathbf{v}^{*}$ is an auxiliary bias term that allows us to introduce additional elements such as a model of impacts and constraint stabilization, and $\boldsymbol{\Gamma}(\mathbf{v}^{+}(\boldsymbol{\lambda}))$ is the non-linear De Saxc\'e correction~\cite{desaxce1998bipotential,acary2011formulation}. The latter is an operator over the post-event constraint velocity and is the principle source of non-linearity in the problem, as detailed in Sec.~\ref{sec:models:contacts}. Thus, the NCP formulated for the dual FD is stated, rather concisely, as
\begin{flalign}
& \text{NCP}(\mathbf{D}, \mathbf{v}_{f}, \mathcal{K}): & \nonumber\\[4pt]
& \,\, \textbf{Find} \,\, \boldsymbol{\lambda}, \,\,
\textrm{s.t.} \quad
\mathcal{K}^{*} \ni
\hat{\mathbf{v}}(\boldsymbol{\lambda})
\, \perp \,
\boldsymbol{\lambda} \in \mathcal{K}
\,\,.
\label{eq:formulation:dual-forward-dynamics-ncp}
\end{flalign}

Furthermore, (\ref{eq:post-event-constraint-velocity})-(\ref{eq:formulation:dual-forward-dynamics-ncp}) correspond to the KKT conditions of a Nonlinear Second-Order Cone Program (NSOCP). This perspective motivates the application of relevant techniques which can be used to solve the NCP via the transcription 
\begin{flalign}
& \text{NSOCP}(\mathbf{D}, \mathbf{v}_{f}, \mathcal{K}): & \nonumber\\
& \,\, \textbf{Find} \,\, \boldsymbol{\lambda} =
\displaystyle
\operatorname*{argmin}_{\mathbf{x} \in \mathcal{K}} \,\,
\frac{1}{2} \,\mathbf{x}^{T} \, \mathbf{D} \, \mathbf{x}
+ \mathbf{x}^{T} 
\left(
\mathbf{v}_{f} + \boldsymbol{\Gamma}(\mathbf{v}^{+}(\mathbf{x}))
\right)
\label{eq:formulation:dual-forward-dynamics-nsocp}
\end{flalign}
Carpentier et al provided a coarse proof of the equivalence between the NCP and the NSOCP in~\cite{carpentier2024unified}, and it has also been outlined in other works such as~\cite{acary2018comparisons}. Given this context, we herein identify the constraint reactions $\boldsymbol{\lambda}$ and the constraint velocities $\mathbf{v}^{+}$ as the primal and dual variables, respectively, of the dual FD problem. We will also find it useful to decompose the objective function of (\ref{eq:formulation:dual-forward-dynamics-nsocp}) into the individual terms
\begin{subequations}
\begin{equation}
f(\mathbf{x}) := 
\frac{1}{2} \, \mathbf{x}^{T} \, \mathbf{D} \, \mathbf{x}
+ \mathbf{x}^{T} \, \mathbf{v}_{f}
\,\,,
\label{eq:formulation:quadratic-objective}
\end{equation}
\begin{equation}
f_{\text{DS}}(\mathbf{x}) := 
\mathbf{x}^{T} \,  \boldsymbol{\Gamma}(\mathbf{v}^{+}(\mathbf{x}))
\,\,,
\label{eq:formulation:ds-objective}
\end{equation}
\begin{equation}
f_{\text{NCP}}(\mathbf{x}) := 
f(\mathbf{x}) + f_{\text{DS}}(\mathbf{x})
\,\,.
\label{eq:formulation:ncp-objective}
\end{equation}
\label{eq:formulation:nsocp-objectives}
\end{subequations}
The total NSOCP objective (\ref{eq:formulation:ncp-objective}) thus consists of the purely quadratic objective (\ref{eq:formulation:quadratic-objective}) as well as the non-linear De Saxc\'e term (\ref{eq:formulation:ds-objective}). This decomposition will enable a relaxation of the problem that will be discussed next.

\subsection{Problem Relaxations}
\label{sec:formulation:relaxations}
\noindent
Observing the decomposition of the NSOCP objective function defined in (\ref{eq:formulation:nsocp-objectives}), we may consider the case where the non-linear term (\ref{eq:formulation:quadratic-objective}) is omitted. Doing so turns out to be equivalent to approximating the original problem with a Cone Complementarity Problem (CCP) in the form of 
\begin{flalign}
& \text{CCP}(\mathbf{D}, \mathbf{v}_{f}, \mathcal{K}): & \nonumber\\[4pt]
& \,\, \textbf{Find} \,\, \boldsymbol{\lambda}, \,\,
\textrm{s.t.} \quad
\mathcal{K}^{*} \ni
\mathbf{v}^{+}(\boldsymbol{\lambda})
\, \perp \,
\boldsymbol{\lambda} \in \mathcal{K}
\,\,.
\label{eq:formulation:dual-forward-dynamics-ccp}
\end{flalign}
The primary effect of this relaxation is that complementarity is now asserted directly between the constraint velocities and the constraint reactions. Equivalently, a transcription of (\ref{eq:formulation:dual-forward-dynamics-ccp}) as an optimization problem results in a convex Second-Order Cone Program (SOCP) which can be stated concisely as
\begin{flalign}
& \text{SOCP}(\mathbf{D}, \mathbf{v}_{f}, \mathcal{K}): & \nonumber\\
& \,\, \textbf{Find} \,\, \boldsymbol{\lambda} =
\displaystyle
\operatorname*{argmin}_{\mathbf{x} \in \mathcal{K}} \,\,
\frac{1}{2} \,\mathbf{x}^{T} \, \mathbf{D} \, \mathbf{x}
+ \mathbf{x}^{T} \, \mathbf{v}_{f}
\label{eq:formulation:dual-forward-dynamics-socp}
\end{flalign}

This alternate formulation of the FD problem can be traced back to the work of Redon et al in~\cite{redon2002}, Anitescu et al in~\cite{anitescu2006}, and Drumwright et al in~\cite{drumwright2011modeling}, although not stated as concisely as in the now established form of (\ref{eq:formulation:dual-forward-dynamics-socp}). Undoubtedly, the CCP approach as we know it today, has mainly been popularized by the work of Todorov in~\cite{todorov2011convex} in developing the MuJoCo simulator~\cite{todorov2012mujoco}. More information on the CCP and its implications can be found in~\cite{todorov2014analytical, siggraph2022contact, lidec2024models}.

However, the CCP formulation as been the source of significant debate among researchers, as discussed in~\cite{chatterjee1999, drumwright2011modeling}. Due to the mathematical rigor by which the NCP is derived from VI theory, the CCP is widely thought to be an approximation that results in undesirable artifacts such as forces acting at a distance~\cite{anitescu2006,hwangbo2018percontact,lidec2024models}. Conversely, Chaterjee~\cite{chatterjee1999} and Drumwright~\cite{drumwright2011modeling}, as well as the authors of MuJoCo~\cite{mujoco2024docs}, note that NCP complementarity does not necessarily agree with physical experiments, as the rigid-body model is also fundamentally an approximation of real physical bodies. In any case, the CCP model has proven to be a useful mechanism to realize physics engines with impressive throughput and versatility, such as MuJoCo and project Chrono~\cite{mazhar2015,tasora2016chrono}.

\subsection{First-Order Dual Solvers}
\label{sec:formulation:first-order-dual-solvers}
\noindent
Broadly speaking, the vast majority of algorithms used in the various popular physics engines and simulators, are \textit{first-order} methods and follow a common pattern. To outline their commonalities, we can coarsely fit them to the template defined in Alg.~\ref{alg:generic-first-order-dual-solver} that presents a generic form of first-order algorithm for solving the dual problem. From a high-level perspective, the major common components of these include:
\begin{itemize}
\item $\mathbf{D},\mathbf{v}_{f},\mathcal{K}$: definition of the dual problem 
\item $\boldsymbol{f}_{BLAS}(\cdot)$: operator to represent the linear system solver\footnote{BLAS: Basic Linear Algebra Subprograms} 
\item $\mathcal{P}^{\mathcal{K}}(\cdot)$: operator to project iterates to the feasible set $\mathcal{K}$
\item $\boldsymbol{f}_{stop}(\cdot)$: operator to represent the \textit{termination criteria}
\item $\boldsymbol{\lambda}^{0}$: (optional) initial guess to \textit{warmstart} the solution
\end{itemize}

\begin{algorithm}[!t]
\caption{Generic First-Order Dual Solver}
\label{alg:generic-first-order-dual-solver}
\begin{algorithmic}
\Require $\boldsymbol{\lambda}^{0},\,\,\mathbf{D},\,\,\mathbf{v}_{f},\,\,\mathcal{K},\,\,\boldsymbol{f}_{BLAS}(\cdot)\,\,\mathcal{P}^{\mathcal{K}}(\cdot),\,\,\boldsymbol{f}_{stop}(\cdot)$
\For{$i = 1$ to $N$}
\State $\mathbf{s}^{i} \gets \boldsymbol{\Gamma}(\mathbf{v}^{+}(\boldsymbol{\lambda}^{i-1}))$ \textcolor{gray}{\Comment{estimate nonlinearity}}
\State ${\hat{\mathbf{v}}}^{i} \gets \mathbf{D} \, \boldsymbol{\lambda}^{i-1} + \mathbf{v}_{f} + \mathbf{s}^{i}$ \textcolor{gray}{\Comment{compute gradient}}
\State $\boldsymbol{\lambda}_{0}^{i} \gets \boldsymbol{f}_{BLAS}(\mathbf{D}\,,\,\, -{\hat{\mathbf{v}}}^{i})$ \textcolor{gray}{\Comment{perform descent step}}
\State $\boldsymbol{\lambda}^{i} \gets \mathcal{P}^{\mathcal{K}}(\boldsymbol{\lambda}_{0}^{i})$ \textcolor{gray}{\Comment{project to feasible set}}
\State $\boldsymbol{\lambda}^{*} \gets \boldsymbol{f}_{stop}(\boldsymbol{\lambda}^{i})$ \textcolor{gray}{\Comment{check termination criteria}}
\EndFor \\
\Return $\boldsymbol{\lambda}^{*}$
\end{algorithmic}
\end{algorithm}

The generic BLAS operator $\boldsymbol{f}_{BLAS}(\cdot)$ expresses the method used to approximately perform the inversion of the Delassus matrix $\mathbf{D}$. It often is the case that this is not realized in a literal sense, but rather, it represents a factorization of $\mathbf{D}$, e.g. Cholesky or LU decompositions, that facilitate computing an \textit{unconstrained solution} $\boldsymbol{\lambda}_{0}^{i}$ at each iteration.

In every iteration, the projection operator $\mathcal{P}^{\mathcal{K}}(\cdot)$, a.k.a \textit{projector}, projects the unconstrained solution $\boldsymbol{\lambda}_{0}^{i}$ onto the feasible set $\mathcal{K}$, yielding a solution iterate $\boldsymbol{\lambda}^{i}$ that is feasible according to $\mathcal{K}$. Such an operation is common in optimization algorithms that deal with inequality constraints. It therefore attempts to ensure that each iterate $\boldsymbol{\lambda}^{i+1}$, and therefore also the final solution $\boldsymbol{\lambda}^{*}$, satisfy the inequalities as well as other conditions that define $\mathcal{K}$. $\mathcal{P}^{\mathcal{K}}(\cdot)$ is often one of the most important components of algorithms that follow the template of Alg.~\ref{alg:generic-first-order-dual-solver}. As we will see in Sec.~\ref{sec:solvers:projectors}, choosing how to realize it depends on the contact model used, and thus plays a crucial role in solving the dual problem (\ref{eq:formulation:dual-forward-dynamics-ncp}) or (\ref{eq:formulation:dual-forward-dynamics-ccp}).

Each algorithm necessarily defines its own specialized termination operator $\boldsymbol{f}_{stop}(\cdot)$ that is closely tied to the numerical scheme used. Common elements among various dual solvers include the use of absolute and/or relative numerical error between sequential solution iterates. But it does not, and sometimes cannot, be based solely on such criteria. In addition, algorithms must also check for constraint satisfaction, i.e. whether solution iterates lie within the feasible set, and by how much they violate the respective constraints. This ensures that solvers will converge only when both the numerical error as well as constraint violation are within desired bounds.

Lastly, the use of an initial guess $\boldsymbol{\lambda}^{0}$ is a common trait of many efficient numeric solvers. In most cases it significantly speeds up convergence to a solution, however, choosing how to construct one can prove challenging~\cite{wang2016warm,tasora2021admm,lidec2024models}. If the contact configuration has not changed, i.e. the dimensionality of the constraint forces is the same, the most common approach is to use the solution of the previous time-step. Otherwise, initialization to zero is necessary, and is the default behavior when $\boldsymbol{\lambda}^{0}$ is unspecified. A more robust approach would be to compute a constraint-invariant initial guess based on the current state of the system~\cite{wang2016warm,tasora2021admm}. 

\section{Modeling Constrained Systems}
\label{sec:models}
\noindent
Until this point we have been able to state the FD problem in rather generic terms, without being specific about what is involved in the transcription of the NCP (\ref{eq:formulation:dual-forward-dynamics-ncp}). This section introduces the elements of physical modeling that will provide the foundation from which we will later build the NCP concretely in Sec.~\ref{sec:construction}. Specifically, we will detail the maximal-coordinate CRBD formulation and the modeling of constraints, i.e. joints, limits and contacts.

For this purpose, we introduce some additional notation to denote the \textit{index sets} of the system's DoFs and constraints~\cite{glocker2001setvalued}. Consider a constrained rigid-body system, such as the one depicted in Fig.~\ref{fig:constraint-system-with-contacts}. It can be defined as the collection of bodies $\mathcal{B}_{i}\,, i \in \{1, \dots, n_b\}$, joints $\mathcal{J}_{j}\,, j \in \{1, \dots, n_j\}$, limits $\mathcal{L}_{l}\,, l \in \{1, \dots, n_l\}$ and contacts $\mathcal{C}_{k}\,, k \in \{1, \dots, n_c\}$. The symbols $\mathcal{B}_{i}$, $\mathcal{J}_{j}$, $\mathcal{L}_{l}$, $\mathcal{C}_{k}$ are used to refer abstractly to each element; implying its dimensionality, associated quantities, and relevant local coordinate frames. 

Moreover, to remove ambiguity from representing quantities in different Cartesian frames of reference we will employ the following conventions: Global coordinates are defined in the world frame $W$. Given two local reference frames $A$ and $B$, moving w.r.t $W$, the local relative position of $B$ w.r.t $A$ expressed in $B$ is denoted as $_{B}\mathbf{r}_{AB}$, and the relative orientation from $B$ to $A$ as $\mathbf{R}_{AB}$. For absolute quantities, i.e. those expressed in global coordinates, the $W$ subscript is omitted for brevity, e.g. $_{W}\mathbf{r}_{WA} \equiv \mathbf{r}_{A}$ and $\mathbf{R}_{WA} \equiv \mathbf{R}_{A}$. Thus, Cartesian coordinate transformations are expressed in the form $\mathbf{r}_{B} = \mathbf{r}_{A} + \mathbf{r}_{AB} = \mathbf{r}_{A} + \mathbf{R}_{A} \, _{A}\mathbf{r}_{AB} = \mathbf{r}_{A} + \mathbf{R}_{A} \, \mathbf{R}_{AB} \, _{B}\mathbf{r}_{AB}$.\\

\begin{figure}[!t]
\centering
\includegraphics[width=1.0\linewidth]{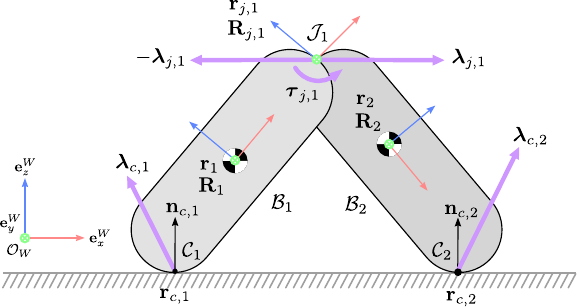}
\caption{Free-body diagram of a simple constrained system with: (1) $n_{b}=2$ rigid bodies $\mathcal{B}_{1}$ and $\mathcal{B}_{2}$, (2) $n_{j}=1$ joints $\mathcal{J}_{1}$ with revolute constraints, and (3) $n_{c}=2$ contacts $\mathcal{C}_{1}$ and $\mathcal{C}_{2}$ between the bodies and the ground.}
\label{fig:constraint-system-with-contacts}
\end{figure}

\subsection{Rigid Bodies}
\label{sec:models:bodies}
\noindent
Each rigid body $\mathcal{B}_{i}$, using a body-fixed coordinate frame located at its respective Center-of-Mass (CoM), is parameterized by the following collection of intrinsic and extrinsic properties:
\begin{itemize}
\item mass $m_{i}$
\item moment-of-inertia $_{i}\mathbf{I}_{i}$
\item surface geometry $\mathcal{G}_{\mathcal{B}, i}$
\item CoM position $\mathbf{r}_{i} \in \mathbb{R}^{3}$
\item CoM orientation $\mathbf{R}_{i} \in \mathrm{SO(3)}$
\item CoM linear velocity $\mathbf{v}_{i} \in \mathbb{R}^{3}$
\item CoM angular velocity $\boldsymbol{\omega}_{i} \in \mathbb{R}^{3}$
\end{itemize}

Each body is intrinsically characterized by its inertial and geometric properties. The former comprise its mass $m_{i}$ and moment-of-inertia $_{i}\mathbf{I}_{i}$ expressed in local body-fixed coordinates. The latter is represented by the parameterized set $\mathcal{G}_{\mathcal{B}, i}$\footnote{Parametric geometry sets can take the form of primitive geometric shapes such as boxes, spheres etc, which are parameterized by their respective constants (width, height, length and radius, respectively), or more explicit forms such as triangle-based meshes which are collections of vertices.} containing all points belonging to its exterior surface geometry. 

The body's extrinsic property is its mutable state, comprising its Cartesian pose and twist defined by the aforementioned CoM-centric properties $\mathbf{r}_{i}, \mathbf{R}_{i}, \mathbf{v}_{i},\boldsymbol{\omega}_{i}$. In addition, we may option for a more concise parameterization of the body's orientation such as quaternions. Specifically, using Hamiltonian unit quaternions $\mathbf{q}_{i} \in \mathbb{H}$, we can express the rotation matrix as $\mathbf{R}_{i} = \mathbf{R}(\mathbf{q}_{i})$. This choice does not affect the derivation of the dynamics in later sections; it only impacts the representation of configuration-dependent and configuration-level constraints. The pose and twist vectors of $\mathcal{B}_{i}$ are respectively defined as
\begin{equation}
\mathbf{s}_{i} =
\begin{bmatrix}
    \mathbf{r}_{i}\\
    \mathbf{q}_{i}
\end{bmatrix}
\in \mathbb{S}
\,,\,\,
\mathbf{u}_{i} =
\begin{bmatrix}
    \mathbf{v}_{i}\\
    \boldsymbol{\omega}_{i}
\end{bmatrix}
\in \mathbb{R}^{6} \, ,
\label{eq:models:rigid-body-state}
\end{equation}
where the compound set $\mathbb{S} := \mathbb{R}^{3} \times \mathbb{H}$ is constructed using the \textit{Cartesian product}\footnote{A Cartesian product of two sets $A$ and $B$, is defined as the set of all ordered pairs formed from their elements, i.e. $A \times B := \{ (a,b) \,|\, a \in A \,,\,\, b \in B\}$. Effectively $A \times B$ is a mechanism through which we can construct compound sets/spaces over multiple scalar fields, vector spaces etc.} of the sets of positions and quaternions. $\mathbb{S}$ in this case is merely a quaternion-based parameterization of Cartesian poses, which, are technically elements of $\mathrm{SE}(3)$.

\subsection{Joints}
\label{sec:models:joints}
\noindent
Each joint $\mathcal{J}_{j}$ introduces a set of bilateral (i.e. equality) constraints that restricts the motion of the bodies it acts upon. These can either take the form of unary constraints acting on a single body, anchoring it to the world, or binary constraints coupling the motion between a body pair. By convention, we designate base $B_j$ and follower $F_j$ coordinate frames for each $\mathcal{J}_{j}$. For unary joints $B_j \equiv W$ and $F_j$ is coincident with the body-fixed frame of the body. For binary joints, both $B_j$ and $F_j$ are coincident with the body-fixed CoM frames of the corresponding bodies. With the aforementioned conventions, a universal parameterization of joints consists of the:
\begin{itemize}
    \item relative position w.r.t the base body $_{B}\mathbf{x}_{Bj} \in \mathbb{R}^{3}$
    \item relative position w.r.t the follower body $_{F}\mathbf{x}_{Fj} \in \mathbb{R}^{3}$
    \item frame axes $\mathbf{X}_{j} \in \mathrm{SO(3)}$
    \item selection matrix $\mathbf{S}_{j} \in \mathbb{R}^{6 \times 6}$
    \item constraint dimensions $m_j \in [1,6] \subset \mathbb{N}_{+}$
    \item DoF dimensions $d_j = 6 - m_j$
    \item position $\mathbf{r}_{j} \in \mathbb{R}^{3}$
    \item coordinate frame $\mathbf{R}_{j} \in \mathrm{SO(3)}$
    \item generalized configuration $\mathbf{q}_{j} \in \mathbb{R}^{d_j}$
\end{itemize}
In a maximal-coordinate setting, the intrinsic parameters of a joint are the constraint dimensions $m_j$, the relative positions $_{B}\mathbf{x}_{Bj}, \, _{B}\mathbf{x}_{Bj}$, the frame axes $\mathbf{X}_{j}$, and the selection matrix $\mathbf{S}_{j}$. $m_j$ is effectively the number of constraint equations introduced, and together with the $\mathbf{X}_{j}$ and $\mathbf{S}_{j}$ determine the kinematic DoFs it encodes, i.e. prismatic, revolute, spherical, etc. All of the aforementioned quantities are essentially constants that fully specify its parameterization. Thus, all other quantities such as the absolute position $\mathbf{r}_{j}$, orientation $\mathbf{R}_{j}$, and generalized configuration $\mathbf{q}_{j}$ are derived from the joint's parameters and the configurations of the associated bodies.

According to its kinematic type, each joint introduces a set of $m_j$ configuration-level implicit equations in the form of
\begin{equation}
\mathbf{f}_{j}(\mathbf{s}_{B_{j}}, \mathbf{s}_{F_{j}}) = 0,
\label{eq:models:joints:configuration-level-implicit-constraints}
\end{equation}
where $\mathbf{f}_{j} : \mathbb{S} \times  \mathbb{S} \rightarrow \mathbb{R}^{m_j}$, and $\mathbf{s}_{B_{j}}, \mathbf{s}_{F_{j}}$ are the poses of the base and follower bodies, respectively. It will be necessary to also express the constraints at velocity-level in Pfaffian form as
\begin{equation}
\dot{\mathbf{f}}_{j}(\mathbf{s}_{B_{j}}, \mathbf{s}_{F_{j}}, \mathbf{u}_{B_{j}}, \mathbf{u}_{F_{j}}) = 0.
\label{eq:models:joints:velocity-level-implicit-constraints}
\end{equation}
Taking the first-order derivative w.r.t time of (\ref{eq:models:joints:configuration-level-implicit-constraints}) is referred to as \textit{index reduction} in the context of DAEs~\cite{griepentrog1991index}, and allows us to express the constraints as functions of the body twists. Moreover, (\ref{eq:models:joints:velocity-level-implicit-constraints}) also admits an interpretation that is fundamental to formulating constrained dynamics: it defines the so-called \textit{constraint-space velocities} $\mathbf{v}_{j} = \dot{\mathbf{f}}_{j}(\mathbf{s}_{B_{j}}, \mathbf{s}_{F_{j}}, \mathbf{u}_{B_{j}}, \mathbf{u}_{F_{j}})$, which are closely related to the concept of \textit{virtual displacements}~\cite{baruh1999analytical}.

The constraints are enforced by corresponding generalized forces that are represented by a vector of Lagrange multipliers $\boldsymbol{\lambda}_{j} \in \mathbb{R}^{m_j}$ that, according to the principle of d'Alembert, should conserve the energy of the system. The generalized forces acting along the DoFs of the joint are defined as the vector $\boldsymbol{\tau}_{j} \in \mathbb{R}^{d_j}$. Crucially, both $\boldsymbol{\lambda}_{j}$ and $\boldsymbol{\tau}_{j}$ are quantities defined in the local coordinates of the joint frame defined by $\mathbf{R}_{j}$. However, in order to define the dynamics of a joint, we need to express the forces and torques, i.e. \textit{wrenches}, enacted upon the bodies. This is the principle function of the selection matrix $\mathbf{S}_{j}$ and the frame axes $\mathbf{X}_{j}$. Specifically, the former maps the constraint and actuation generalized force vectors to 6D wrenches as
\begin{equation}
_{j}\mathbf{w}_{j} = 
\mathbf{S}_{j} \,
\begin{bmatrix}
    \boldsymbol{\lambda}_{j}\\
    \boldsymbol{\tau}_{j}
\end{bmatrix}
=
\begin{bmatrix}
    \mathbf{S}_{c, j} & \mathbf{S}_{a, j}
\end{bmatrix}
\,
\begin{bmatrix}
    \boldsymbol{\lambda}_{j}\\
    \boldsymbol{\tau}_{j}
\end{bmatrix}
\,\, ,
\label{eq:models:joint-body-wrenches-actuated}
\end{equation}
$\mathbf{S}_{c, j} \in \mathbb{R}^{6 \times m_j}$ and $\mathbf{S}_{a, j} \in \mathbb{R}^{6 \times d_j}$ are the component selection matrices that respectively map the generalized forces along the constraint and DoF dimensions. If the joint is passive then $\boldsymbol{\tau}_{j}$ is always zero, $\mathbf{S}_{j} \equiv \mathbf{S}_{c,j}$ and (\ref{eq:models:joint-body-wrenches-actuated}) effectively reduces to 
\begin{equation}
_{j}\mathbf{w}_{j} = 
\mathbf{S}_{j} \, \boldsymbol{\lambda}_{j}
\,\, .
\label{eq:models:joint-body-wrenches-passive}
\end{equation}
The base-follow convention defines the joint wrench as acting on the follower $F_j$ by the base $B_j$ at position $\mathbf{r}_{j}$. To compute the body-wise wrenches about their respective CoMs we must first retrieve the joint wrench, acting at $\mathbf{r}_{j}$ and expressed in world coordinates, using the joint's axes and frame matrices:
\begin{equation}
\mathbf{w}_{j} = \bar{\mathbf{R}}_{j} \, \bar{\mathbf{X}}_{j} \, _{j}\mathbf{w}_{j} 
\,,
\label{eq:models:joint-wrench-in-world}
\end{equation}
where $\bar{\mathbf{R}}_{j}$ and $\bar{\mathbf{X}}_{j}$ are the block-diagonal versions of $\mathbf{R}_{j}$ and $\mathbf{X}_{j}$ expanded to 6D. 
Finally, in order to express the wrench effected upon each body, screw transformation matrices must be applied. These are quantities that transform 6D wrenches and twists from one point and frame of application to another. In the context of this work, we only need to consider screw transformations of wrenches, so that we can express 
the effect of joint wrench $\mathbf{w}_{j}$ when acting on body $\mathcal{B}_{i}$, that may correspond to the base $B_j$ or follower $F_j$. Referring to this wrench as $\mathbf{w}_{i,j}$, and denoting the skew-symmetric operator as $\left[ \cdot \right] _{\times}$ , we can express it as
\begin{subequations}
\begin{equation}
\mathbf{w}_{i,j} = \mathbf{W}_{i,j}(\mathbf{r}_{j}, \mathbf{r}_{i}) \, \mathbf{w}_{j}
\end{equation}
\begin{equation}
\mathbf{W}_{i,j}(\mathbf{r}_{j}, \mathbf{r}_{i}) =
\begin{bmatrix}
    \mathbb{I}_{3} & \mathbf{0}_{3} \\
    \left[ \mathbf{r}_{i} - \mathbf{r}_{j} \right]_{\times}  & \mathbb{I}_{3}
\end{bmatrix}
\,\,.
\end{equation}
\label{eq:models:screw-transforms}
\end{subequations}

\subsection{Limits}
\label{sec:models:limits}
\noindent
Each joint limit $\mathcal{L}_{l}$ introduces an additional unilateral constraint that is defined by the kinematic limits of the associated joint $\mathcal{J}_{j}$, that may represent mechanical end-stops or other restrictions to the motion along the admissible DoFs. Each $\mathcal{L}_{l}$ is thus explicitly dependent on the parameterization of the corresponding $\mathcal{J}_{j}$, but in addition requires the specification of:
\begin{itemize}
\item the DoF selection vector $\mathbf{s}_{l} \in \mathbb{R}^{6}$
\item the DoF configuration $q_{l} \in \mathbb{R}$
\item the minimum DoF limit $q_{l}^{min} \in \mathbb{R}$
\item the maximum DoF limit $q_{l}^{max} \in \mathbb{R}$
\end{itemize}

Fundamentally, each joint with prescribed configuration limits defines a set implicit inequalities in the form of
\begin{equation}
\mathbf{g}_{j}(\mathbf{s}_{B_{j}}, \mathbf{s}_{F_{j}}) \geq 0
\,\,,
\label{eq:models:limits:configuration-level-implicit-constraints}
\end{equation}
where $\mathbf{g}_{j} : \mathbb{S} \times \mathbb{S} \rightarrow \mathbb{R}^{m_{j,l}}$ and $m_{j,l} \leq 2\,d_j$, because they may be imposed on either lower and/or upper orthant of each joint DoF. However, only one side may ever be active at any point in time, so the dimensionality of limit constraints effectively reduces to $m_{j,l} \leq d_j$. To understand why we must consider how limits can become active in the first place. Given the instantaneous joint DoF configuration $\mathbf{q}_{j} \in \mathbb{R}^{m_{j}}$ and lower/upper DoF limits denoted as $\mathbf{q}_{j}^{min}, \mathbf{q}_{j}^{max} \in \mathbb{R}^{m_{j}}$, we can define the respective \textit{joint-limit gap functions}
\begin{subequations}
\begin{equation}
\mathbf{g}_{j}^{min}(\mathbf{q}_{j}) = \mathbf{q}_{j} - \mathbf{q}_{j}^{min}
\end{equation}
\begin{equation}
\mathbf{g}_{j}^{max}(\mathbf{q}_{j}) = \mathbf{q}_{j}^{max} - \mathbf{q}_{j}
\,\,.
\end{equation}
\label{eq:models:limits:minmax-gap-functions}
\end{subequations}
If either $g_{j,i}^{min}(q_{j,i}) \leq 0$ or $g_{j,i}^{max}(q_{j,i}) \leq 0$, then the lower, or respectively upper,  limit of the joint has been reached and thus the corresponding limit constraint becomes active. Thus for every joint DoF coordinate $i \in [1, d_{j}]$ of $\mathcal{J}_{j}$ where a limit is active, a single limit entity $\mathcal{L}_{l}$ is defined with $q_{l} := q_{j,i}$, $q_{l}^{min} := q_{j,i}^{min}$ and $q_{l}^{max} := q_{j,i}^{max}$. Moreover, employing an index mapping in the form of $g_{l}^{min}(q_{l}) := g_{j,i}^{min}(q_{j,i})$ and $g_{l}^{max}(q_{l}) := g_{j,i}^{max}(q_{j,i})$, we can define the unilateral constraint introduced by each $\mathcal{L}_{l}$ as the implicit inequality
\begin{equation}
g_{l}(q_{l}) := 
\begin{cases}
g_{l}^{min}(q_{l}) &,\,\, g_{l}^{min}(q_{l}) \leq g_{l}^{max}(q_{l}) \\
g_{l}^{max}(q_{l}) &,\,\, g_{l}^{max}(q_{l}) \leq g_{l}^{min}(q_{l})
\end{cases}
\,\,.
\label{eq:models:limits:implicit-configuration-constraint}
\end{equation}
Note that (\ref{eq:models:limits:implicit-configuration-constraint}) does not actually express a piecewise continuous function, rather it just expresses the fact that the constraint corresponds to the gap function of only one of the lower/upper limits at any point in time. 

The corresponding constraint reaction of $\mathcal{L}_{l}$ is represented by the Lagrange multiplier $\lambda_{l} \geq 0$, and together with (\ref{eq:models:limits:implicit-configuration-constraint}), define the complementarity conditions
\begin{equation}
g_{l}(q_{l}) \geq 0
\,\, \perp \,\, 
\lambda_{l} \geq 0
\,\,.
\label{eq:limits:signorini-conditions-configuration}
\end{equation}
By slightly abusing notation, we can denote the constraint-space velocity of $\mathcal{L}_{l}$ as $\text{v}_{l}(\dot{q}_{l}) := \dot{f}_l(\mathbf{s}_{B_{j}}, \mathbf{s}_{F_{j}}, \mathbf{u}_{B_{j}}, \mathbf{u}_{F_{j}})$, which enables us to express the velocity-level formulation of (\ref{eq:limits:signorini-conditions-configuration}) as
\begin{equation}
\text{v}_{l}(\dot{q}_{l}) \geq 0
\,\, \perp \,\, 
\lambda_{l} \geq 0
\,\,.
\label{eq:limits:signorini-conditions-velocity}
\end{equation}

Lastly, we must express the wrenches enacting on the bodies associated with $\mathcal{L}_{l}$. Fortunately, this works out to be exceptionally straightforward since $\mathcal{L}_{l}$ is by definition acting along only a single DoF of the corresponding joint $\mathcal{J}_{j}$. Invoking once more the index remapping used to define $q_{l}$, the DoF selection vector of each $\mathcal{L}_{l}$ is exactly the $i$-th column of the joint's DoF selection matrix $\mathbf{S}_{a,j}$, i.e. $\mathbf{s}_{l} := (\mathbf{S}_{j,a})_{i}$. Thus the wrench applied by an active joint limit, acting at and about the joint position $\mathbf{r}_{j}$ and expressed in world coordinates, is
\begin{equation}
\mathbf{w}_{l} = \bar{\mathbf{R}}_{j} \, \bar{\mathbf{X}}_{j} \, \mathbf{s}_{l} \, \lambda_{l}
\,\,,
\label{eq:models:limit-wrench-in-world}
\end{equation}

\subsection{Contacts}
\label{sec:models:contacts}
\noindent
Each discrete contact $\mathcal{C}_{k}$, introduces a set of unilateral (i.e. inequality) constraints that act to simultaneously: (a) prevent inter-penetration between it's associated bodies when they collide, and (b) define the material interactions between them in the form of \textit{friction} and \textit{restitution}. Contact constraints are therefore more complicated than joints and limits, as they act on both the kinematics and dynamics of the system in a way that depends on the material properties of the bodies; not just their geometry and inertial properties. Similarly to joints, they may also be unary or binary, depending on whether a collision occurs between a body and the world or between bodies. Like limits, they introduce set-valued force laws, i.e. constraints on the contact reactions themselves. 

In order to specify the constraints contributed by each contact element $\mathcal{C}_{k}$, let's first define the intrinsic quantities that parameterize it. Namely, these are the:
\begin{itemize}
    \item position $\mathbf{r}_{c,k} \in \mathbb{R}^{3}$
    \item normal vector $\mathbf{n}_{c,k} \in \mathbb{R}^{3}, \,\, \Vert \mathbf{n}_{c,k} \Vert_{2} = 1$
    \item gap distance $d_{c,k} \in \mathbb{R}$
    \item coefficient of friction $\mu_k \in \mathbb{R}_{+}$
    \item coefficient of restitution $\epsilon_k \in [0, 1]$
\end{itemize}
These quantities are thus the minimal amount of information required to define a model of contacts and their corresponding constraint sets. We will now detail how we can define these constraints precisely and the physical phenomena that they model, so that we can incorporate them into our CRBD model.\\
\begin{figure}[!t]
\centering
\includegraphics[width=1.0\linewidth]{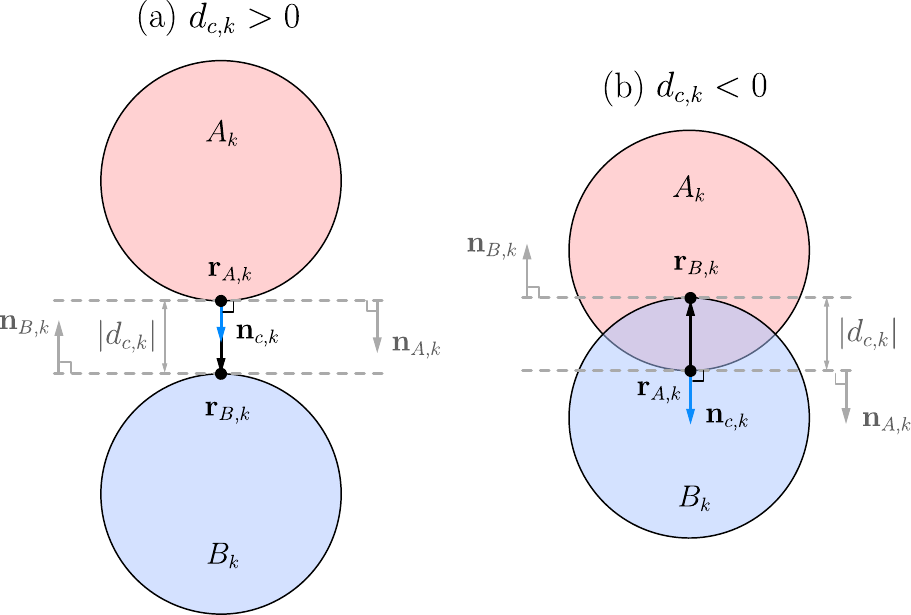}
\caption{Binary collisions between bodies $A_{k}$ and $B_{k}$ for arbitrary contact with index $k$. In all configurations, the gap-function (\ref{eq:gap-function-def}) can be used to compute the signed distance $d_{c,k}\,\mathbf{n}_{c,k}$ (black arrows) of $B_{k}$ w.r.t. $A_{k}$, where $\mathbf{n}_{c,k}$ is the normal vector (blue arrows) w.r.t the plane that is tangent to the two contacting points of the bodies. Choosing $A_{k}$ as the reference body is a merely a matter of convention. When $d_{c,k} > 0$ (case (a)), then there is no interpenetration, and the distance is interpreted as the distance between the nearest points between the bodies. Conversely, when $d_{c,k} \leq 0$ (case (b)), interpenetration occurs, and $d_{c,k}$ becomes the penetration depth measured between the most interpenetrating points.}
\label{fig:binary-collisions}
\end{figure}
\subsubsection*{\textbf{Collisions}}
\label{sec:models:contacts:collisions}
Although often used synonymously, \textit{collisions} and \textit{contacts} correspond to different aspects of the interactions between bodies. For real physical bodies, each collision defines a continuum of body contact distributed over potentially multiple surfaces. However, for simulating multi-body systems, and especially those with rigid bodies, the most straight-forward way to model such interactions is using \textit{discrete contacts}, i.e. geometrically discretizing these surfaces as collections of points. Therefore, a collision occurring between a pair of bodies, characterizes the geometry of the interaction, and is the progenitor of the set of discrete contacts that we use to represent it. If the colliding bodies have convex geometries, then a single contact can represent the collision, otherwise multiple may be required. Moreover, particularly challenging yet important, are the cases of multiple co-planar contacts, e.g. a box on a plane, as the geometry is indeed convex, yet multiple contacts are required to represent the interaction. Within this context, we will describe how collisions between bodies define the geometric properties of contacts.

A crucial aspect of defining the contact position $\mathbf{r}_{c,k}$ is that it does not necessarily coincide with the exact position on each body where the collision occurs. Consider the case depicted in Fig.\ref{fig:binary-collisions}b, where the two (convex) bodies $A_{k}$ and $B_{k}$ are interpenetrating. If we define the contact normal to lie along the line connecting the inner-most penetrating points on the bodies, then the contact should be defined using both points. Instead, we arbitrarily designate one of these as \textit{the contact position} $\mathbf{r}_{c,k} := \mathbf{r}_{A,k}$ and determine the location of the other using the surface normal $\mathbf{n}_{c,k}$ and penetration depth $d_{c,k}$ as $\mathbf{r}_{B,k} = \mathbf{r}_{c,k} + d_{c,k} \, \mathbf{n}_{c,k}$. The relative position between bodies $A_{k}$ and $B_{k}$ is used to define the so-called \textit{gap function}
\begin{equation}
\mathbf{g}_{k}(\mathbf{r}_{A,k}, \mathbf{r}_{B,k}) 
= \mathbf{r}_{B,k} - \mathbf{r}_{A,k}
= d_{c,k} \, \mathbf{n}_{c,k}
\label{eq:gap-function-def}
\end{equation}
of each contact, where $\mathbf{g}_{k} : \mathbb{R}^{3} \times \mathbb{R}^{3} \rightarrow \mathbb{R}^{3}$. Essentially, the relative position expressed via (\ref{eq:gap-function-def}) corresponds to a vector-valued signed-distance function, and defines the configuration-level constraint function of $\mathcal{C}_{k}$.\\

\subsubsection*{\textbf{Contact Modes}}
\label{sec:models:contacts:modes}
Noting how the penetration depth $d_{c,k}$ can take zero, positive and negative values, we can use this to define the so-called \textit{contact mode} $m_k$, a discrete valued scalar that summarizes the state of $\mathcal{C}_{k}$. When $d_{c,k} > 0$, we define the contact to be \textit{open} ($m_k = \text{Open}$) and $d_{c,k}$ is interpreted to be the minimum Euclidean distance between the two bodies. Conversely, when $d_{c,k} \leq 0$ the contact is defined as \textit{closed} ($m_k = \text{Stick} \lor \text{Slip}$) and $d_{c,k}$ becomes the penetration depth. Therefore, for convex geometries, $\mathbf{r}_{A,k}, \mathbf{r}_{B,k}$ are either the nearest points between the two bodies or the most-penetrating, respectively, and are always functions of the corresponding body poses $\mathbf{s}_{A,k}$ $\mathbf{s}_{B,k}$. When the geometries are not convex, or multiple contacts per body pair are admissible, the situation is slightly more complicated\footnote{However, an easy way to work around this is by realizing that contacts are defined only once they occur (i.e. become closed) in the first-place, so then the consideration of $\mathbf{r}_{A,k}, \mathbf{r}_{B,k}$, only matters locally while the contact persists and until they open.}.\\

\subsubsection*{\textbf{Contact Space}}
\label{sec:models:contacts:assumptions}
In this work we only consider isotropic 3D Coulomb-type friction, where the contact constraint reaction and respective constraint-space velocity are respectively 3D vectors, i.e. $\boldsymbol{\lambda}_{k}$, $\mathbf{v}_{c,k} \in \mathbb{R}^{3}$. Although more advanced models of friction exist, such as those that can also incorporate torsional effects~\cite{leine2003, mujoco2024docs}, we consider the simpler case a sufficient baseline for the purposes of this evaluation.\\

\subsubsection*{\textbf{Contact Frames}}
\label{sec:models:contacts:frames}
The space in which all contact-related quantities can most easily be represented is that of a local contact-specific coordinate frame. Employing such a frame allows for the re-parameterization of the contact reaction $\boldsymbol{\lambda}_{k}$ into a form that delineates between the normal $N$ and tangent $T$ components. By applying certain conventions, the normal vector $\mathbf{n}_{c,k}$, can be used to represent the corresponding tangent plane as the tangent and orthogonal (i.e. bi-normal) vectors $\mathbf{t}_{c,k},\mathbf{o}_{c,k} \in \mathbb{R}^{3}$, respectively. The triplet $\mathbf{n}_{c,k}, \mathbf{t}_{c,k},\mathbf{o}_{c,k}$ thus enables the construction of a rotation matrix $\mathbf{R}_{k} \in \mathrm{SO(3)}$ to represent the coordinate frame of $\mathcal{C}_{k}$. This construction is expressed by the contact-frame function $\mathbf{C} : \mathbb{R}^{3} \rightarrow \mathrm{SO(3)}$ as
\begin{equation}
\mathbf{R}_{k} 
= 
\mathbf{C}(\mathbf{n}_{c,k})
=:
\begin{bmatrix}
\mathbf{t}_{c,k} & \mathbf{o}_{c,k} & \mathbf{n}_{c,k}
\end{bmatrix}
\,,
\end{equation}
In this work we place the normal along the contact-local Z-axis, and the XY-plane as the tangent space. This results in parameterizing the contact reaction in local coordinates as
\begin{equation}
\boldsymbol{\lambda}_{k} := 
\begin{bmatrix}
    \boldsymbol{\lambda}_{T,k}\\
    \lambda_{N,k}
\end{bmatrix}
\, , \,\,
\boldsymbol{\lambda}_{T, k} := 
\begin{bmatrix}
    \lambda_{t,k}\\
    \lambda_{o,k}
\end{bmatrix}\, ,
\end{equation}
where $\lambda_{N,k}$, $\lambda_{t,k}$, and $\lambda_{o,k}$ are respectively the normal, tangent, and orthogonal components. Thus $\boldsymbol{\lambda}_{k}$, and its normal-tangent decomposition, are expressed in global coordinates as 
\begin{subequations}
\begin{equation}
_{W}\boldsymbol{\lambda}_{k} := \mathbf{R}_{k} \, \boldsymbol{\lambda}_{k}
\label{eq:models:contacts:contact-reaction-in-world}
\end{equation}
\begin{equation}
_{W}\boldsymbol{\lambda}_{T, k} :=  \mathbf{R}_{k} \,
\begin{bmatrix}
    \boldsymbol{\lambda}_{T, k} \\
    0
\end{bmatrix}
\end{equation} 
\begin{equation}
_{W}\boldsymbol{\lambda}_{N, k} :=  \lambda_{N,k} \, \mathbf{n}_{c,k} \, .
\end{equation}
\end{subequations}

\subsubsection*{\textbf{Contact Wrench}}
\label{sec:models:contacts:wrench}
Similarly to joints, the wrench applied to contacting bodies is also computed using the screw transformations described in (\ref{eq:models:screw-transforms}). However, given the aforementioned assumption that contact reactions contain only linear 3D terms, only the left block-column of (\ref{eq:models:screw-transforms}b) is needed. Thus, the contact wrench applied to body $\mathcal{B}_{i}$ by $\mathcal{C}_{k}$ is therefore
\begin{subequations}
\begin{equation}
\mathbf{w}_{c,k,i} = \mathbf{W}_{c,k,i}(\mathbf{r}_{k}, \mathbf{r}_{i}) \, _{W}\boldsymbol{\lambda}_{k}
\end{equation}
\begin{equation}
\mathbf{W}_{c,k,i}(\mathbf{r}_{k}, \mathbf{r}_{i}) =
\begin{bmatrix}
    \mathbb{I}_{3} \\
    \left[ \mathbf{r}_{i} - \mathbf{r}_{k} \right]_{\times} 
\end{bmatrix}
\end{equation}
\label{eq:contact-wrench-transforms}
\end{subequations}

\subsubsection*{\textbf{Contact Motion}}
\label{sec:models:contacts:velocities}
The duality of twists and wrenches allows us to also express the linear velocity of the point of contact on $\mathcal{B}_{i}$ as an affine function of the respective body twist in the form of
\begin{equation}
_{W}\mathbf{v}_{c,k,i} = \mathbf{W}_{c,k,i}(\mathbf{r}_{k}, \mathbf{r}_{i})^{T} \, \mathbf{u}_{i}
\label{eq:body-contact-velocities}
\end{equation}
However, the quantity we are actually interested in is the relative velocity computed at the point of contact $\mathbf{r}_{k}$, between the contacting bodies $A_{k}$ and $B_{k}$. Referring to this simply as \textit{the contact velocity}, (\ref{eq:body-contact-velocities})\footnote{An alternative derivation can be found, if instead, we form the first-order time-derivative of the gap function (\ref{eq:gap-function-def}).} is used to define it precisely using the relations
\begin{subequations}
\begin{equation}
_{W}\mathbf{v}_{c,k,A} = \mathbf{W}_{c,k,A}(\mathbf{r}_{k}, \mathbf{r}_{A,k})^{T} \, \mathbf{u}_{A,k}
\end{equation}
\begin{equation}
_{W}\mathbf{v}_{c,k,B} = \mathbf{W}_{c,k,B}(\mathbf{r}_{k}, \mathbf{r}_{B,k})^{T} \, \mathbf{u}_{B,k}
\end{equation}
\begin{equation}
_{W}\mathbf{v}_{c,k} = _{W}\mathbf{v}_{c,k,A} - _{W}\mathbf{v}_{c,k,B}
\end{equation}
\begin{equation}
\mathbf{v}_{c,k} = \mathbf{R}_{k}^{T} \, _{W}\mathbf{v}_{c,k}
\,\, .
\end{equation}
\label{eq:relative-contact-velocities}
\end{subequations}

\subsubsection*{\textbf{Signorini Conditions}}
\label{sec:models:contacts:singorini}
Also known as the \textit{unilateral contact hypothesis}, the Signorini conditions assert that contact reactions preventing interpenetration between bodies must act only while the contact is closed, i.e. when the gap constraint is active. Using the definition of the gap function from (\ref{eq:gap-function-def}), this statement is formalized as the complementarity problem 
\begin{subequations}
\begin{equation}
g_{c,N,k} := \mathbf{n}_{c,k}^{T} \, \mathbf{g}_{k}(\mathbf{r}_{A,k}, \mathbf{r}_{B,k}) \geq 0
\label{eq:contacts:gap-constraint}
\end{equation}
\begin{equation}
g_{c,N,k} \geq 0
\,\, \perp \,\, 
\lambda_{N,k} \geq 0
\,\,.
\end{equation}
\label{eq:contacts:signorini-conditions-position}
\end{subequations}
However, in order to introduce these complementarity conditions to the formulation of the system dynamics, it is often necessary to state them at the velocity and acceleration level, \cite{chatterjee1999,glocker2001setvalued}. Denoting the first and second derivative of (\ref{eq:contacts:gap-constraint}) w.r.t time as $\text{v}_{c,N, k} := \text{v}_{c,N, k}(\mathbf{s}_{A,k}, \mathbf{s}_{B,k}, \mathbf{u}_{A,k}, \mathbf{u}_{B,k})$, and $a_{c,N, k} := a_{c,N, k}(\mathbf{s}_{A,k}, \mathbf{s}_{B,k}, \mathbf{u}_{A,k}, \mathbf{u}_{B,k}, \dot{\mathbf{u}}_{A,k}, \dot{\mathbf{u}}_{B,k})$, they become
\begin{subequations}
\begin{equation}
\text{v}_{c,N,k} \geq 0
\,\, \perp \,\, 
\lambda_{N,k} \geq 0
\,\,,
\label{eq:contacts:signorini-conditions-velocity}
\end{equation}
\begin{equation}
a_{c,N,k} \geq 0
\,\, \perp \,\, 
\lambda_{N,k} \geq 0
\,\,.
\label{eq:contacts:signorini-conditions-acceleration}
\end{equation}
\end{subequations}

\begin{figure}[t]
\centering
\includegraphics[width=0.9\linewidth]{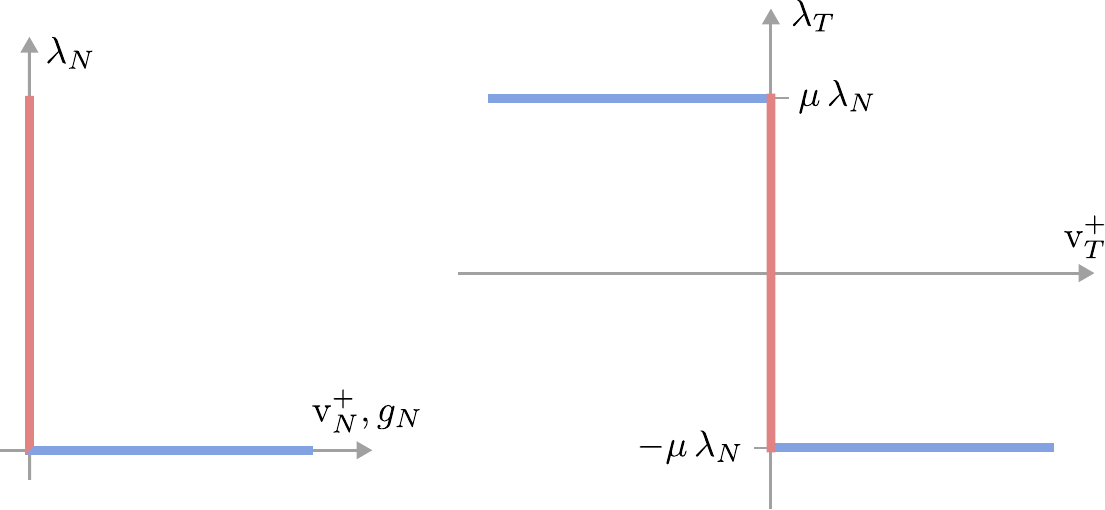}
\caption{The set-valued force laws along the normal and tangent directions. The red segments represent the sets of admissible values that the contact reactions $\lambda_{N}, \lambda_{T}$ can take while the respective contact velocities are zero. The blue segments represent the admissible constant values when non-zero velocities are present in the respective directions.}
\label{fig:set-valued-force-laws}
\end{figure}
\begin{figure}[t]
\centering
\includegraphics[width=1.0\linewidth]{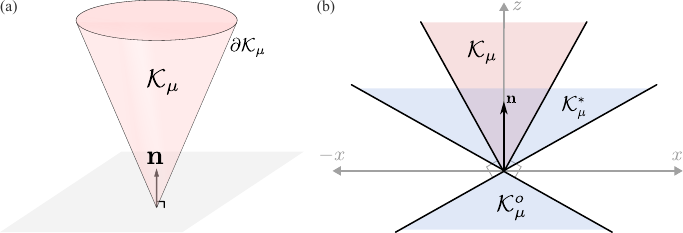}
\caption{The Coulomb friction cone $\mathcal{K}_{\mu}$ (a) and the relationships between the former to its dual cone $\mathcal{K}_{\mu}^{*}$ and polar cone $\mathcal{K}_{\mu}^{o}$ (b). As conjugates of $\mathcal{K}_{\mu}$, the dual and polar cones are those whose elements make obtuse angles with each $\boldsymbol{\lambda} \in \mathcal{K}_{\mu}$, i.e. $\mathbf{x} \in \mathcal{K}_{\mu}^{*} \,,\,\, \mathbf{x}^{T}\,\boldsymbol{\lambda} \geq 0$, and $\mathbf{x} \in \mathcal{K}_{\mu}^{o} \,,\,\, \mathbf{x}^{T}\,\boldsymbol{\lambda} \leq 0$.}
\label{fig:friction-cones}
\end{figure}
\subsubsection*{\textbf{Coulomb Friction}}
\label{sec:models:contacts:coulomb}
Under the assumption of isotropic dry friction, both the static and kinetic frictional reactions are determined by a single friction coefficient $\mu_{k}$. Thus, the two fundamental assumptions of the Coulomb friction model are: (a) static friction equaling $\mu_{k} \, \lambda_{N,k}$ must be overcome in order for motion to occur, and (b) kinetic friction takes the form of a tangential reaction that opposes non-zero contact velocity i.e.
\begin{equation}
\boldsymbol{\lambda}_{T,k} = 
- \mu_{k} \, \lambda_{N,k} \, \frac{\mathbf{v}_{c,T,k}}{\Vert \mathbf{v}_{c,T,k} \Vert}
\,\, , \,\, \Vert \mathbf{v}_{c,T,k} \Vert > 0
\,\,.
\label{eq:contacts:coulomb-sliding-friction}
\end{equation}
These two conditions combined define the set inclusion
\begin{equation} \label{eq:contacts:tangential-friction-disk-inclusion}
\boldsymbol{\lambda}_{T,k} \in \mathcal{D}(\mu_{k}\,\lambda_{N,k})
\,\,,
\end{equation}
where $\mathcal{D}(\mu_{k}\,\lambda_{N,k})$ is a disk of radius $\mu_{k}\,\lambda_{N,k}$. Moreover, the requirement that contact reaction along the normal direction is only repulsive can be written as the set inclusion
\begin{equation} \label{eq:contacts:nonnegative-orthant-normal-inclusion}
\lambda_{N,k} \in \mathbb{R}_{+}
\,\,.
\end{equation}
Fig.~\ref{fig:set-valued-force-laws} depicts these \textit{set-valued force laws} as non-linear functions w.r.t the gap function $g_{c,N,k}$ and contact velocity $\mathbf{v}_{c,k}$. Combining (\ref{eq:contacts:tangential-friction-disk-inclusion}) and (\ref{eq:contacts:nonnegative-orthant-normal-inclusion}) forms the Coulomb friction cone 
\begin{subequations}
\begin{equation}
\boldsymbol{\lambda}_{k} \in \mathcal{K}_{\mu_k} \, ,
\end{equation}
\begin{equation}
\mathcal{K}_{\mu_k} 
:=
\{ \boldsymbol{\lambda} \in \mathbb{R}^{3} 
\, | \, 
\Vert \boldsymbol{\lambda}_{T, k} \Vert_{2} \leq \mu_{k} \, \lambda_{N,k} \,,\,\, \lambda_{N,k} \geq 0 \}
\,.
\end{equation}
\label{eq:contacts:coulomb-friction-cone}
\end{subequations}
The 3D Coulomb friction cone $\mathcal{K}_{\mu_k}$ is a symmetric second-order cone often referred to as a Lorentz cone~\cite{boyd2004convex}, whose aperture is determined by the respective friction coefficient $\mu_{k}$, i.e. $\theta_{k} = 2\,tan^{-1}(\mu_k)$. $\mathcal{K}_{\mu_k}$ is a closed and convex set that bounds the corresponding 3D contact reaction to lie both within and and on its surface boundary $\partial \mathcal{K}_{\mu_k}$.

An important property of Lorentz cones is that they are self dual, i.e. they are equal to their conjugates. This property allows us to also define the \textit{dual cone} of $\mathcal{K}_{\mu}$ as $\mathcal{K}_{\mu_k}^{*} := \mathcal{K}_{\mu_{k}^{-1}}$ and the \textit{polar cone} as $\mathcal{K}_{\mu_k}^{o} := -\mathcal{K}_{\mu_k}^{*} = -\mathcal{K}_{\mu_{k}^{-1}}$. Their geometric relationships are depicted in Fig.~\ref{fig:friction-cones}. As $\mathcal{K}_{\mu}$ is defined in the space of contact reactions, the dual and polar cones are defined in the dual space of the contact velocities.\\

\subsubsection*{\textbf{The Disjunctive Model}}
\label{sec:models:contacts:disjunctive-model}
The Coulomb friction cone model can be combined with the Signorini conditions to define the contact mode $m_k$ in terms of the contact reaction $\boldsymbol{\lambda}_{k}$ and velocity $\mathbf{v}_{c,k}$. This results in the so-called \textit{disjunctive} formulation of the Signorini-Coulomb model:
\begin{equation}
m_{k} =
\begin{cases}
\text{Open} &, \,  v_{c,N,k} > 0  \, \land \, \boldsymbol{\lambda}_{k} = 0 \\
\text{Stick}&, \, \left\Vert \mathbf{v}_{c,T,k} \right\Vert_{2} = 0 \, \land \, \boldsymbol{\lambda}_{k} \in \mathcal{K}_{\mu_k} \\
\text{Slip} &, \, \left\Vert \mathbf{v}_{c,T,k} \right\Vert_{2} > 0 \, \land \, \boldsymbol{\lambda}_{k} \in \partial \mathcal{K}_{\mu_k}
\end{cases}
\,\, .
\label{eq:contacts:disjunctive-signorini-coulomb}
\end{equation}
The first case is trivial, and is equivalent to using $g_{c,N,k} > 0$. The second case corresponds to the friction-cone constraint to when the contact force lies strictly in the interior of the cone. The third corresponds to the case where the contact force lies solely on the boundary $\partial \mathcal{K}_{\mu_k}$ of the cone.\\

\subsubsection*{\textbf{Maximal Dissipation}}
\label{sec:models:contacts:mdp}
The Maximum Dissipation Principle (MDP), most often attributed to J.J. Moreau~\cite{moreau1977application}, asserts that for each active contact $\mathcal{C}_{k}$, the tangential contact reaction must be maximally dissipative, i.e. it should maximize mechanical power loss along the tangent plane. For given normal reaction $\lambda_{N,k} \in \mathbb{R}_{+}$ and tangential velocity $\mathbf{v}_{c,T,k} \in \mathbb{R}^{2}$, the tangential contact reaction according to the MDP is the solution to
\begin{equation}
\begin{array}{rlclcl}
\boldsymbol{\lambda}_{T,k} = 
{\displaystyle
\operatorname*{argmax}_{\mathbf{x}}} 
              & - \mathbf{x}^{T} \, \mathbf{v}_{c,T,k} \\[4pt]
\textrm{s.t.} & \Vert \mathbf{x} \Vert_{2}  \leq \mu_{k} \, \lambda_{N,k}
\end{array} \,\, .
\label{eq:contacts:maximum-dissipation-principle}
\end{equation}
The MDP is a formulation of frictional contact as an optimization problem that generalizes (\ref{eq:contacts:coulomb-sliding-friction}) as it is also applicable to the case of multiple contacts. It is straight-forward to verify that in the single-contact case, (\ref{eq:contacts:coulomb-sliding-friction}) is in fact maximally dissipative. The power product would not obtain its maximum value unless $\boldsymbol{\lambda}_{T,k}$ is colinear and opposite to $\mathbf{v}_{c,T,k}$. Moreover, (\ref{eq:contacts:maximum-dissipation-principle}) is admissible even when $\lambda_{N,k} = 0$ and/or $\mathbf{v}_{c,T,k} = 0$, thus encompassing all contact modes. Furthermore, it motivates the bi-potential method \cite{desaxce1998bipotential} which can be used derive the general NCP. Since (\ref{eq:contacts:coulomb-sliding-friction}) is maximally dissipative, we can form the power-loss product to yield
\begin{equation}
\boldsymbol{\lambda}_{T,k}^{T} \, \mathbf{v}_{c,T,k}
=
- \mu_{k} \, \lambda_{N,k} \, \Vert \mathbf{v}_{c,T,k} \Vert_{2}
\,\,.
\end{equation}
By moving all terms to the left-hand side, and trivially adding the product of normal components, we can form the expression
\begin{equation}
\lambda_{N,k} \, v_{c,N,k} +  \boldsymbol{\lambda}_{T,k}^{T} \, \mathbf{v}_{c,T,k}
+ \mu_{k} \, \lambda_{N,k} \, \Vert \mathbf{v}_{c,T,k} \Vert_{2}
= 0
\,\,,
\label{eq:contacts:coulomb-power-product-result}
\end{equation}
which like the MDP, can be verified to hold in all contact modes. By introducing the \textit{De Saxc\'e correction operator}
\begin{equation}
\boldsymbol{\Gamma}_{\mu_k}(\mathbf{v}_{c,k}) :=
\begin{bmatrix}
0\\
0\\
\mu_{k} \, \Vert \mathbf{v}_{c,T,k} \Vert_{2}
\end{bmatrix}
\,\,,
\label{eq:models:de-saxce-correction}
\end{equation}
and defining the \textit{augmented contact velocity}
\begin{equation}
\hat{\mathbf{v}}_{c,k} 
:= 
\mathbf{v}_{c,k} + \boldsymbol{\Gamma}_{\mu_k}(\mathbf{v}_{c,T,k})
\,\,,
\end{equation}
(\ref{eq:contacts:coulomb-power-product-result}) can be restated as the complementarity condition
\begin{equation}
{\hat{\mathbf{v}}_{c,k}} \in \mathcal{K}_{\mu_{k}}^{*}
\perp 
{\boldsymbol{\lambda}_{k}} \in \mathcal{K}_{\mu_{k}}
\,\,,
\label{eq:contacts:coulomb-signorini-ncp}
\end{equation}
where the complementarity now requires that ${\hat{\mathbf{v}}_{c,k}}$ lie within the dual friction cone $\mathcal{K}_{\mu_{k}}^{*}$. This result is of paramount importance because it demonstrates that (\ref{eq:contacts:coulomb-signorini-ncp}) must hold in order for ${\boldsymbol{\lambda}_{k}}$ to be maximally dissipative, as well as to ensure that the resulting contact velocity lie solely on the tangential plane when the contact is closed. The main difficulty with this formulation is that it makes the complementarity condition \textit{non-associative}, since it is not stated directly in the contact velocities and reactions.\\

\subsubsection*{\textbf{Newtonian Impacts}}
\label{sec:models:contacts:impacts}
To model restitution effects, \textit{Newton's model of impacts} can be incorporated in the velocity-level Signorini conditions (\ref{eq:contacts:signorini-conditions-velocity}). Considering the impact event to occur instantaneously at time $t_{i}$, the twists of bodies $A_{k}$ and $B_{k}$ experience a discontinuity that is denoted by their pre/post-event values $\mathbf{u}_{A,k}^{-/+}$ and $\mathbf{u}_{B,k}^{-/+}$, respectively. According to Newton's model, the pre/post-event contact velocities $\mathbf{v}_{c,N,k}^{-} := \mathbf{v}_{c,N,k}(\mathbf{u}_{A,k}^{-},\mathbf{u}_{B,k}^{-})$ and $\mathbf{v}_{c,N,k}^{+} := \mathbf{v}_{c,N,k}(\mathbf{u}_{A,k}^{+},\mathbf{u}_{B,k}^{+})$ in the direction of the contact normal are coupled by the \textit{restitution equations}
\begin{equation}
\mathbf{v}_{c,N,k}^{+} = - \epsilon_{k} \,\, \mathbf{v}_{c,N,k}^{-}
\,\, ,
\label{eq:contacts:newton-impact-model}
\end{equation}
with $\epsilon_{k}$ being the coefficient of restitution. In effect, the restitution equations (\ref{eq:contacts:newton-impact-model}) serve to augment the Signorini conditions by biasing the contact velocity along the normal direction. Thus the impact-augmented Newton-Signorini model becomes
\begin{equation}
0 \leq {\mathbf{v}_{c,N,k}^{+}} + \epsilon_{k} \,\, \mathbf{v}_{c,N,k}^{-}
\,\, \perp \,\, 
{\boldsymbol{\lambda}_{k}} \in \mathcal{K}_{\mu_{k}}
\,\, .
\label{eq:newton-ncp}
\end{equation}
Lastly, combining (\ref{eq:newton-ncp}) with (\ref{eq:contacts:coulomb-signorini-ncp}), we can form the impact-augmented Newton-Coulomb-Signorini NCP in the form of
\begin{equation}
{\hat{\mathbf{v}}_{c,k}} + \epsilon_{k} \,\, \mathbf{v}_{c,N,k}^{-} \in \mathcal{K}_{\mu_{k}}^{*} 
\,\, \perp \,\, 
{\boldsymbol{\lambda}_{k}} \in \mathcal{K}_{\mu_{k}}
\,\, .
\label{eq:coulomb-newton-signorini-ncp}
\end{equation}

\subsection{Constrained Rigid-Body Dynamics}
\label{sec:constrained-rigid-body-dynamics}
\noindent
With the elements described in the previous sections in hand, we can now derive Equations-of-Motion (EoM) of the CRBD formulation in maximal-coordinates. We begin by formulating, for each body $\mathcal{B}_{i}$, the \textit{Newton-Euler} equations
\begin{equation}
\mathbf{M}_{i} \, \dot{\mathbf{u}}_{i} 
+ 
\begin{bmatrix}
    - m_i \, \mathbf{g} \\
    \left[ \boldsymbol{\omega}_{i} \right] _{\times} \, \mathbf{I}_{i}  \, \boldsymbol{\omega}_{i} 
\end{bmatrix}
= \mathbf{w}_{total, i}
\,\, ,
\label{eq:body-eom}
\end{equation}
where $\mathbf{g}$ is the gravity acceleration vector, and $\mathbf{M}_{i} \in \mathbb{R}^{6 \times 6}$ is the body-wise generalized mass matrix, which is defined as
\begin{equation}
\mathbf{M}_{i} = 
\begin{bmatrix}
    m_i \, \mathbb{I}_{3} & \mathbf{0} \\
    \mathbf{0} & \mathbf{R}_{i} \, _{i}\mathbf{I}_{i} \, \mathbf{R}_{i}^{T}
\end{bmatrix}
\, ,
\end{equation}
and $\mathbf{I}_{i}$ is the body's MoI matrix expressed in world coordinates, which can be expressed using the body's orientation and the local MoI using the \textit{parallel axis theorem} as
\begin{equation}
    \mathbf{I}_{i} = \mathbf{R}_{i} \, _{i}\mathbf{I}_{i} \, \mathbf{R}_{i}^{T} \,\, .
\end{equation}
To further simplify the expression for (\ref{eq:body-eom}), we can denote the non-linear terms of gravity and Coriolis wrenches as 
\begin{equation}
\mathbf{w}_{gc,i} :=
\begin{bmatrix}
    m_i \, \mathbf{g} \\
    - \left[ \boldsymbol{\omega}_{i} \right] _{\times} \, \mathbf{I}_{i}  \, \boldsymbol{\omega}_{i} 
\end{bmatrix}
\,\, ,
\label{eq:nonlinear-eom-wrench}
\end{equation}
and arrive at a concise expression of the body-wise EoM
\begin{equation}
\mathbf{M}_{i} \, \dot{\mathbf{u}}_{i} = \mathbf{w}_{gc,i} + \mathbf{w}_{total, i}
\,\, .
\label{eq:body-eom-spatial}
\end{equation}
Stacking (\ref{eq:body-eom-spatial}) of each body, we can form the system EoM
\begin{equation}
\mathbf{M} \, \dot{\mathbf{u}} = \mathbf{w}_{gc} + \mathbf{w}_{total}
\,\, ,
\label{eq:system-eom-spatial}
\end{equation}
where $\mathbf{M}$ is the block-diagonal generalized mass matrix first introduced in Sec.~\ref{sec:constrained-rigidbody-mechanics}, and $\mathbf{w}_{gc}$ and $\mathbf{w}_{total}$ are the stacked gravity-Coriolis and total wrenches, respectively. 

In order to introduce constraints explicitly, we must decompose $\mathbf{w}_{total}$ into the individual terms which account for all forces applied to the bodies and that are not part of $\mathbf{w}_{gc}$:
\begin{equation}
\mathbf{w}_{total} = \mathbf{w}_{e} + \mathbf{w}_{a} + \mathbf{w}_{J} + \mathbf{w}_{L} + \mathbf{w}_{C}\\
\label{eq:system-total-wrench-components}
\end{equation}
We have also included the external wrenches $\mathbf{w}_{e}$ to account for purely external factors such as external disturbances and forces we may wish to apply ad-hoc. Next we will express $\mathbf{w}_{a}$, $\mathbf{w}_{J}$, $\mathbf{w}_{L}$ and $\mathbf{w}_{C}$ as functions of the respective constraint reactions and generalized actuation forces. To this end, we can apply the screw transforms (\ref{eq:models:screw-transforms}) to (\ref{eq:models:joint-wrench-in-world}) and (\ref{eq:models:limit-wrench-in-world}) combined with (\ref{eq:models:joint-body-wrenches-actuated}), as well as apply  (\ref{eq:models:contacts:contact-reaction-in-world}) to (\ref{eq:contact-wrench-transforms}). Essentially, doing so renders the Jacobian matrices $\mathbf{J}_{a}$, $\mathbf{J}_{J}$, $\mathbf{J}_{L}$, and $\mathbf{J}_{C}$, corresponding to actuated joint DoFs, joints, limits, and contacts, respectively. 

Proceeding element-wise over each $\mathcal{J}_{j}$, $\mathcal{L}_{l}$ , and $\mathcal{C}_{k}$ present in the system, we can define the matrix-blocks of the corresponding Jacobian for each associated body. For each joint $\mathcal{J}_{j}$ attached to body $\mathcal{B}_{i}$, the corresponding matrix-block of $\mathbf{J}_{J}$ is
\begin{equation}
\mathbf{J}_{J,i,j}^{T} = 
\begin{bmatrix}
    \mathbb{I}_{3} & \mathbf{0}_{3} \\
    \left[ \mathbf{r}_{i} - \mathbf{r}_{j} \right]_{\times}  & \mathbb{I}_{3} \, ,
\end{bmatrix} \,
\bar{\mathbf{R}}_{j} \, \bar{\mathbf{X}}_{j} \, \mathbf{S}_{c, j}
\,\, ,
\label{eq:body-joint-passive-Jacobian}
\end{equation}
and if the joint is actuated, the matrix-block of $\mathbf{J}_{a}$ is
\begin{equation}
\mathbf{J}_{a,i,j}^{T} = 
\begin{bmatrix}
    \mathbb{I}_{3} & \mathbf{0}_{3} \\
    \left[ \mathbf{r}_{i} - \mathbf{r}_{j} \right]_{\times}  & \mathbb{I}_{3} \, ,
\end{bmatrix} \,
\bar{\mathbf{R}}_{j} \, \bar{\mathbf{X}}_{j} \, \mathbf{S}_{a, j}
\,\, .
\label{eq:body-joint-actuator-Jacobian}
\end{equation}
 For each limit $\mathcal{L}_{l}$ the matrix-block of $\mathbf{J}_{L}$ is
\begin{equation}
\mathbf{J}_{L,i,j,l}^{T} = 
\begin{bmatrix}
    \mathbb{I}_{3} & \mathbf{0}_{3} \\
    \left[ \mathbf{r}_{i} - \mathbf{r}_{j} \right]_{\times}  & \mathbb{I}_{3} \, .
\end{bmatrix} \,
\bar{\mathbf{R}}_{j} \, \bar{\mathbf{X}}_{j} \, \mathbf{s}_{l}
\,\, ,
\label{eq:joint-limit-Jacobian}
\end{equation}
and for each contact $\mathcal{C}_{k}$  the matrix-block of $\mathbf{J}_{C}$ is
\begin{equation}
\mathbf{J}_{c,i,k}^{T} = 
\begin{bmatrix}
    \mathbb{I}_{3} \\
    \left[ \mathbf{r}_{i} - \mathbf{r}_{k} \right]_{\times} 
\end{bmatrix} \,
\mathbf{R}_{k}
\,\, .
\label{eq:body-contact-Jacobian}
\end{equation}
We will also denote the total constraint Jacobian as
\begin{equation}
\mathbf{J} \, := \, 
\begin{bmatrix}
    \mathbf{J}_{J}^{T} & \mathbf{J}_{L}^{T} & \mathbf{J}_{C}^{T} 
\end{bmatrix}^{T}
\,\, ,
\label{eq:total-constraint-Jacobian}
\end{equation}

Next we will define stacked vectors over all joint DoF actuation forces and constraint reaction Lagrange multipliers corresponding to each constraint group, in the form of
\begin{align}
\boldsymbol{\tau}_{a} :=
\begin{bmatrix}
\boldsymbol{\tau}_{a, 1}\\
\vdots\\
\boldsymbol{\tau}_{a, n_j}
\end{bmatrix}
\,\,,\,\,\,\,
\boldsymbol{\lambda}_{J} :=
\begin{bmatrix}
\boldsymbol{\lambda}_{j, 1}\\
\vdots\\
\boldsymbol{\lambda}_{j, n_j}
\end{bmatrix}
\,\,,\,\,\,\, \nonumber \\[4pt]
\boldsymbol{\lambda}_{L} :=
\begin{bmatrix}
\boldsymbol{\lambda}_{l, 1}\\
\vdots\\
\boldsymbol{\lambda}_{l, n_l}
\end{bmatrix}
\,\,,\,\,\,\,
\boldsymbol{\lambda}_{C} :=
\begin{bmatrix}
\boldsymbol{\lambda}_{c, 1}\\
\vdots\\
\boldsymbol{\lambda}_{c, n_c}
\end{bmatrix}
\,\,,\,\,
\label{eq:system-total-constraint-generalized-forces}
\end{align}
as well as the vector of all constraint reaction multipliers
\begin{equation}
\boldsymbol{\lambda} \, := \,
\begin{bmatrix}
    \boldsymbol{\lambda}_{J}^{T} &
    \boldsymbol{\lambda}_{L}^{T} &
    \boldsymbol{\lambda}_{C}^{T}
\end{bmatrix}^{T}
\,\, .
\label{eq:total-constraint-multipliers}
\end{equation}

Having defined the Jacobian matrices as well as the DoF actuation and constraint reaction vectors, we can now concisely express the total wrenches applied to the system of bodies as 
\begin{flalign}
\mathbf{w}_{total} &= 
\mathbf{w}_{e}
+ \mathbf{J}_{a}^{T} \, \boldsymbol{\tau}_{a} 
+ \mathbf{J}_{J}^{T} \, \boldsymbol{\lambda}_{J} 
+ \mathbf{J}_{L}^{T} \, \boldsymbol{\lambda}_{L}
+ \mathbf{J}_{C}^{T} \, \boldsymbol{\lambda}_{C}\\
&=
\mathbf{w}_{e}
+ \mathbf{J}^{T} \, \boldsymbol{\lambda}
\,\,.
\label{eq:total-constraint-forces}
\end{flalign}
Moreover, by re-arranging some terms, we may define the vector of non-linear generalized force terms
\begin{equation}
\mathbf{h} \, := \, \mathbf{w}_{gc} + \mathbf{w}_{e} + \mathbf{J}_{a}^{T} \, \boldsymbol{\tau}_{a}
\,\, ,
\label{eq:total-nonlinear-forces}
\end{equation}
and thus arrive at the acceleration-level EoM
\begin{equation}
\mathbf{M} \, \dot{\mathbf{u}} = \mathbf{h} + \mathbf{J}^{T} \, \boldsymbol{\lambda}
\,\, .
\label{eq:models:smooth-eom}
\end{equation}
In (\ref{eq:models:smooth-eom}) we recognize the smooth (i.e. continuous) version of the system dynamics. Using the derivation described in Sec.~\ref{sec:constrained-rigidbody-mechanics:impulsive}, it is now quite straightforward to derive the time-stepping version of the system EoM and render
\begin{equation}
\mathbf{M} \, (\mathbf{u}^{+} - \mathbf{u}^{-}) 
= \Delta{t} \, \mathbf{h} + \mathbf{J}^{T} \, \boldsymbol{\lambda}
\,\, .
\label{eq:models:time-stepping-eom}
\end{equation}

\section{Constructing The Dual Problem}
\label{sec:construction}
\noindent
This section describes the construction of the dual FD NCP (\ref{eq:formulation:dual-forward-dynamics-ncp}) using the individual modeling elements described in Sec.~\ref{sec:models}. First, we will detail how the dual NCP can be constructed to incorporate all the elements described in Sec.~\ref{sec:models:contacts} for modeling contacts with restitutive impacts and friction. Second, we will describe two augmentations that will enable us realize constraint stabilization and constraint softening.

\subsection{Origins}
\label{sec:construction:origins}
\noindent
The dual FD NCP transcribed as the NSOCP (\ref{eq:formulation:dual-forward-dynamics-nsocp}) was first introduced by Cadoux~\cite{cadoux2009}. To the best of our knowledge, this is the first instance of employing the \textit{bi-potential method} of De Saxc\'e~\cite{desaxce1998bipotential} to incorporate the non-linear term in the optimization objective. This augmentation is crucial to the NSOCP representing a complete model of contact dynamics, but as the De Saxc\'e operator $\boldsymbol{\Gamma}_{\mu}(\cdot)$ is highly non-linear, it makes the problem NP-hard. The former was first introduced by De Saxc\'e et al in~\cite{desaxce1991new} and subsequently elaborated upon in~\cite{desaxce1998bipotential}. The latter provided a complete theoretical treatment that both justified its validity as well as established its connection to the concept of \textit{superpotentials} of J.J. Moreau~\cite{moreau1977application}, that have long been a key part of the non-smooth dynamics foundations. We recommend the seminal text of Glocker et al~\cite{glocker2001setvalued} for more insight on this topic. The NSOCP as we know it today, as well as the methods to approximate it, were introduced by Acary et al in~\cite{acary2011formulation} where it was derived from the primal FD problem.

\subsection{Problem Definition}
\label{sec:construction:definition}
\noindent
We begin by describing the construction of the three quantities defining the dual NCP, namely, the Delassus matrix $\mathbf{D}$, the free-velocity $\mathbf{v}_{f}$ and the feasible set $\mathcal{K}$ of admissible constraint reactions. However, regarding the Delassus matrix in particular, we do not actually need to state anything further. Having defined the generalized mass matrix $\mathbf{M}$ and the constraint Jacobian $\mathbf{J}$ in Sec.~\ref{sec:models}, we may simply apply (\ref{eq:delassus-matrix}) to compute it directly. Thus, at this stage we can focus solely on $\mathbf{v}_{f}$ and $\mathcal{K}$ as these require some work to define.

The principle mechanism we must employ is projection onto the constraint-space. Exploiting the duality induced by the constraint Jacobian $\mathbf{J}$, we can express the pre/post-event constraint velocities using the relations
\begin{equation}
\mathbf{v}^{-/+} = \mathbf{J} \, \mathbf{u}^{-/+}
\,\,\,\,.
\label{eq:prepost-constraint-velocities}
\end{equation}
In addition, we will re-arrange the time-stepping EoM (\ref{eq:models:time-stepping-eom}) to express the post-event generalized velocity $\mathbf{u}^{+}$ as a function of the constraint reactions $\boldsymbol{\lambda}$ as
\begin{equation}
\mathbf{u}^{+}(\boldsymbol{\lambda} ) = 
\mathbf{u}^{-} 
+ \Delta{t} \, \mathbf{M}^{-1} \, \mathbf{h} 
+ \mathbf{M}^{-1} \, \mathbf{J}^{T} \, \boldsymbol{\lambda} 
\,\,\,\,.
\label{eq:prepost-system-velocities-eom}
\end{equation}
Combining (\ref{eq:prepost-constraint-velocities}) and (\ref{eq:prepost-system-velocities-eom}) yields the vector of post-event constraint-space velocity (\ref{eq:post-event-constraint-velocity}) introduced in Sec.~\ref{sec:formulation:dual-problem}: 
\begin{equation*}
\mathbf{v}^{+}(\boldsymbol{\lambda}) 
= 
\mathbf{D} \, \boldsymbol{\lambda} + \mathbf{v}_{f}
\,\,\,\,.
\end{equation*}

In Sec.~\ref{sec:models:contacts:impacts} we described the Coulomb-Signorini-Newton model (\ref{eq:coulomb-newton-signorini-ncp}) which asserts complementarity conditions on the impact-augmented and De Saxc\'e-corrected constraint-space velocity of each contact $\mathcal{C}_{k}$. We can generalize this statement to include all constraints by forming the NCP
\begin{equation}
\mathcal{K}^{*} \ni 
\mathbf{v}^{+}(\boldsymbol{\lambda}) + \boldsymbol{\Gamma}(\mathbf{v}^{+}(\boldsymbol{\lambda}))
+ \mathbf{E}\,\mathbf{v}^{-}
\perp
\boldsymbol{\lambda} \in \mathcal{K}
\,\,\,\,,
\label{eq:construction:coulomb-signorini-newton-ncp}
\end{equation}
where, and with slight abuse of notation, we can denote the system-level De Saxc\'e operator using its contact-wise definition (\ref{eq:models:de-saxce-correction}) to only act along the contact constraint dimensions
\begin{subequations}
\begin{equation}
\boldsymbol{\Gamma}(\mathbf{v}^{+}(\boldsymbol{\lambda})) := 
\begin{bmatrix}
\,\mathbf{0}_{n_{jd}}\\
\mathbf{0}_{n_{l}}\\
\boldsymbol{\Gamma}_{\boldsymbol{\mu}}(\mathbf{v}_{C}^{+}(\boldsymbol{\lambda}))
\end{bmatrix}
\,\,\,\,,
\end{equation}
\begin{equation}
\boldsymbol{\Gamma}_{\boldsymbol{\mu}}(\mathbf{v}_{C}^{+}(\boldsymbol{\lambda})) :=
\begin{bmatrix}
\boldsymbol{\Gamma}_{\mu_{1}}(\mathbf{v}_{c,1}^{+}(\boldsymbol{\lambda}))\\
\vdots\\
\boldsymbol{\Gamma}_{\mu_{n_c}}(\mathbf{v}_{c,n_c}^{+}(\boldsymbol{\lambda}))\\
\end{bmatrix}
\,\,\,\,.
\end{equation}
\label{eq:construction:system-desaxce-operator}
\end{subequations}
Moreover, $\mathbf{E} \in \mathbb{R}^{n_d \times n_d}$ is the \textit{restitution matrix}, constructed using the coefficients of restitution of all contacts 
\begin{subequations}
\begin{equation}
\mathbf{E} :=
\begin{bmatrix}
0 & 0 & 0\\
0 & 0 & 0\\
0 & 0 & \mathbf{E}_{C}
\end{bmatrix}
\end{equation}
\begin{equation}
\mathbf{E}_{C} :=
\begin{bmatrix}
\begin{bmatrix}
0 & 0 & 0\\
0 & 0 & 0\\
0 & 0 & \epsilon_{c,1}\\
\end{bmatrix} & \dots & 0\\
\vdots & \ddots & \vdots\\
0 & \dots &
\begin{bmatrix}
0 & 0 & 0\\
0 & 0 & 0\\
0 & 0 & \epsilon_{c,n_c}\\
\end{bmatrix}\\
\end{bmatrix}
\,\,\,\,.
\end{equation}
\label{eq:construction:system-restitution-matrix}
\end{subequations}
As it turns out, as the restitution relations are only considered along the contact normal directions\footnote{It is technically feasible to include restitution effects along the contact tangential directions, as is described in~\cite{studer2009numerics,erleben2017} for deriving the PROX/SORprox algorithm. However, we are not aware of any materials whose physical interactions would justify the construction of such a model.}, the impact augmentation term can be absorbed into the free velocity $\mathbf{v}_{f}$ as a component of the bias velocity, i.e. $\mathbf{v}^{*} = \mathbf{E}\,\mathbf{v}^{-}$, which admits the redefinition $\mathbf{v}_{f} \leftarrow \mathbf{v}_{f} + \mathbf{E}\,\mathbf{v}^{-}$. Effectively, this redefinition reduces (\ref{eq:construction:coulomb-signorini-newton-ncp}) to the standard definition of the NCP (\ref{eq:formulation:dual-forward-dynamics-ncp}). We provide a brief proof of this property in Appendix.~\ref{sec:apndx:desaxce-properties:desaxce-invariance}. Thus the free-velocity is simply constructed using
\begin{equation}
\mathbf{v}_{f} = 
\mathbf{J} \, \left( \mathbf{u}^{-} + \Delta{t} \,\mathbf{M}^{-1} \mathbf{h}\right) + \mathbf{E}\,\mathbf{v}^{-}
\label{eq:construction:impact-augmented-free-velocity}
\end{equation}
In practice, however, can avoid constructing $\mathbf{E}$ and performing all the unnecessary matrix-multiplications with zeros by simply appending $\epsilon_{c,k} \, \text{v}_{k}^{-}$ to the row of $\mathbf{v}_{f}$ corresponding to the normal direction of each respective contact $\mathcal{C}_{k}$. Thus the only actual computational cost lies in computing $\mathbf{v}^{-} = \mathbf{J} \, \mathbf{u}^{-}$, which we do so anyway in (\ref{eq:construction:impact-augmented-free-velocity}).

Next, we will define the feasible set $\mathcal{K}$ of admissible constraint reactions. Luckily this is rather straight-forward to do, as it works out to simply be the \textit{Cartesian product} over the element-wise feasibility sets, which is often denoted as 
\begin{equation}
\mathcal{K} = \prod_{i}^{n_j + n_l + n_c} \, \mathcal{K}_{i}
\,\,.
\label{eq:construction:total-feasible-set-product}
\end{equation}
A crucial property of this composition is that, assuming each $\mathcal{K}_{i}$ is a proper cone, then so too is the total feasible set $\mathcal{K}$~\cite{boyd2004convex} . For each joint, limit and contact, we must invoke the appropriate set-valued force law to define the corresponding component set $\mathcal{K}_{i}$ of the aforedescribed product. Thus, for each joint $\mathcal{J}_{j}$, $\mathcal{K}_{j} = \mathbb{R}^{m_j}$, i.e. the constraint reactions are unconstrained and therefore span the entire constraint-space of the joint. For each limit $\mathcal{L}_{i}$, $\mathcal{K}_{l} = \mathbb{R}_{+}$, i.e. the force is constrained to the positive half-space, a.k.a. the nonnegative orthant, of the corresponding joint DoF. Lastly, for each contact $\mathcal{C}_{k}$, $\mathcal{K}_{k} = \mathcal{K}_{\mu_k}$, i.e. the set is the Coulomb friction cone $\mathcal{K}_{\mu_k}$ whose aperture is determined by the contact-specific friction coefficient $\mu_k$. This results in the composite cone
\begin{equation}
\mathcal{K} := \mathbb{R}^{n_{jd}} \times \mathbb{R}^{n_l} \times \mathcal{K}_{\boldsymbol{\mu}}
\,\,,
\label{eq:construction:total-feasible-set}
\end{equation}
where $\boldsymbol{\mu} := [\mu_{1}, \dots, \mu_{n_c}]^{T}$ denotes the vector of all friction coefficients, and $\mathcal{K}_{\boldsymbol{\mu}}$ is the composite cone formed over the Coulomb friction cones of all active contacts.

Lastly, we will form the \textit{total configuration-level constraint function} of the system. To construct it, we can use the definitions of the configuration-level implicit functions of all elements $\mathcal{J}_{j}$, $\mathcal{L}_{l}$  and $\mathcal{C}_{k}$, using (\ref{eq:models:joints:configuration-level-implicit-constraints}), (\ref{eq:models:limits:implicit-configuration-constraint}) and (\ref{eq:contacts:gap-constraint}), respectively. Stacking all individual elements of each constraint group, we first form the respective group-wise constraint functions
\begin{equation}
\mathbf{f}_{J}(\mathbf{s}) =
\begin{bmatrix}
\mathbf{f}_{1}(\mathbf{s})\\
\vdots\\
\mathbf{f}_{n_{j}}(\mathbf{s})
\end{bmatrix}
,
\mathbf{f}_{L}(\mathbf{s}) =
\begin{bmatrix}
g_{l,1}(\mathbf{s})\\
\vdots\\
g_{l,n_{l}}(\mathbf{s})
\end{bmatrix}
,
\mathbf{f}_{C}(\mathbf{s}) =
\begin{bmatrix}
g_{n,1}(\mathbf{s})\\
\vdots\\
g_{n, n_{c}}(\mathbf{s})
\end{bmatrix}
\,.
\label{eq:construction:total-element-constraint-fuctions}
\end{equation}
The total configuration-level implicit constraint function results from simply combining the components in (\ref{eq:construction:total-element-constraint-fuctions}) to form
\begin{equation}
\mathbf{f}(\mathbf{s}) =
\begin{bmatrix}
\mathbf{f}_{J}(\mathbf{s})\\
\mathbf{f}_{L}(\mathbf{s})\\
\mathbf{f}_{C}(\mathbf{s})
\end{bmatrix}
= 0
\,\,.
\label{eq:construction:total-configuration-constraint-fuction}
\end{equation}
Although not directly part of the definition of dual FD NCP (\ref{eq:formulation:dual-forward-dynamics-ncp}), we will use it to realize the augmentations in the sections that follow, as well as later in the definition of performance metrics in Sec.~\ref{sec:benchmarking:metrics:accuracy}.

\subsection{Constraint Stabilization}
\label{sec:construction:stabilization}
\noindent
Without employing a numerical integration scheme that can, at each discrete time-step, update the state of the system in a way that satisfies all constraints exactly, it is inevitable that numerical drift will lead to violations of the configuration-level constraints (\ref{eq:models:joints:configuration-level-implicit-constraints}), (\ref{eq:models:limits:configuration-level-implicit-constraints}) and (\ref{eq:contacts:signorini-conditions-position}). Such is the case when using simpler time-stepping integrators such as semi-implicit Euler (\ref{eq:formulation:time-stepping-euler}) or Moreau's midpoint scheme~\cite{moreau1988unilateral}. In addition, performing CD at discrete time-steps unavoidably leads to some interpenetration of the bodies, which, is directly dependent on the step size $\Delta{t}$ and the potential velocities of the bodies. 

These simulation artifacts are also exacerbated by the imprecision that may be exhibited by dual solvers. Essentially, we can understand why by realizing that the solutions to the NCP/CCP they render are computed with the goal of reducing the constraint \textit{velocity} to zero along the bilateral and unilateral constraint dimensions (i.e. not considering frictional components). This means that dual solvers have no direct impact on the constraints at configuration-level, and any under/over estimation of the constraint reactions will lead to further numerical drift, compounding that of the integration scheme.

However, configuration-level constraint violation can be alleviated by employing so-called \textit{constraint stabilization} techniques, such as that of Baumgarter~\cite{baumgarte1972stabilization}. In this work we followed an approach to realizing Baumgarter-like constraint stabilization based on the work of Preclik et al~\cite{preclik2018mdp} as well as the recent technical guide~\cite{siggraph2022contact}. The end result will be an augmentation of the problem that follows the same procedure described previously in Sec.~\ref{sec:construction:definition} for modeling frictional contacts with restitutive impacts. However, for illustrative purposes, it will be easier to explain the approach by first considering the simpler case where only bilateral joint constraints are present in the system.

The first step involves using (\ref{eq:prepost-constraint-velocities}) and (\ref{eq:prepost-system-velocities-eom}) to express the velocity-level bilateral constraints as a function of the constraint reactions $\boldsymbol{\lambda}$ via the definition of the post-event constraint velocity. Doing so yields the constraints 
\begin{equation}
\mathbf{v}^{+}(\boldsymbol{\lambda}) = \mathbf{J} \, \mathbf{u}^{+}(\boldsymbol{\lambda}) = 0
\label{eq:construction:joints-only-post-event-velocity}
\end{equation}
We will consider an augmentation of (\ref{eq:construction:joints-only-post-event-velocity}) that introduces control feedback terms that are functions of the instantaneous constraint violation. The latter is represented by the vector of constraint residuals computed from the configuration-level constraint functions (\ref{eq:construction:total-element-constraint-fuctions}) of the joints, in the form of
\begin{equation}
\mathbf{r}_{J} = \mathbf{f}_{J}(\mathbf{s})
\label{eq:construction:joint-constraint-residual-definition}
\,\,.
\end{equation}
Thus the \textit{stabilized} bilateral constraints can be stated as
\begin{equation}
\mathbf{v}^{+}(\boldsymbol{\lambda}) + \epsilon \, \boldsymbol{\lambda} = - \frac{v}{\Delta{t}} \, \mathbf{r}_{J}
\,\,\,\,,
\label{eq:construction:general-baumgarte-stabilization}
\end{equation}
where $\epsilon$ and $v$ are often referred to as the Constraint Force Mixing (CFM) constant and the Error Reduction Parameter (ERP), respectively. To the best of our knowledge, these names are attributed to the ODE physics engine~\cite{smith2008ode}. We refer interested readers to~\cite{siggraph2022contact, mujoco2024docs} for a more in depth explanation and interpretation of these parameters. In effect, they serve to respectively soften and dampen the constraint-space dynamics of the system. As our objective is to introduce such a constraint-stabilization scheme to the dual problem without compromising its structure as an NCP, we will necessarily omit the CFM term $\epsilon \, \boldsymbol{\lambda}$ for now. We do so because CFM introduces compliance to the system which effectively softens the constraints, and is the topic of Sec.~\ref{sec:construction:softening}. 

The next step, therefore, is to introduce ERP-type constraint stabilization to the NCP. Similar to how incorporating restitutive impacts augments the complementarity conditions (\ref{eq:contacts:coulomb-signorini-ncp}) to yield (\ref{eq:coulomb-newton-signorini-ncp}), the constraint stabilization of (\ref{eq:construction:general-baumgarte-stabilization}) can be introduced via an additional bias in the form of
\begin{equation}
\mathcal{K}^{*} \ni 
\hat{\mathbf{v}}(\boldsymbol{\lambda}) + \mathbf{v}_{B}
\perp
\boldsymbol{\lambda} \in \mathcal{K}
\,\,\,\,,
\label{eq:construction:stabilized-coulomb-signorini-newton-ncp}
\end{equation}
where $\mathbf{v}_{B}$ is the \textit{constraint stabilization bias velocity} that we need to construct. Moreover, as stabilization acts exclusively on the unilateral components of limit and contact constraints (i.e. contact normal directions), we can once more exploit the invariance of the De Saxc\'e operator w.r.t the bias $\mathbf{v}_{B}$ and include it as an additional component of the total bias to the free-velocity, i.e. $\mathbf{v}^{*} = \mathbf{E}\,\mathbf{v}^{-} + \mathbf{v}_{B}$. This ensures that the NCP retains the structure of the un-augmented problem.

Now we can proceed to construct $\mathbf{v}_{B}$. First of all, we will specify distinct ERPs $\alpha_{j}, \beta_{l}, \gamma_{k} \in \mathbb{R}_{+}$ for each joint $\mathcal{J}_{j}$, limit $\mathcal{L}_{l}$ and contact $\mathcal{C}_{k}$ constraint subset, respectively. For each joint $\mathcal{J}_{j}$,  the bilateral constraint residual and stabilization bias are
\begin{subequations}
\begin{equation}
\mathbf{r}_{J,j} = \textbf{f}_{J,j}(\mathbf{s})
\label{eq:construction:joint-constraint-residual}
\end{equation}
\begin{equation}
\mathbf{v}_{B,J,j} 
= \frac{\alpha_{j}}{\Delta{t}} \, \mathbf{r}_{J,j}
\,\,\,\, .
\label{eq:construction:joint-constraint-correction-bias}
\end{equation}
\label{eq:construction:joint-constraint-stabilization}
\end{subequations}
For each limit $\mathcal{L}_{l}$, as the constraints are unilateral, we will need to additionally constrain the residuals to act in the orthant that satisfies the complementarity conditions. Thus, the corresponding residual and bias velocity in this case are
\begin{subequations}
\begin{equation}
r_{l} = g_{l}(q_{l})
\label{eq:construction:limit-constraint-residual}
\end{equation}
\begin{equation}
\text{v}_{B,L,l} = \frac{\beta}{\Delta{t}} \, \min(0 \,,\,\, r_{l})
\,\,\,\, .
\label{eq:construction:limit-constraint-correction-bias}
\end{equation}
\label{eq:construction:limit-constraint-stabilization}
\end{subequations}
For the unilateral contact constraints of each active contact $\mathcal{C}_{k}$, we do something similar as for limits. Recalling our conventions for contacts described in Sec.\ref{sec:models:contacts}, body penetration occurs when $d_{c,k} \leq 0$. However, some CD methods and software can render potential collisions, i.e. contacts with positive penetration, in order to allow for preemptive actions. For this reason, Preclik et al~\cite{preclik2018mdp} proposed a double-sided method which ensures that the contact reaction remains zero if the contact would not close within the time-step. Thus, for each $\mathcal{C}_{k}$, this approach takes the form of
\begin{subequations}
\begin{equation}
r_{k} = d_{c,k} + \delta_{c}
\label{eq:construction:contact-constraint-residual}
\end{equation}
\begin{equation}
\text{v}_{B,c,k} 
= 
\gamma_{k} \, \min\left( 0 \,,\,\, \frac{r_{c,k}}{\Delta{t}} \right) 
+ \max\left( 0 \,,\,\, \frac{r_{c,k}}{\Delta{t}} \right) 
\,\, ,
\label{eq:construction:contact-constraint-correction-bias}
\end{equation}
\label{eq:construction:contact-constraint-stabilization}
\end{subequations}
where the constant $\delta_{c} \in \mathbb{R}$ is the so-called \textit{contact penetration margin}. The purpose of the penetration margin is to either serve as a tolerance of permissible contact penetration ($\delta_{c} \geq 0$), or to additionally inflate the constraint boundary ($\delta_{c} < 0$).

The constant scaling factors $\alpha_{j}/\Delta{t}$, $\beta_{l}/\Delta{t}$, and $\gamma_{j}/\Delta{t}$ thus bias the constraint reactions to push the bodies in directions that drive the corresponding residuals to zero, e.g. $\mathbf{r}_{J,j} \rightarrow 0$. Setting $\alpha_{j},\beta_{l},\gamma_{j} = 1$, amounts to correcting the residual in a single step. However, one must be careful with using this value, because it can lead to excessive oscillations, as the problem can become very stiff, since the correction behaves equivalently to a linear spring. Some physics engines like NVIDIA PhysX, apply an ad-hoc capping of the this bias, that is referred to as the \textit{maximum depenetration velocity}. In our current implementation we have not yet considered such additional effects, but these may be considered in future work.

Now we can construct the bias velocity vectors for each constraint type by simply stacking the individual contributions of each constraint element using (\ref{eq:construction:joint-constraint-stabilization}), (\ref{eq:construction:limit-constraint-stabilization}) and (\ref{eq:construction:contact-constraint-stabilization}), to form
\begin{equation}
\mathbf{v}_{B,J} 
= 
\begin{bmatrix}
    \text{v}_{B,J,1}\\
    ...\\
    \text{v}_{B,J,n_j} 
\end{bmatrix}
,\,
\mathbf{v}_{B,L} 
= 
\begin{bmatrix}
    \text{v}_{B,L,1}\\
    ...\\
    \text{v}_{B,L,n_l} 
\end{bmatrix}
,\,
\mathbf{v}_{B,C} 
= 
\begin{bmatrix}
    \text{v}_{B,c,1}\\
    ...\\
    \text{v}_{B,c,n_c} 
\end{bmatrix}
,
\label{eq:construction:stabilization-bias-velocities}
\end{equation}
and then combine them to form the total bias velocity
\begin{equation}
\mathbf{v}_{B} 
= 
\begin{bmatrix}
\mathbf{v}_{B,J}^{T} &
\mathbf{v}_{B,L}^{T} &
\mathbf{v}_{B,C}^{T} 
\end{bmatrix}^{T}
\,\,.
\label{eq:construction:total-stabilization-bias-velocity}
\end{equation}
Finally, the constraint-stabilized free-velocity is computed as
\begin{equation}
\mathbf{v}_{f} = 
\mathbf{J} \, \left( \mathbf{u}^{-} + \Delta{t} \,\mathbf{M}^{-1} \mathbf{h}\right) 
+ \mathbf{E}\,\mathbf{v}^{-}
+ \mathbf{v}_{B}
\,\,.
\label{eq:construction:stabilized-impact-augmented-free-velocity}
\end{equation}

\subsection{Constraint Softening}
\label{sec:construction:softening}
\noindent
The dual FD problem poses significant numerical difficulties. First of all, in either NCP (\ref{eq:formulation:dual-forward-dynamics-ncp}) or NSOCP (\ref{eq:formulation:dual-forward-dynamics-nsocp}) forms, the complementarity constraint to be enforced is non-convex and non-smooth, and the De Saxc\'e operator is highly non-linear. Second, even if were to consider a relaxation of the in the form of the CCP (\ref{eq:formulation:dual-forward-dynamics-ccp}) or the SOCP (\ref{eq:formulation:dual-forward-dynamics-socp}), the problem would still not be strictly convex as the Delassus matrix is in general only positive semi-definite i.e. it is rarely positive definite and often ill-conditioned. To address these challenges, we will consider another type of relaxation that regularizes the problem by softening the constraints. 

Specifically, we follow the approach introduced by Todorov et al~\cite{todorov2014analytical} for MuJoCo~\cite{mujoco2024github} that introduces a diagonal regularizer $\mathbf{R} \in \mathbb{S}^{n_d}_{++}$ to the Delassus matrix as well as an additional regulation bias velocity $\mathbf{v}_{r} \in \mathbb{R}^{n_d}$ to the free-velocity vector. However, as the particulars of how MuJoCo realizes this method has evolved significantly since the initial publication of~\cite{todorov2014analytical}, we must clarify that we our implementation is mostly based on that described in the official documentation~\cite{mujoco2024docs} of MuJoCo. Thus, in this section we will describe how MuJoCo's constraint softening scheme can be applied to both the NCP and CCP formulations of the FD problem. We will provide a brief derivation to clarify how this is possible and we refer readers to~\cite{mujoco2024docs} for further details.

The constraint softening scheme essentially serves to augment the NSOCP (\ref{eq:formulation:dual-forward-dynamics-nsocp}) introduced in Sec.~\ref{sec:formulation:dual-problem} to render the Regularized NSOCP (RNSOCP) that takes the form of
\begin{flalign}
& \text{RNSOCP}(\mathbf{D} \,,\,\, \mathbf{R} \,,\,\, \mathbf{v}_{f} \,,\,\, \mathbf{v}_{r} \,,\,\, \mathcal{K}): & \nonumber\\
& \,\, \textbf{Find} \,\, \boldsymbol{\lambda} =
\displaystyle
\operatorname*{argmin}_{\mathbf{x} \in \mathcal{K}} \,\,
\frac{1}{2} \,\mathbf{x}^{T} \, \left( \mathbf{D} + \mathbf{R} \right) \, \mathbf{x}\nonumber\\
& \quad\quad\quad\quad\quad\quad\quad\quad\quad 
+ \mathbf{x}^{T} \, \left( \mathbf{v}_{f} + \mathbf{v}_{r} + \boldsymbol{\Gamma}(\mathbf{v}^{+}(\boldsymbol{\lambda}))\right)
\,.
\label{eq:softening:dual-forward-dynamics-convex-rnsocp}
\end{flalign}
Equivalently, we can also state the CCP-type version as the Regularized Second-Order Cone Program (RSOCP)
\begin{flalign}
& \text{RSOCP}(\mathbf{D} \,,\,\, \mathbf{R} \,,\,\, \mathbf{v}_{f} \,,\,\, \mathbf{v}_{r} \,,\,\, \mathcal{K}): & \nonumber\\
& \,\, \textbf{Find} \,\, \boldsymbol{\lambda} =
\displaystyle
\operatorname*{argmin}_{\mathbf{x} \in \mathcal{K}} \,\,
\frac{1}{2} \,\mathbf{x}^{T} \, \left( \mathbf{D} + \mathbf{R} \right) \, \mathbf{x}
+ \mathbf{x}^{T} \, \left( \mathbf{v}_{f} + \mathbf{v}_{r} \right)
\label{eq:softening:dual-forward-dynamics-convex-rsocp}
\end{flalign}
which, is \textit{strictly convex} since the augmented Delassus matrix $\mathbf{D} + \mathbf{R}$ is symmetric and positive definite (PD). Thus, the combined effects of the CCP relaxation and constraint softening yield a dual FD problem that can be solved much more efficiently than the original NCP problem. However, one limitation of this approach is that it does not resolve the issue of ill-conditioning of $\mathbf{D}$, and can in fact lead to instabilities. This phenomena will be demonstrated later in the experiments presented Sec.~\ref{sec:experiments:animatronics}, but we will outline a potential reason for this at the end of this section.

To derive and construct the constraint softening terms we will once again start by assuming only bilateral joint constraints are present in the system. Fundamentally, employing the diagonal regularizer $\mathbf{R}$ amounts to introducing mechanical compliance to each constraint dimension, which is akin to introducing multiple spring-mass-like elements to the system. We can define the potential energy of this generalized "spring" element, as a function of the generalized state using the definition of the equality constraints functions (\ref{eq:models:joints:velocity-level-implicit-constraints}), to form
\begin{equation}
U_{C}(\mathbf{s}, \mathbf{u}^{+}) 
= \frac{1}{2} \, \left\Vert \, \dot{\mathbf{f}}_{J}(\mathbf{s}, \mathbf{u}^{+}) \, \right\Vert_{\mathbf{K}}^{2}
\,\, ,
\label{eq:softening:compliance-potential-def}
\end{equation}
where $\mathbf{K} \in \mathbb{S}_{++}^{n_d}$ is a PD stiffness matrix. Note how we have used the velocity-level constraints as opposed to the configuration-level variants (\ref{eq:models:joints:configuration-level-implicit-constraints}). This is possible by virtue of index reduction, and is necessary since we require the potential and the resulting force to be functions of the post-event generalized velocities $\mathbf{u}^{+}$. Taking the partial derivative w.r.t the former yields the generalized force of the spring as
\begin{equation}
\boldsymbol{\tau}_{C} 
:= - \frac{\partial U_{C}(\mathbf{s}, \mathbf{u}^{+})}{\partial \mathbf{u}^{+}} 
= - \mathbf{J}_{J}^{T} \, \mathbf{K} \, \dot{\mathbf{f}}_{J}(\mathbf{s}, \mathbf{u}^{+})
\,\, .
\label{eq:softening:compliance-potential-generalized-forces}
\end{equation}
Observing the multiplicand of the Jacobian matrix $\mathbf{J}_{J}$, we can equate this quantity to the force of a Hookean linear spring model in constraint-space, where the residuals play the role of spring deflections. These spring forces are thus exactly the constraint reactions, and thus we can state the equivalence as
\begin{equation}
\boldsymbol{\lambda} = - \mathbf{K} \, \dot{\mathbf{f}}_{J}(\mathbf{s}, \mathbf{u}^{+})
\,\, \Leftrightarrow \,\,
\dot{\mathbf{f}}_{J}(\mathbf{s}, \mathbf{u}^{+}) = - \mathbf{R} \, \boldsymbol{\lambda}
\,\,\,\,,
\label{eq:softening:compliance-lambda-from-constraint-map}
\end{equation}
where now we define the regularizer as $\mathbf{R} := \mathbf{K}^{-1}$ which makes it exactly equivalent to a mechanical compliance. Combining (\ref{eq:softening:compliance-lambda-from-constraint-map}) with the time-stepping EoM (\ref{eq:models:time-stepping-eom}) to form
\begin{equation}
\begin{bmatrix}
    \mathbf{M} & \mathbf{J}_{J}^{T} \\
    \mathbf{J}_{J} & -\mathbf{R}
\end{bmatrix}
\, 
\begin{bmatrix}
     \mathbf{u}^{+} \\
     - \boldsymbol{\lambda}
\end{bmatrix}
=
\begin{bmatrix}
    \Delta{t} \, \mathbf{h} + \mathbf{M}\,\mathbf{u}^{-} \\
    -\mathbf{v}^{*}
\end{bmatrix}
\,\, .
\label{eq:softening:compliant-dynamics-linear-system}
\end{equation}
Note how we have included the bias velocity $\mathbf{v}^{*}$ in the RHS of (\ref{eq:softening:compliant-dynamics-linear-system}). This is to account for the fact that any augmentations considered besides constraint softening would necessarily be included in the definition of the velocity-level constraints, i.e. $\dot{\mathbf{f}}_{J}(\mathbf{s}, \mathbf{u}^{+}) := \mathbf{J}_{J} \, \mathbf{u}^{+} - \mathbf{v}^{*}$. The system (\ref{eq:softening:compliant-dynamics-linear-system}) demonstrates that the inclusion of the compliance in the joints can render the system solvable with direct methods, otherwise its absence would result in a zero matrix in-place of $\mathbf{R}$ and thus potentially admit multiple or no solution at all. Moreover, (\ref{eq:softening:compliant-dynamics-linear-system}) can also result from expressing the KKT conditions of (\ref{eq:softening:dual-forward-dynamics-convex-rsocp}), which in the absence of unilateral constraints is identical to (\ref{eq:softening:dual-forward-dynamics-convex-rnsocp}). 

Taking an optimization perspective, we can express the Lagrangian of the original primal FD problem as
\begin{equation}
\mathcal{L}(\mathbf{u}^{+}, \boldsymbol{\lambda}) 
:= \frac{1}{2} \Vert \mathbf{u}^{+} - \mathbf{u}_{f} \Vert_{\mathbf{M}} + \boldsymbol{\lambda}^{T} \left( \mathbf{J}_{J} \mathbf{u}^{+} + \mathbf{v}^{*} \right)
\,\,,
\label{eq:softening:primal-lagrangian}
\end{equation}
and observe the resulting saddle-point problem
\begin{equation}
\min_{\mathbf{u}^{+}} \max_{\boldsymbol{\lambda}} \, 
\mathcal{L}(\mathbf{u}^{+}, \boldsymbol{\lambda}) 
- \frac{1}{2} \left\Vert \, \boldsymbol{\lambda} \, \right\Vert_{\mathbf{R}}^{2}
\,\,\,\,.
\label{eq:softening:saddle-point-problem-with-tikhonov-regularization}
\end{equation}
The additional term in the Lagrangian constitutes what is referred to in the optimization literature as \textit{Tikhonov regularization} \cite{tikhonov1995numerical}. Applying such a regularization is therefore guaranteed to yield a unique minimum-norm solution, which is a significant benefit of constraint softening, since it stabilizes the resulting contact reactions w.r.t successive time-steps. 

However, this regularization can result in significant underestimation of the constraint forces, leading to intolerable constraint violation. Physically, this means that joints break, bodies drift away from one another, and contacting bodies exhibit significant interpenetration. Todorov et al \cite{todorov2014analytical} proposed a solution to this predicament: construct  $\mathbf{v}_{r}$ as a special form the constraint stabilization described previously in Sec.~\ref{sec:construction:stabilization}. In a nutshell, it causes the constraint-space dynamics to behave as a multi-dimensional, decoupled, and critically-damped spring-mass system. Moreover, by carefully crafting a non-linear modulation of the constraint residual, the effective compliance, damping and stiffness can be adapted automatically according to the magnitude of the constraint violation. Proceeding element-wise using index $j$ for each scalar constraint equation, the MuJoCo approach can be outlined as follows:
\begin{enumerate}
\item Define shaping functions that modulate the residuals $r_{j}$.
\item Use the shaping functions to compute impedance $d_{j}$. 
\item Use each impedance $d_{j}$ to compute corresponding compliance $c_{j}$, damping $b_{j}$ and stiffness $k_{i}$ coefficients.
\item Construct $\mathbf{R}$ and $\mathbf{a}_{r}$ using the triplets $c_{j}, b_{j}, k_{i} \, \forall j$. 
\end{enumerate}

At the core of this approach lies a family of shaping functions, which we'll generically denote as $f_{s}(x)$, where $f_{s} : \mathbb{R} \rightarrow \mathbb{R}$. The shaping functions are reflected sigmoids (i.e. symmetric w.r.t the domain) that modulate the residuals $r_{j}$. Parameterized by the midpoint value $m_{j}$ and power factor $p_{j}$, they take the form of the linear and non-linear variants
\begin{subequations}
\begin{equation}
\text{f}_{s,L}(x) = x
\end{equation}
\begin{equation}
\text{f}_{s,NL}(x) = 
\begin{cases}
\frac{x^{p_{j}}} {m_{j}^{1-p_{j}}} & , x \leq m_{j} \\
1 - \frac{(1 - x)^{p_{j}}}{(1 - m_{j})^{1-p_{j}}} & , x > m_{j}
\end{cases}
\,\, .
\end{equation}
\label{eq:softening:mujoco-sigmoids}
\end{subequations}
\begin{figure}[!hb]
\centering
\includegraphics[width=1.0\linewidth]{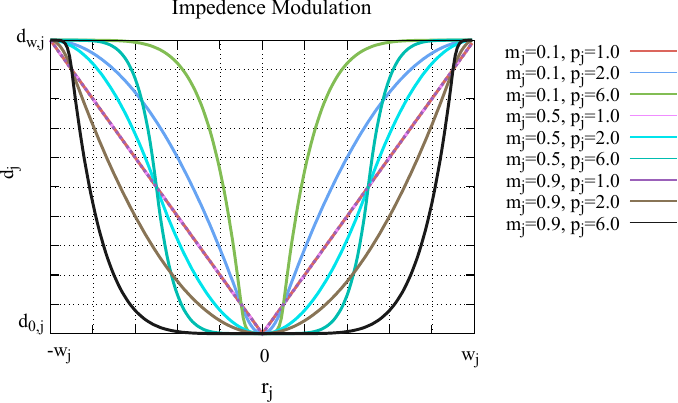}
\caption{Examples of impedance shaping using the sigmoidal shaping functions $f_{s}(x)$, for various settings of the midpoint value $m_{j}$ and power factor $p_{j}$ parameters. The 2D plots are agnostic to the absolute impedance values, and are shown only relative to the minimum and maximum impedance defined by $d_{0,j}$ (dmin) and $d_{0,j}$ (dmax) respectively. The X-axis is clipped to the range defined by the width parameter $w_{j}$.}
\label{fig:impedence-shaping}
\end{figure}
We can choose between either depending on whether we select $p_{j} = 1$ to use the linear, or $p_{j} > 1$ for the nonlinear. Having chosen a variant, the constraint impedance is then computed using the piece-wise continuous pseudo-function
\begin{subequations}
\begin{equation}
x_{j} = \frac{ | r_{j} - m_{j} | } {w_{j}}
\end{equation}
\begin{equation}
d_{j} :=
\begin{cases}
\frac{1}{2} \, (d_{0,j} + d_{w,j}) & ,  d_{0,j} = d_{w,j} \lor w_{j} \approx 0 \\
d_{w,j} & , x_{j} \geq 1 \\
d_{0,j} & , x_{j} \leq 0 \\
d_{0,j} + f_{s}(x_{j}) \, (d_{w,j} - d_{0,j} )  & , \text{otherwise}
\end{cases}
\,\, ,
\end{equation}
\label{eq:softening:mujoco-impedence}
\end{subequations}
where $x_j$ is a shorthand for the absolute of the scaled and offset residual $r_j$, $w_{j}$ is the width parameter that defines the interval $[-w_{j}, w_{j}]$ within which the impedance is shaped by $r_j$, $d_{0,j} = d(0)$ is the minimum impedance when $x_{j} = 0$, and $d_{w,j} = d(w_{j})$ is the maximum impedance when $r_j = w_{j}$. Note that this operation effectively bounds the residual-to-impedance mapping to the ranges $r_j \in [-w_{j}, w_{j}]$ and $d_{j} \in [d_{0,j}, d_{w,j}]$. Moreover, as we require that both impedance limits $d_{0,j}$ and $d_{w,j}$ (with $d_{0,j} \leq d_{w,j}$) are selected within the $[0, 1]$ range, the computed impedance also lies in the range $d_{j} \in [0, 1]$. Fig.\ref{fig:impedence-shaping} demonstrates examples of the residual-to-impedance mappings for various shaping function parameters. Therefore, given impedance $d_{j}$ computed as just described, the compliance, damping and stiffness coefficients are respectively 
\begin{equation}
c_{j}  = \frac{1 - d_{j}}{d_{j}}
\,,\,\,
b_{j} = \frac{2}{d_{w,j} \, T_{j}} 
\,,\,\,
k_{j} = \frac{d_{j}}{d_{w,j}^{2} \, T_{j}^{2} \, \beta_{j}^{2}}
\label{eq:softening:mujoco-cbk-terms}
\end{equation}
where $T_{j}$ and $\beta_{j}$ are the time-constant and the damping ratio, respectively, of the effective mass-spring-damper system of the $j$-th constraint. The time-constant is essentially the inverse of the natural frequency of the system multiplied by the damping ratio. Finally, the coefficients of the compliance matrix and reference acceleration vector are respectively computed as
\begin{equation}
R_{jj} = c_{j} \, D_{jj}
\,\,,\,\,\,\,
v_{r,j} =  \Delta{t} \,  \left( b_{j} \, v_{j}^{-} + k_{j} \, r_{j} \right)
\label{eq:softening:mujoco-R-vr-elementwise}
\end{equation}
where $D_{jj}$ is the $j$-th diagonal coefficient of the Delassus matrix $\mathbf{D}$, and $\text{v}_{j}$ is the $j$-th coefficient of constraint-space velocity vector $\mathbf{v}$. This formulation therefore proceeds element-wise and allows us to select distinct values of the parameter set $\{ m, p, w, d_{0}, d_{w}, T, \beta \}_{j}$ for each constraint $j$. This affords us great flexibility in selecting those that achieve the best results possible, but with the disadvantage of not being automated.

The final step involves constructing the regulation velocity $\mathbf{v}_{r}$. Although in practice we actually add each term directly to each dimension of $\mathbf{v}_{f}$, for illustrative purposes, we can define this more clearly by expressing the diagonal matrices
\begin{flalign}
\mathbf{K}_{r} &= \operatorname{diag}([k_{1}, \dots, \ k_{n_d}]^{T}) \\[4pt]
\mathbf{B}_{r} &= \operatorname{diag}([b_{1}, \dots, \ b_{n_d}]^{T}) \\[4pt]
\mathbf{R} &= \operatorname{diag}([c_{1} \, D_{11}, \dots, \ c_{n_d} \, D_{n_d \, n_d}]^{T})
\,\,\,\,.
\label{eq:softening:mujoco-KBR}
\end{flalign}
The regulation velocity $\mathbf{v}_{r}$ is thus computed as
\begin{equation}
\mathbf{v}_{r} = \Delta{t} \,  \left( \mathbf{B}_{r} \, \mathbf{v}_{j}^{-} + \mathbf{K}_{r} \, \mathbf{r} \right)
\,\,\,\,.
\label{eq:softening:mujoco-vr}
\end{equation}

Having outlined what MuJoCo-type constraint-softening terms, we can briefly analyze what they actually represent, and why they may lead to instabilities. As the regularized Delassus matrix is positive definite due to the presence of $\mathbf{R}$, it is thus also invertible, admitting the closed-form solution
\begin{equation}
\boldsymbol{\lambda}_{0} := 
\left( \, \mathbf{D} + \mathbf{R} \, \right)^{-1} \, ( \, \mathbf{v}_{f} + \mathbf{v}_{r} \, )
\,\,.
\label{eq:softening:compliant-dual-unconstrained-solution}
\end{equation}
Using (\ref{eq:softening:compliant-dual-unconstrained-solution}) renders the constraint-space velocity
\begin{equation}
\mathbf{v}_{0}^{+} = 
\mathbf{D} \, \left(\mathbf{D} + \mathbf{R} \right)^{-1} \, \mathbf{v}_{f}
+ \mathbf{R} \, \left(\mathbf{D} + \mathbf{R} \right)^{-1} \, \mathbf{v}_{r}
\,\, ,
\label{eq:softening:compliant-constrained-velocity}
\end{equation}
which, reveals the element-wise constraint-space dynamics
\begin{equation}
v_{0,j}^{+} = d_{j} \, ( b_{j} \, v_{j}^{-} + k_{j} \, r_{j}) + (1 - d_{j}) \, v_{f,j}
\,\,\,\,.
\label{eq:softening:compliant-constraint-dynamics}
\end{equation}
This demonstrates that the system will behave as multiple decoupled spring-mass-damper elements. However, this can lead to significant oscillations if the parameters that determine the non-linear impedance are not chosen carefully. Therefore, although the regularization and regulation of the problem is somewhat automated, it still requires the user to choose the aforedescribed parameters appropriately.

One final clarification that is necessary to make before concluding this part, is that the MuJoCo-like constraint softening method is mostly, but not exactly, compatible with the NSOCP formulation (\ref{eq:softening:dual-forward-dynamics-convex-rnsocp}). As we described previously in Sec.~\ref{sec:construction:definition} and Sec.~\ref{sec:construction:softening}, an augmentation that biases the free-velocity $\mathbf{v}_{f}$ via $\textbf{v}^{*}$  must satisfy the invariance property of the De Saxc\'e operator. However, as $\mathbf{v}_{r}$ includes non-zero components in the frictional dimensions due to the damping terms, it violates this requirement. By virtue of approximation, we can proceed to apply it in the same manner as all other augmentations.

\section{Dual Solvers}
\label{sec:solvers}

Given a complete definition, and construction, of the dual FD NCP (\ref{eq:formulation:dual-forward-dynamics-ncp}), we now address how it can be solved numerically. This section therefore describes the set of first-order dual FD algorithms that fulfill the template of Alg.\ref{alg:generic-first-order-dual-solver}. For brevity, we henceforth refer to them simply as \textit{dual solvers}. Following the taxonomies described in~\cite{acary2008numerical,acary2018comparisons,siggraph2022contact,lidec2024models}, we will summarize how they are related as well as their distinguishing factors. Of course, one may note that their respective derivations can be based on many seemingly different perspectives of the dual FD problem, i.e. \textit{complementarity systems}, \textit{projected dynamics}, \textit{differential inclusions} and \textit{constrained optimization}. However, the underlying theory of \textit{convex analysis} can be used to establish their equivalence as demonstrated by Brogliato~\cite{brogliato2006equivalence}.

As Moreau first established in his seminal work~\cite{moreau1988unilateral}, a unifying view of these first-order methods can be attained via the \textit{augmented Lagrange method} (ALM)~\cite{nocedal1999numerical}, and \textit{proximal operators}~\cite{parikh2014proximal}. We will first define the theoretical framework that justifies the template of Alg.~\ref{alg:generic-first-order-dual-solver} and then proceed to describe the set of computational building-blocks that lead to the construction of concrete dual solvers. Effectively, the solvers resulting from this perspective are those that can be described as variants of the celebrated PGS algorithm, some which can be seen as generalizations and improvements thereof. In addition, we also consider a version of a state-of-the-art solver based on the primal-dual Alternating Direction Method of Multipliers (ADMM) algorithm~\cite{boyd2011admm}, that was recently proposed by Carpentier et al in~\cite{carpentier2024unified}. This ADMM method is also derived using ALM and proximal operators and thus also fulfills the template of Alg.~\ref{alg:generic-first-order-dual-solver}, further broadening their applicability as the underlying foundation. The complete list of solvers we have evaluated is presented in Tab.~\ref{tab:dual-solver-definitions}, and includes relevant implementation details.

\subsection{Preliminaries}
\label{sec:solvers:preliminiaries}
\noindent
We begin with a brief review of elements from \textit{convex analysis} and \textit{proximal optimization} that will be required in the continuation. We refer the reader to ~\cite{rockafellar2009variational, parikh2014proximal} for details.\\

The \textit{indicator function} of a closed and convex set $\mathcal{C} \subset \mathbb{R}^{n}$ evaluated at some point $\mathbf{x} \in \mathbb{R}^{n}$ is defined as
\begin{equation}
\Psi_{\mathcal{C}}(\mathbf{x}) 
:=
\begin{cases}
0 &,\,\, \mathbf{x} \in \mathcal{C} \\
\infty &,\,\, \mathbf{x} \not\in \mathcal{C} \\
\end{cases}
\,\,,
\label{eq:solvers:indicator-function}
\end{equation}
and its conjugate is the so-called \textit{support function}
\begin{equation}
\mathcal{S}_{\mathcal{C}}(\mathbf{x})
= 
\Psi_{\mathcal{C}}^{*}(\mathbf{x}) 
:=
\sup_{\mathbf{y}} \,\{\mathbf{y}^{T} \, \mathbf{x} \,:\, \mathbf{y} \in \mathcal{C} \}
\,\,.
\label{eq:solvers:support-function}
\end{equation}
It is important to note at this point, that as $\mathcal{C}$ is convex, then so too are $\Psi_{\mathcal{C}}(\mathbf{x})$ and $\mathcal{S}_{\mathcal{C}}(\mathbf{x})$. In general, given a convex function $f : \mathbb{R}^{n} \rightarrow \mathbb{R}$, then its \textit{subdifferential} $\partial \, f (\mathbf{x})$ is defined as
\begin{equation}
\partial \, f(\mathbf{x}) 
:=
\{\mathbf{y} \,:\, \mathbf{y}^{T} \left(\mathbf{z} - \mathbf{x} \right) \leq \left( f(\mathbf{z}) - f(\mathbf{x}) \right) \,,\,\, \forall \mathbf{z} \} 
\,\,.
\label{eq:solvers:subdifferential-convex-function}
\end{equation}
Thus given that the indicator and support functions are convex, we can apply (\ref{eq:solvers:subdifferential-convex-function}) to define their respective subdifferentials as
\begin{equation}
\begin{array}{l}
\partial \Psi_{\mathcal{C}}(\mathbf{x}) 
:=
\{\mathbf{y} \,:\, \mathbf{y}^{T} \left(\mathbf{z} - \mathbf{x} \right) \leq 0 \,,\,\, \forall \mathbf{z} \in \mathcal{C} \}
\end{array}
\,\,,
\label{eq:solvers:subdifferential-indicator-function}
\end{equation}
\begin{equation}
\begin{array}{l}
\partial \mathcal{S}_{\mathcal{C}}(\mathbf{x}) 
:=
\{\mathbf{y} \,:\, \mathbf{y}^{T} \, \mathbf{z} \leq 0 \,,\,\, \forall \mathbf{z} \in \partial \, \Psi_{\mathcal{C}}(\mathbf{x})  \}
\end{array}
\,\,.
\label{eq:solvers:subdifferential-support-function}
\end{equation}
We now observe that, the aforedefined subdifferentials correspond to the normal and tangent cones of $\mathcal{C}$ at point $\mathbf{x} \in \mathcal{C}$, respectively, i.e. 
$\mathcal{N}_{\mathcal{C}}(\mathbf{x}) = \partial \, \Psi_{\mathcal{C}}(\mathbf{x})$ and $\mathcal{T}_{\mathcal{C}}(\mathbf{x}) = \partial \, \mathcal{S}_{\mathcal{C}}(\mathbf{x})$. This property establishes the duality (i.e. conjugacy) between the two sets and their elements, in the form of
\begin{equation}
\mathbf{y} \in \mathcal{N}_{\mathcal{C}}(\mathbf{x}) 
\Leftrightarrow
\mathbf{x} \in \mathcal{T}_{\mathcal{C}}(\mathbf{y})
\Leftrightarrow
\mathbf{x} \in \mathcal{C} \,,\,\,
\mathbf{x}^{T}\mathbf{y} = \mathcal{S}_{\mathcal{C}}(\mathbf{y})
\,\,.
\label{eq:solvers:indicator-support-subdifferential-duality}
\end{equation}
Moreover, when the set under consideration is a closed and convex cone $\mathcal{K}$, then in addition, the support function is identical to the indicator function of the dual cone $\mathcal{K}^{*}$, i.e. $\mathcal{S}_{\mathcal{K}}(\mathbf{x}) = \Psi_{\mathcal{K}^{*}}(\mathbf{x})$. This property, together with the duality defined in (\ref{eq:solvers:indicator-support-subdifferential-duality}) results in the complementarity property
\begin{equation}
\mathbf{y} \in \mathcal{N}_{\mathcal{K}}(\mathbf{x}) 
\Leftrightarrow
\mathbf{x} \in \mathcal{N}_{\mathcal{K}^{*}}(\mathbf{y})
\Leftrightarrow
\mathcal{K} \ni \mathbf{x} \perp \mathbf{y} \in \mathcal{K}^{*}
\,\,.
\label{eq:solvers:cone-indicator-complementarity}
\end{equation}

Next, let us introduce the \textit{proximal operator}, which, defines a generalized notion of projection. For any weighted metric norm $\Vert \mathbf{x} \Vert_{\mathbf{A}} := \sqrt{\mathbf{x}^{T} \, \mathbf{A} \, \mathbf{x}}$, where $\mathbf{A} \in \mathbb{S}_{++}^{n}$, a proximal-point to a convex function $f(\mathbf{x})$ at some $\mathbf{x} \in \mathbb{R}^{n}$ can be defined as
\begin{equation}
\displaystyle \displaystyle
\text{prox}_{\mathbf{f}}^{\mathbf{A}}(\mathbf{x}) := 
\operatorname*{argmin}_{\mathbf{y}} \, \mathbf{f}(\mathbf{y}) 
+ \frac{1}{2} \, \Vert \mathbf{x} - \mathbf{y} \Vert_{\mathbf{A}}^{2}
\,\, .
\label{eq:solvers:proximal-projection-operator-to-function}
\end{equation}
The resulting $\mathbf{z} = \text{prox}_{\mathbf{f}}^{\mathbf{A}}(\mathbf{x})$ is often called the \textit{proximal-point} to the convex function $f$ at $\mathbf{x}$, and $\mathbf{z} \in \textbf{dom}\,f$. In addition, we can define a mechanism to quantify the distance of the proximal projection via the \textit{proximal vector distance function}
\begin{equation}
\text{dist}_{\mathbf{f}}^{\mathbf{A}}(\mathbf{x}) := \mathbf{x} - \text{prox}_{\mathbf{f}}^{\mathbf{A}}(\mathbf{x})
\,\, .
\label{eq:solvers:proximal-distance-operator-to-function}
\end{equation}
Furthermore, proximal operators are also applicable to define projections onto convex sets. Applying (\ref{eq:solvers:proximal-projection-operator-to-function}) to the indicator function (\ref{eq:solvers:indicator-function}) of a convex set $\mathcal{C}$, the proximal projection and distance operators take the form of
\begin{subequations}
\begin{equation}
\displaystyle \displaystyle
\text{prox}_{\mathcal{C}}^{\mathbf{A}}(\mathbf{x}) := 
\operatorname*{argmin}_{\mathbf{y} \in \mathcal{C}} \, \frac{1}{2} \, \Vert \mathbf{x} - \mathbf{y} \Vert_{\mathbf{A}}^{2} 
\label{eq:solvers:proximal-projection-operator-to-set}
\end{equation}
\begin{equation}
\text{dist}_{\mathcal{C}}^{\mathbf{A}}(\mathbf{x}) := \mathbf{x} - \text{prox}_{\mathcal{C}}^{\mathbf{A}}(\mathbf{x})
\,\,.
\label{eq:solvers:proximal-distance-operator-to-set}
\end{equation}
\label{eq:solvers:proximal-operators-to-set}
\end{subequations}
Lastly, the subdifferential of the proximal operator also yields an inclusion to the normal cone $\mathcal{N}_{\mathcal{C}}(\mathbf{x})$. Expanding (\ref{eq:solvers:proximal-projection-operator-to-set}) using the definition of the indicator function $\Psi_{\mathcal{C}}(\mathbf{x})$ as
\begin{equation}
\mathbf{x}^{*} = \text{prox}_{\mathcal{C}}^{\mathbf{A}}(\mathbf{x}) 
= 
\displaystyle \operatorname*{argmin}_{\mathbf{y}} 
\, 
\underbrace{
\Psi_{\mathcal{C}}(\mathbf{y}) + \frac{1}{2} \, \Vert \mathbf{x} - \mathbf{y} \Vert_{\mathbf{A}}^{2}}_{f(\mathbf{y})}
\,\,,
\label{eq:solvers:proximal-indicator-function-definition}
\end{equation}
and evaluating the subdifferential at the proximal-point yields
\begin{equation}
\partial\,f(\mathbf{x}^{*})
=
\partial \, \Psi_{\mathcal{C}}(\mathbf{x}^{*})
- 
\mathbf{A} \left( \mathbf{x} - \mathbf{x}^{*} \right)
\ni 0
\,\,.
\label{eq:solvers:proximal-indicator-function-inclusion}
\end{equation}
Thus we can see that the proximal operator renders a set inclusion to the normal cone of $\mathcal{C}$ at point $\mathbf{x}$ in the form of
\begin{equation}
\mathbf{x} \in \left( \mathcal{N}_{\mathcal{C}}(\mathbf{x}^{*}) + \mathbf{A}^{-1} \mathbf{x}^{*} \right)
\,\,.
\label{eq:solvers:proximal-normal-cone-inclusion}
\end{equation}

\subsection{The Proximal Perspective}
\label{sec:solvers:proximal-point-derivation}
\noindent
To enunciate how and why projective first-order methods follow the template in Alg.~\ref{alg:generic-first-order-dual-solver}, we provide a brief derivation that highlights the underling common structure. This derivation, based on the augmented Lagrange method and proximal operators, has its roots in the early seminal work of J.J Moreau in~\cite{moreau1988unilateral} and has been elaborated upon extensively by others as well. The derivation provided here is principally based on the work of Studer et al in \cite{studer2007solving,studer2008augmented} and Erleben in~\cite{erleben2017}.

As first introduced in Sec.~\ref{sec:problem-formulation}, in order to tackle the NCP (\ref{eq:formulation:dual-forward-dynamics-ncp}), the principle tool will be the optimization perspective via the NSOCP (\ref{eq:formulation:dual-forward-dynamics-nsocp}). Due to the nonlinear and non-differentiable De Saxc\'e operator, it is very difficult to solve directly. One method that has shown great success is that proposed by Cadoux~\cite{cadoux2009} and further developed by Acary et al in~\cite{acary2011formulation}, both taking inspiration from the original \textit{predictor-corrector} approach of De Saxc\'e in ~\cite{desaxce1998bipotential}. This method attains a convexification of (\ref{eq:formulation:dual-forward-dynamics-nsocp}) via fixing the De Saxc\'e term to the image of its argument and proceeding by successive approximation of this value. As we will show, it renders the problem convex and differentiable, fitting well with the derivation of projective first-order methods. This convexified NCP takes the form of
\begin{flalign}
\label{eq:solvers:convex-ncp-socp}
& \text{SOCP}(\mathbf{D}, \mathbf{v}_{f}, \mathcal{K}): & \nonumber\\[4pt]
& \quad \textbf{Given} \,\, \mathbf{s} = \boldsymbol{\Gamma}(\mathbf{v}^{+}(\boldsymbol{\lambda})) \\[4pt]
& \quad \textbf{Find} \,\, \boldsymbol{\lambda} =
\displaystyle
\operatorname*{argmin}_{\mathbf{x} \in \mathcal{K}} \,\,
\frac{1}{2} \,\mathbf{x}^{T} \, \mathbf{D} \, \mathbf{x}
+ \mathbf{x}^{T} 
\left(
\mathbf{v}_{f} + \mathbf{s}
\right) \nonumber
\end{flalign}
To simplify notation, we denote (\ref{eq:solvers:convex-ncp-socp}) with the shorthand
\begin{equation}
\begin{array}{rlclcl}
\displaystyle \displaystyle 
\min_{\mathbf{x}} \,
    & h(\mathbf{x}) \\[4pt]
\textrm{s.t.} & \mathbf{f}(\mathbf{x}) \in \mathcal{K}
\end{array}
\,\, .
\label{eq:solvers:constrained-problem}
\end{equation}
The first step involved rewriting the inclusion constraint as
\begin{equation}
\begin{array}{rlclcl}
\displaystyle \displaystyle 
\min_{\mathbf{x},\mathbf{y}} \,
    & h(\mathbf{x}) \\[4pt]
\textrm{s.t.} & \mathbf{A} \, (\mathbf{f}(\mathbf{x}) - \mathbf{y}) = 0 \\[4pt]
              & \mathbf{y} \in \mathcal{K}
\end{array}
\,\, ,
\label{eq:solvers:constrained-problem-slacked}
\end{equation}
where $\mathbf{y} \in \mathbb{R}^{n_d}$ is a vector of \textit{slack variables} and $\mathbf{A} \in \mathbb{S}_{++}^{n_d}$ is a matrix that will serve as a weighted metric norm. Since $\mathbf{A}$ is PD, the equality constraint can only be satisfied when $\mathbf{f}(\mathbf{x}) - \mathbf{y} = 0$. The augmented Lagrangian of (\ref{eq:solvers:constrained-problem-slacked}) is 
\begin{subequations}
\begin{equation}
\mathcal{L}_{\rho,\mathbf{A}}^{A}(\mathbf{x}, \mathbf{y}, \mathbf{z}) := 
h(\mathbf{x}) 
+ \mathbf{z}^{T} \, \mathbf{A} \, \left( \mathbf{f}(\mathbf{x}) - \mathbf{y} \right)
+ \frac{\rho}{2} \, \Vert \mathbf{f}(\mathbf{x}) - \mathbf{y} \Vert_{A}^{2}
\label{eq:solvers:constrained-problem-augmented-lagrangian-def}
\end{equation}
\begin{equation}
= 
h(\mathbf{x}) 
- \frac{1}{2\rho} \, \mathbf{z}^{T} \, \mathbf{A} \, \mathbf{z}
+ \frac{\rho}{2} \, \Vert \rho^{-1} \, \mathbf{z} + \mathbf{f}(\mathbf{x}) - \mathbf{y} \Vert_{A}^{2}
\label{eq:solvers:constrained-problem-augmented-lagrangian-1}
\end{equation}
\begin{equation}
= 
h(\mathbf{x}) 
- \frac{\rho^{-1}}{2} \, \Vert  \mathbf{z} \Vert_{A}^{2} 
+ \frac{\rho}{2} \, \Vert \rho^{-1} \, \mathbf{z} + \mathbf{f}(\mathbf{x}) - \mathbf{y} \Vert_{A}^{2}
\,\,,
\label{eq:solvers:constrained-problem-augmented-lagrangian-2}
\end{equation}
\label{eq:solvers:constrained-problem-augmented-lagrangian}
\end{subequations}
where $\mathbf{z} \in \mathbb{R}^{m}$ is the vector of Lagrange multipliers, $\rho \in \mathbb{R}$ is the additional penalty parameter, and $\Vert \mathbf{x} \Vert_{A}^{2} = \mathbf{x}^{T} \, \mathbf{A}\, \mathbf{x}$ denotes taking the norm in the metric space defined by $\mathbf{A}$. Next, we eliminate the slack variables $\mathbf{y}$ from (\ref{eq:solvers:constrained-problem-slacked}) by observing that $\mathcal{L}_{\rho,\mathbf{A}}^{A}(\mathbf{x}, \mathbf{y}, \mathbf{z})$ is minimized w.r.t $\mathbf{y} \in \mathcal{K}$ at the point
\begin{equation}
\displaystyle \displaystyle 
\mathbf{y}^{*} 
= \displaystyle\operatorname*{argmin}_{\mathbf{y} \in \mathcal{K}} \, \Vert \rho^{-1} \, \mathbf{z} + \mathbf{f}(\mathbf{x}) - \mathbf{y} \Vert_{A}^{2}
= \text{prox}_{\mathcal{K}}^{\mathbf{A}}(\rho^{-1} \, \mathbf{z} + \mathbf{f}(\mathbf{x}))
\,\,,
\label{eq:saddle-point-problem-minimizing-slack-variable}
\end{equation}
yielding the $\mathbf{y}$-minimized augmented Lagrangian
\begin{equation}
\displaystyle \displaystyle 
\mathcal{L}_{\rho,\mathbf{A}}^{A}(\mathbf{x}, \mathbf{y}^{*}, \mathbf{z}) = 
h(\mathbf{x}) 
- \frac{\rho^{-1}}{2} \Vert \mathbf{z} \Vert_{A}^{2} 
+ \frac{\rho}{2} \Vert \text{dist}_{\mathcal{K}}^{\mathbf{A}}(\rho^{-1}\mathbf{z} + \mathbf{f}(\mathbf{x})) \Vert_{A}^{2}
\,\,.
\label{eq:solvers:constrained-problem-augmented-lagrangian-reduced}
\end{equation}
The elimination of $\mathbf{y}$ renders the reduced saddle-point problem
\begin{equation}
\displaystyle \displaystyle 
\min_{\mathbf{x}} \, \max_{\mathbf{z}} \, \mathcal{L}_{\rho,\mathbf{A}}^{A}(\mathbf{x}, \mathbf{y}^{*}, \mathbf{z})
\,\, .
\label{eq:solvers:saddle-point-problem}
\end{equation}
Forming the KKT optimality conditions 
\begin{equation}
\nabla_{\mathbf{x}} \, \mathcal{L}_{\rho,\mathbf{A}}^{}(\mathbf{x}^{*}, \mathbf{y}^{*}, \mathbf{z}^{*}) = 0
\,\,,\,\,
\nabla_{\mathbf{z}} \, \mathcal{L}_{\rho,\mathbf{A}}^{}(\mathbf{x}^{*}, \mathbf{y}^{*}, \mathbf{z}^{*}) = 0
\,\,,
\label{eq:solvers:saddle-point-problem-optimality-conditions-def}
\end{equation}
for (\ref{eq:solvers:constrained-problem-augmented-lagrangian-reduced}), results in the system of equations
\begin{subequations}
\begin{equation}
\nabla_{\mathbf{x}}\,h(\mathbf{x}) 
+ \rho \, \mathbf{A} \, \text{dist}_{\mathcal{K}}^{\mathbf{A}}(\rho^{-1}\mathbf{z} + \mathbf{f}(\mathbf{x})) \, \nabla_{\mathbf{x}}\,\mathbf{f}(\mathbf{x})
= 0
\end{equation}
\begin{equation}
- \rho^{-1} \, \mathbf{A} \, \mathbf{z}
+ \mathbf{A} \, \text{dist}_{\mathcal{K}}^{\mathbf{A}}(\rho^{-1}\mathbf{z} + \mathbf{f}(\mathbf{x}))
= 0
\,\,,
\end{equation}  
\label{eq:solvers:saddle-point-problem-optimality-conditions}
\end{subequations}
and with a bit of refactoring we can arrive at
\begin{subequations}
\begin{equation}
\nabla_{\mathbf{x}}\,h(\mathbf{x}) + \mathbf{A}\,\mathbf{z}\, \nabla_{\mathbf{x}}\,\mathbf{f}(\mathbf{x}) = 0
\end{equation}
\begin{equation}
\mathbf{f}(\mathbf{x}) = \text{prox}_{\mathcal{K}}^{\mathbf{A}}(\mathbf{f}(\mathbf{x}) + \rho^{-1} \, \mathbf{z})
\,\,.
\end{equation}
\label{eq:solvers:saddle-point-problem-optimality-equations}
\end{subequations}

Now we can plug in the definition of the SOCP (\ref{eq:solvers:convex-ncp-socp}) to (\ref{eq:solvers:saddle-point-problem-optimality-equations}), and observe that the Lagrange multipliers of the dual problem correspond to $\mathbf{z} := -\hat{\mathbf{v}}$, i.e. negative augmented post-event constraint velocities. The resulting system is
\begin{subequations}
\begin{equation}
\hat{\mathbf{v}} = \mathbf{A} \, \left( \mathbf{D} \, \boldsymbol{\lambda} + \mathbf{v}_{f} + \mathbf{s} \right) 
\end{equation}
\begin{equation}
\boldsymbol{\lambda} = \text{prox}_{\mathcal{K}}^{\mathbf{A}}(\boldsymbol{\lambda} - \rho^{-1} \, \hat{\mathbf{v}})
\,\,,
\end{equation}
\label{eq:solvers:dual-problem-saddle-point-problem-optimality-equations}
\end{subequations}
which, when combined form the \textit{proximal-point equation}
\begin{equation}
\boldsymbol{\lambda} = \text{prox}_{\mathcal{K}}^{\mathbf{A}}(\boldsymbol{\lambda} - \rho^{-1} \, \mathbf{A}^{-1} \, (\mathbf{D} \, \boldsymbol{\lambda} + \mathbf{v}_{f} + \mathbf{s}))
\,\,.
\label{eq:solvers:dual-problem-proximal-point-equation}
\end{equation}
A crucial property of the proximal operator is that its solutions are also \textit{fixed-points} of the projection operation, i.e. 
\begin{equation}
\boldsymbol{\lambda}^{*} = \text{prox}_{\mathcal{K}}^{\mathbf{A}}(\boldsymbol{\lambda}^{*})
\,\,.
\label{eq:solvers:dual-problem-proximal-point-equation-fixed-point-def}
\end{equation}
Therefore, interpreting (\ref{eq:solvers:dual-problem-proximal-point-equation-fixed-point-def}) as a recursive procedure, and starting from some initial point $\boldsymbol{\lambda}^{0}$, successively applying the proximal operator should converge to one of its fixed-points in a finite number of steps. This is the so-called \textit{proximal-point algorithm} and can be conceptualized as
\begin{equation}
\boldsymbol{\lambda}^{*} \approx \boldsymbol{\lambda}^{n} = \text{prox}_{\mathcal{K}}^{\mathbf{A}} \circ ... \circ \text{prox}_{\mathcal{K}}^{\mathbf{A}}(\boldsymbol{\lambda}^{0})
\,\,.
\label{eq:dual-problem-proximal-point-equation-fixed-point-recursion}
\end{equation}

The final step involves realizing the $\text{prox}_{\mathcal{K}}^{\mathbf{A}}(\cdot)$ operator. The specifics of how this is done is exactly that which distinguishes most algorithms from one another. To recapitulate, in order to construct a proximal operator we need to choose:
\begin{itemize}
\item an appropriate metric norm $\mathbf{A}$
\item an appropriate representation of $\rho$
\item a method to realize the projection
\item an iterative procedure
\end{itemize}
Moreover, another crucial choice is whether to realize the projection as one \textit{global} operation accounting for the whole problem at once, or break it up into chunks that can be solved \textit{locally}, depending of course on the structure of $\mathbf{A}$. The latter is what is referred to in the optimization literature as \textit{block-coordinate descent}~\cite{wright2015coordinate}. We will leave the global approach for Sec.~\ref{sec:solvers:admm} where it is used in the ADMM-based solver. For now, we will focus on the block-wise methods that includes PGS and its variants. If $\mathbf{A}$ is block-diagonal, then we can express the block-wise operation
\begin{equation}
\begin{array}{c}
\displaystyle \mathbf{z}_{j}^{i} = \left (\sum_{m=1}^{n_{j} + n_{l} + n_{c}} \mathbf{D}_{mj} \, \boldsymbol{\lambda}_{m}^{i} \right) + \mathbf{v}_{f,j} + \mathbf{s}_{j} \\[8pt]
\boldsymbol{\lambda}_{j}^{i+1} = \text{prox}_{\mathcal{K}_{j}}^{\mathbf{A}_{jj}} \left( \boldsymbol{\lambda}_{j}^{i} - \frac{1}{\rho} \, \mathbf{A}_{jj}^{-1} \, \mathbf{z}_{j}^{i} \right)
\end{array}
\,\,,
\label{eq:block-wise-proximal-operator}
\end{equation}
and, if $\mathbf{A}$ is purely diagonal, the coordinate-wise version is
\begin{equation}
\begin{array}{c}
z_{j}^{i} = \sum_{m=1}^{n_{d}} D_{mj} \, \lambda_{m}^{i} + v_{f,j} + s_{j}\\[6pt]
\lambda_{j}^{i+1} = \text{prox}_{\mathcal{K}_{j}}^{a_{j}} \left(\lambda_{j}^{i} - \rho^{-1}\,a_{j}^{-1} \, z_{j}^{i} \right)
\end{array}
\,\,.
\label{eq:constraint-wise-proximal-operator}
\end{equation}
We should now recognize that (\ref{eq:block-wise-proximal-operator}) and (\ref{eq:constraint-wise-proximal-operator}) correspond to a single Jacobi iteration, followed by a projection onto the feasible set of each coordinate. We can therefore alternatively use a Gauss-Seidel (GS) iteration~\cite{strang2006linear}, in the form of
\begin{equation}
\begin{array}{c}
\displaystyle
\mathbf{z}_{j}^{i} := 
\sum_{m=1}^{m \leq j} \mathbf{D}_{mj} \boldsymbol{\lambda}_{m}^{i+1} 
+ \sum_{m=j+1}^{n_{j} + n_{l} + n_{c}} \mathbf{D}_{mj} \boldsymbol{\lambda}_{m}^{i}
+ \mathbf{v}_{f,j} + \mathbf{s}_{j}\\[14pt]
\boldsymbol{\lambda}_{j}^{i+1} = \text{prox}_{\mathcal{K}_{j}}(\, \boldsymbol{\lambda}_{j}^{i} - \rho^{-1}\,\mathbf{A}_{jj}^{-1} 
\, \mathbf{z}_{j}^{i} \,)
\end{array}
,
\label{eq:constraint-wise-proximal-operator-gauss-seidel-iteration}
\end{equation}
\begin{equation}
\begin{array}{c}
\displaystyle
z_{j}^{i} := \sum_{m=1}^{m \leq j} D_{mj} \lambda_{m}^{i+1} + \sum_{m=j+1}^{n_{d}} D_{mj} \lambda_{m}^{i} + v_{f,j} + s_{j}\\[14pt]
\lambda_{j}^{i+1} = \text{prox}_{\mathcal{K}_{j}}(\, \lambda_{j}^{i} -\rho^{-1}\,A_{jj}^{-1} \, z_{j}^{i} \,)
\end{array}
\,\,.
\label{eq:element-wise-proximal-operator-gauss-seidel-iteration}
\end{equation}
We can also use a Successive-Over-Relaxation (SOR) update
\begin{equation}
\boldsymbol{\lambda}_{j}^{i+1} =
(1 - \omega) \boldsymbol{\lambda}_{j}^{i} 
+ \omega \boldsymbol{\lambda}_{j}^{i+1}
\,\,,
\label{eq:solvers-sor}
\end{equation}
after every block-wise or coordinate-wise proximal projection. $\omega \in [0,2]$ is the so-called relaxation factor that linearly interpolates between the previous and current iterates. This additional operation presents some interesting theoretical and practical properties, and is often used to accelerate convergence. However, it can potentially lead to divergence if not selected carefully, and choosing a value that generalizes across all problems can prove difficult. Choosing $\omega = 1$ corresponds to the GS iteration and thus this additional mechanism is a very versatile and easy extension to apply. 

Essentially, all of the aforementioned \textit{local} approaches treat each sub-problem as though all other constraint reactions are fixed constants. Moreover, the structure of the proximal-point algorithm is akin to \textit{projected gradient descent}. As indicated by (\ref{eq:solvers:saddle-point-problem-optimality-equations}), the velocity term corresponds to an estimate of the gradient, $\mathbf{A}$ serves as an approximate Hessian and $\rho$ can be understood as a step-size. Lastly, the projection onto the feasible set via the proximal operator exclusively serves to satisfy the set-valued and inequality constraints. We have thus laid the foundation which unifies the set for tackling the dual FD NCP. It remains now to choose how to realize the projection and select $\rho$ and $\mathbf{A}$, and will be covered next.

\subsection{Projectors \& Local Solvers}
\label{sec:solvers:projectors}
\noindent
This section describes the set of projection operators that are used to realize the proximal operator (\ref{eq:solvers:proximal-distance-operator-to-set}). As the latter can be considered more of a theoretical construct as opposed to a literal operation, we will denote an \textit{implementation} thereof as a projection operator, i.e. \textit{projector}, $\mathcal{P}^{\mathcal{K}}(\cdot)$ introduced in the template of Alg.\ref{alg:generic-first-order-dual-solver}. Many of them are derived analytically in closed-form, while some require solving an inner optimization problem. Recalling the construction (\ref{eq:construction:total-feasible-set-product}) used for the composite set $\mathcal{K}$ of admissible constraint reactions, we can equivalently construct a composite projector as 
\begin{equation}
\mathcal{P}^{\mathcal{K}}(\mathbf{x}) :=
\begin{bmatrix}
\mathcal{P}^{\mathcal{K}_{1}}(\mathbf{x}_{1}) \\
\vdots\\
\mathcal{P}^{\mathcal{K}_{n_i}}(\mathbf{x}_{n_i})
\end{bmatrix}
\,\,,
\end{equation}
where $n_i$ is the number of inequality constraint sets. Note that, in general, $n_i \neq n_l + n_c$, as the constraint sets can differ depending on the contact model e.g. decoupled or spatial.

\paragraph*{\textbf{Euclidean Projectors}}
The first set consists of geometric projectors admitting analytical expressions. Specifically, they realize Euclidean projections onto their respective cones, as they are derived assume the Euclidean metric norm. These include the:
\begin{itemize}
\item projector to the nonnegative orthant $\mathbb{R}_{+}$
\begin{equation}
\mathcal{P}^{\mathbb{R}_{+}}(x) := \max(0, x)
\,\,.
\label{eq:solvers:projectors:positive-reals}
\end{equation}
\item projector to a 2D disk $\mathcal{D}(r) \subset \mathbb{R}^{2}$ of radius $r \in \mathbb{R}_{+}$:
\begin{equation}
\mathcal{P}^{\mathcal{D}(r)}(\mathbf{x}) :=
\begin{cases}
\mathbf{x} &, \Vert \mathbf{x} \Vert_{2} \leq r \\
r \, \frac{\mathbf{x}}{\Vert \mathbf{x} \Vert_{2}} &, \Vert \mathbf{x} \Vert_{2} > r \\
\end{cases}
\,\,
\label{eq:solvers:disc-projection}
\end{equation}
\item projector to a generic 3D Lorentz cone $\mathcal{K}_{3} \subset \mathbb{R}^{3}$:
\begin{flalign}
\mathcal{P}^{\mathcal{K}_{3}}(\mathbf{x}) :=
\begin{cases}
0 &, \Vert \mathbf{x}_{xy} \Vert_{2} \leq -\text{x}_{z} \\
\mathbf{x} &, \Vert \mathbf{x}_{xy} \Vert_{2} \leq \text{x}_{z} \\
\displaystyle \frac{1}{2} \left(1 + \frac{x_z}{\Vert \mathbf{x}_{xy} \Vert_{2}} \right) 
\begin{bmatrix}
\mathbf{x}_{xy}\\
\Vert \mathbf{x}_{xy} \Vert_{2}
\end{bmatrix}
&, \Vert \mathbf{x}_{xy} \Vert_{2} > |\text{x}_{z}| \\
\end{cases}
\,\,
\label{eq:solvers:lorentz-cone-projection}
\end{flalign}
\item projector to a Coulomb friction cone $\mathcal{K}_{\mu} \subset \mathbb{R}^{3}$
\begin{flalign}
\mathcal{P}^{\mathcal{K}_{\mu}}(\mathbf{x}) :=
\begin{cases}
0 &, \Vert \mathbf{x}_{t} \Vert_{2} \leq -\mu^{-1} \, \text{x}_{n} \\
\mathbf{x} &, \Vert \mathbf{x}_{t} \Vert_{2} \leq \mu \, \text{x}_{n} \\
\displaystyle \frac{\mu \, \Vert \mathbf{x}_{t} \Vert_{2} + \text{x}_{n}}{\mu^{2} + 1}
\begin{bmatrix}
\frac{\mu \, \mathbf{x}_{t}}{\Vert \mathbf{x}_{t} \Vert_{2}} \\
1
\end{bmatrix}
&, \Vert \mathbf{x}_{t} \Vert_{2} > \mu \, |\text{x}_{n}| \\
\end{cases}
\,\,,
\label{eq:solvers:coulomb-cone-projection}
\end{flalign}
\item projector to the dual Coulomb friction cone $\mathcal{K}_{\mu}^{*} := \mathcal{K}_{\frac{1}{\mu}}$
\begin{flalign}
\mathcal{P}^{\mathcal{K}_{\mu}^{*}}(\mathbf{x}) :=
\begin{cases}
0 &, \Vert \mathbf{x}_{t} \Vert_{2} \leq -\mu \, \text{x}_{n} \\
\mathbf{x} &, \Vert \mathbf{x}_{t} \Vert_{2} \leq \mu^{-1} \, \text{x}_{n} \\
\displaystyle \frac{\Vert \mathbf{x}_{t} \Vert_{2} + \mu \, \text{x}_{n}}{\mu^{2} + 1}
\begin{bmatrix}
\frac{\mathbf{x}_{t}}{\Vert \mathbf{x}_{t} \Vert_{2}} \\
\mu
\end{bmatrix}
&, \Vert \mathbf{x}_{t} \Vert_{2} > \mu^{-1} \, |\text{x}_{n}| \\
\end{cases}
\,\,.
\label{eq:solvers:dual-coulomb-cone-projection}
\end{flalign}
\end{itemize}
Note that the Coulomb cone and its dual are essentially specializations of the 3D Lorentz cone (\ref{eq:solvers:lorentz-cone-projection}). Moreover, although omitted from the aforedescribed list, $\mathbb{R}^{n}$ also admits a projector in the form of the identity function $\mathcal{P}^{\mathbb{R}^{n}}(\mathbf{x}) = \mathbf{x}$.\\

\paragraph*{\textbf{Local Solvers}} The second set consists of projectors that are additionally embedded with a contact model, in contrast to the aforedescribed primitives which are purely geometric. They can explicitly incorporate the Signorini-Coulomb contact model, resulting in projections to the Coulomb friction cone that satisfy the complementarity conditions. They are therefore solvers of the single-contact dual problem and are used in constraint-wise iteration schemes.

The first such projectors we consider are $\mathcal{P}_{CCP}(\cdot)$ and $\mathcal{P}_{NCP}(\cdot)$ that are respectively defined in Alg.~\ref{alg:ccp-local-projection} and Alg.~\ref{alg:ncp-local-projection}. These are very simple approaches to realizing the proximal projection that involve nothing more than a single-step predictor-corrector operation. Specifically, $\mathcal{P}_{CCP}(\cdot)$ can be interpreted as assuming $\mathbf{A} = \mathbb{I}_{n_d}$ and $\rho$ set to the inverse of the average diagonal of the local Delassus matrix (i.e. respective diagonal block). A subtlety of this approach is that if an estimate of the local De Saxc\'e correction is omitted, then the dual variable is effectively the contact velocity $\mathbf{v}_{j}^{+}$, which leads to non-zero solutions along the contact normal. This is because it only enforces complementarity between primal and dual variables, without explicit consideration of the reaction and velocity along the contact normal. 
\begin{algorithm}[!b]
\setstretch{1.5}
\caption{CCP-Type Local Solver $\mathcal{P}_{CCP}(\cdot)$}
\label{alg:ccp-local-projection}
\begin{algorithmic}[1]
\Require $\boldsymbol{\lambda}_{j}$, $\mathbf{D}_{jj}$, $\mathbf{v}_{f,j}$, $\mathbf{s}_{j}$, $\mu$
\textcolor{gray}{\Comment{local problem definition}}
\State $\mathbf{z}_{j} \gets \mathbf{D}_{jj} \, \boldsymbol{\lambda}_{j} + \mathbf{v}_{f,j} + \mathbf{s}_{j}$
\textcolor{gray}{\Comment{compute local velocity}}
\State $\boldsymbol{\lambda}_{j} \gets \boldsymbol{\lambda}_{j} - \displaystyle\frac{3}{\textbf{tr}(\mathbf{D}_{j})}\,\mathbf{z}_{j}$
\State $\boldsymbol{\lambda}_{j} \gets \mathcal{P}^{\mathcal{K}_{\mu}}(\boldsymbol{\lambda}_{j})$
\textcolor{gray}{\Comment{Coulomb cone projector (\ref{eq:solvers:coulomb-cone-projection})}}\\
\Return $\boldsymbol{\lambda}_{j}$
\end{algorithmic}
\end{algorithm}
\begin{algorithm}[!b]
\setstretch{1.7}
\caption{NCP-Type Local Projector $\mathcal{P}_{NCP}(\cdot)$}
\label{alg:ncp-local-projection}
\begin{algorithmic}[1]
\Require $\boldsymbol{\lambda}_{j}$, $\mathbf{D}_{jj}$, $\mathbf{v}_{f,j}$, $\mathbf{s}_{j}$, $\mu$
\textcolor{gray}{\Comment{local problem definition}}
\State $\mathbf{z}_{j} \gets \mathbf{D}_{jj} \, \boldsymbol{\lambda}_{j} + \mathbf{v}_{f,j} + \mathbf{s}_{j}$
\textcolor{gray}{\Comment{compute local velocity}}
\State $\lambda_{j,N} \gets \lambda_{j,N} - \displaystyle\frac{1}{D_{j,NN}}\,z_{j,N}$
\State $\lambda_{j,N} \gets \mathcal{P}^{\mathbb{R}_{+}}(\lambda_{j,N})$
\textcolor{gray}{\Comment{orthant projector (\ref{eq:solvers:projectors:positive-reals})}}
\State $\boldsymbol{\lambda}_{j,T} \gets \boldsymbol{\lambda}_{j,T} - \displaystyle\frac{1}{\min(D_{j,TT,x}, D_{j,TT,y})}\,\mathbf{z}_{j,T}$
\State $\boldsymbol{\lambda}_{j,T} \gets \mathcal{P}^{\mathcal{D}(\mu\,\lambda_{j,N})}(\boldsymbol{\lambda}_{j,T})$
\textcolor{gray}{\Comment{disk projector (\ref{eq:solvers:disc-projection})}}\\
\Return $\boldsymbol{\lambda}_{j}$
\end{algorithmic}
\end{algorithm}

Conversely, $\mathcal{P}_{NCP}(\cdot)$ addresses this issue by decomposing the problem along the normal and tangent directions as two separate problems. This ensures zero contact velocity along the contact normal by forcing the normal contact reaction to be non-negative regardless of the presence of a De Saxc\'e correction term. For this projector, again $\mathbf{A} = \mathbb{I}_{n_d}$ but $\rho$ is different for the normal and tangential directions. For the normal direction, it is taken directly as the inverse of the respective diagonal component of $\mathbf{D}$, while for both tangential components, the smallest of the two diagonal components is used. However, this projector can lead to contact reactions that are not maximally dissipative since the decoupling cannot account for the couplings induced by other constraints. 

Both $\mathcal{P}_{CCP}(\cdot)$ and $\mathcal{P}_{NCP}(\cdot)$ can be considered as \textit{naive} projectors. A more holistic approach would be to consider solving each local problem exactly, accounting for all other couplings brought by the local dual variable $\mathbf{z}_{j}^{i}$. This approach has been described in works such as that of Studer in~\cite{studer2008augmented}, Bonnefon et al in~\cite{bonnefon2011quartic}, and Todorov in~\cite{todorov2014analytical}. These methods solve a single-contact problem by taking the geometric perspective described in Sec.~\ref{sec:models:contacts} and treating the open, stick and sliding cases explicitly. For the first two cases, the projection is trivially zero and the unconstrained solution of the linear system (i.e. initially assuming the dual variable is zero). For the sliding case, the solution is found by considering the conic section formed by the Coulomb cone and plane of maximum compression. Finding the correct point amounts to finding a point on a 2D ellipse which can be rendered as a solution to a Quadratic Cone Program (QCP). In the approaches of~\cite{studer2008augmented,bonnefon2011quartic}, the QCP was used to derive a semi-analytical solution in the form of a root-finding problem of a \textit{quartic polynomial}. In this work we will denote such QCP-based local solvers as the Nonlinear-Block projector $\mathcal{P}_{NB}(\cdot)$, defined in Alg.~\ref{alg:nb-local-projection}. Such a projector can used to realize a form of dual solver some authors refer to as the Nonlinear-Block Gauss Seidel method~\cite{ortega2000iterative, preclik2014models,todorov2014analytical}. However, the derivation of the QCP can vary depending on certain modeling assumptions.
\begin{algorithm}[!b]
\setstretch{1.1}
\caption{Nonlinear-Block Local Solver $\mathcal{P}_{NB}(\cdot)$}
\label{alg:nb-local-projection}
\begin{algorithmic}[1]
\Require $\boldsymbol{\lambda}_{j}$, $\mathbf{D}_{jj}$, $\mathbf{v}_{f,j}$, $\mathbf{s}_{j}$, $\mu$
\textcolor{gray}{\Comment{local problem definition}}
\State $ \boldsymbol{\lambda}_{j}^{0} \gets -\mathbf{D}_{jj}^{-1} \, \left( \mathbf{v}_{f,j} + \mathbf{s}_{j} \right)$
\textcolor{gray}{\Comment{unconstrained solution}}
\If{$\text{v}_{f,n} < 0$}
    \If{$\mu = 0$}
        \State $\boldsymbol{\lambda}_{j} = [0 ,\, 0 ,\, -D_{nn}^{-1}\,v_{f,n}]^{T}$ \textcolor{gray}{\Comment{contact is frictionless}}
    \Else
        \If{$\boldsymbol{\lambda}_{0} \in \mathcal{S}\cap\mathcal{K}_{\mu}$}
            \State $\boldsymbol{\lambda}_{j} = \boldsymbol{\lambda}_{0}$ \textcolor{gray}{\Comment{contact is sticking}}
        \Else
            \If{$\text{v}_{f,n} = 0$}
                \If{$r_{d/\mu} > 1$}
                    \State $\boldsymbol{\lambda}_{j}= 0$ \textcolor{gray}{\Comment{degenerate elliptic}}
                \Else
                    \State $\boldsymbol{\lambda}_{i}$ from (\ref{eq:degenerate-hyperbolic-conic-section}) \textcolor{gray}{\Comment{degenerate para/hyperbolic}}
                \EndIf
            \Else
                \State $\boldsymbol{\lambda}_{j}$ from (\ref{eq:polar-angle-quartic-polynomial}) \textcolor{gray}{\Comment{contact is slipping}}
            \EndIf
        \EndIf
    \EndIf
\Else
    \State $\boldsymbol{\lambda}_{j} = 0$ \textcolor{gray}{\Comment{contact is open}}
\EndIf\\
\Return $\boldsymbol{\lambda}_{j}$
\end{algorithmic}
\end{algorithm}

In this work we have realized the variants proposed by Preclik et al in~\cite{preclik2018mdp} and by Hwangbo et al in~\cite{hwangbo2018percontact}, respectively denoted as $\mathcal{P}_{MDP}(\cdot)$ and $\mathcal{P}_{BS}(\cdot)$. Both of these projectors are based on the derivation originally described by Preclik et al in~\cite{preclik2018mdp}, but differ in how the QCP is solved. In the case of the former, the single-contact problem is transcribed as a single-variable QCP that is solved as a quartic polynomial, similar to~\cite{studer2008augmented,bonnefon2011quartic,todorov2014analytical}. It differs to the aforementioned, however, in that the QCP is derived using a model that employs an alternate form of the disjunctive Signorini-Coulomb model (\ref{eq:contacts:disjunctive-signorini-coulomb}), in conjunction with a objective function derived from the MDP (\ref{eq:contacts:maximum-dissipation-principle}). This approach was later adapted by Hwangbo et al in~\cite{hwangbo2018percontact}, who revised it by solving the QCP using a bisection-search (BS) method, thusly employing it in the RaiSim simulator. A recent analysis of the former by Lidec et al in~\cite{lidec2024reconciling}, identified that the QCP formulation was actually violating the MDP when strong couplings between tangential and normal reactions occur, and proposed an amendment based on approximating the De Saxc\'e correction term (\ref{eq:models:de-saxce-correction}).

We will briefly summarize the definition of the projector through the lens of proximal operators, while a complete derivations can be found in Appendix.~\ref{sec:apndx:single-contact}. Readers are also referred to~\cite{preclik2018mdp, hwangbo2018percontact, lidec2024reconciling} for further details. In our context, the local-contact QCP derived by Preclik that also incorporates the De Saxc\'e correction, takes the form of the problem
\begin{equation}
\begin{array}{rlclcl}
\textbf{Find} \quad
\boldsymbol{\lambda}_{j}
=
\displaystyle
\operatorname*{argmin}_{\mathbf{x}} \,
    & \frac{1}{2} \, \mathbf{x}^{T} \, \mathbf{D}_{jj} \, \mathbf{x}
    + \mathbf{x}^{T} \, \left( \mathbf{v}_{f,j} + \mathbf{s}_{j} \right)\\[4pt]
\textrm{s.t.} & \mathbf{x} \in \mathcal{K}_{\mu_j} \\[4pt]
              & \mathbf{D}_{jj} \, \mathbf{x} + \mathbf{v}_{f,j} \geq 0 \\[4pt]
              & \mathbf{D}_{jj} \, \mathbf{x} \geq 0
\end{array}
\,\,.
\label{eq:solvers:projectors::single-contact-mdp-problem}
\end{equation}
The additional inequality constraints correspond to the Signorini condition on the post-event contact velocity, and to the requirement that the contact reaction does not induce a velocity bias along the plane normal. Although the former is typically understood to be a condition of optimality, including it explicitly admits its geometric reinterpretation as finding a point on a conic section. Keeping to the broader view, we will express (\ref{eq:solvers:projectors::single-contact-mdp-problem}) as a proximal operator. Denoting the inequality constraints as inclusions to the sets
\begin{subequations}
\begin{equation}
\mathbf{n} := [0,\, 0,\, 1]^{T}
\,\,,
\label{eq:solvers:projectors::single-contact-local-normal-vector}
\end{equation}
\begin{equation}
\mathcal{V}_{\mathbf{n}} := \{ {\mathbf{x}} \,:\, \mathbf{n}^{T} \, \left( \mathbf{D}_{jj} \, \mathbf{x} + \mathbf{v}_{f,j} \right) \geq 0 \}
\,\,,
\label{eq:solvers:projectors::single-contact-normal-velocity-set}
\end{equation}
\begin{equation}
\Lambda_{\mathbf{n}} := \{ {\mathbf{x}} \,:\, \mathbf{n}^{T} \, \mathbf{D}_{jj} \, \mathbf{x} \geq 0 \}
\label{eq:solvers:projectors::single-contact-normal-reaction-set}
\end{equation}
\end{subequations}
we can recast (\ref{eq:solvers:projectors::single-contact-mdp-problem}) as a proximal projection in the form of
\begin{subequations}
\begin{equation}
\displaystyle
\text{prox}_{\mathcal{V}_{\mathbf{n}}\cap\mathcal{K}_{\mu}}^{\mathbf{D}}(\mathbf{x}) := 
\operatorname*{argmin}_{\mathbf{y}} \, 
f_{\text{MDP}}(\mathbf{y}, \mathbf{x})
\end{equation}
\begin{flalign}
f_{\text{MDP}}(\mathbf{x}, \mathbf{y}) := &\,\,
\Psi_{\mathcal{K}_{\mu_j}}(\mathbf{y})
+ \Psi_{\mathcal{V}_{\mathbf{n}}}(\mathbf{y}) 
\nonumber\\
&
+ \Psi_{\Lambda_{\mathbf{n}}}(\mathbf{y}) 
+ \frac{1}{2} \, \Vert \mathbf{x} - \mathbf{y} \Vert_{\mathbf{D}_{jj}}^{2}
\,\,.
\label{eq:solvers:projectors::single-contact-mdp-proximal-operator}
\end{flalign}
\end{subequations}
Note that this equivalence holds only if $\mathbf{x} := -\mathbf{D}_{jj}\,(\mathbf{v}_{f,j} + \mathbf{s}_{j})$. Plugging in the unconstrained solution to (\ref{eq:solvers:projectors::single-contact-mdp-proximal-operator}), yields (\ref{eq:solvers:projectors::single-contact-mdp-problem}). This projector therefore corresponds to a proximal operator with the single-contact Delassus matrix as a weighted metric norm, i.e. $\mathbf{A} = \mathbf{D}_{jj}$ and $\rho = 1$. The realization of (\ref{eq:solvers:projectors::single-contact-mdp-proximal-operator}) as a projector applicable to per-constraint iteration is thus
\begin{flalign}
\mathcal{P}_{\text{MDP}}(\mathbf{x}) :=
\begin{cases}
0 &, \bar{\text{v}}_{f,n} < 0 \\
\mathbf{x} &, \bar{\text{v}}_{f,n} \geq 0 ,\, \mathbf{x} \in \mathcal{K}_{\mu} \\
\text{QCP}(\mathbf{x}, \bar{\mathbf{D}}, \bar{\mathbf{v}}_{f}, \mathcal{K}_{\mu}) &, \bar{\text{v}}_{f,n} \geq 0 ,\, \mathbf{x} \not\in \mathcal{K}_{\mu}
\end{cases}
\,\,.
\label{eq:solvers:signorini-coulomb-mdp-projector}
\end{flalign}

\subsection{Convergence}
\label{sec:solvers:convergence}
\noindent
This section describes how we may realize the termination criteria functions $f_{stop}(\cdot)$ of Alg.~\ref{alg:generic-first-order-dual-solver}. Although such operations do not directly affect the convergence trajectories of the corresponding algorithms, they are critical to the quality of the resulting solutions and useful for assessing solver performance. Thus, special care must be taken in selecting which quantities should be evaluated in order to terminate a given solver, as the criteria must reflect the effect of the algorithm.

Given the transcription of the problem as the NSOCP (\ref{eq:formulation:dual-forward-dynamics-nsocp}), the most relevant measures to consider, would be the \textit{primal, dual and complementarity residuals}, defined respectively as
\begin{equation}
\textbf{r}_{p}(\boldsymbol{\lambda}) :=
\boldsymbol{\lambda} - \mathcal{P}^{\mathcal{K}}(\boldsymbol{\lambda})
\label{eq:solvers:convergence:primal-residual}
\end{equation}
\begin{equation}
\textbf{r}_{d}(\mathbf{v}) :=
\mathbf{v} - \mathcal{P}^{\mathcal{K}^{*}}(\mathbf{v})
\,\,,
\label{eq:solvers:convergence:dual-residual}
\end{equation}
\begin{equation}
\displaystyle
\textbf{r}_{cp}(\boldsymbol{\lambda}, \mathbf{v}) :=
[\, \boldsymbol{\lambda}_{i}^{T} \, \mathbf{v}_{i} \,]_{n_l+n_c}
\,\,,
\label{eq:solvers:convergence:complementarity-residual}
\end{equation}
where $[x_{i}]_{n} := [x_1, \cdots, x_1]^{T}$. Note how the primal and dual residuals are conveniently expressed using the projectors described in Sec.~\ref{sec:solvers:projectors}. This property reflects the fact that the primal and dual variables of the problem should satisfy the normal-cone inclusions (\ref{eq:solvers:cone-indicator-complementarity}). Thus, a measure of the error corresponds to the proximal distance operator (\ref{eq:solvers:proximal-distance-operator-to-function}).

There is a crucial subtlety regarding the formulation of the  dual and complementarity residuals compared to the primal. While, the primal residual (\ref{eq:solvers:convergence:primal-residual}) is a universal measure that is applicable in all cases, (\ref{eq:solvers:convergence:dual-residual}) and (\ref{eq:solvers:convergence:complementarity-residual}) must be evaluated according to whether the algorithm is solving a NCP or CCP formulation of the dual problem. For the NCP, they would be evaluated using the augmented constraint velocity $\hat{\mathbf{v}}^{+}$ that includes the De Saxc\'e correction (\ref{eq:augmented-constraint-velocity}), while for the CCP, it would be the post-event constraint velocity $\mathbf{v}^{+}$ (\ref{eq:post-event-constraint-velocity}).

Moreover, we can also consider the \textit{iterate residual}
\begin{equation}
\textbf{r}_{iter}(\boldsymbol{\lambda}^{i}, \boldsymbol{\lambda}^{i-1}) :=
\boldsymbol{\lambda}^{i} - \boldsymbol{\lambda}^{i-1}
\,\,,
\label{eq:solvers:convergence:iterate-residual}
\end{equation}
i.e. the relative difference between successive iterates $\boldsymbol{\lambda}^{i}$ and $\boldsymbol{\lambda}^{i+1}$ of the solution. This alternative measure can be used in several ways. Firstly, some approaches described in the literature employ (\ref{eq:solvers:convergence:iterate-residual}) as a computationally cheaper method of determining convergence~\cite{acary2018comparisons}, when the evaluation of (\ref{eq:solvers:convergence:primal-residual}-\ref{eq:solvers:convergence:complementarity-residual}) may prove costly. Second, it can be used in conjunction with problem residuals as an auxiliary criteria for detecting solver stagnation, and thus invoke early stopping of the algorithm.

Some simulators like MuJoCo~\cite{todorov2012mujoco}, option for using alternative convergence criteria based on the objective function. Specifically, in the case of MuJoCo's GPGS solver, the quadratic objective (\ref{eq:formulation:quadratic-objective}) is evaluated over successive solution iterates in order to compute a \textit{relative improvement} of the objective function, that is scaled by the total system inertia:
\begin{equation}
r_{f} := I_{total}^{-1} \, |\, f(\boldsymbol{\lambda}^{i}) - f(\boldsymbol{\lambda}^{i-1}) \,|
\,\,,
\label{eq:solvers:convergence:early-stopping-criteria}
\end{equation}
where $I_{total} > 0$ is the total diagonal inertial, i.e. the sum of diagonal terms of the generalized mass matrix $\mathbf{M}$. Essentially, this heuristic aims at identifying plateaus in the optimization in order to perform early stopping, thus preventing the algorithm from iterating without significant effect. The scaling using the total diagonal inertia, can be understood to improve conditioning w.r.t large mass ratios present in the system.

In addition to the aforedefined metrics, another useful element from VI theory is the \textit{natural map function}
\begin{equation}
\mathbf{F}_{\text{VI}}^{\text{nat}}(\mathbf{x}) :=
\mathbf{x}
-
\mathcal{P}^{\mathcal{K}}
\left( \mathbf{x} - \mathbf{F}(\mathbf{x}) \right)
\,\,,
\label{eq:solvers:convergence:vi-natural-map}
\end{equation}
which can be stated generically for any NCP of the form 
\begin{equation}
\mathcal{K}^{*} \ni \mathbf{F}(\mathbf{x})
\perp
\mathbf{x} \in \mathcal{K}^{*}
\,\,.
\label{eq:solvers:general-ncp}
\end{equation}

\begin{figure}[!t]
\centering
\includegraphics[width=0.8\linewidth]{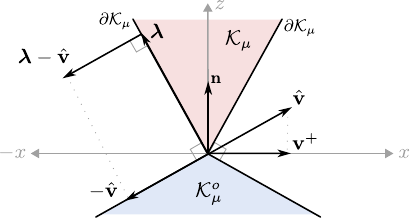}
\caption{A depiction of the geometric interpretation of the natural-map function. The VI functional is defined to be the augmented constraint velocity $\hat{\mathbf{v}} := \mathbf{F}(\boldsymbol{\lambda})$. If $\boldsymbol{\lambda} \in \partial \mathcal{K}_{\mu}$ and $-\mathbf{F}(\boldsymbol{\lambda}) \in \partial \mathcal{K}^{o}$ with $\boldsymbol{\lambda} \perp \mathbf{F}(\boldsymbol{\lambda})$, then the vector $\boldsymbol{\lambda} - \mathbf{F}(\boldsymbol{\lambda})$ originating at $\boldsymbol{\lambda}$, is exactly perpendicular to the surface of $\mathcal{K}_{\mu}$. This means that its projection onto the cone should yield $\mathbf{x}$, i.e. $\mathbf{F}_{\text{VI}}^{\text{nat}}(\boldsymbol{\lambda}) = 0$.}
\label{fig:natural-map-geometry}
\end{figure}
This functional stems from a rather intuitive geometric interpretation of (\ref{eq:solvers:general-ncp}) that is depicted in Fig.\ref{fig:natural-map-geometry}. In~\cite{eaves1971,facchinei2003finite,acary2008numerical}, it is proven that $\mathbf{F}_{\text{VI}}^{\text{nat}}(\mathbf{x}^{*}) = 0$ for some vector $\mathbf{x}^{*} \in \mathcal{K}$, is a necessary and sufficient condition for $\mathbf{x}^{*}$ to be a solution to the NCP (\ref{eq:solvers:general-ncp}). The utility of this metric is often overlooked in the robotics and graphics fields, but as Acary et al demonstrate in~\cite{acary2018comparisons}, it can serve as a reliable metric for qualifying if the output of solvers actually correspond to solutions of the NCP/CCP. In effect, it subsumes the primal, dual and complementarity residuals, summarizing them into a single expression, albeit one that is significantly more expensive to compute. In the case of the dual FD problems (\ref{eq:formulation:dual-forward-dynamics-ncp}) and (\ref{eq:formulation:dual-forward-dynamics-ccp}), the functional $\mathbf{F}(\cdot)$ corresponds to $\hat{\mathbf{v}}^{+}$ and $\mathbf{v}^{+}$, respectively. Thus, the natural-map functions are for each problem are
\begin{equation}
\mathbf{F}_{\text{NCP}}^{\text{nat}}(\boldsymbol{\lambda}) :=
\boldsymbol{\lambda}
-
\mathcal{P}^{\mathcal{K}}
\left(
\boldsymbol{\lambda} - \mathbf{D}\,\boldsymbol{\lambda}  + \mathbf{v}_{f} + 
\boldsymbol{\Gamma}
\left(
\mathbf{v}^{+}(\boldsymbol{\lambda}) 
\right)
\right)
\,\,,
\label{eq:solvers:convergence:dual-fd-ncp-natural-map}
\end{equation}
\begin{equation}
\mathbf{F}_{\text{CCP}}^{\text{nat}}(\boldsymbol{\lambda}) :=
\boldsymbol{\lambda}
-
\mathcal{P}^{\mathcal{K}}
\left(
\boldsymbol{\lambda} - \mathbf{D}\,\boldsymbol{\lambda}  + \mathbf{v}_{f} 
\right)
\,\,.
\label{eq:solvers:convergence:dual-fd-ccp-natural-map}
\end{equation}

Lastly, we must clarify how the aforedescribed vector-valued residuals and metrics may be used in practice. The two most prominent approaches employed in the majority of numerical methods are the averaged $L_2$ and $L_\infty$ norms. The former computes $n_{d}^{-1} \Vert \cdot \Vert_{2}$, and can serve as a measure of the average error. The latter computes $\Vert \cdot \Vert_{\infty}$ and effectively renders the worst-case error. In this work, however, employ the $L_\infty$ norm as it scales better w.r.t problem dimensions. 

\subsection{Projective Splitting Methods}
\label{sec:solvers:projective-splitting-methods}
\noindent
With all the aforedescribed building-blocks in hand, we can now define the first set of concrete algorithms we will use to solve the FD dual NCP. We will refer to these as \textit{Projective Splitting Methods}, in accordance with the taxonomy described in~\cite{acary2018comparisons}. Generally speaking, these algorithms all are block-wise versions of the SOR method described in Sec.~\ref{sec:solvers:proximal-point-derivation}. SOR is the most general of the so-called fixed-point iteration methods based on matrix splitting\footnote{SOR, Gauss-Seidel and the Jacobi method are so-called \textit{splitting methods}. These are a family of the more broader class of fixed-point iteration methods that also includes \textit{Krylov subspace methods} such as the linear Conjugate Gradient (CG) algorithm, and the Minimum Residual (MINRES) method.}. This categorization is also justified by the observation that in the case of only bilateral constraints being present, the solvers are exactly equivalent to an indirect method for solving the linear system formed by the EoM (\ref{eq:models:time-stepping-eom}) and the velocity-level constraints (\ref{eq:models:joints:velocity-level-implicit-constraints}). 

As a matter of convention, however, we can think of these as variants of the classic Projected Gauss-Seidel (PGS)~\cite{jean1992, jourdan1998gaussseidel,anitescu2004fixed}, which is ubiquitous in the realm of physical simulation. Indeed it is present in most, if not all, of the popular physics engines such MuJoCo\cite{mujoco2024github}, PhysX \cite{nvidia2024physx}, Bullet \cite{coumans2022bullet3}, RaiSim~\cite{hwangbo2018percontact} and ODE \cite{smith2008ode}. However, the variants that are most commonly distinguished from "vanilla" PGS are PSOR~\cite{erleben2007}, GPGS~\cite{todorov2012mujoco,todorov2014analytical}, NBGS~\cite{preclik2014models}. Moreover, a generalization of all of these algorithms, and arguably the most theoretically founded, is the PROX algorithm~\cite{studer2007solving, erleben2017} (a.k.a. SORProx). In addition to being the closest to the exact structure of the block-wise proximal-point algorithm (\ref{eq:block-wise-proximal-operator}), it selects $\mathbf{A}$ and $\rho$ exactly according to the conditioning of the problem and performs an automatic adaptation of these parameters over successive iterations. However, purely due to time considerations, the authors regretfully have not been able to include the PROX algorithm in this evaluation, but we hope to do so in the future.

All projective splitting-based solvers are realized for the multi-constrained multi-body FD problem case using the scheme defined in Alg.~\ref{alg:solvers:projective-splitting}. PGS-CCP by Tasora et al~\cite{tasora2011matrix} is a blocked variant of PGS that solves the CCP (\ref{eq:formulation:dual-forward-dynamics-ccp}) formulation of the dual problem using Euclidean projection, i.e. $\mathbf{A} = \mathbb{I}_{n_d}$ and an ALM penalty equal to the average of the diagonal terms of each local Delassus matrix. PGS-NCP is a version of the classic PGS adapted to 3D Coulomb friction cones that uses decoupled normal-tangent projections to enforce the NCP constraints. NBGS by Preclik et al~\cite{preclik2018mdp} that uses an blocked version of PGS that solves each local per-contact problem exactly using a semi-analytical solution based on a decision tree and a quartic polynomial. RAISIM by Hwangbo et al~\cite{hwangbo2018percontact} is fundamentally identical to NBGS except that it uses the $\mathcal{P}_{BS}(\cdot)$ projector based on the bisection-search method. RAISIM-DS by Lidec et al~\cite{lidec2024reconciling} is an enhanced version of RAISIM that also incorporates successive approximations of the De Saxc\'e correction in the local solver of per-contact constraints. RAISIM-DS-ES, an Early Stopping (ES) version of RAISIM-DS that replaces the constraint-based termination criteria with the relative objective function improvement (\ref{eq:solvers:convergence:early-stopping-criteria}).

\subsection{Alternating Direction Method of Multipliers}
\label{sec:solvers:admm}
\noindent
In the context of mechanics, ADMM has only recently began to be explored as an alternative to first-order methods such as PGS and its variants. Tasora et al in~\cite{tasora2021admm}, applied ADMM to the relaxed CCP (\ref{eq:formulation:dual-forward-dynamics-ccp}) of the dual FD problem transcribed as the convex SOCP (\ref{eq:formulation:dual-forward-dynamics-socp}). This work demonstrated impressive versatility in being able to simulate both constrained rigid-body systems such as a robotic manipulator as well as stacks of flexible bodies such as deformable bars. Similarly, others have also applied ADMM to the primal FD problem with equally impressive demonstrations of its capabilities. Daviet in~\cite{daviet2020simple, daviet2023interactive} used it to solve a softened version of the problem, rendering exceptionally realistic simulations of compliant hair fibers. Very recently, Lee et al in~\cite{lee2025variations} used it to solve an NCP-type formulation for simulating articulated robots, demonstrating competitive accuracy-speed tradeoffs compared to a state-of-the-art Newton-based method.

The second set of solvers we have evaluated are based on the ADMM algorithm of Boyd et al~\cite{boyd2011admm}. Specifically, we have realized two ADMM-based methods to solve the dual FD problem that respectively tackle the CCP (\ref{eq:formulation:dual-forward-dynamics-ccp}) and NCP (\ref{eq:formulation:dual-forward-dynamics-ncp}). The first, here referred to as ADMM-CCP, is based on the algorithm proposed by Lidec et al in~\cite{lidec2024models}, that similarly to that of Tasora et al~\cite{tasora2021admm}, constitutes a direct application of ADMM to the SOCP (\ref{eq:formulation:dual-forward-dynamics-socp}). The second and more advanced solver, here referred to as ADMM-NCP, is based on the proximal formulation proposed by Carpentier et al in~\cite{carpentier2024unified} that combines ADMM with proximal algorithms in a form that can solve the NCP via the convex SOCP (\ref{eq:solvers:convex-ncp-socp}). In both cases, we have adapted the original algorithms to the maximal-coordinate CRBD formulation in order to also incorporate bilateral joint and unilateral limit constraints.

As it turns out however, we found that both ADMM-CCP and ADMM-NCP are so similar, that they can share a common implementation. They only difference between them, in fact, is that ADMM-NCP includes two additional terms: (a) the estimate of the De Saxc\'e correction (\ref{eq:construction:system-desaxce-operator}), and (b) an additional proximal regularization objective. Thus, to bring these solvers into the context of this work, we provide a brief outline of the proximal formulation of ADMM-NCP, describe the computations that are required to realize it, and clarify how it is relaxed to form ADMM-CCP. For further details regarding ADMM-NCP and ADMM in general, we respectively refer the interested reader to~\cite{boyd2011admm} and~\cite{carpentier2024unified}. We highly recommend these original works, as well as~\cite{tasora2021admm}, as these proved very insightful in better understanding how and why ADMM can be applied to such problems.\\

\subsubsection*{\textbf{Proximal ADMM}}
\label{sec:solvers:admm:outline}
\noindent
As yet another first-order method adhering to the template of Alg.~\ref{alg:generic-first-order-dual-solver}, ADMM-NCP shares a near-identical derivation to splitting-based methods. Thus our starting point to deriving it is a direct sequel to Sec.~\ref{sec:solvers:proximal-point-derivation}. The first step is recognizing that ADMM is in fact a global solver that accounts for all constraint simultaneously, as opposed to PGS and its variants that use the per-constraint iteration scheme in conjunction with a local single-contact solver. 

Thus, ADMM-NCP effectively computes a direct approximation of the global proximal-point projection
\begin{equation*}
\boldsymbol{\lambda} = \text{prox}_{\mathcal{K}}^{\mathbf{A}}(\boldsymbol{\lambda} - \rho^{-1} \, \mathbf{A}^{-1} \, (\mathbf{D} \, \boldsymbol{\lambda} + \mathbf{v}_{f} + \mathbf{s}))
\,\,.
\label{eq:solvers:dual-problem-proximal-point-equation-recap}
\end{equation*}
Rewriting the objective function of the convex SOCP (\ref{eq:solvers:convex-ncp-socp}) as 
\begin{equation}
h(\mathbf{x},\mathbf{s}) := 
\frac{1}{2}\,\mathbf{x}^{T}\,\mathbf{D}\,\mathbf{x}
+ \mathbf{x}^{T} (\mathbf{v}_{f} + \mathbf{s})
\,\,,
\label{eq:solvers:admm-objective}
\end{equation}
and setting the metric tensor $\mathbf{A}=\mathbb{I}_{n_d}$ to employ the Euclidean norm, we can define the augmented Lagrangian
\begin{equation}
\begin{array}{ll}
\displaystyle
\mathcal{L}_{\rho,\eta}^{A}(\mathbf{s},\,\mathbf{x},\,\mathbf{y},\,\mathbf{z}) :=& 
h(\mathbf{x},\,\mathbf{s}) 
+ \Psi_{\mathcal{K}}(\mathbf{y}) \\[12pt]
& + \frac{\eta}{2} \Vert \mathbf{x} - \mathbf{x}^{-} \Vert_{2}^{2} - \frac{1}{2\rho} \Vert \mathbf{z} \Vert_{2}^{2}\\[12pt]
& + \frac{\rho}{2} \Vert \mathbf{x} - \mathbf{y} - \rho^{-1}\mathbf{z} \Vert_{2}^{2}
\end{array}
\,\,,
\label{eq:solvers:admm-augmented-lagrangian}
\end{equation}
where $\eta$ is an additional proximal parameter, and $\mathbf{x}^{-}$ is the previous estimate of the primal variable. Typically, the additional proximal parameter is set to $\eta=10^{-6}$. Now note how(\ref{eq:solvers:admm-augmented-lagrangian}) is otherwise identical to (\ref{eq:solvers:constrained-problem-augmented-lagrangian-2}), except for the additional proximal-point term $\frac{\eta}{2} \Vert \mathbf{x} - \mathbf{x}^{-} \Vert_{2}^{2}$. However simple it may seem, this single term was one of the important insights of~\cite{carpentier2024unified}, as it provides a special form of regularization to the Delassus matrix that renders the Hessian of (\ref{eq:solvers:admm-objective}) strictly convex while totally avoiding any biasing effects in the solution. Another way to interpret the effect of this term is that the Lagrangian (\ref{eq:solvers:admm-augmented-lagrangian}) implies a double proximal projection, one to the feasible set and another to the previous iterate.

A crucial shortcut that ADMM takes from the typical ALM derivation described in Sec.~\ref{sec:solvers:proximal-point-derivation}, is that it defines the alternating directions that update the primal variables $\mathbf{x}$ and $\mathbf{y}$. Thus each iteration of ADMM-NCP decomposes the problem into a cascade the of sub-problems 
\begin{subequations}
\begin{equation}
\mathbf{s}^{i} = \boldsymbol{\Gamma}(\mathbf{v}^{+}(\mathbf{x}^{i-1}))\\[2pt]
\,\,,
\label{eq:solvers:admm-desaxce-estimate}
\end{equation}
\begin{equation}
\mathbf{x}^{i} = \displaystyle \operatorname*{argmin}_{\mathbf{x}} \,\mathcal{L}_{\rho,\eta}^{A}(\mathbf{s}^{i},\,\mathbf{x},\,\mathbf{y}^{i-1},\,\mathbf{z}^{i-1})
\,\,,
\label{eq:solvers:admm-x-minimization}
\end{equation}
\begin{equation}
\mathbf{y}^{i} = \displaystyle \operatorname*{argmin}_{\mathbf{y}} \,\mathcal{L}_{\rho,\eta}^{A}(\mathbf{s}^{i},\,\mathbf{x}^{i},\,\mathbf{y},\,\mathbf{z}^{i-1})
\,\,,
\label{eq:solvers:admm-y-minimization}
\end{equation}
\begin{equation}
\mathbf{z}^{i} = \mathbf{z}^{i-1} - \rho\,(\mathbf{x}^{i} - \mathbf{y}^{i})
\,\,.
\label{eq:solvers:admm-z-dual-update}
\end{equation}
\label{eq:solvers:admm-cascade}
\end{subequations}

The first step (\ref{eq:solvers:admm-desaxce-estimate}) performs the convex relaxation of the NSOCP (\ref{eq:formulation:dual-forward-dynamics-nsocp}) via the iterate estimates of the De Saxc\'e correction. The second step (\ref{eq:solvers:admm-x-minimization}) corresponds to an unconstrained QP that renders the unconstrained solution $\mathbf{x}^{i}$ directly as 
\begin{equation}
\mathbf{x}^{i} = - \mathbf{D}_{\rho,\eta}^{-1} 
\, \left( \mathbf{v}_{f} + \mathbf{s}^{i} - \eta\,\mathbf{x}^{k} - \rho\,\mathbf{y}_{k} - \mathbf{z}_{k} \right)
\,\,,
\label{eq:solvers:admm-unconstrained-solution}
\end{equation}
where $\mathbf{D}_{\rho,\eta} := \mathbf{D} + (\eta + \rho)\,\mathbb{I}_{n_d}$ denotes Delassus matrix regularized by the proximal point parameter and the ALM penalty. The third sub-problem in (\ref{eq:solvers:admm-y-minimization}) corresponds to the proximal projection onto the feasible set, that is realized using the Euclidean projector onto the cone $\mathcal{K}$, i.e.
\begin{equation}
\mathbf{y}^{i} = \operatorname{prox}_{\mathcal{K}}^{\rho}\left( \mathbf{x}^{i} - \rho^{-1} \mathbf{z}^{i-1} \right) 
= \mathcal{P}^{\mathcal{K}}\left( \mathbf{x}^{i} - \rho^{-1} \mathbf{z}^{i-1} \right) 
\,\,.
\label{eq:solvers:admm-proximal-operator}
\end{equation}
The last sub-problem (\ref{eq:solvers:admm-z-dual-update}) corresponds to the update of the dual variable $\mathbf{z}$ of ADMM, and is akin to performing a simple gradient descent step with the penalty $\rho$ acting as a step-size, and the \textit{consensus error} between the primal variables acting as the gradient. Lastly, to extract ADMM-CCP from the aforedescribed derivation, we can simply neglect (\ref{eq:solvers:admm-desaxce-estimate}), i.e. $\mathbf{s}^{i} = 0 ,\,\forall i$ and omit $\frac{\eta}{2} \Vert \mathbf{x} - \mathbf{x}^{-} \Vert_{2}^{2}$ from (\ref{eq:solvers:admm-augmented-lagrangian}).\\

\subsubsection*{\textbf{Convergence Criteria}}
\label{sec:solvers:admm:convergence}
The typical definition of ADMM involves a convergence criteria based solely on the primal-dual residuals $r_{p}$ and $r_{d}$, as defined in (\ref{eq:solvers:convergence:primal-residual}) and (\ref{eq:solvers:convergence:dual-residual}), respectively. However, as ADMM-NCP is solving an NCP, it is crucial that (\ref{eq:solvers:convergence:complementarity-residual}) must also be evaluated in order to ensure that solutions satisfy the physical constraints of the problem. However, compared to the other splitting-based solvers, ADMM-NCP does not need to evaluate the costly computations described in Sec.~\ref{sec:solvers:admm:convergence}, which would require several matrix-vector multiplications to evaluate. Instead, for ADMM-NCP and ADMM-CCP, we can evaluate $r_{p}$ and $r_{d}$ directly using the iterate values of the primal-dual variables $\mathbf{x}^{i},\,\mathbf{y}^{i},\,\mathbf{z}^{i}$ as
\begin{subequations}
\begin{equation}
\mathbf{r}_{p}^{i} := \mathbf{x}^{i} - \mathbf{y}^{i}
\label{eq:solvers:admm-primal-residual}
\end{equation}
\begin{equation}
\mathbf{r}_{d}^{i} := \eta\,(\mathbf{x}^{i} - \mathbf{x}^{i-1}) + \rho\,(\mathbf{y}^{i} - \mathbf{y}^{i-1})
\label{eq:solvers:admm-dual-residual}
\end{equation}
\begin{equation}
\mathbf{r}_{cp}^{i} := [\, {\mathbf{x}_{j}^{i}\,}^{T} \mathbf{z}_{j}^{i} \,]_{n_l+n_c}
\label{eq:solvers:admm-compl-residual}
\end{equation}\\
\end{subequations}

\subsubsection*{\textbf{Penalty Adaptation}}
\label{sec:solvers:admm:penalty-adaptation}
The final component of ADMM-NCP involves the determination of the ALM penalty parameter $\rho$. Similarly to the description of splitting-based methods and how each variant elects to choose $\rho$, there several options exist for ADMM as well. In particular, we have realized three: 
\begin{enumerate}
\item A fixed-penalty (FP), where it is set to a user-specified constant, i.e. $\rho = \rho^{0} > 0$, which by default, we have found $\rho^{0} = 1$ to work well across all problems.\\ 
\item A \textit{Linear Adaptation} (LA) scheme in the form of
\begin{equation}
\rho^{i} = R_{LA}(\rho^{i-1}) :=
\begin{cases}
\rho^{i-1}\,\tau_{inc} &,\, r_{p}^{i}/r_{d}^{i} \geq \alpha \\ 
\rho^{i-1}\,\tau_{dec} &,\, r_{p}^{i}/r_{d}^{i} \leq \alpha^{-1} \\ 
\rho^{i-1}\, &,\, \alpha^{-1} < r_{p}^{i}/r_{d}^{i} < \alpha
\end{cases}
\,\,,
\label{eq:solvers:admm:linear-penalty-adaptation}
\end{equation}
where $r_{p}^{i}$ and $r_{d}^{i}$ are respectively the primal and dual residual of the current iteration, $\alpha > 1$ is the a threshold on the ratio of primal-dual residuals, and $\tau_{inc},\tau_{dec} > 0$ are the linear increment/decrement factors. The latter essentially modulate the penalty parameter only when the ratio of primal-dual residuals exceeds the threshold $\alpha$, attempting to maintain them both relatively close to each other. Typical values for these parameters are $\alpha=10.0$ and $\tau_{inc},\tau_{dec}=1.5$.\\
\item Based on an analysis by Nishihara et al in~\cite{nishihara2015} on the of the convergence properties of ADMM, Carpentier et al proposed a \textit{Spectral Adaptation} (SA) scheme that both initializes and adapts the penalty $\rho$ based on the spectral properties of the Delassus matrix $\mathbf{D}$, i.e. the conditioning of the problem. Denoting the largest and smallest eigenvalues of $\mathbf{D}$ respectively as $L := \lambda_{max}(\mathbf{D})$ and $m := \lambda_{min}(\mathbf{D})$, and subsequently the condition number as $\kappa = \kappa(D) =: {L}/{m}$, the SA scheme is
\begin{subequations}
\begin{equation}
\rho^{0} = \sqrt{L_{0} \, m_{0}} \, \kappa_{0}^{\tau_{0}}
\end{equation}
\begin{equation}
\rho^{i} = R_{SA}(\rho^{i-1}) :=
\begin{cases}
\rho^{i-1}\,\kappa_{0}^{\tau} &,\, r_{p}^{i}/r_{d}^{i} \geq \alpha \\ 
\rho^{i-1}\,\kappa_{0}^{-\tau} &,\, r_{p}^{i}/r_{d}^{i} \leq \alpha^{-1} \\ 
\rho^{i-1}\, &,\, \alpha^{-1} < r_{p}^{i}/r_{d}^{i} < \alpha
\end{cases}
\end{equation}
\label{eq:solvers:admm:spectral-penalty-adaptation}
\end{subequations}
where $\tau_{0},\tau > 0$ are the initialization and adaptation factors. Typical values for SA are $\alpha=10.0,\,\tau_{0}=0.2,\, \tau=0.05$. A final remark regarding the values of $L,m$ in the SA scheme, is that it is not actually necessary to compute $m := \lambda_{min}(\mathbf{D})$, since $\eta$ ensures that it will be its worst-case value. Thus we can always set $m = \eta$. Moreover, the value of $L$ need not be exact, meaning that we can employ computationally cheap estimates that can be rendered using techniques such as the Power-Iteration method~\cite{mises1929praktische}. Sec.~\ref{sec:benchmarking:ill-conditioning} provides more details on the spectral properties $L,m,\kappa$ and how they are determined by the properties of the system.
\end{enumerate}

\begin{algorithm}[!ht]
\caption{Projective Splitting-based Solver}
\begin{algorithmic}[1]
\Require $N_{max}$, $\omega^{0}$, $\omega_{min}$, $\gamma$, $\epsilon_{p}$, $\epsilon_{d}$, $\epsilon_{cp}$
\textcolor{gray}{\Comment{solver parameters}}
\Require $\mathbf{D},\,\,\mathbf{v}_{f},\,\,\mathcal{K},\,\,\mathcal{P}(\cdot),\,\,f_{stop}(\cdot)$ 
\textcolor{gray}{\Comment{problem definition}}
\Require $\boldsymbol{\lambda}^{0},\,\,\mathbf{v}^{+ 0}$
\textcolor{gray}{\Comment{optional warmstart}}
\Statex
\textcolor{gray}{\Comment{initialization}}
\State $\omega \gets \omega^{0}$
%
\For{$i = 1$ to $N_{max}$} \textcolor{gray}{\Comment{solver iteration}}
\Statex
\For{$j = 1$ to $n_j$} \textcolor{gray}{\Comment{joint iteration}}
\State $\bar{\mathbf{v}}_{f,j} \gets \mathbf{v}_{f,j} - \sum_{m < j} \mathbf{D}_{jm} \, \boldsymbol{\lambda}_{m}^{i} - \sum_{m > j} \mathbf{D}_{jm} \, \boldsymbol{\lambda}_{m}^{i-1}$
\State $\boldsymbol{\lambda}_{j,0}^{i} \gets -\mathbf{D}_{jj}^{-1} \, \bar{\mathbf{v}}_{f,j}$
\State $\boldsymbol{\lambda}_{j}^{i} \gets (1-\omega) \, \boldsymbol{\lambda}_{j}^{i-1} \, + \, \omega \, \boldsymbol{\lambda}_{j,0}^{i}$
\EndFor
\Statex
\For{$l = 1$ to $n_l$} \textcolor{gray}{\Comment{limit iteration}}
\State $\bar{v}_{f,l} \gets v_{f,l} - \sum_{m < l} \mathbf{D}_{lm} \, \boldsymbol{\lambda}_{m}^{i} - \sum_{m > j} \mathbf{D}_{lm} \, \boldsymbol{\lambda}_{m}^{i-1}$
\State $\lambda_{l,0}^{i} \gets -D_{ll}^{-1} \, \bar{v}_{f,l}$
\State $\lambda_{l}^{i} \gets \mathcal{P}^{\mathbb{R}_{+}}(\lambda_{l,0}^{i})$
\State $\lambda_{l}^{i} \gets (1-\omega) \, \lambda_{l}^{i-1} \, + \, \omega \, \lambda_{l}^{i}$
\EndFor
\Statex
\For{$k = 1$ to $n_c$} \textcolor{gray}{\Comment{contact iteration}}
\State $\bar{\mathbf{v}}_{f,k} \gets \mathbf{v}_{f,k} - \sum_{m < k} \mathbf{D}_{km} \, \boldsymbol{\lambda}_{m}^{i} - \sum_{m > k} \mathbf{D}_{km} \, \boldsymbol{\lambda}_{m}^{i-1}$
\State $\mathbf{s}_{i} \gets \boldsymbol{\Gamma}(\mathbf{D}_{kk}\,\boldsymbol{\lambda}_{k}^{i} + \bar{\mathbf{v}}_{f,k})$
\State $\boldsymbol{\lambda}_{k}^{i} \gets \mathcal{P}(\boldsymbol{\lambda}_{k}^{i}, \mathbf{D}_{kk}, \bar{\mathbf{v}}_{f,k}, \mathbf{s}_{i}, \mu_{k})$
\State $\boldsymbol{\lambda}_{k}^{i} \gets (1-\omega) \, \boldsymbol{\lambda}_{k}^{i-1} \, + \, \omega \, \boldsymbol{\lambda}_{k}^{i}$

\EndFor
\Statex
\State $\mathbf{v}^{+} \gets \mathbf{D} \, \boldsymbol{\lambda}^{i} + \mathbf{v}_{f}$ \textcolor{gray}{\Comment{update solution}}
\State $\boldsymbol{\lambda} \gets \boldsymbol{\lambda}^{i}$
\Statex
\If{$\,\,
f_{stop}(\boldsymbol{\lambda}^{i}, \mathbf{v}^{i}, \epsilon_{p}, \epsilon_{d}, \epsilon_{cp}) = \text{true}
\,\,$}
\State \textbf{break}
\EndIf 
%
\State $\omega \gets \min(\omega_{min}, \gamma \, \omega)$ \textcolor{gray}{\Comment{relaxation decay}}
\Statex
\EndFor
\Statex
\Statex
\Return $\boldsymbol{\lambda} \,,\,\, \mathbf{v}^{+}$
\end{algorithmic}
\label{alg:solvers:projective-splitting}
\end{algorithm}
\begin{algorithm}[!ht]
\caption{Proximal ADMM-based Solver}
\setstretch{1.17}
\begin{algorithmic}[1]
\Require $N_{max}$, $\rho^{0}$, $\eta$, $\omega$ , $\epsilon_{p}$, $\epsilon_{d}$, $\epsilon_{cp}$ 
\textcolor{gray}{\Comment{solver parameters}}
\Require $\mathbf{D},\,\,\mathbf{v}_{f},\,\,\mathcal{K},\,\,\mathcal{P}^{\mathcal{K}}(\cdot)$ 
\textcolor{gray}{\Comment{problem definition}}
\Require $\boldsymbol{\lambda}^{0},\,\,\mathbf{v}^{+ 0}$
\textcolor{gray}{\Comment{optional warmstart}}
\Statex
\State $\mathbf{x}^{0} \gets \mathbf{y}^{0} \gets \boldsymbol{\lambda}^{0}$
\textcolor{gray}{\Comment{initialization}}
\State $\mathbf{z}^{0} \gets \mathbf{v}^{+ 0}$
\Statex
\For{$i = 1$ to $N_{max}$} \textcolor{gray}{\Comment{solver iterations}}
\Statex
%
\State $\mathbf{s}^{i} \gets \boldsymbol{\Gamma}(\mathbf{z}^{i-1})$ \textcolor{gray}{\Comment{De Saxc\'e} estimate}
\Statex
\State $\mathbf{v}^{i} \gets \mathbf{v}_{f} + \mathbf{s}^{i} - \eta\,\mathbf{x}^{i-1} - \rho^{i}\,\mathbf{y}^{i-1} - \mathbf{z}^{i-1}$ 
\textcolor{gray}{\Comment{offset update}}
\State $\mathbf{x}^{i} \gets - \left(\mathbf{D} + (\eta + \rho^{i})\,\mathbb{I}_{n_d}\right)^{-1} \, \mathbf{v}^{i}$
\textcolor{gray}{\Comment{primal update}}
\State $\mathbf{x}^{i} \gets (1-\omega) \, \mathbf{y}^{i-1} + \omega \, \mathbf{x}^{i}$ \textcolor{gray}{\Comment{over-relaxation}}
\State $\mathbf{y}^{i} \gets \mathcal{P}^{\mathcal{K}}\left( \mathbf{x}^{i} - \frac{1}{\rho^{i}} \, \mathbf{z}^{i-1} \right)$ \textcolor{gray}{\Comment{slack update}}
\State $\mathbf{z}^{i} \gets \mathbf{z}^{i-1} - \rho^{i} \left( \mathbf{x}^{i} - \mathbf{y}^{i} \right)$ 
\textcolor{gray}{\Comment{dual update}}
\Statex
\State $\boldsymbol{\lambda} \gets \mathbf{y}^{i}$ \textcolor{gray}{\Comment{update solution}}
\State $\mathbf{v}^{+} \gets \mathbf{z}^{i} - \boldsymbol{\Gamma}(\mathbf{z}^{i})$
\Statex
\State $r_{p}^{i} \gets \Vert \mathbf{x}^{i} - \mathbf{y}^{i} \Vert_{\infty}$ \textcolor{gray}{\Comment{residuals}}
\State $r_{d}^{i} \gets \Vert \eta \, (\mathbf{x}^{i} - \mathbf{x}^{i-1}) + \rho^{i} \, (\mathbf{y}^{i} - \mathbf{y}^{i-1}) \Vert_{\infty}$
\State $r_{cp}^{i} \gets \Vert \boldsymbol{r}_{cp}(\mathbf{x}^{i} \,,\,\, \mathbf{z}^{i}) \Vert_{\infty}$ \textcolor{gray}{\Comment{from (\ref{eq:solvers:convergence:complementarity-residual})}}
\Statex
\If{$
(r_{p}^{i} < \epsilon_{p}) \land
(r_{d}^{i} < \epsilon_{d}) \land
(r_{cp}^{i} < \epsilon_{cp})
$}
\State \textbf{break}
\EndIf 
\Statex
\State $\rho^{i} \gets R\left( \rho^{i} \,,\,\, r_{p}^{i} \,,\,\, r_{d}^{i} \right)$ \textcolor{gray}{\Comment{from (\ref{eq:solvers:admm:linear-penalty-adaptation}) or (\ref{eq:solvers:admm:spectral-penalty-adaptation})}}
\Statex
\EndFor
\Statex
\Statex
\Return $\boldsymbol{\lambda}\,,\,\,\mathbf{v}^{+}$
\end{algorithmic}
\label{alg:admm}
\end{algorithm}
\begin{table*}[!ht]
\centering
\caption{\textit{Dual Solvers}: Definition of each solver considered in this evaluation.}
\begin{tabular}{|l|l|l|l|c|c|l|}
\hline
\multicolumn{2}{|l|}{\textbf{Solver ID}} & \textbf{Scheme} & \textbf{Projector} & \textbf{De Saxc\'e} & \textbf{Termination} & \textbf{Notes} \\
\hline
\hline
\multicolumn{2}{|l|}{PGS-CCP}           & SOR (Alg.~\ref{alg:solvers:projective-splitting}) & CCP (Alg.~\ref{alg:ccp-local-projection}) & yes & (\ref{eq:solvers:convergence:primal-residual})(\ref{eq:solvers:convergence:dual-residual})(\ref{eq:solvers:convergence:complementarity-residual}) & \\
\hline
\multicolumn{2}{|l|}{PGS-NCP}           & SOR (Alg.~\ref{alg:solvers:projective-splitting}) & NCP (Alg.~\ref{alg:ncp-local-projection}) & no & (\ref{eq:solvers:convergence:primal-residual})(\ref{eq:solvers:convergence:dual-residual})(\ref{eq:solvers:convergence:complementarity-residual}) & \\
\hline
\multicolumn{2}{|l|}{NBGS}              & SOR (Alg.~\ref{alg:solvers:projective-splitting}) & NB (Alg.~\ref{alg:nb-local-projection}) + MDP & no & (\ref{eq:solvers:convergence:primal-residual})(\ref{eq:solvers:convergence:dual-residual})(\ref{eq:solvers:convergence:complementarity-residual}) & \\
\hline
\multicolumn{2}{|l|}{RAISIM}            & SOR (Alg.~\ref{alg:solvers:projective-splitting}) & NB (Alg.~\ref{alg:nb-local-projection}) + BS & no & (\ref{eq:solvers:convergence:primal-residual})(\ref{eq:solvers:convergence:dual-residual})(\ref{eq:solvers:convergence:complementarity-residual}) & \\
\hline
\multicolumn{2}{|l|}{RAISIM-DS}         & SOR (Alg.~\ref{alg:solvers:projective-splitting}) & NB (Alg.~\ref{alg:nb-local-projection}) + BS & yes & (\ref{eq:solvers:convergence:primal-residual})(\ref{eq:solvers:convergence:dual-residual})(\ref{eq:solvers:convergence:complementarity-residual}) & \\
\hline
\multicolumn{2}{|l|}{RAISIM-DS-ES}      & SOR (Alg.~\ref{alg:solvers:projective-splitting}) & NB (Alg.~\ref{alg:nb-local-projection}) + BS & yes & (\ref{eq:solvers:convergence:early-stopping-criteria}) & \\
\hline
\multicolumn{2}{|l|}{ADMM-CCP}          & ADMM (Alg.~\ref{alg:admm}) & Euclidean (\ref{eq:solvers:coulomb-cone-projection}) & no & (\ref{eq:solvers:convergence:primal-residual})(\ref{eq:solvers:convergence:dual-residual})(\ref{eq:solvers:convergence:complementarity-residual}) & fixed penalty \\
\hline
\multirow{3}{*}{ADMM-NCP-*}    & FP     & ADMM (Alg.~\ref{alg:admm}) & Euclidean (\ref{eq:solvers:coulomb-cone-projection}) & yes & (\ref{eq:solvers:convergence:primal-residual})(\ref{eq:solvers:convergence:dual-residual})(\ref{eq:solvers:convergence:complementarity-residual}) & fixed penalty \\
                               & LA    & ADMM (Alg.~\ref{alg:admm}) & Euclidean (\ref{eq:solvers:coulomb-cone-projection}) & yes & (\ref{eq:solvers:convergence:primal-residual})(\ref{eq:solvers:convergence:dual-residual})(\ref{eq:solvers:convergence:complementarity-residual}) & linear adapt. (\ref{eq:solvers:admm:linear-penalty-adaptation}) \\
                               & SA    & ADMM (Alg.~\ref{alg:admm}) & Euclidean (\ref{eq:solvers:coulomb-cone-projection}) & yes & (\ref{eq:solvers:convergence:primal-residual})(\ref{eq:solvers:convergence:dual-residual})(\ref{eq:solvers:convergence:complementarity-residual}) & spectral adapt. (\ref{eq:solvers:admm:spectral-penalty-adaptation}) \\
\hline
\end{tabular}
\label{tab:dual-solver-definitions}
\end{table*}

\section{Benchmarking Framework}
\label{sec:benchmarking}
\noindent
In this section we proposed a general-purpose framework for evaluating FD solvers. Fundamentally, our approach involves three core elements. Firstly, we have compiled an extensive suite of problems involving constrained rigid multi-body systems ranging from primitive toy problems to full-scale complex mechanical assemblies such as robotic systems and \textit{Audio-Animatronics}\textregistered\,\, figures. Secondly, we define a scheme to categorize dual problems based on the relative constraint dimensionality and conditioning of each, which enables a systematic characterization of solver performance based on the problem difficulty. Third, the we prescribe a set of quantitative performance metrics based on first-principles that are indicative of solver accuracy, robustness and speed. The presented approach is greatly inspired by the past work of Acary et al in~\cite{acary2014fclib, acary2018comparisons}, and is in effect, a direct extension of the former to cases of complex mechanical assemblies such as robotic systems. Within the scope of this work, we utilized it to only compare the specific set of algorithms described in Sec.~\ref{sec:solvers}, as will be presented in Sec.~\ref{sec:experiments}. 

\subsection{Benchmark Suite}
\label{sec:benchmarking:suite}
\noindent
The suite of simulation problems, depicted in Fig.~\ref{fig:benchmark-suite}, consists of twelve constrained rigid multi-body systems, spanning a wide range of ill-conditioned and numerically challenging scenarios. The toy problems principally serve to establish performance baselines of each solver on the distinct types of ill-conditioning emerging in the more complex systems, and thus facilitate systematic analyses of their qualitative behaviors. The full-scale systems are used to asses how algorithm performance scales with problem dimensionality and conditioning directly in cases of immediate practical relevance. It is thus organized into three categories of increasing system complexity and difficulty, namely:
\begin{enumerate}
\item \textit{Primitives}: a set of toy problems, each exhibiting a specific case of ill-conditioning to be tackled in isolation.
\item \textit{Robotics}: a set of problems involving robotic systems that are of moderate to high difficulty. They are all articulated systems without kinematic loops, and collisions involve only geometric primitives or minimal-size convex meshes,  as is typical in robotics applications.
\item \textit{Audio-Animatronics}\textregistered: a set of complex mechanical systems, each exhibiting multiple kinematic loop-closures.  While most are affixed w.r.t the world via a unary fixed-joints one is also free-floating. For these systems collisions are evaluated solely using the raw true mesh-based geometry of the mechanical assembly. The geometry is also multi-layered, as the internal mechanisms are enclosed by exterior \textit{shells}. Thus potential collisions include combinations between the world, shells and internal components.
\end{enumerate}

\subsection{Characterizing Ill-conditioning}
\label{sec:benchmarking:ill-conditioning}
\noindent
It is important to first define what exactly ill-conditioning is and how it arises in the context of mechanics. Generally speaking, given a function or operator defined by the mapping $f : \mathbb{R}^{n} \rightarrow \mathbb{R}^{m}$, we say that the former is \textit{ill-conditioned}, if its so-called \textit{condition number} $\kappa_{f} \in (0, \infty)$ exhibits large values, i.e. many orders of magnitude larger than unity. The precise definition of $\kappa_{f}$ depends on the nature of $f$ and the choice of metric norm used to measure distances in the domain and image of the mapping, but effectively, it is always some measure of the sensitivity of the input-output mapping $\mathbf{y} = f(\mathbf{x})$. In our case we will employ definitions pertaining to \textit{convex functions} and \textit{matrices}, which, are closely related through the theory of \textit{monotone operators}~\cite{ryu2016primer}.

Specifically, if the operator is a convex function $\mathbf{f} : \mathbb{R}^{n} \rightarrow \bar{\mathbb{R}}$, where $\bar{\mathbb{R}} := \mathbb{R} \cup \{ \pm \infty \}$, the condition number of $y = f(\mathbf{x})$ is defined as $\kappa_{f} = L/m$, where $m > 0$ is the \textit{convexity parameter} and $L > 0$ is the \textit{Lipschitz constant} of $f$. See~\cite{ryu2016primer} for more details. Intuitively, $m$ and $L$ are respectively equivalent to the smallest and largest singular values of the Hessian $\nabla^{2} f(\mathbf{x})$ if $f$ were twice continuously-differentiable. This equivalence naturally leads to second definition regarding matrices: in the case of affine functions of the form $f(x) = \mathbf{A}\,\mathbf{x} + \mathbf{b}$, the equivalent quantity is the \textit{matrix condition number} $\kappa_{\mathbf{A}} := \sigma_{max}(\mathbf{A}) / \sigma_{min}(\mathbf{A})$, i.e. the ratio of its largest and smallest singular values $\sigma_{max}(\mathbf{A}) := L$, $\sigma_{min}(\mathbf{A}) := m$ of matrix $\mathbf{A} \in \mathbb{R}^{n \times m}$. This also means that the Lipschitz constant corresponds to the spectral radius of $\mathbf{A}$, i.e. $L = \rho(\mathbf{A})$. Moreover, for symmetric matrices $\mathbf{A} \in \mathbb{R}^{n\times n}$, i.e. $\mathbf{A} = \mathbf{A}^{T}$, $\kappa_{\mathbf{A}} := \lambda_{max}(\mathbf{A}) / \lambda_{min}(\mathbf{A})$ where now the condition number can be defined using its largest and smallest eigenvalues.

With these definitions in hand, let us now consider how ill-conditioning arises within the mechanical context of rigid-body dynamics. The two key elements at play here are the Jacobian matrix $\mathbf{J}$ and the generalized mass matrix $\mathbf{M}$. Therefore, when either of the two (or both) exhibit large condition numbers, then the overall FD problem is said to be ill-conditioned, and can occur in cases of:
\begin{itemize}
\item \textit{Hyperstaticity}: this corresponds to rank-deficiency in the Jacobian matrix $\mathbf{J}$ due to constraint coupling. It can occur in the following ways: (a) intrinsically due to kinematic loops occurring in the joint morphology, (b) extrinsically due to multiple contacts acting on the same body or on the system overall, and (c) when the system approaches a kinematic singularity. Rank-deficiency therefore amounts to $\mathbf{J}$ exhibiting one or more zero-valued eigenvalues, i.e. $\sigma_{min}(\mathbf{A}) = 0$, resulting in $\kappa \rightarrow \infty$.
\item \textit{Inertial disparity}: this is when the system's mass ratio $r_{m} = {m_{max}}/{m_{min}} \gg 1$, i.e. that of the largest and smallest mass present in the system is significantly larger than unity. How much larger than unity is of course relative. Practically speaking, and in terms of orders-of-magnitude, a forward dynamics problem can be ill-conditioned if $r_{m} \approx 10^3$. Although $\mathbf{M}$ is necessarily positive-definite, its inversion with a high $r_{m}$ multiplied by the effective lever-arms of $\mathbf{J}$ can result in exceptionally large $\kappa_{\mathbf{D}}$, the condition number of the Delassus matrix.
\end{itemize}

Therefore, the combined effects of hyperstaticity and inertial disparity can prove detrimental to both the rate of convergence of dual solvers as well as the stability of the resulting constraint reactions. The former leads to excessive iterations being required to solve the problem, while the latter causes erratic changes to the forces rendered between successive iterations without substantial changes to the system's state.

\begin{figure*}[!t]
\centering
\includegraphics[width=1.0\linewidth]{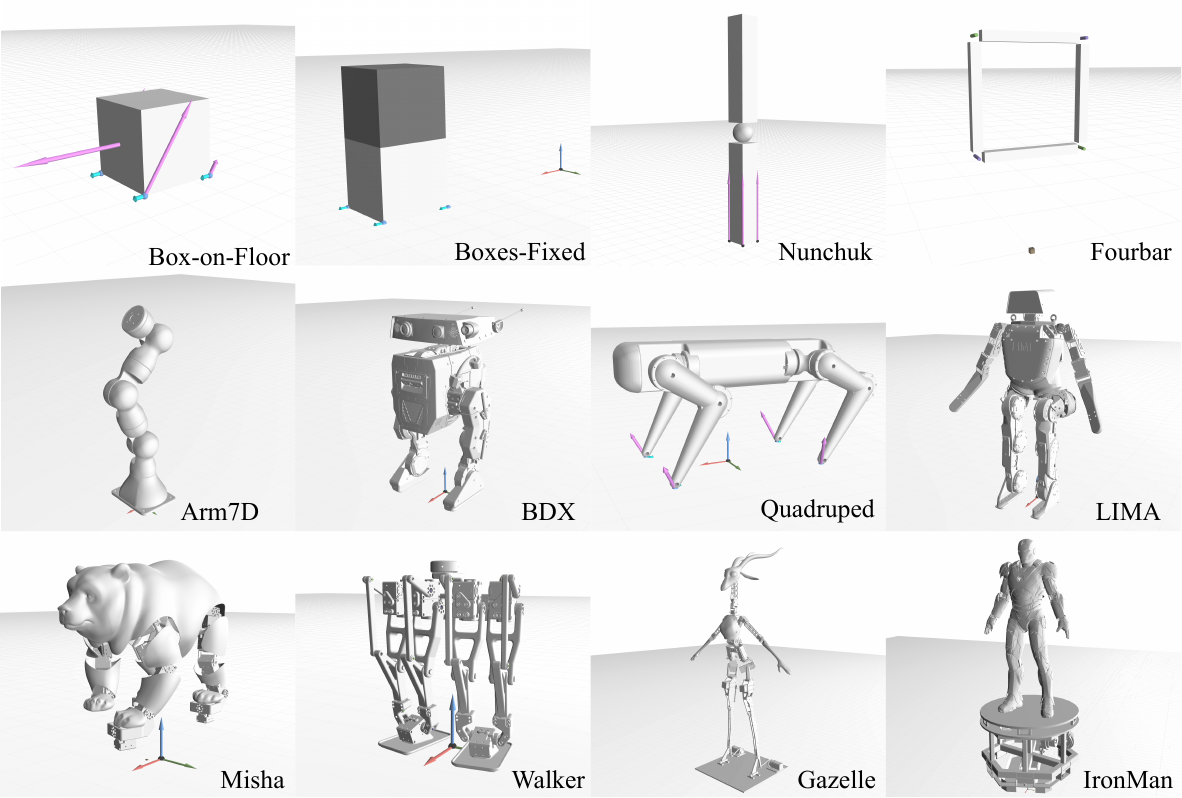}
\caption{\textit{Benchmark Suite}: a set of twelve constrained rigid multi-body systems spanning three categories. The top, middle and bottom rows respectively depict the \textit{Primitive}, \textit{Robotics}, and \textit{Audio-Animatronics}\textregistered\,\, problem categories, and the system complexity is increasing from left to right. The first category contains a set of toy problems that are often used to establish a baseline of behavior in specific cases of ill-conditioning. The second category involving robotic systems serves to evaluate solver performance on well established problems in encountered in robotics applications. These problems do not exhibit any inherent ill-conditioning, but the presence of joint limits and contacts, in addition to the use of meshes as the collision geometry, significantly increase the difficulty of simulating them. The third category involves mostly fixed-bases systems with the exception of Walker which is floating-based. All of these problems exhibit multiple intrinsic kinematic loops with a relatively large number of bodies, and present the biggest challenge, as the systems are inherently ill-conditioned due to the combined effects of: mesh collisions, inherent kinematic loops, passive (i.e. unactuated) joints, presence of unary joints, significantly larger problem dimensions, and large disparity in body masses.}
\label{fig:benchmark-suite}
\end{figure*}

\begin{table*}[!t]
\label{tab:benchmark-suite-problem-dimensions}
\centering
\caption{\textit{Benchmark Suite}: \text{The intrinsic dimensions of each system, i.e. without active limit and contact constraints.}}
\setlength\extrarowheight{1pt}
\begin{tabular}{|lcccccccc|}
\hline
\textbf{System} & \textbf{Base} & \textbf{Bodies} & \textbf{Joints} & \textbf{Total DoFs} & \textbf{Passive DoFs} & \textbf{Actuated DoFs} & \textbf{Constraints} & \textbf{Mass Ratio}\\
\hline \hline
Box-on-Plane    & Free & 1        & 0        & 6      & 0             & 0              & 0      &  1.00    \\
\hline
Boxes-Fixed     & Free & 2        & 1        & 6      & 0             & 0              & 6      &  1000.00    \\
\hline
Nunchuk         & Free & 3        & 2        & 12     & 6             & 0              & 6      &   1.00   \\
\hline
Fourbar (fixed) & Fixed & 4        & 5        & 1      & 0             & 1              & 26     &   1.00   \\
\hline
Fourbar (free)  & Free & 4        & 4        & 7      & 5             & 1              & 20     &   1.00   \\
\hline
Arm7D           & Fixed & 8       & 7        & 7      & 0             & 7              & 41     &   19.54   \\
\hline
Quadruped       & Free & 13       & 12       & 18     & 6             & 12             & 60     &   29.07  \\
\hline
BDX             & Free & 15       & 14       & 14     & 6             & 14             & 70     &   42.49  \\
\hline
LIMA            & Free & 21       & 20       & 26     & 6             & 20             & 100    &   321.39 \\
\hline
Misha           & Fixed & 22       & 24       & 23     & 0             & 23             & 121    &   90.67  \\
\hline
Walker          & Free & 31       & 36       & 42     & 18            & 24             & 180    &   99.55  \\
\hline
Gazelle         & Fixed & 38       & 45       & 60     & 38            & 22             & 214    &   1012.54 \\
\hline
IronMan         & Fixed & 54       & 67       & 100    & 80            & 20             & 304    &   25544.44 \\
\hline
\end{tabular}
\end{table*}

\subsection{Sample Categories}
\label{sec:benchmarking:sample-categories}
\noindent
Given the aforedescribed definition of ill-conditioning, we aim to systematically characterize and categorize time-wise samples extracted from each problem in the suite. In this context, a \textit{sample} is defined as a single dual problem in the form of the NCP (\ref{eq:formulation:dual-forward-dynamics-ncp}) or CCP (\ref{eq:formulation:dual-forward-dynamics-ccp}). Each sample can therefore be solved by a set of solver instances varying in terms of the algorithm or hyper-parameters. This aspect of our work has been inspired by the tremendously thorough technical reports of Acary et al on the FCLib~\cite{acary2014fclib} and numerical solver comparisons in~\cite{acary2018comparisons}, as well as that of Ledic et al in their comparisons of contact models in~\cite{lidec2024models}. 

Employing a similar categorization as in~\cite{acary2018comparisons}, we characterize each sample problem based on so-called \textit{constraint densities} $d_{j} = n_{jd} / n_{bd}$, $d_{c} = 3 \, n_{c} / 6 \, n_{b}$, $d_{jlc} = n_{d} / 6 \, n_{b}$ corresponding to that of the joints, contacts and total constraint sets, respectively, where $n_{jd} := \sum_{j=1}^{n_j} m_j$ denotes the total number of constraint dimensions of the joints. These densities express the ratio of number of constraints to body DoFs, and whenever they are greater than unity, express the \textit{potential} for rank-deficiency of $\mathbf{J}$. For this reason we also measure the actual matrix rank $r_{\mathbf{J}} = \text{rank}(\mathbf{J})$ of the system Jacobian. Note that we have omitted the isolated constraint density of joint limits as it, by definition, cannot exceed the joint constraint density $d_{j}$, and so does not really contribute much information on its own. It is, however, included in the total constraint density measurement $d_{jlc}$. Thus, using these densities and marix rank, we categorize each sample as either:
\begin{itemize}
\item Independent Joints: $n_c = 0$, $n_j \geq 1$, and $r_{\mathbf{J}} = n_{jd}$.
\item Redundant Joints: $n_c = 0$, $n_j \geq 1$, $r_{\mathbf{J}} < n_{jd}$, and $d_{j} < 1$
\item Dense Joints: $n_c = 0$, $n_j \geq 1$, $r_{\mathbf{J}} < n_{jd}$, and $d_{j} \geq 1$
\item Single Contact: $n_c = 1$
\item Sparse Contacts: $n_c \in [2 \,,\,\, 2\,n_b]$ (i.e. $d_c < 1$)
\item Dense Contacts: $n_c > 2\,n_b$ and $d_{c} > 1$
\item Dense Constraints: $n_c \geq 1$, $n_j \geq 1$ and $d_{jlc} > 1$
\end{itemize}
The first case of \textit{Independent Joints} is used to establish a baseline of nominal performance for samples that can admit a full-row rank $\mathbf{J}$. Note also the special case of \textit{Single Contact} $n_c = 1$. Although seemingly trivial, for problems with large mass ratios and/or large number of joints, the single-contact case can inform us about the capacity of a particular solver to correctly propagate and distribute constraint reactions throughout the system.

\subsection{Performance Metrics}
\label{sec:benchmarking:metrics}
\noindent
Selecting appropriate performance metrics is crucial to evaluating dual solvers w.r.t their physical plausibility, accuracy, speed, and robustness, both in relative and absolute terms. Moreover, our objective must be to state such comparisons as fairly as possible, but also in a manner that is representative of the actual needs of relevant applications. Overall, our approach for evaluating dual solvers consists of a comparative framework that combines and extends the work of Acary et al~\cite{acary2018comparisons} and Lidec et al~\cite{lidec2024models, carpentier2024unified}. Specifically, it involves two aspects: a) a set of sample-wise quantitative performance metrics, and b) a schema for representing aggregate performance using statistical measures computed over multiple samples.\\

\subsubsection*{\textbf{Accuracy}} 
\label{sec:benchmarking:metrics:accuracy}
We will first define metrics for physical plausibility and accuracy. In this regard, comparing dual solvers in systematic manner is a rather non-trivial task. As the algorithms can differ significantly in terms of the employed numerical scheme and the contact model, this precludes the use of their respective convergence criteria as comparative measures. Therefore, we must resort to using metrics which can generalize to all cases, e.g. a CCP or NCP-based contact model etc. In particular, we propose to use:
\begin{enumerate}
\item the \textit{Penetration Residual}
\begin{equation}
r_{\text{pen}}
=: 
\Vert\, \mathbf{f}(\mathbf{s}) \,\Vert_{\infty}
\,\,,
\label{eq:benchmark:penetration-residual}
\end{equation}
where $\mathbf{f} : \mathbb{S}^{n_b} \rightarrow \mathbb{R}^{n_{sd}}$ is the total configuration-level constraint function defined in (\ref{eq:construction:total-configuration-constraint-fuction}). This metric effectively indicates the worst-case configuration-level constraint violation occurring as a byproduct of the computed constraint reactions.
\item the \textit{NCP Dual Residual}
\begin{equation}
r_{\text{dual}}
=: 
\Vert \textbf{r}_{\text{d}}(\mathbf{v}^{+} + \boldsymbol{\Gamma}(\mathbf{v}^{+})) \Vert_{\infty}
\,\,.
\end{equation}
This quantity is useful because it indicates if non-zero velocities are present along directions that violate the constraint. For joints this simply amounts to the total constraint-space velocities, which should ideally be zero at all times. For limits and contacts, it corresponds to non-zero velocities along the unilateral constraint dimension, e.g. the contact normal. A proof of this property is provided in Appendix.~\ref{sec:apndx:desaxce-properties}.
\item the \textit{NCP Complementarity Residual}
\begin{equation}
r_{\text{ncp}}
=: 
\Vert \textbf{r}_{\text{cp}}(\boldsymbol{\lambda} \,,\,\, \mathbf{v}^{+} + \boldsymbol{\Gamma}(\mathbf{v}^{+})) \Vert_{\infty}
\,\,,
\end{equation}
which serves as a measure of adherence to the MDP. Essentially, non-zero values indicate a misalignment between constraint reactions and corresponding contact velocities along the tangential planes, i.e. the deviation from maximal dissipativity, as well as erroneous power distributions along contact normals.
\item  the \textit{NCP Natural-Map Residual} 
\begin{equation}
r_{\text{nat}}
=: 
\Vert \mathbf{F}_{\text{NCP}}^{\text{nat}}(\boldsymbol{\lambda},\mathbf{v}^{+}) \Vert_{\infty}
\,\,,
\end{equation}
 which indicates how near the computed contact reaction is to a solution of the NCP. As described in Sec.~\ref{sec:solvers:convergence}, it combines the effects of the primal, dual and complementarity residuals into a single value, providing a convenient short-hand for both solver convergence and solution validity.\\
\end{enumerate}

\subsubsection*{\textbf{Speed}}
\label{sec:benchmarking:metrics:speed}
Next, we will define a set of metrics for assessing the speed and convergence behavior of each algorithm. These will serve as proxies to estimating the effective computational complexity exhibited by each solver on the set of problems:
\begin{enumerate}
\item the \textit{Number of Iterations} $i_{stop} \leq N_{max}$, which reflects a solver's convergence to a solution. $N_{max} \in \mathbb{N}\cup\{\infty\}$ is the maximum iterations permitted for a particular simulation, and can represent the computational budget afforded to a dual solver in terms of the number of operations executed on each problem.
\item the \textit{Total Solve Time} $t_{solve} \leq t_{max}$, which represents the total computational cost incurred by a solver on a particular problem. $t_{max} \in \mathbb{R_{+}}\cup\{\infty\}$ is the maximum permissible time-duration, and equivalently to $N_{max}$, can represent the computational budget in terms of the absolute wall-clock time.
\item the \textit{Mean Iteration Time} $t_{iter}$, which approximates the computational cost of the per-iteration operations of a particular algorithm. Given the similar overall structure of all dual solvers, $t_{iter}$ is indicative of the computational cost incurred by the specifics of each algorithms, such as the numerical scheme, the employed contact model and the residuals used as termination criteria.\\
\end{enumerate}

\subsubsection*{\textbf{Robustness}}
\label{sec:benchmarking:metrics:robustness}
The last aspect of solver performance to consider is robustness. In this regard, we will asses how a solver behaves in the presence of ill-conditioning. Specifically, for each sample problem, our goal is to examine if a solver can: (a) converge at all, (b) converge within a specified computational budget, (c) produce stable motions despite not converging within the afforded budget, and (d) diverge by resulting in \texttt{NaN}, \texttt{Inf} or arbitrarily large values. To this end, two methods often employed in benchmarks for numerical optimization methods is to evaluate whether $t_{solve} \geq t_{max}$ or $i_{stop} \geq N_{max}$. If either case holds on a given problem, we can consider the solver to have failed to solve it. In this work we will exclusively employ the latter check w.r.t $i_{stop} \geq N_{max}$, as our evaluation focuses mainly on the numerical properties of each algorithm, as opposed to their absolute time complexity.\\

\subsubsection*{\textbf{Profiles}}
\label{sec:benchmarking:metrics:profiles}
Lastly, we employ the methodology of \textit{performance profiles} by Dolan et al~\cite{dolan2002} in order to statistically evaluate the relative performance of each solver, w.r.t the aforedefined metrics. Such an approach has nowadays become standard practice for benchmarking numerical optimization algorithms, and is slowly gaining favor in application domains such as physical simulation as well~\cite{acary2018comparisons, carpentier2024unified}.It enables us to aggregate performance metrics collected over individual or multiple simulations, and render cumulative probability distributions that highlight the relative strengths and weaknesses of each solver/algorithm in a systematic way.

In this setting, we assume a set $\mathcal{P}$ consisting of $n_{dp} := |\mathcal{P}|$ dual problems, as well as a set $\mathcal{S}$ of $n_{ds} := |\mathcal{S}|$ dual solvers. For each index pair $p,s \in \mathcal{P} \times \mathcal{S}$, we execute solver $s$ on problem $p$ and collect the set of metrics $\mathcal{M}_{p,s} := \{ r_{pen} \,,\,\, r_{d} \,,\,\, r_{ncp} \,,\,\, r_{nat} \,,\,\, i_{stop} \,,\,\, t_{solve} \,,\,\, t_{iter} \}$ defined previously. It is important to note that all performance metrics in $\mathcal{M}_{p,s}$ admit an increasing partial ordering, i.e. lowest is best, and it holds that $m_{p,s} \geq 0 \,\,,\,\,\, \forall m_{p,s} \in \mathcal{M}_{p,s}$. For every metric sample $m_{p,s}$ we compute a corresponding relative \textit{performance ratio} 
\begin{equation}
r_{p,s} := \frac{m_{p,s}}{\max{\{ m_{p,s} : s \in \mathcal{S} \}} + \epsilon} \,,\,\, r_{p,s} \in [1 \,,\,\, r_M]
\,\,,
\label{eq:performance-ratio}
\end{equation}
where $m_{p}^{*} = \max{\{ m_{p,s} : s \in \mathcal{S} \}}$ is the \textit{baseline} metric for problem $p$, i.e. the performance metric of the best performing solver for the specific problem. $\epsilon$ is an infinitesimal constant added to the denominator to protect against divisions by zero, and $r_M$ is a parameter that we can set ad-hoc to represent the ratio corresponding to failure to converge, e.g. $r_M = \infty$. 

A \textit{performance profile} of each solver $s$ w.r.t a specific metric is defined as the cumulative distribution function 
\begin{equation}
\rho_{s}(\tau) := \frac{1}{n_p} \, \operatorname{size} \left\{ p \in \mathcal{P} : r_{p,s} \leq \tau \right\}
\,\,,
\label{eq:performance-profile}
\end{equation}
over values $\tau$ of the performance ratio. Essentially, each value $\rho_{s}(\tau)$ expresses the probability that solver $s \in \mathcal{S}$ is withing a factor $\tau$ of the best performing solver over all problems in the set $\mathcal{S}$. Two important limit cases are to compare the values $\rho_{s}(1)$ and $\rho_{s}(r_M)$ of each solver. The former represents the number of problems in which $s$ exhibited the best performance, and the latter indicates the number of problems that $s$ solved successfully (i.e. converged in).

However, there lies a crucial subtlety in computing performance ratios for samples that can contain values smaller than the machine precision $\epsilon(d)$\footnote{For 64-bit double-precision floating-point, $\epsilon(d) = 2^{-52} \approx 2.2 \cdot 10^{-16}$.}, such as when $m_{p,s}$ corresponds to one of the aforedescribed residuals. Using (\ref{eq:performance-ratio}) directly in such cases may result in divisions with zero or arbitrarily minuscule values that can skew or completely distort the generation of each performance profile using (\ref{eq:performance-profile}). To overcome this problem, we utilize the augmentation proposed by Dingle \& Higham~\cite{dingle2023reducing}, which replaces each sample as 
\begin{equation}
\bar{m}_{p,s} = 
\begin{cases}
m_{p,s} &,\,\, m_{p,s} > m_{min}\\
m_{p,s} \, \frac{m_{max} - m_{min}}{m_{min}} + m_{min} &,\,\, m_{p,s} \leq m_{min} 
\end{cases}
\label{eq:performance-ration-augmentation}
\end{equation}
where $m_{max} = 0.5 \cdot \epsilon(d)$ and $m_{min} = 10^{-2} \cdot \epsilon(d)$ are the upper and lower limit metric values. Using (\ref{eq:performance-ration-augmentation}) essentially re-normalizes values approaching $\epsilon(d)$ while preserving the strict ordering of samples and thus the monotonicity of each performance profile (\ref{eq:performance-profile}).

\section{Experiments}
\label{sec:experiments}

This section presents a series of experiments designed to evaluate the algorithms described in Sec.\ref{sec:solvers} using the benchmark suite defined in Sec.~\ref{sec:benchmarking}. The primitive problems will be used to analyze the behavior of the dual solvers and determine \textit{if}, \textit{when} and \textit{how} they can exhibit characteristics required for physically plausible simulations. The full-scale problems will be used to extract performance statistics and quantify the relative differences between solvers at a larger scale.

\subsection{Setup}
\label{sec:experiments:setup}
\noindent
The software facilitating our evaluation is implemented entirely in C++17 and employs the Eigen3~\cite{eigen2010} library as a front-end for all BLAS operations. One aspect that is crucial to any numerical method is whether the problem data is represented using \textit{sparse} and/or \textit{dense} matrices and vectors. In our current implementation we have only used dense representations as our focus lies mainly in the evaluation of the numerical properties of the relevant algorithms. We do, however, plan to extend our implementation to sparse algebra in future work that will include other runtime performance enhancements as well. All results presented herein were executed on two distinct hardware set-ups. The first is a high-end desktop PC with an Intel Core i9-14900K 32-core CPU and 128GB of RAM, running Ubuntu 20.04. The code in this setup uses Intel MKL as the BLAS back-end to Eigen3, and is compiled with \texttt{GCC 10} (\texttt{Ubuntu 10.5.0-1ubuntu1~20.04}). The second setup is a 2023 MacBook Pro with an Apple M3 Max CPU and 64GB RAM running macOS Sonoma 14.6, and uses with apple-specific \texttt{clang 15} (\texttt{arm64-apple-darwin-23.6.0}). Having both set-ups allows us to evaluate the potential disparity induced by hardware and compiler-specific quirks in the numerical output of all solvers.

\subsection{A Note on Collision Detection}
\label{sec:experiments:collision-detection}
\noindent
In all of the presented experiments, the contact configuration is \textit{never assumed}, and in all cases, is generated based on the configuration of the system at the begging of each time-step. Regarding CD with geometric primitives and meshes, we use the library provided by the ODE~\cite{smith2008ode}. In addition, we found it necessary to perform a post-processing of the set of generated contact points. Firstly, we established a robust grouping of geometries in order to control broad-phase collision checks between neighboring bodies, i.e. those connected by joints, and to be able to toggle internal collisions within the rigid-body system. Second, in the near-phase check, we cull redundant points that lie within a minimum distance of each other. This was particularly crucial to handling collisions involving mesh-based geometries, where potentially hundreds of points can be generated for each body without culling. Lastly, in order to deal with artifacts such as \textit{contact chattering}, we added a mechanism to set a configurable collision detection margin that facilitates a penetration tolerance between bodies. Unlike other CD libraries such as FCL~\cite{pan2012fcl} and Coal~\cite{coal2024hithub} (a.k.a. HPP-FCL), ODE does not inherently support collision margins.

\subsection{Primitive Problems}
\label{sec:experiments:primitives}
\noindent
The first phase of experiments presents a fine-grained analysis of the set of solvers defined in Sec.\ref{sec:solvers} on the subset of primitive problems. The objective of this phase is to examine how each solver behaves in the presence of certain types of ill-conditioning, and observe the physical artifacts that can manifest. All primary experiments involve independent simulations executed with each solver, for the purposes of observing the distinct motions they generate. These will be referred to as \textit{independent runs}. However, in cases where performance profiles will be used to quantify relative performance between solvers, we necessarily need to compare them on the same data, i.e. identical dual problems. In such cases we use ADMM-NCP as a reference solver to generate a sequence of problems that are then re-solved by all others. We refer to these as \textit{curated runs}. All solvers are by default configured for \textit{high-precision}; $N_{max} = 10^4$ and $\epsilon_{abs} = 10^{-12}$ used for all residuals. In certain cases, which will be stated explicitly, we will use \textit{high-throughput} settings with $N_{max} = 10^3$ and $\epsilon_{abs} = 10^{-6}$. Unless otherwise specified, all contact interactions will involve coefficients of friction $\mu_{c} = 0.7$ and restitution $\epsilon_{c} = 0$, and default values for constraint softening are
$cs_{j} = \{d_0 = 0.9,\,d_w=0.95,\,w=0.001,\,m=0.5,\,p=2.0,\,T=0.02,\,\beta=1.0\}_{j}$\\ 

\begin{figure}[!b]
\centering
\includegraphics[width=1.0\linewidth]{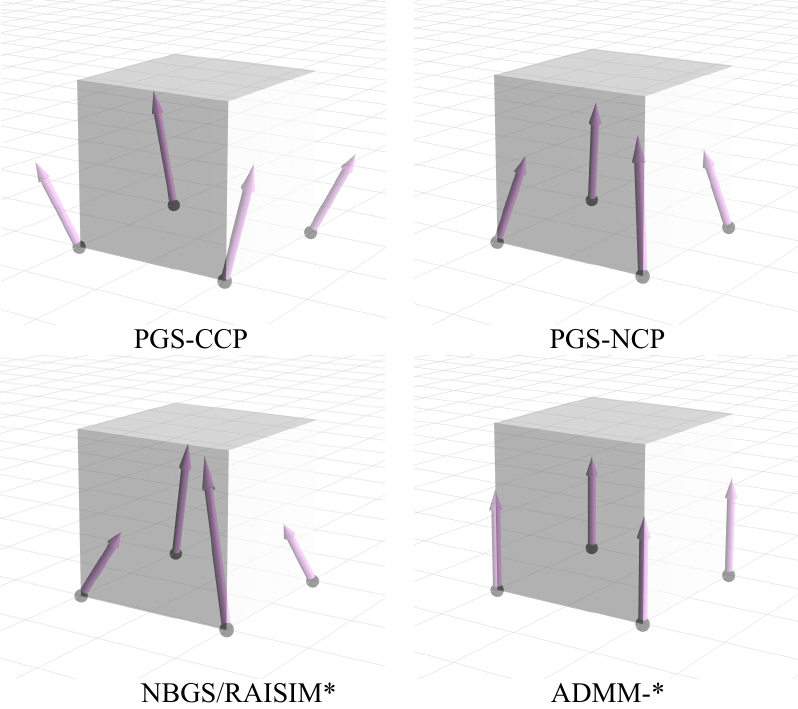}
\caption{The different force distributions exhibited by each solver on the hard-contact problem. Only the ADMM solvers yield a minimum-norm solution that evenly distributes contact forces without exhibiting internal forces, while all splitting-based solvers result in uneven and internal forces. This is expected however, since the per-contact iteration scheme is known to suffer from a lack of symmetry~\cite{siggraph2022contact}. This artifact is due to the order in which the constraints are accessed, which, inevitably induces bias in the local solutions, especially when using a cold-start. Conversely, ADMM can account for all couplings simultaneously and eliminates internal forces.}
\label{fig:box-on-plane-force-distributions}
\end{figure}
\subsubsection*{\textbf{Box-on-Plane}}
\label{sec:experiments:primitives:Box-on-Plane}
This first experiment involves a box placed on an a flat plane. A similar setup has been demonstrated in other works, such as those of Drumwright et al~\cite{drumwright2011evaluation} and Lidec et al~\cite{lidec2024models}. The box is a single cuboid body with mass $m = \unit[1]{kg}$, sides of dimension $d = \unit[20]{cm}$. The objective here is to observe: (a) the distribution of reaction forces among the $n_c \leq 4$ contacts, (b) stick-slip transitions, and (c) the overall motion of the body. The box is initialized resting undisturbed on the plane for the first two seconds. An external force is applied thereafter about the box's CoM along the global X-axis, with a linearly increasing magnitude of $\unit[0]{Nm}$ to $f_{max} = \unit[2\,\mu_c \, m \, g]{Nm}$ over a duration of $\Delta{t}_{ext} = \unit[6]{s}$. The force is removed at $t = \unit[8]{s}$ and the box is allowed to decelerate due to friction until it comes to rest, where it is once again is left undisturbed. The entire procedure lasts a total of $t_f = \unit[10]{s}$. The experiment is executed over two distinct run. In the first run, the dual problem is configured for hard contacts, and in the second, it is configured with constraint softening enabled.
\begin{figure*}[!ht]
\centering
\includegraphics[width=0.95\linewidth]{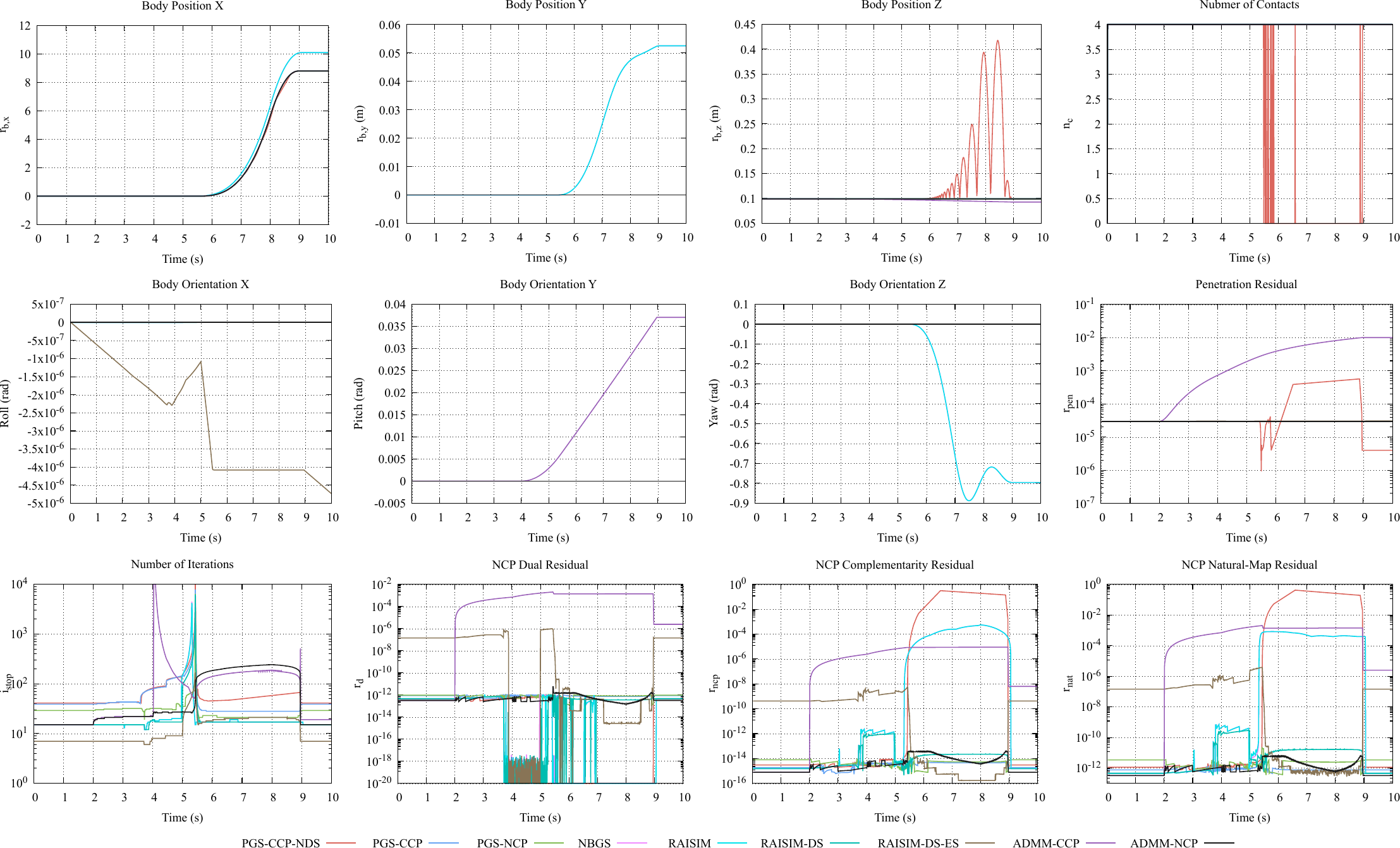}
\caption{\textit{Box-on-Plane}: Time-series plots of the first run on the hard-contact problem. The PGS-CCP, PGS-NCP, RAISIM-DS and ADMM-NCP solvers yield similar motions, while the others exhibit significant translational and rotational deviations along the Y and Z-axes. When sliding, the PGS-CCP-NDS results in positive normal velocities thus leading to jumping, as can be seen in the right middle plot (red).}
\label{fig:box-on-plane-hard}
\end{figure*}
\begin{figure*}[!ht]
\centering
\includegraphics[width=0.95\linewidth]{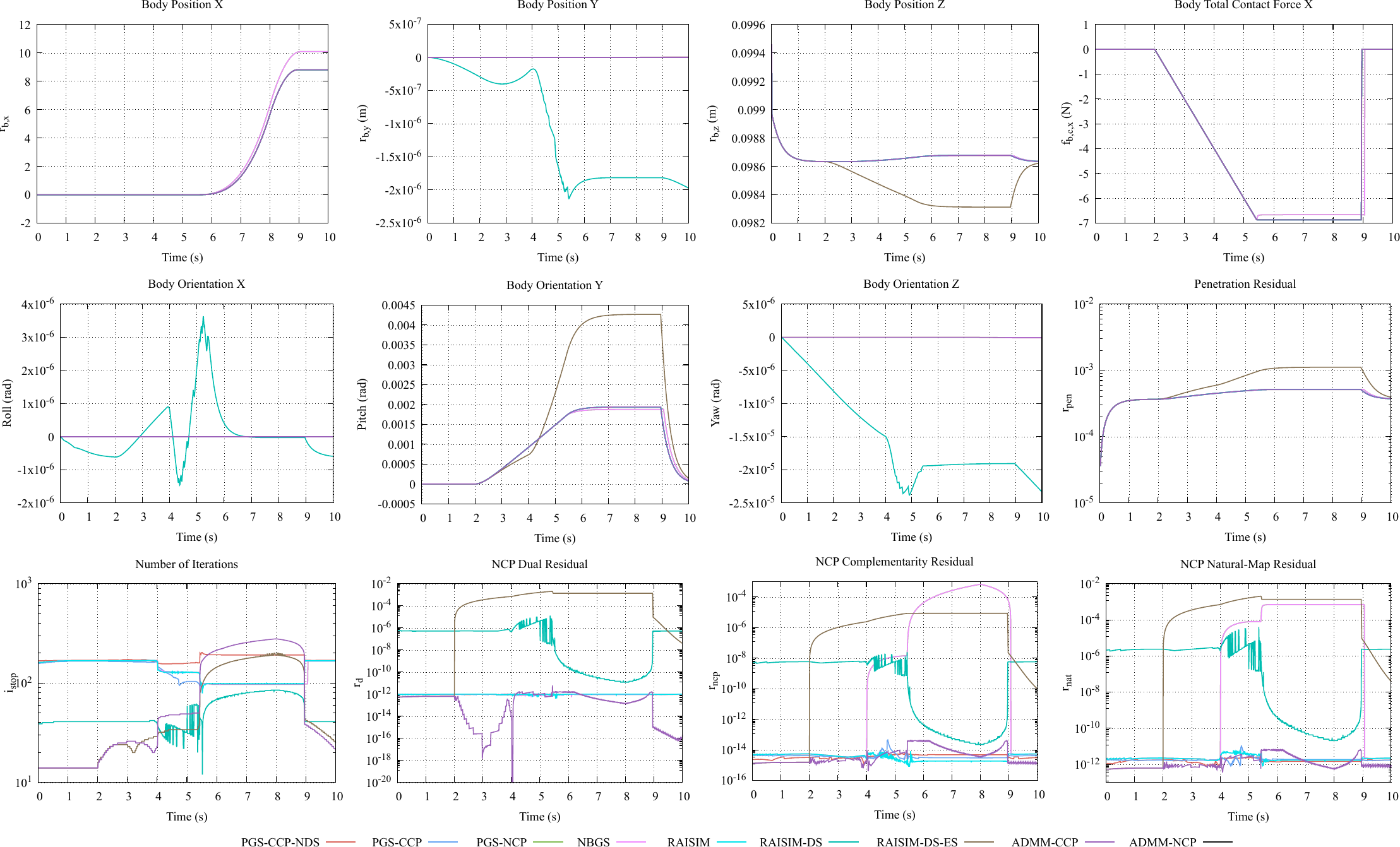}
\caption{\textit{Box-on-Plane}: Time-series plots of the second run with MuJoCo-like constraint softening enabled. All solvers render minimum-norm contact forces and the motions exhibit significantly less deviations due to the improved symmetry of the former. NBGS and RAISIM still overshoot the trajectory of the other solvers due to the forces not being maximally dissipative.}
\label{fig:box-on-plane-soft}
\end{figure*}
\begin{figure}[!t]
\centering
\includegraphics[width=1.0\linewidth]{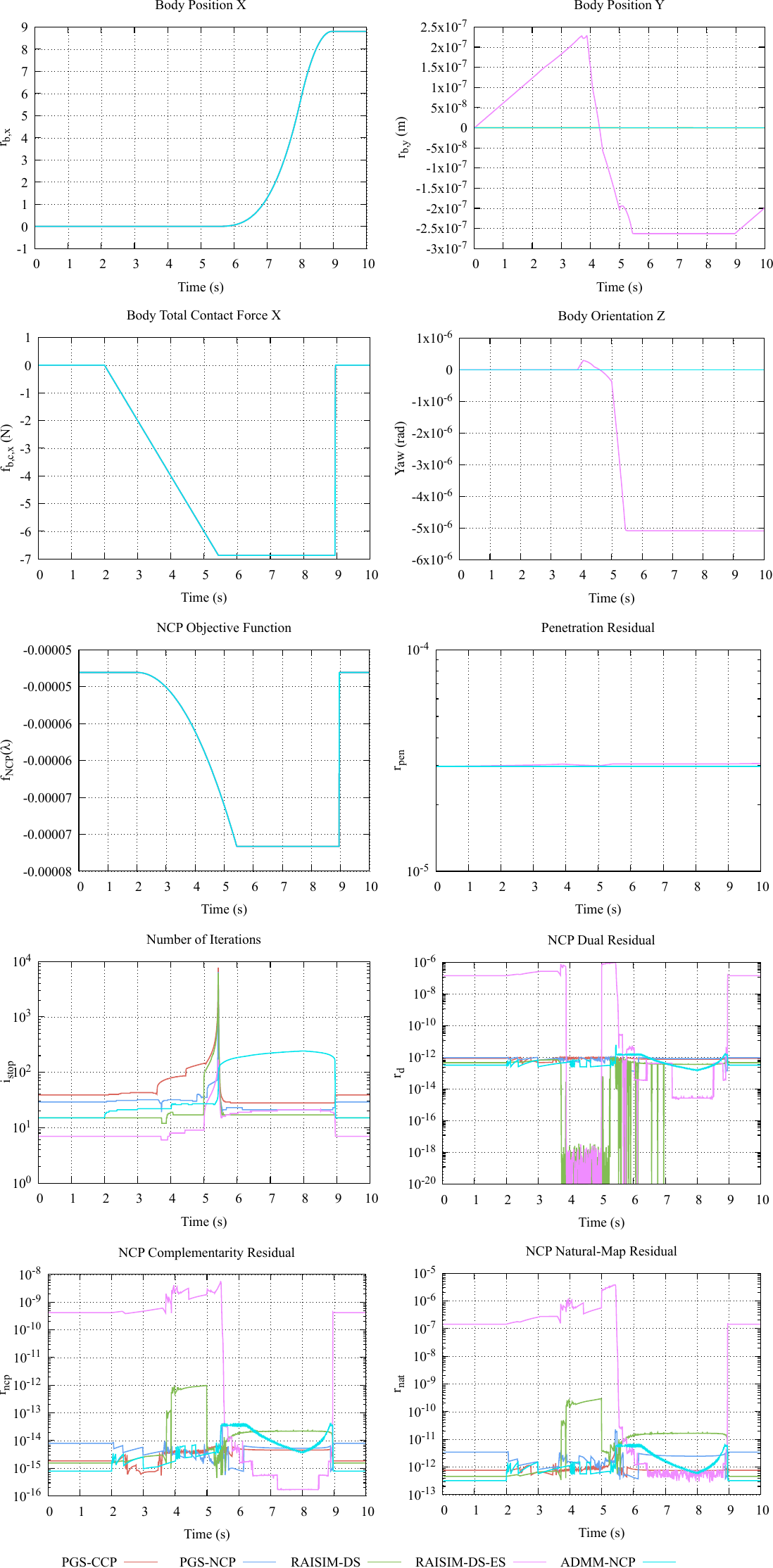}
\caption{\textit{Box-on-Plane}: A comparison focusing on the NCP-type solvers. While the overall motions, forces and optimization objectives generated by all solvers can seem very similar, a detailed view of the NCP dual, NCP complementarity and NCP natural-map residuals can exemplify the subtle differences in their solutions.}
\label{fig:box-on-plane-hard-ncp}
\end{figure}

The first run highlights the differences in the generated forces and motions. Fig.~\ref{fig:box-on-plane-force-distributions} depicts the contact force distributions rendered by each solver while the box is unforced, and Fig.~\ref{fig:box-on-plane-hard} presents time-series data over the interval $[0, t_f]$. For this run only, we also include PGS-CCP-NDS, a version of PGS-CCP without the De Saxc\'e correction. The results indicate that on this particular problem, the motion of the box is nearly identical for all NCP-type solvers, i.e. PGS-CCP, PGS-NCP, RAISIM-DS, and ADMM-NCP, while all others exhibit significant discrepancies. To complement Fig.~\ref{fig:box-on-plane-hard}, we also provide an isolated comparison of the NCP-type solvers in Fig.~\ref{fig:box-on-plane-hard-ncp}. ADMM-CCP is similar to ADMM-NCP, but the absence of the De Saxc\'e correction leads significant underestimation of the normal contact forces when the external force is applied, leading to a gradual sinking of the box as it slides. Although the force distributions of PGS-NCP and RAISIM-DS are uneven while the contacts are sticking, when sliding occurs they do not incur any translation along the Y-axis or rotation about the Z-axis. Contrary to this, the NBGS and RAISIM solvers do not render maximally dissipative frictional forces, as the respective motion trajectories overshoot those of the NCP-type solvers. Moreover, they result in translational and rotational deviations along the Y-axis and Z-axis, respectively. 

The PGS-CCP-NDS solver results in contact velocities with positive vertical components that lead to a jumping-like behavior. This artifact is a very clear demonstration of the effects of the CCP model, where duality is being enforced between contact reactions and velocities directly. Interestingly though, despite the jumping, the forces seem to be maximally dissipative, as the overall motion along the X and Y axes remains close to that yielded by NCP-type solvers. Comparatively, the inclusion of the De Saxc\'e correction in PGS-CCP 
proves indeed capable of eliminating the jumping artifacts, and results in the same overall behavior as the other NCP-type solvers. Also noteworthy is the fact that the forces rendered by PGS-CCP at rest point outwards as opposed to all other splitting-based solvers which point inwards\footnote{This result is quite counter-intuitive; at rest, the only term influencing the contact reactions is the gravitational force acting upon the box's CoM, which, via the lever-arms defined between the CoM and the contact positions defines the global free-velocity $\mathbf{v}_f$. From the perspective of each local problem, the gravity-induced wrench would cause outward sliding in the absence of all other contact forces. This means that when using a cold-start, in the first iteration we would expect the forces to point inwards opposing the sliding velocity, which would then be corrected by the increasing influence of the other contacts over multiple solver iterations. This is exactly the behavior observed for PGS-NCP, NBGS and RAISIM solvers, but not for PGS-CCP.}.

The second run demonstrates the effects of the regularization induced by constraint softening, and is summarized in Fig.~\ref{fig:box-on-plane-soft}. While the box is unforced, the softening leads to virtually the same minimum-norm force solution as that of the ADMM solvers. Similarly to the first run, the NBGS and RAISIM solvers fail to yield maximally dissipative frictional forces and thus overshoot the NCP-type motion trajectories. PGS-CCP-NDS once again leads to jumping but now the motion becomes even more erratic, causing rolling and even larger deviations along the Y-axis. For this reason we have excluded it from the comparisons presented Fig.~\ref{fig:box-on-plane-soft}. Interestingly, the NCP-type exhibit nearly identical trajectories as when configured for hard-contacts. This outcome indicates that constraint softening is indeed capable of producing a very near approximation of the NCP solution, given an appropriate selection of the parameters described in Sec.~\ref{sec:construction:softening}.

In addition, we executed an auxiliary curated run to directly highlight the relative performance between solvers on this problem. Performance profiles resulting from this are presented in Fig.~\ref{fig:box-on-plane-perfprof}, and exemplify several subtleties in comparing solvers. First, they demonstrate that all performance metrics from Sec.~\ref{sec:benchmarking:metrics} provide distinct and equally important views of constraint satisfaction. Case in point, the NCP dual and natural-map residuals are equally as important metrics as NCP complementarity. Second, the performance profiles can directly quantify how similar the NCP-type solvers actually are. Fig.~\ref{fig:box-on-plane-perfprof} shows that PGS-CCP, PGS-NCP, RAISIM-DS and ADMM-NCP differ by less than an order of magnitude to each other on all accuracy residuals, when compared head-to-head on the same problems. These differences are not so apparent in the time-series data of the independent runs.\\
\begin{figure}[!t]
\centering
\includegraphics[width=1.0\linewidth]{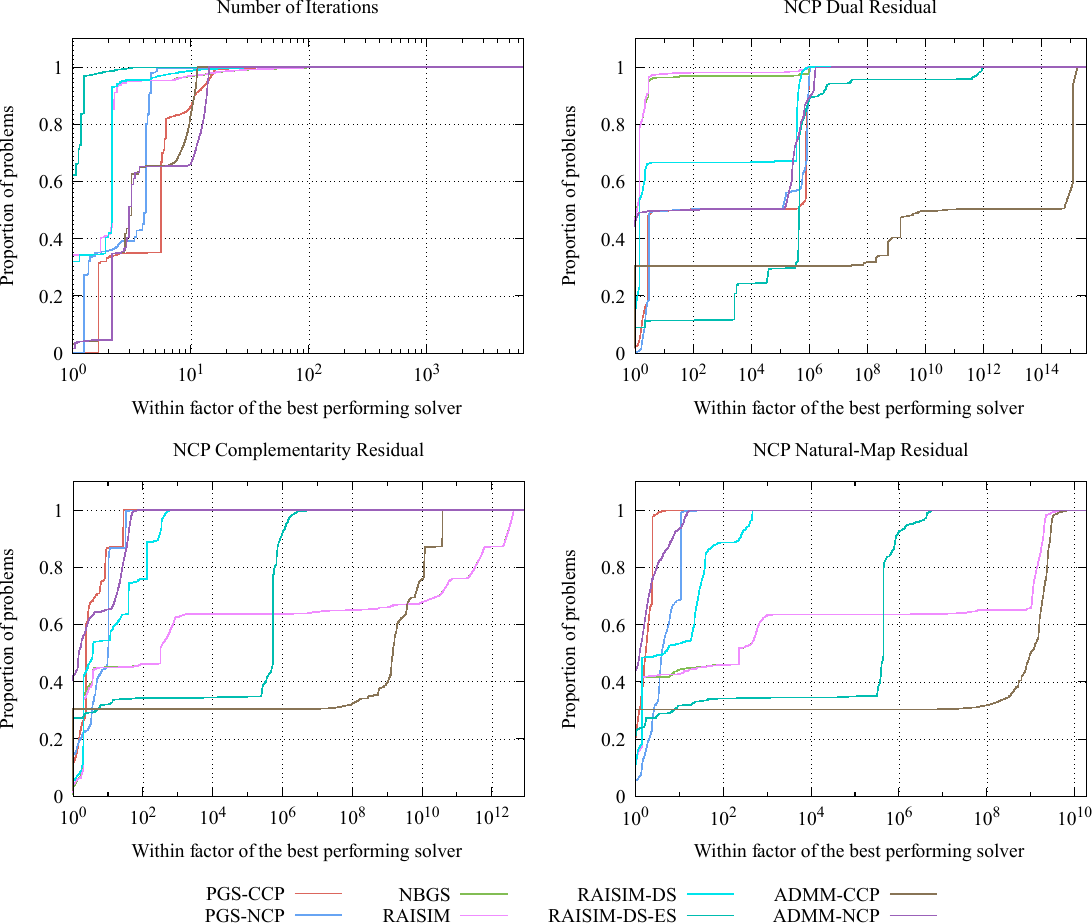}
\caption{\textit{Boxes-on-Plane}: Performance profiles showing the relative performance between solvers. While ADMM-NCP demonstrates the best behavior and performance overall, it is still noteworthy that PGS-CCP, PGS-NCP and RAISIM-DS come very close in terms of the accuracy metrics $r_{ncp}$ and $r_{nat}$ (bottom row). Interestingly, NBGS and RAISIM exhibit better performance overall in terms of $r_{dual}$ (top right), which is consistent with the fact that they demonstrated the least penetration error. However, their significantly reduced performance in terms of $r_{ncp}$ and $r_{nat}$ demonstrates that all accuracy metrics must be taken in to account simultaneously in order to summarize the overall accuracy of a solver.}
\label{fig:box-on-plane-perfprof}
\end{figure}

\subsubsection*{\textbf{Boxes-Fixed}}
\label{sec:experiments:primitives:boxes-fixed}
The second experiment involves a constrained system with a large mass-ratio being dropped and then dragged on a flat plane. Specifically, the system consists of two boxes rigidly attached via a fixed joint, where the bottom box has mass $m_b = \unit[0.1]{kg}$ and the top $m_t = \unit[10^3]{kg}$ resulting in a mass-ratio of $r_{m} = 10^4$. Both boxes have sides of dimension $d = \unit[20]{cm}$. A similar setup has been described by Erleben in~\cite{erleben2004stable} and Lidec in~\cite{lidec2024models}, but they stacked the two unequally weighted boxes atop one another as opposed to using a fixed joint. We chose the jointed configuration as our focus mainly lies in the interaction between joints and contacts. Note that collision detection was necessarily disabled between the two bodies since they were jointed. The objective here is to observe the combined effects of inertial-disparity and rank-deficiency on solver convergence. We executed three runs with all solvers set to high-precision: (a) start at rest and apply a linearly increasing force, (b) drop the system from an initial height of $z_{b} = \unit[0.5]{m}$ of the bottom box, and (c) combine both scenarios (a) and (b) and enable constraint stabilization with $\alpha_{j},\gamma_{k} = 0.1$. Note that ADMM-CCP was excluded from this experiment as it often diverged during our initial testing. 
\begin{figure}[!t]
\centering
\includegraphics[width=1.0\linewidth]{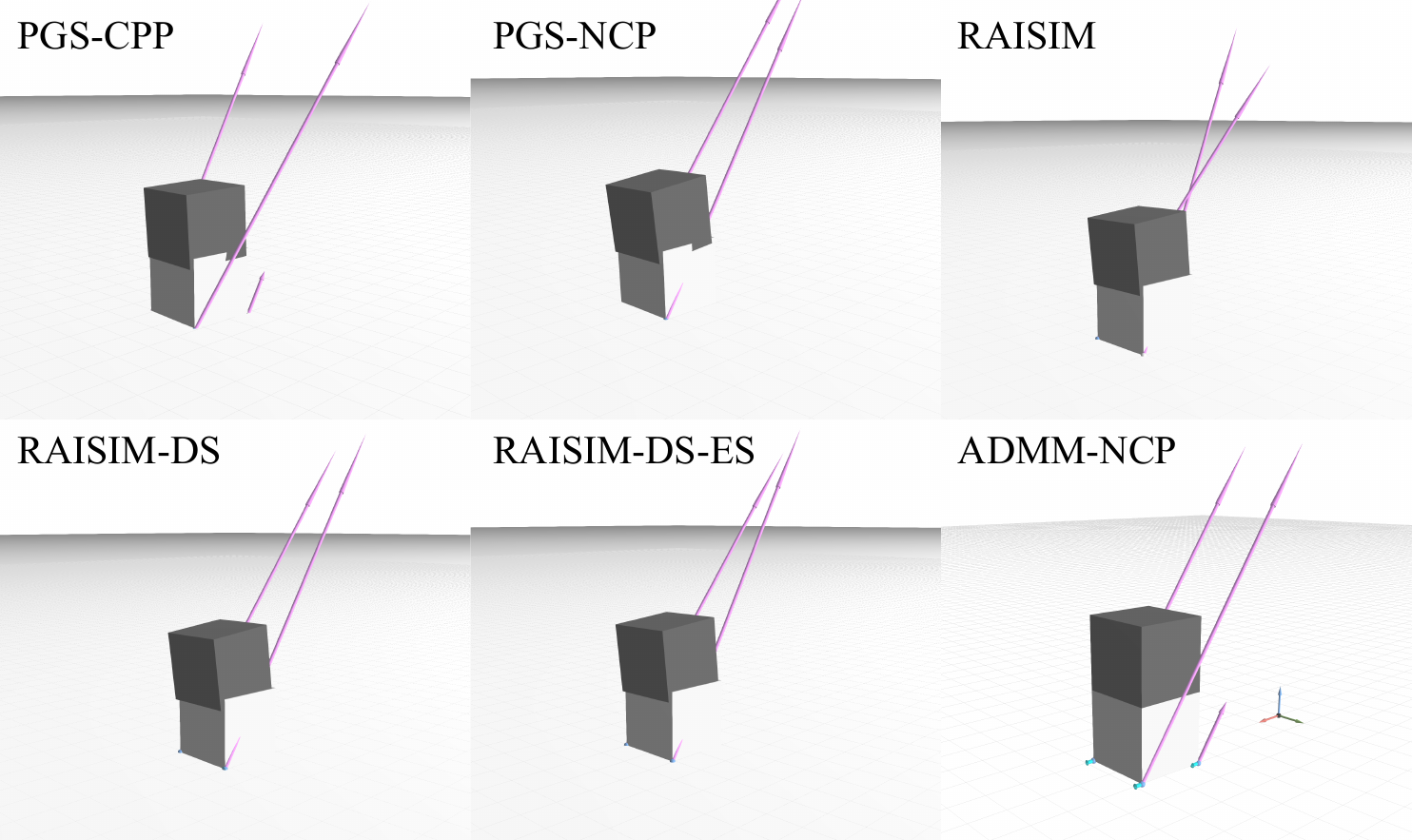}
\caption{\textit{Boxes-Fixed}: Examples of the constraint violation accumulated by each solver as well as the forces rendered. The translational deviation of the box causes a shifting of the overall CoM of the system that leads to the rear contacts opening while sliding.}
\label{fig:boxes-fixed-visualized}
\end{figure}

In the first run, the setup is similar to that described in Sec.~\ref{sec:experiments:primitives:Box-on-Plane}, in that the external force applied to the bottom box is ramped-up from $\unit[0]{Nm}$ to $f_{max} = \unit[1.01\,\mu_c \, (m_b + m_t) \, g]{Nm}$\footnote{The $1.01$ factor in the computation of the maximum external force is used to induce slipping by only marginally exceeding the stiction force.} over a duration of approximately $\Delta{t}_{ramp} = \unit[2]{s}$, and held at the maximum value for another $\Delta{t}_{max} = \unit[6]{s}$. Once the external force is removed, the system is left to decelerate and rest over a final period of $\Delta{t}_{free} = \unit[2]{s}$, with and thus total duration of the experiment is $t_f = \unit[12]{s}$. The time-series plots of this run is shown in Fig.~\ref{fig:boxes-fixed-dragged}. Importantly, we observed that all solvers failed to converge within $N_{max}$ iterations. However, despite the lack of convergence, ADMM-NCP was still able to exhibit the expected behavior. All others where able to initially render correct constraint reactions, but with the application of the external force, they underestimated the joint reactions leading to the top box drifting away from the nominal pose relative to the bottom box. A screen capture showing this effect is depicted in Fig.~\ref{fig:boxes-fixed-visualized}. Thus, one important observation from this run is that, for practical applications, solvers can still output usable constraint reactions despite a lack of convergence. This is a somewhat a trait of robustness, since non-convergence does not mean that forces are infeasible. However, this also demonstrates how choosing an appropriate value for $N_{max}$ forces us to make a compromise between accuracy and speed.
\begin{figure*}[!t]
\centering
\includegraphics[width=1.0\linewidth]{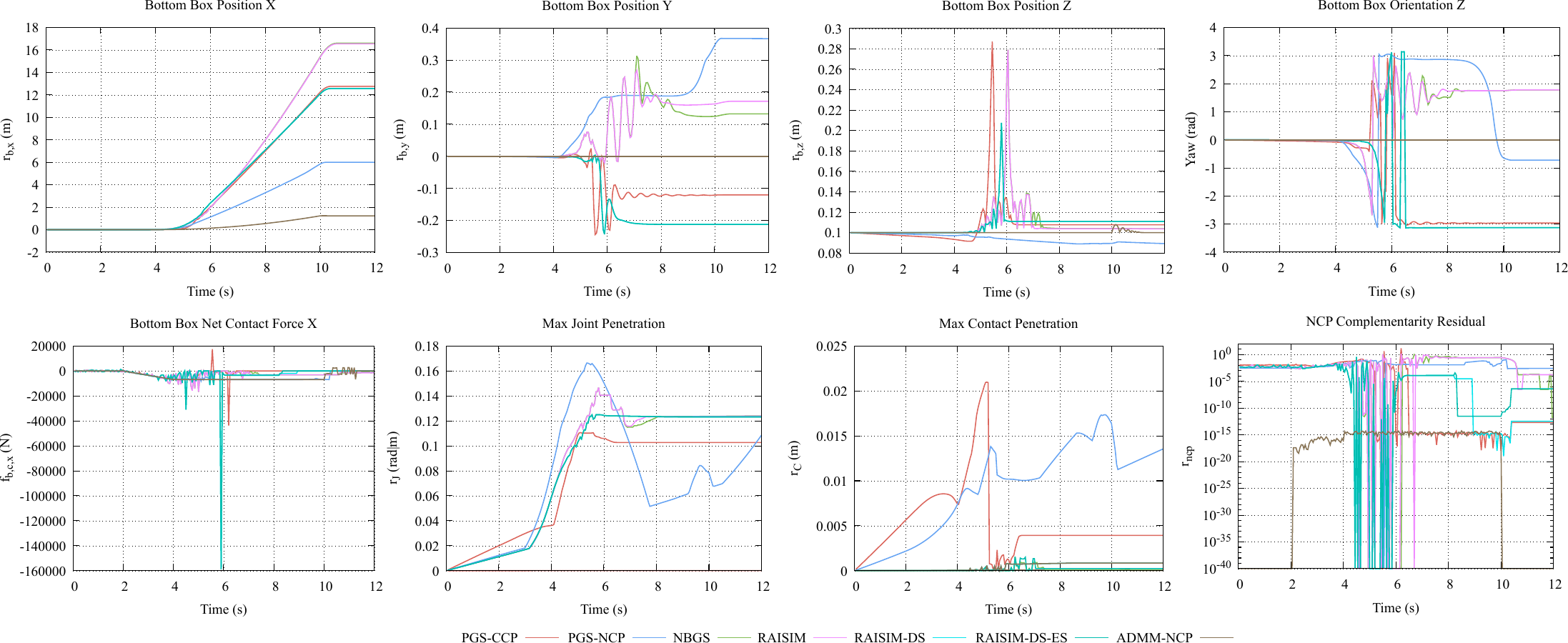}
\caption{\textit{Boxes-Fixed}: Time-series plots of the motions, forces and performance metrics for the dragging run. The top box is of mass $m_t=\unit[10^{3}]{kg}$ while the bottom box $m_b=\unit[0.1]{kg}$ with a system mass-ratio of $r_m=10^{4}$. Despite not converging, ADMM-NCP was still able to realize the expected behavior with minimal constraint violation. All other solvers (PGS-*, NBGS and RAISIM-*) grossly underestimated the constraint reaction forces leading to a collapsing of the joint and body interpenetration.}
\label{fig:boxes-fixed-dragged}
\end{figure*}
\begin{figure*}[!t]
\centering
\includegraphics[width=1.0\linewidth]{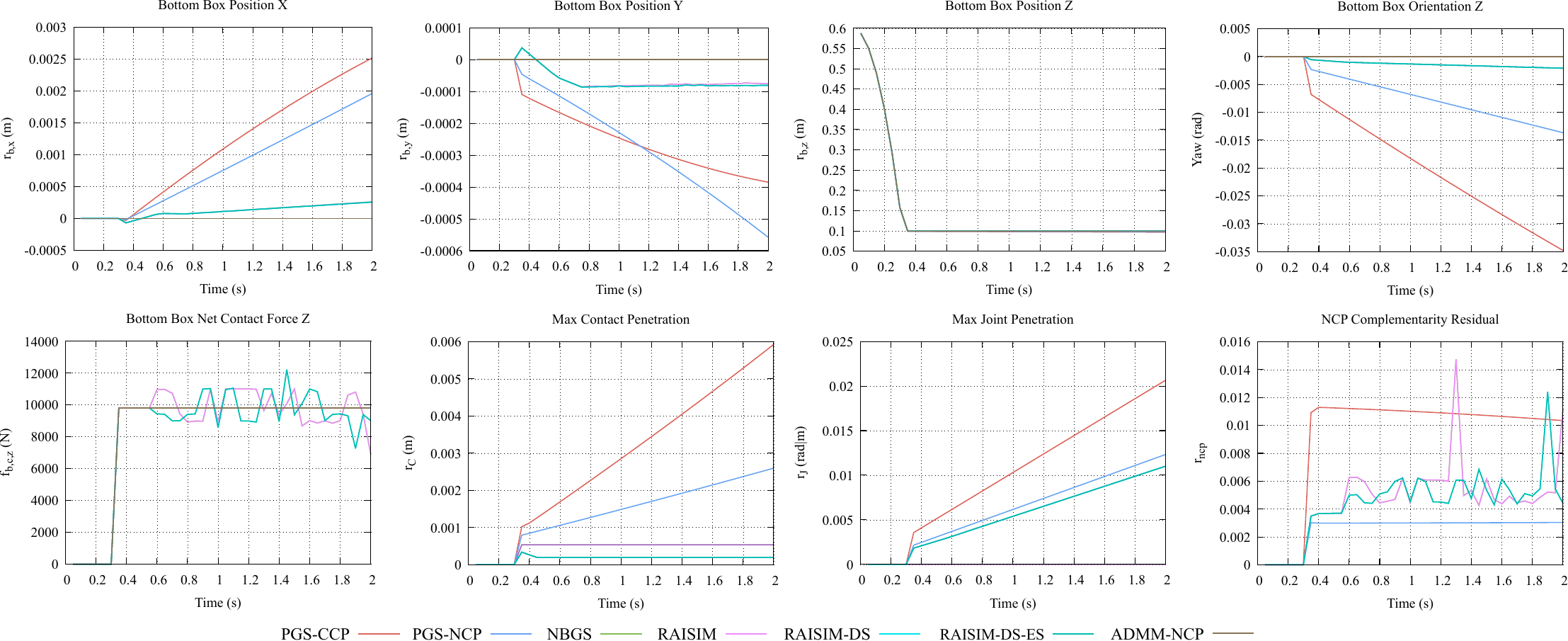}
\caption{\textit{Boxes-Fixed}: Time-series plots of the dropping run showing the effects of the impact on the constraint violation (center-bottom) and overall motion (top row). These effects are also apparent in the NCP complementarity residual (bottom-right) were the solvers with large constraint violation also exhibit large values thereof. While NBGS and RAISIM-* variants exhibit relatively good performance w.r.t velocity constraint satisfaction, the contact forces along the normal (bottom-left) exhibit very large fluctuations w.r.t the correct solution attained by PGS-* and ADMM-NCP.}
\label{fig:boxes-fixed-dropped}
\end{figure*}
\indent In the second run, the system is dropped from an initial height to evaluate the capacity of the solvers to correctly propagate impact forces through the joints. The corresponding time-series plots are shown in Fig.~\ref{fig:boxes-fixed-dropped}. Despite the large mass-ratio, constraint violation was initially minimal after the impact for all solvers. The splitting-based solvers, however, accumulated significant joint constraint violation over time. Of particular note is that NBGS and the variants of RAISIM, all were able to keep contact penetration values even smaller than those of ADMM-NCP. This demonstrates the efficacy of using a holistic local contact model and solver such as (\ref{eq:solvers:projectors::single-contact-mdp-problem}), as opposed to those of PGS-CCP and PGS-NCP.

The third run that introduces constraint stabilization is depicted Fig.~\ref{fig:boxes-fixed-full-stbl}. The stabilization brought a significant reduction in constraint violation for all solvers. As observed previously, the application of the external force causes NBGS and vanilla RAISIM to insufficiently propagate the forces to the joint when sliding begins, causing the motion to differ w.r.t. the other solvers. However, this is somewhat of a false negative, as the deviation is due to stack of boxes tipping over during the dragging phase. See Fig.~\ref{fig:boxes-fixed-full-stbl} for a more detailed description of this discrepancy. Overall, we observed that constraint stabilization is capable of reducing constraint violation while retaining the properties of the NCP\footnote{This in part, can be considered experimental verification of the invariance property of the De Saxc\'e operator, as mentioned in Sec.\ref{sec:construction:stabilization} and proven analytically in Appendix.~\ref{sec:apndx:desaxce-properties:desaxce-invariance}}.
\begin{figure*}[!t]
\centering
\includegraphics[width=1.0\linewidth]{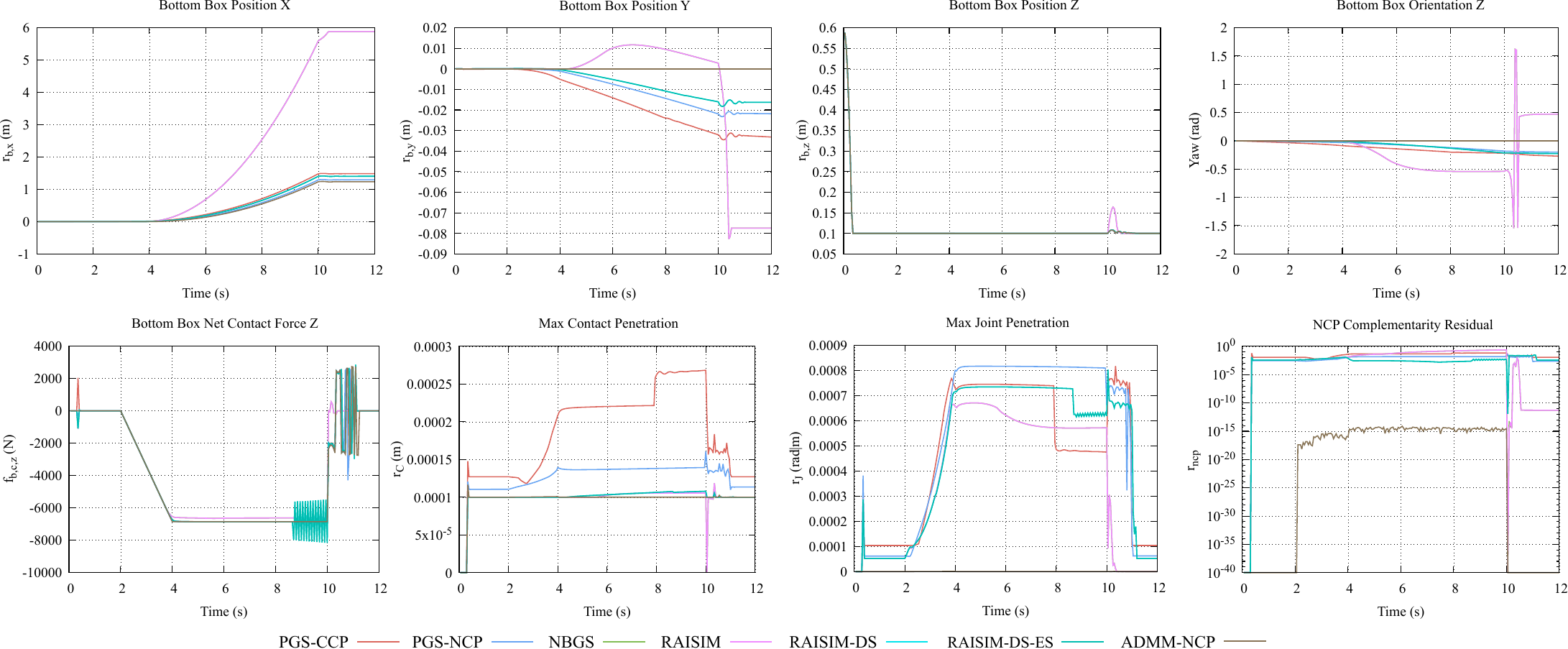}
\caption{\textit{Boxes-Fixed}: Time-series plots of the combined drop and dragging run that also enables constraint stabilization. The augmentation to the problem leads to significant improvement w.r.t constraint violation (center-bottom) for all solvers. In the case of the NBGS (green) and vanilla solvers RAISIM (magenta) the large deviation from the correct solution (brown, ADMM-NCP) is somewhat circumstantial. The initial impact causes a dislodging of the joint, which leads to a shifting of the total center of mass. This leads to a tip-over occurred, which induced a short flight-phase wherein the external force dominated the reduced number of contacts, therefore dragging the system excessively.}
\label{fig:boxes-fixed-full-stbl}
\end{figure*}
\begin{figure}[!t]
\centering
\includegraphics[width=1.0\linewidth]{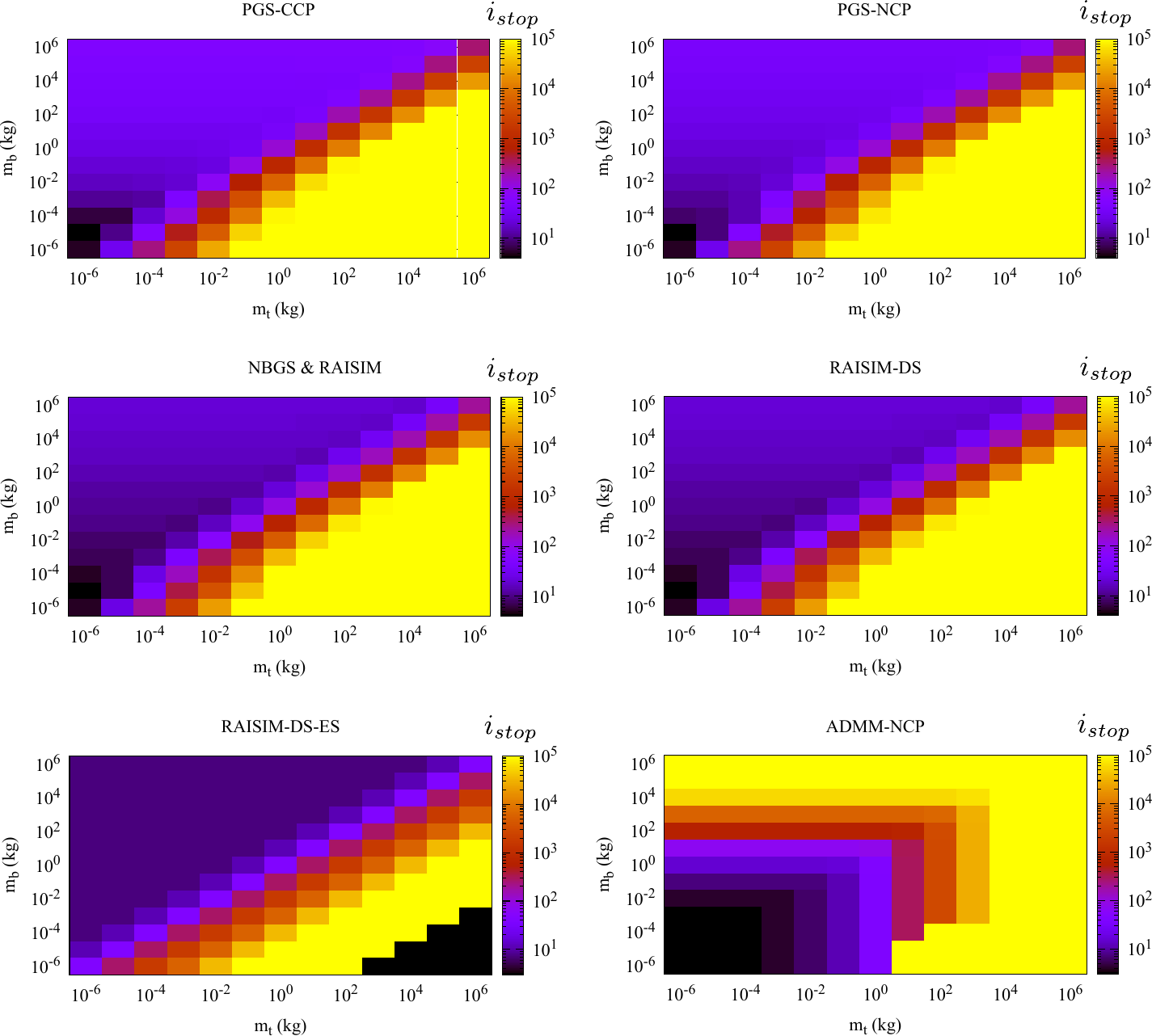}
\caption{\textit{Boxes-Fixed}: A heatmap-like rendering of solver iterations $i_{stop}$ as functions of the masses $m_b$ and $m_t$ of the bottom and top bodies, respectively. All splitting methods perform better when the masses are equal (left-to-right diagonal elements) as well as when the top body is lighter (upper-left diagonals). Note also the varying scales and respective color gradients of each solver. We were unable to use a common scale as the gradients became incomprehensible. Most revealing is the experimental verification of the fundamental  differences between ADMM and splitting w.r.t inertial disparity (i.e. mass-rations). The latter all suffer when the relative mass ratio is larger than $\gtrapprox 1000$ and the bottom body is heavier. This shows how splitting-based solvers struggle to propagate contact reactions through the joints to the top body. Conversely, ADMM-NCP is more robust to large mass ratios, and in most cases manages to converge within $N_{max}=10^{4}$ iterations. However it seems to be more sensitive to the absolute scale of the heaviest body, which, directly determines the spectral radius $L = \rho(\mathbf{D})$ of the Delassus matrix.}
\label{fig:boxes-fixed-heatmap-rest-iters}
\end{figure}
\begin{figure}[!t]
\centering
\includegraphics[width=1.0\linewidth]{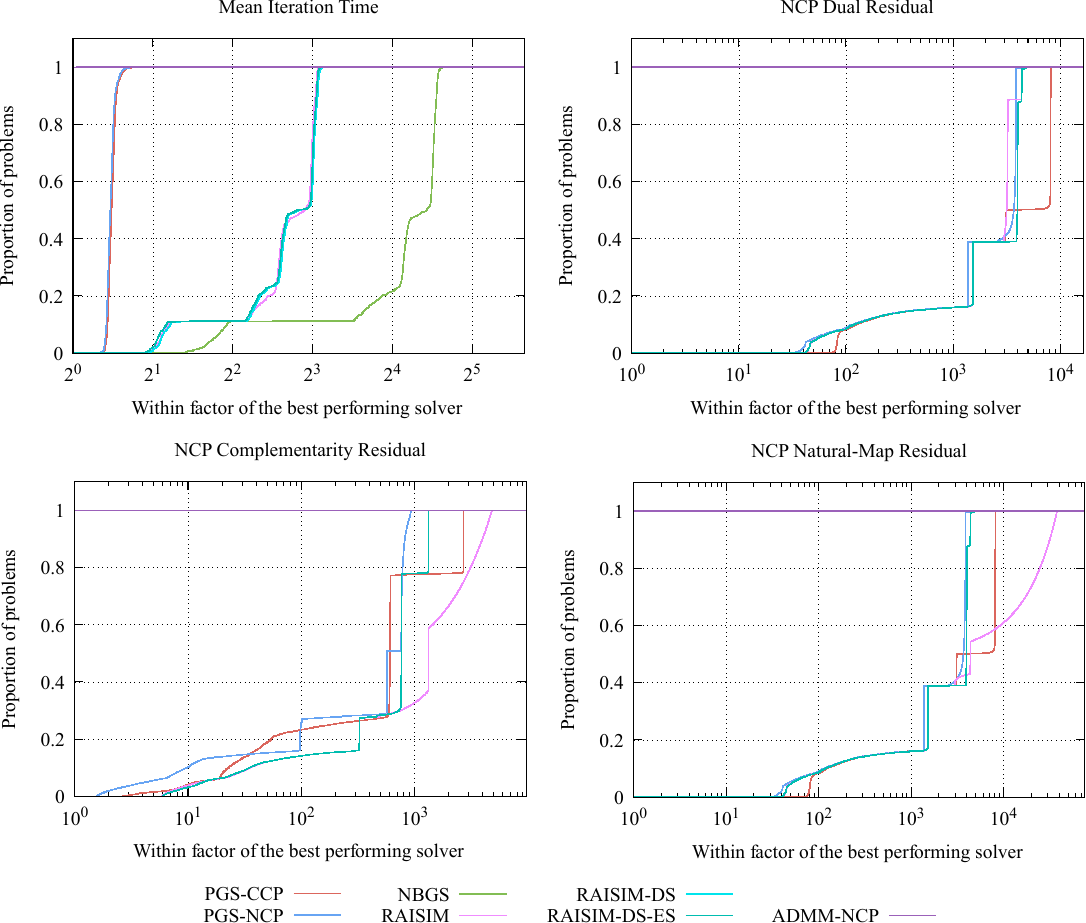}
\caption{\textit{Boxes-Fixed}: performance profiles showing the relative performance between solvers. In this scenario ADMM-NCP is the outright winner, as indicated by being the best solver on 100\% of the samples across all metrics. It was the only solver to ever be able to converge on some samples, and dominated all other solvers by more than two orders of magnitude on all accuracy metrics on the majority of samples.}
\label{fig:box-fixed-perfprof}
\end{figure}
\begin{figure*}[!t]
\centering
\includegraphics[width=1.0\linewidth]{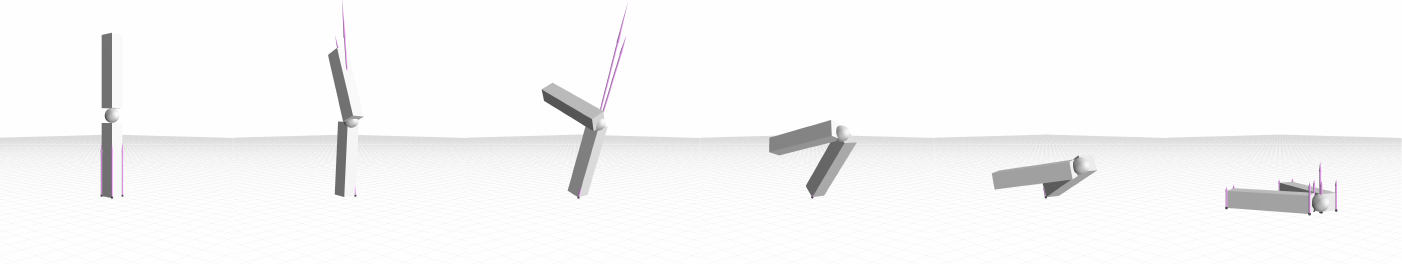}
\caption{\textit{Nunchaku}: A rendering that showcases the motions of the system in the corresponding set of experiments. The system is initialized to drop from a fixed initial height. After impacting the plane, all four corners of the bottom box are in contact, and the system is able to remain vertical (left-most frame) for several seconds until numerical drift from the integrator eventually leads to the top box collapsing into the intermediate sphere and onto the bottom box (left middle frame). As collisions are disabled between the boxes and the spheres, only the boxes between them can render internal contacts. The collapse indeed leads to this case, and the top and bottom boxes remain in internal contact until the entire system contacts the plane (right-most frame). This particular rendering was generated using the ADMM-NCP solver, which can generate minimum-norm contact reactions without internal couplings (left/right-most frames).}
\label{fig:nunchaku-motion}
\end{figure*}

In addition, motivated by the results in the three primary runs, we also evaluated the effects of varying the mass-ratio. To more thoroughly analyze how mass-ratios impact convergence, we performed a grid-search over combinations of masses in the range $m_{bt},m_{bt} \in [10^{-6}, 10^{6}]$, generating a set of single-step problems: one at rest and another while the bottom box was pushed along the X-axis with $f_{max}$. Solving both sets with $N_{max}=10^5$ to allow for convergence to occur reveals very interesting properties that distinguishes the splitting-based solvers from ADMM-NCP. The full extent of this analysis can be found in Fig.~\ref{fig:boxes-fixed-heatmap-rest} and Fig.~\ref{fig:boxes-fixed-heatmap-push} in Appendix.~\ref{sec:apndx:additional-figures}, but an exemplary extract is provided in Fig.~\ref{fig:boxes-fixed-heatmap-rest-iters}. These figures present heat-map-like renderings of the solver iterations and accuracy metrics as functions of the two body masses. Particularly revealing is how they show that ADMM-NCP is more sensitive to the absolute numerical scale of the actual masses, as opposed to their ratio. More specifically, these renderings, and especially those depicting $i_{stop}$ in Fig.~\ref{fig:boxes-fixed-heatmap-rest-iters}, indicate that the convergence of ADMM-NCP actually depends more on the spectral radius $\rho(\mathbf{D})$ than on the condition number $\kappa_{\mathbf{D}}$. This behavior seems to be inline with the analysis presented by Nishihara et al in~\cite{nishihara2015} regarding the convergence properties of ADMM. This also motivates the use of \textit{preconditioning strategies}, as was proposed by Tasora~\cite{tasora2021admm}, which we intend to investigate further in the future.

Lastly, we executed a curated run using the initial mass distributions in order to render the performance profiles depicted in Fib.~\ref{fig:box-fixed-perfprof}. On this particular problem ADMM-NCP clearly dominated, as it was the only one able to converge at all on certain samples, and exhibited the best accuracy metrics on all samples. Most importantly, w.r.t the accuracy metrics, all splitting-based trailed by a factor of at least one, and up to three, orders of magnitude, essentially corresponding to significant digits of accuracy. This degradation of accuracy is therefore consistent with the observed underestimation of the computed constraint reactions.\\
\begin{figure*}[!t]
\centering
\includegraphics[width=1.0\linewidth]{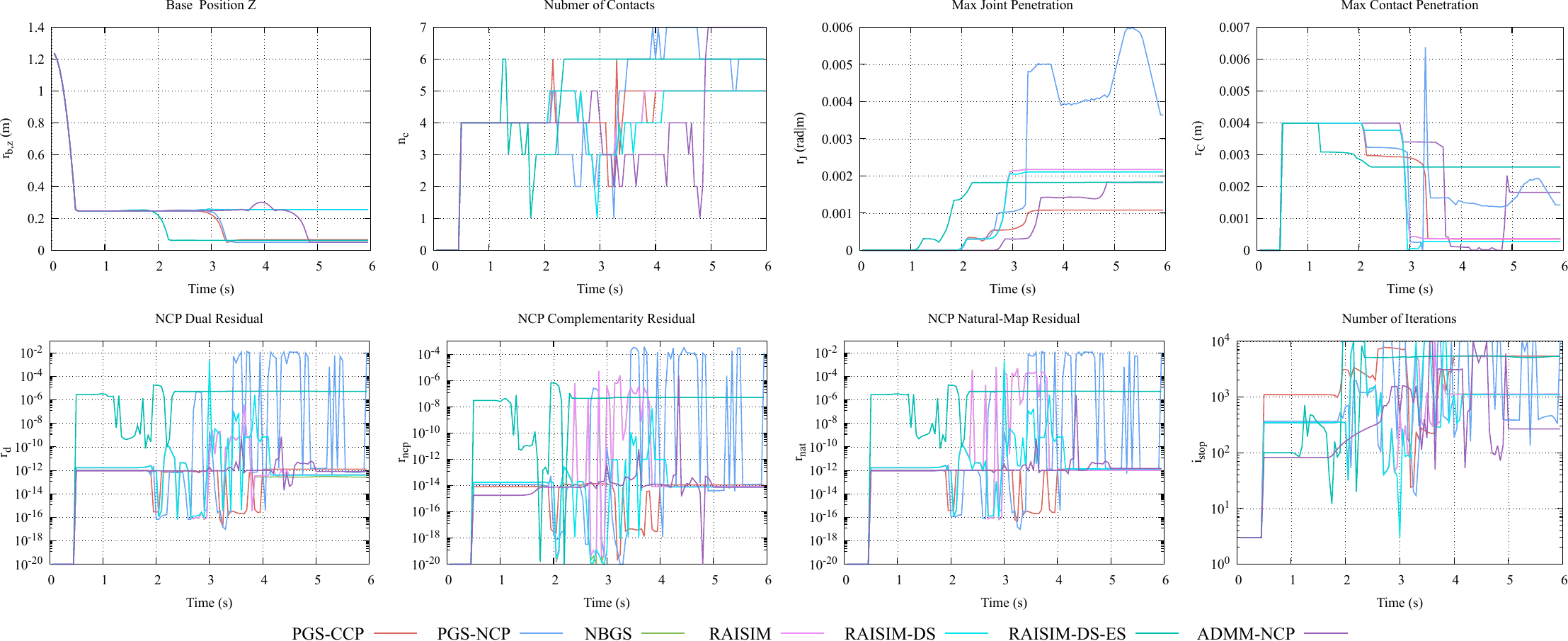}
\caption{\textit{Nunchaku}: Summary of the first run, solving the hard-contact problem without augmentations. All solvers exhibit non-zero constraint penetration, i.e. configuration-level constraint residuals. Considering the absence of constraint stabilization, their performance is rather good on such a small-scale problem.}
\label{fig:nunchaku-hard}
\end{figure*}
\begin{figure}[!t]
\centering
\includegraphics[width=1.0\linewidth]{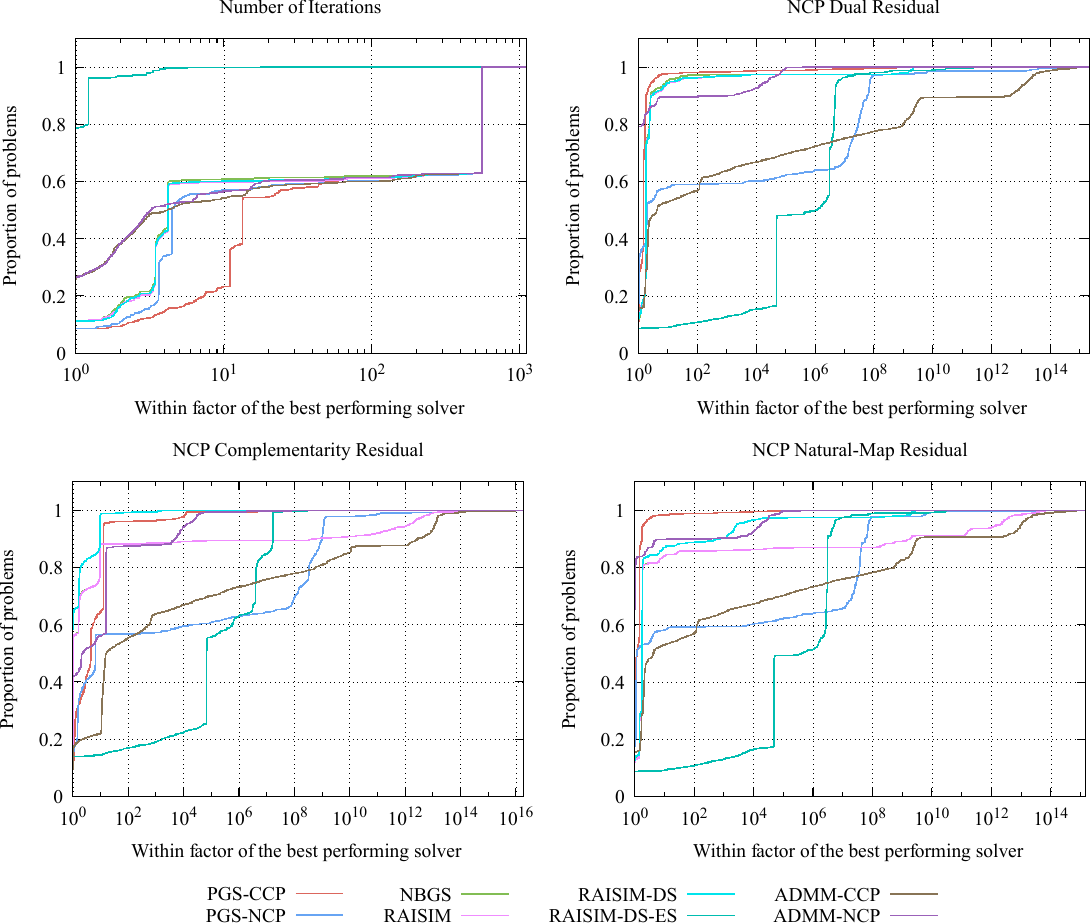}
\caption{\textit{Nunchaku}: Performance profiles showing the relative performance between solvers on this problem. While PGS-CCP is the least performing in terms of number of iterations (top left), it demonstrates accuracy comparable to ADMM-NCP. RAISIM and RAISIM-DS also seem to perform best w.r.t NCP complementarity (bottom left). The least accurate, but fastest to converge is the early-stopping RAISIM-DS-ES.}
\label{fig:boxes-nunchaku-perfprof}
\end{figure}
\subsubsection*{\textbf{Nunchaku}}
\label{sec:experiments:primitives:nunchaku}
The third primitive experiment involves a system we have fondly named \textit{Nunchaku}, consisting of two boxes and a sphere connected by two spherical joints. Each box is of dimensions $\unit[0.1]{m} \times \unit[0.1]{m} \times \unit[0.5]{m}$, the sphere is of radius $r = \unit[0.05]{m}$, and all bodies are of equal mass $m = \unit[1]{kg}$. The objective of this experiment is to observe the effects of hyperstaticity when multiple bodies come into contact with the environment (i.e. external collisions) as well as with each other (i.e. internal collisions). In addition, we also wanted to observe the efficacy of using constraint softening and stabilization. In this scenario, we do not apply any external force on the system. Instead, we vertically align all bodies, suspend the system at a fixed height above the ground plane and let it drop. The collapsing motion of the constrained system leads to a large number of internal and external contacts, thus resulting in a multitude of constraint loops, as exemplified in Fig.~\ref{fig:nunchaku-motion}. This experiment is executed in the form of an ablation of constraint softening and stabilization over four runs: (a) as a hard-constrained problem, (b) with only constraint stabilization enabled, (c) with only constraint softening enabled (d) both constraint softening and stabilization enabled.

In the first run all solvers behaved reasonably, with minimally perceptible constraint violation. The time-series data, depicted in Fig.~\ref{fig:nunchaku-hard}, indicate that most solvers are indeed capable of keeping constraint violation to within a few millimeters for both joints and contacts, with PGS-NCP and and ADMM-CCP performing the worst. Observing the trends of the number iterations required for each solve, it becomes clear that the aforementioned solvers reached the max iterations exactly at times of increased violation. Thus, similarly to the effects of a high mass-ratios observed in the Boxes-Fixed, the rate of convergence on Nunchaku is impacted by the hyperstaticity due to significant rank-deficiency in the contact Jacobian $\mathbf{J}_{C}$.

The second run introducing constraint stabilization is depicted in Fig.~\ref{fig:nunchaku-hard-stbl}, and demonstrates a drastic reduction in constraint violation across all solvers, even in cases where the solvers fail to converge before the maximum iterations. The third run is depicted in Fig.~\ref{fig:nunchaku-soft} and shows that MuJoCo-like constraint softening scheme brings a significant decrease in the number of iterations and the overall constraint violation. In addition, it bears no discernible impact on the complementarity and natural-map residuals, indicating that the constraint softening mostly retains physical plausibility and accuracy. 

The fourth run that includes both constraint softening and stabilization is shown in Fig.~\ref{fig:nunchaku-soft-stbl}. Combining the two augmentations significantly increases overall performance for all solvers, both in terms of constraint violation and the number of iterations. One can argue that if the parameters of constraint softening are set appropriately for the problem then additional constraint stabilization is redundant. Our goal, however, is to evaluate the individual effects, as well as the interplay, of these augmentations when using recommended default parameters. Although an extensive tuning of these parameters can indeed be beneficial, it is often quite laborious since, to the best of our knowledge, no automatic procedure exists at present time.

An final curated run on the hard-contact problem is used to generate the performance profiles depicted in Fig.~\ref{fig:boxes-nunchaku-perfprof}, and summarize the comparison of all solvers applied to this system. Interestingly, these show that on this system RAISIM and RAISIM-DS seem to outperform even ADMM-NCP w.r.t the complementarity residual $r_{ncp}$ accuracy metric, even if only by a small margin. Although this result is surprising, it can be understood by considering that the vast majority of samples include sticking or opening contacts, whereas one of the most prominent advantages of ADMM-NCP over the other solvers lies in cases of frictional sliding. Moreover, PGS-CCP also demonstrates good accuracy performance comparable to that of ADMM-NCP, but is the slowest of all in terms of convergence. RAISIM-DS-ES was the fastest solver overall on this problem, but also the least accurate, trailing all others by multiple orders of magnitude w.r.t the accuracy metrics.\\

\subsubsection*{\textbf{Fourbar}}
\label{sec:experiments:primitives:fourbar}
\noindent
The fourth and final primitive experiment involves a classic planar four-bar linkage, which we will simply refer to as \textit{Fourbar} for brevity. Such systems form the basis of many types of mechanisms that transfer motion and power. Indeed they are ubiquitous in mechanical assemblies, ranging from Ackermann steering mechanisms~\cite{mitchell2006analysis} in automobiles to autonomous excavators~\cite{egli2022general}. The Fourbar problem we employ in this work is categorized as a planar parallelogram-type with an RRRR joint arrangement, where each 'R' indicates a 1D revolute (a.k.a. hinge) joint. This system provides a versatile reference problem for evaluating all components of physics simulators such the dual solvers we evaluate in this work, as well as numerical integration schemes~\cite{maloisel2025versatile}.
\begin{figure*}[!t]
\centering
\includegraphics[width=1.0\linewidth]{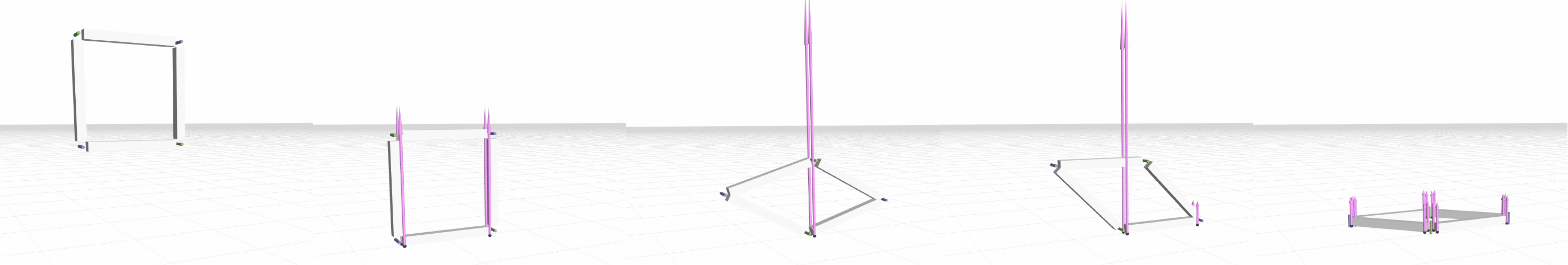}
\caption{\textit{Fourbar}: Renderings of the motion of the system and active contact forces along the different phases of the experiment. The four-bar is dropped from an initial height from which it begins to shift onto one side due to gradually accumulating numerical drift due to the semi-implicit Euler integrator. From there it rolls over to the right lateral link, where it is pushed by an external force so that it can collapse. The external disturbance serves to push the system into a configuration where all bodies are contacting the ground, each at 4 co-planar points. In many cases this occurs while the joint limits are still active. This in this final phase, a maximal number of active constraints are present, in addition to the intrinsic kinematic loop of the system.}
\label{fig:fourbar-drop}
\end{figure*}
\begin{figure}[!ht]
\centering
\includegraphics[width=1.0\linewidth]{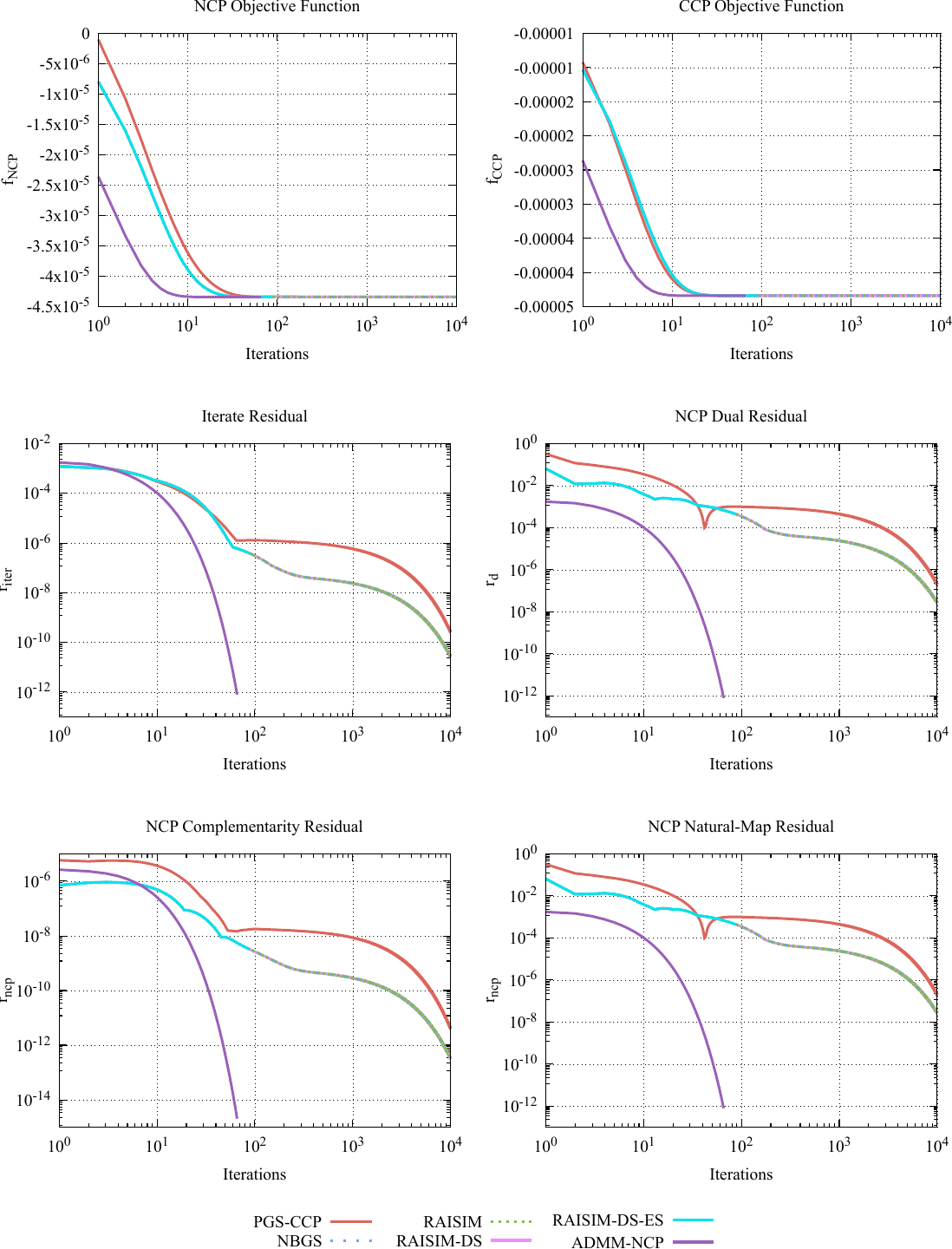}
\caption{\textit{Fourbar}: Convergence profiles of the most prominent solvers, with both X and Y axes are rendered in $log10$ scale. The NCP and CCP objectives (top row) are juxtaposed with the accuracy performance metrics to highlight the crucial differences between them. RAISIM-DS-ES relies on the former to determine convergence, and thus traverses the same trajectory as NBGS, RAISIM and RAISIM-DS, but stops much sooner than these. Meanwhile, PGS-CCP presents the slowest convergence of all, with a notable \textit{notch} in the NCP dual and natural-map residuals at around iteration $i=70$. ADMM-NCP presents essentially a linear convergence rate that is approximately two orders of magnitude larger than the splitting-based solvers.}
\label{fig:fourbar-convergence}
\end{figure}
\begin{figure}[!ht]
\centering
\includegraphics[width=1.0\linewidth]{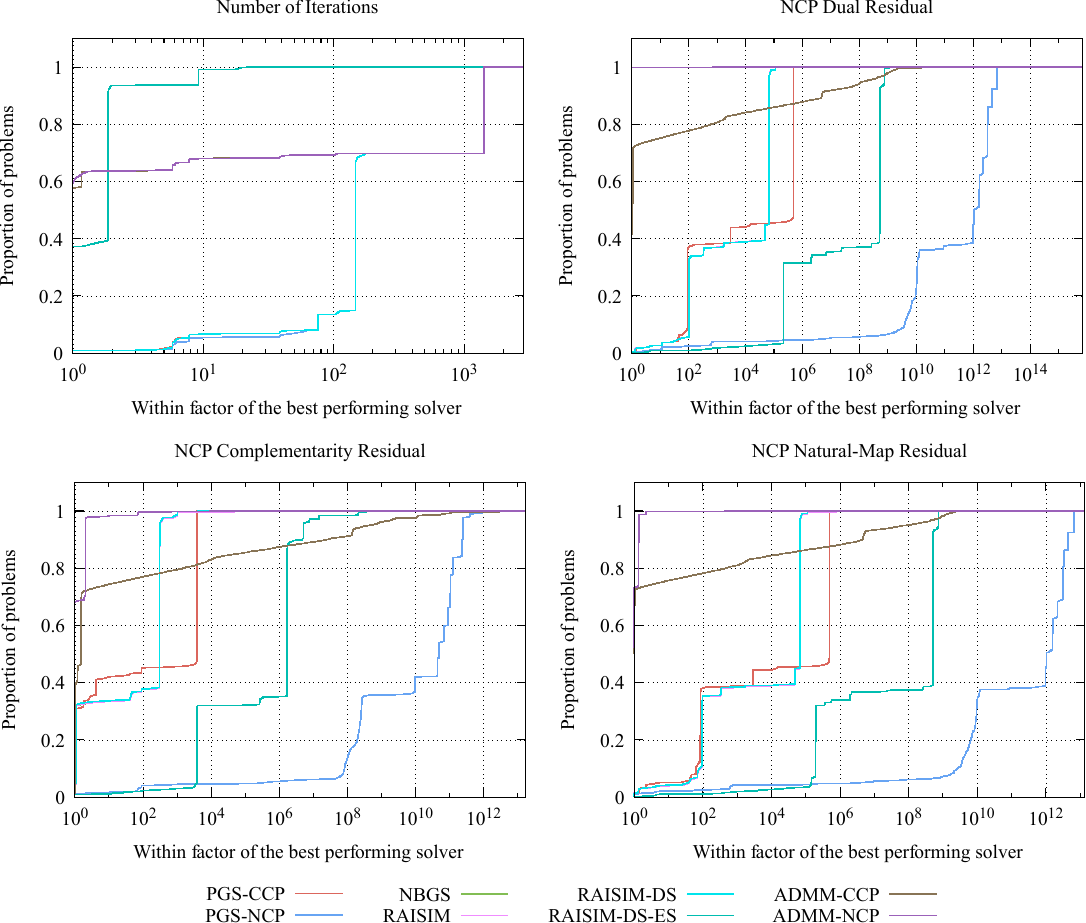}
\caption{\textit{Fourbar}: Performance profiles of the full set of solvers. In contrast to the other primitive problems where the differences between solvers were less pronounced, in this case ADMM-NCP proves to be the best solver overall by a wide margin. Even w.r.t the number of iterations $i_{stop}$, ADMM-NCP was best on approximately $\%60$ of the problems, and $\%70$ w.r.t the accuracy metrics, while being within a factor of 10 close to the best solver on $\%100$ of the problems.}
\label{fig:fourbar-perfprof}
\end{figure}

All four bodies are of dimensions $\unit[0.01]{m} \times \unit[0.01]{m} \times \unit[0.1]{m}$ and mass $m = \unit[1]{kg}$. Of the four revolute joints, two are actuated and two are passive with the actuated joints opposing each other w.r.t one of the diagonals of the parallelogram. All four joints define their rotation w.r.t the rest state when the Fourbar is a square and are subject to limits at $q_{j} \in [-\frac{\pi}{4}, \frac{\pi}{4}]$. The bottom link is the designated \textit{base} body and is initialized at a height $z_{b} = \unit[10]{cm}$ above a plane aligned to the world origin. The system is first dropped in order to impact the plane, from where it eventually begins to roll to one side due to numerical drift from state integration. Due to the imposed joint limits, it locks onto a rhombus-like state and is left to stabilize until a force amounting to the total mass of the system is applied to whichever body is the highest at time $t=\unit[10]{s}$. The push causes the Fourbar to fall onto the plane where all four bodies contact the ground. The objective of this experiment is to therefore evaluate the full set of solvers in scenarios where all constraints types are present, i.e. joints, limits and contacts, and while they all act simultaneously in the presence of an intrinsic kinematic loop. As done for the previous primitive systems, the experiments are executed over multiple runs to perform an ablation on constraint stabilization and softening. Moreover, in order to also evaluate the effects of inertial disparity, we execute an additional run where the masses of the lateral (left-right) bodies are set to $m_L = \unit[1]{kg}$ and $m_H = \unit[1000]{kg}$. This final runs thus evaluates the solvers in a scenario where all types of ill-conditioning are present.

\begin{figure*}[!t]
\centering
\includegraphics[width=1.0\linewidth]{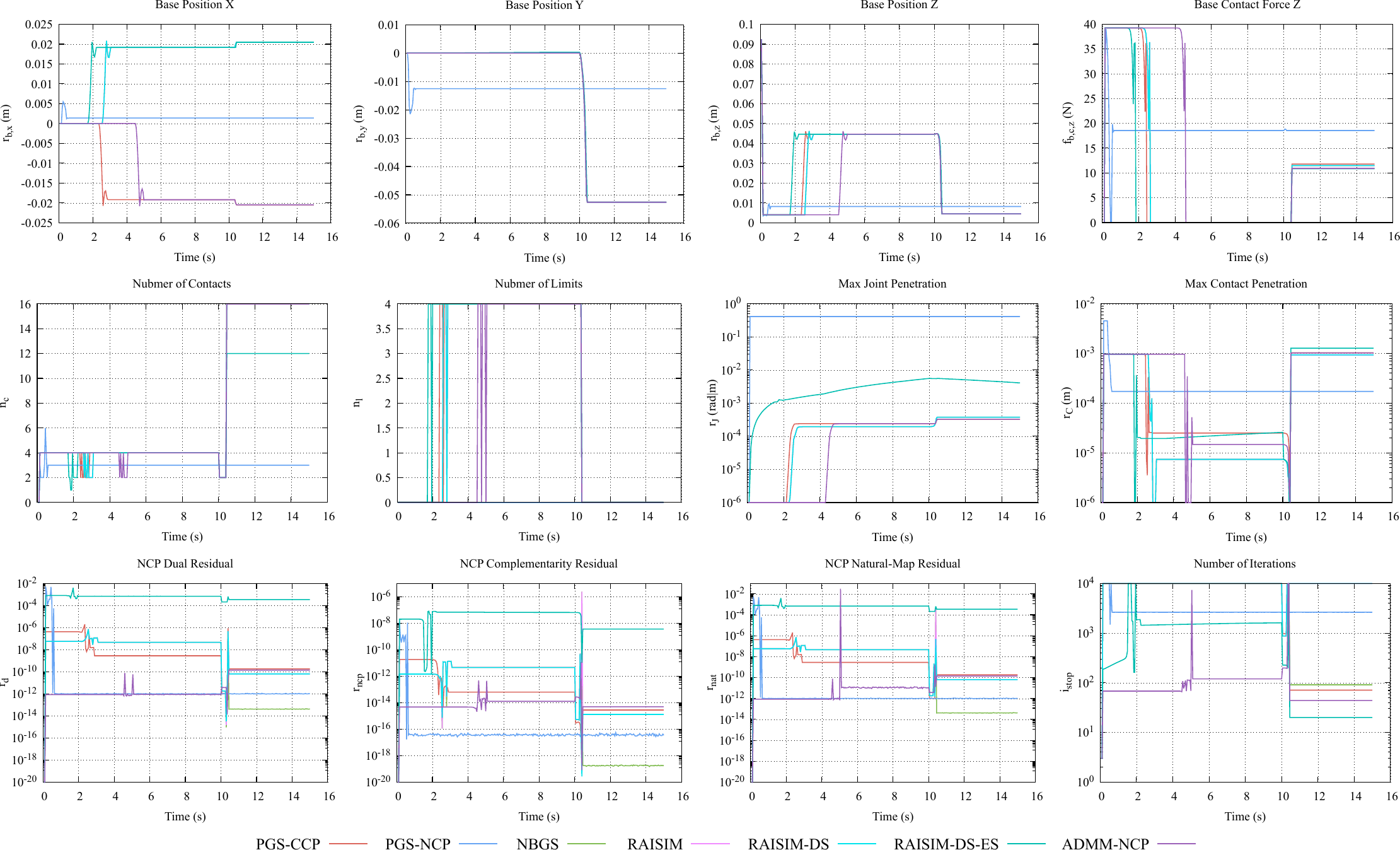}
\caption{\textit{Fourbar}: Summary of the first run on the un-augmented hard-contact problem. The top row depicts the absolute position of the designated base link, i.e. the lowest of the two horizontal bodies. The second row depicts the net contact forces acting on the base along the respective coordinates. The system is initialized at a fixed height (top-right) from where it falls until impacting the ground at about $t \simeq \unit[0.2]{s}$. The system shifts to one of two sides along the global X-axis, depending on the numerical drift introduced by semi-implicit integrator, which depends on the precision of the forces computed by each solver. For some solvers, the system begins shifting left, while for others right. ADMM-NCP being the most precise, accumulates drift the least, thus shifting the latest of all, while RAISIM-DS-ES shifts the earliest due to the imprecision induced by the early-stopping termination criteria. The latter also exhibits the most amount of configuration-level constraint violation for joints (lower-middle-right in cyan). At the end of the shifting motion, the limits of all four joints activate resulting in the Fourbar rolling over to one side, and at $t=\unit[10]{s}$ the external force acting on the top-most body along -Y-axis pushes the system onto the ground where all bodies are simultaneously contact it. This results in $n_c=16$ total simultaneous contacts, thus making the system significantly over-constrained. The plot showing the contact force along the Y-axis (top-middle-center) shows how the early-stopping results in significant internal forces due to the hyperstaticity of the system. Conversely, ADMM-NCP presents the least (virtually zero) coupling.}
\label{fig:fourbar-hard}
\end{figure*}
The first run is depicted in Fig.~\ref{fig:fourbar-hard}. ADMM-NCP and RAISIM-DS-ES were the only solvers capable of converging within the maximum number of iterations in the majority of the time-steps. To elaborate on this result, convergence profiles are provided in Fig.~\ref{fig:fourbar-convergence}, corresponding to a time-step where the Fourbar has impacted the plane. Moreover, NBGS, RAISIM and RAISIM-DS performed on par with ADMM-NCP w.r.t. constraint violation of both joints and contacts. Meanwhile, the early-stopping RAISIM-DS-ES exhibits very high joint constraint violation compared to the former. Although the early-stopping termination criteria presents a significant aid in avoiding excessive iterations of the solver, it results in significant underestimation of the constraint reactions. 

The second run with constraint stabilization is shown in Fig.~\ref{fig:fourbar-hard-stbl} and demonstrates a drastic reduction in constraint violation for all solvers. ADMM-NCP and RAISIM-DS-ES were mostly unaffected in terms of convergence, except when all bodies contacted the ground simultaneously. We hypothesize that the additional bias of the stabilization term causes an effect similar to the wedging example described by Erleben in~\cite{erleben2004stable} and Horak et al in~\cite{horak2019similarities}. Essentially when multiple constraints are acting along approximately co-linear and opposing directions, the solver ends up in a ping-pong effect where the contribution of each constraint cancels that of another, causing it to make slow or no progress towards a solution. In this case the opposing biases are those of the contact stabilization and the constraint dimensions of the joints. As convergence is conditioned on the required tolerance, in this case set to $\epsilon_{abs} = 10^{-12}$, the fact that ADMM-NCP still renders correct forces is an indication that the ping-pong effect most likely manifests at smaller scales of the relevant residuals. 

The third run, executed with constraint softening applied to all constraints, is depicted in Fig.~\ref{fig:fourbar-soft}. This augmentation results in all solvers being able to converge well within $N_{max}$. ADMM-NCP and RAISIM-DS-ES exhibited the most significant improvement, with the latter being able to do so within $i_{stop} < 100$. However, this acceleration came at a significant cost of increased constraint violation for all solvers, which was close to an order of magnitude larger compared to the previous runs. Comparatively, the fourth run depicted in Fig.~\ref{fig:fourbar-soft-stbl} shows that combining both constraint stabilization and softening yields the best results overall.

The fifth run, that introduces a large mass-ratio to the system and with constraint stabilization, is depicted in Fig.~\ref{fig:fourbar-hard-stbl-lmr}. In this scenario, all solvers fail to converge with $N_{max}=10^{4}$. However, most solvers exhibit a relatively low degree of constraint violation, except the splitting-based methods when initial impact with the plane occurs. For these, all the performance metrics pertaining to physical accuracy are significantly degraded w.r.t. previous runs. This is consistent with the Boxes-Fixed experiment where all splitting-based solvers were incapable of correctly propagating forces and satisfying both configuration-level and dynamics constraints.

Lastly, we must remark that initial testing caused us to exclude PGS-NCP and ADMM-CCP from this experiment as they resulted in erroneous behaviors. The former caused significant rotation of the base link upon first impact from the drop, even when constraint stabilization was enabled. The latter diverged moments after all bodies contacted the ground after the push at $t=\unit[10]{s}$. These motions distorted the time-series plots in the aforementioned figures to such a degree that we opted to  remove them from the comparison.

\subsection{Full-Scale Systems}
\label{sec:experiments:animatronics}
The second set of experiments evaluates the dual solvers on a set of full-scale systems. With these, we demonstrate the capabilities and limitations of each solver on problems that are representative of practical applications for controlling robotic systems and \textit{Audio-Animatronics}\textregistered\,\, figures.\\

\begin{figure}[!t]
\centering
\includegraphics[width=1.0\linewidth]{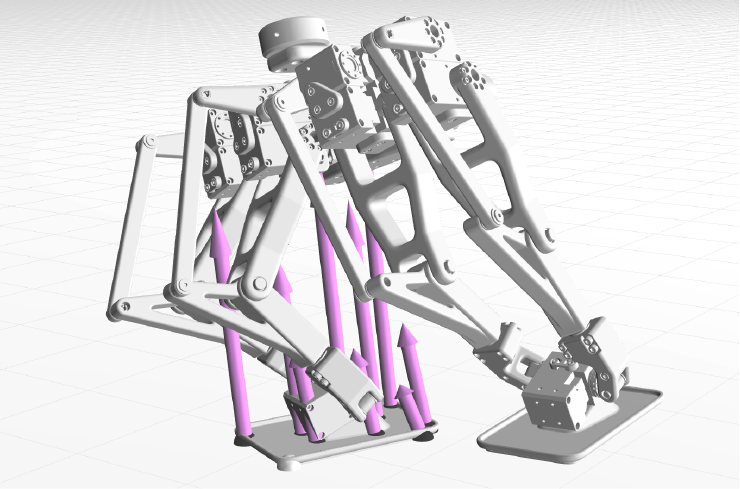}
\caption{\textit{Walker}: The forces distributed over the non-convex collision geometry of the system's planar feet rendered by the ADMM solver. Black and white spheres indicated contacts that are sticking and opening, respectively. The collection of contacts on the heel of the left foot is consistent with the CoM of the system. Importantly, ADMM is able to generate minimum-norm contact reactions that exhibit minimal to no internal cross-couplings, despite the large number of co-planar contacts being generated in this case.}
\label{fig:walker-force-dist}
\end{figure}
\subsubsection*{\textbf{Walker}}
\label{sec:experiments:animatronics:walker}
The first full-scale system is the bipedal Walker~\cite{gim2018design}. Although we categorized it as belonging to the \textit{Audio-Animatronics}\textregistered\,\, group, it is in fact a hybrid between the former and a robotic system. It is a multi-limb floating-based system with planar feet, intended for evaluating bipedal locomotion of highly over-and-under-constrained systems. Indeed, each leg exhibits three intrinsic kinematic loops, forming seven overall when both feet are in contact with the ground. As mentioned in Sec.~\ref{sec:benchmarking:suite}, collision detection is preformed using the raw mechanical meshes imported from the mechanical designs of CAD, i.e. we have not applied any post-processing to convexify or simplify them. This system, however, does not impose joint limits. Locomotion is realized naively in open-loop by using joint-space PID control to track reference trajectories of duration $t_f \approx \unit[23]{s}$. These references are retargeted from an animation generated for a simpler joint morphology, as detailed in~\cite{Schumacher2021}. The objectives of this experiment, corresponding one-to-one to each run, evaluate solvers in the cases of:
\begin{enumerate}
\item the un-augmented system free-floating without gravity, in order to establish a baseline for performance
\item the un-augmented system while it is locomoting on the ground with gravity and contacts enabled
\item when including constraint stabilization
\item when including constraint softening only for contacts\footnote{Initial testing, conducted to set up these experiments, lead us to conclude that it is not feasible to apply constraint stabilization to joints on this system. The induced under-determination of the forces resulted in almost instantaneous dislocation of the joints and all bodies flying apart.}
\item simulation stability when using high-throughput settings for all solvers, with $N_{max}=500$ and $\epsilon_{abs} = 10^{-4}$
\end{enumerate}
However, we have had to exclude the ADMM-CCP solver from all but the first run, as it once again diverged in certain time-steps, causing significant distortion of all time-series plots. Regardless, since the experiment includes the primary solvers of interest, the comparisons can proceed without being significantly affected by this absence. 

\begin{figure}[!t]
\centering
\includegraphics[width=1.0\linewidth]{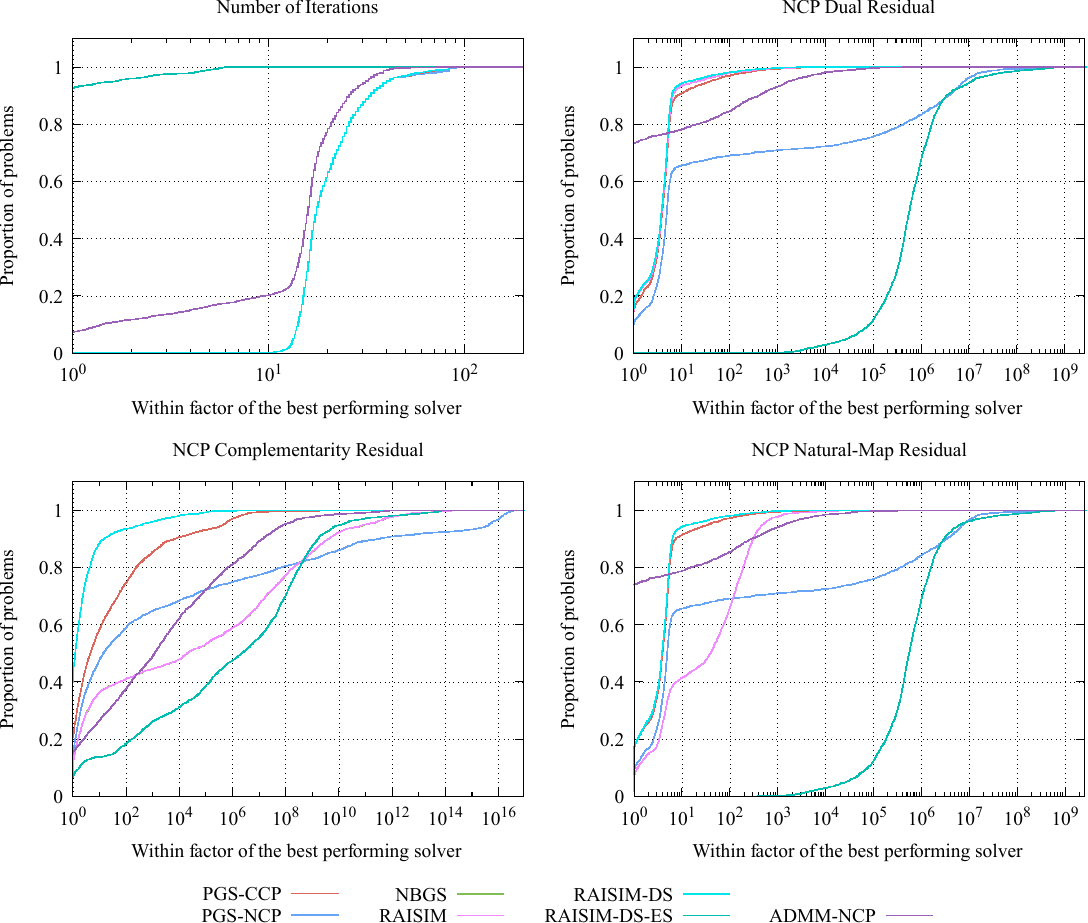}
\caption{\textit{Walker}: Performance profiles on the set of solvers. ADMM-CCP has been excluded is it often diverged on this problem resulting in invalid solutions. On this particular problem RAISIM-DS-ES is by a wide margin the fastest solver (top left), while ADMM-NCP is the most accurate overall. However, RAISIM-DS and PGS-CCP prove to be better w.r.t the accuracy metric of the complementarity residual $r_{ncp}$ (bottom left), and within a factor of 10 close to ADMM-NCP w.r.t the NCP dual and natural-map residuals (right column). This result indicates that RAISIM-DS and PGS-CCP, can be effective in rendering realistic simulations w.r.t the accuracy metrics.}
\label{fig:walker-perfprof}
\end{figure}
\begin{figure*}[!t]
\centering
\includegraphics[width=0.97\linewidth]{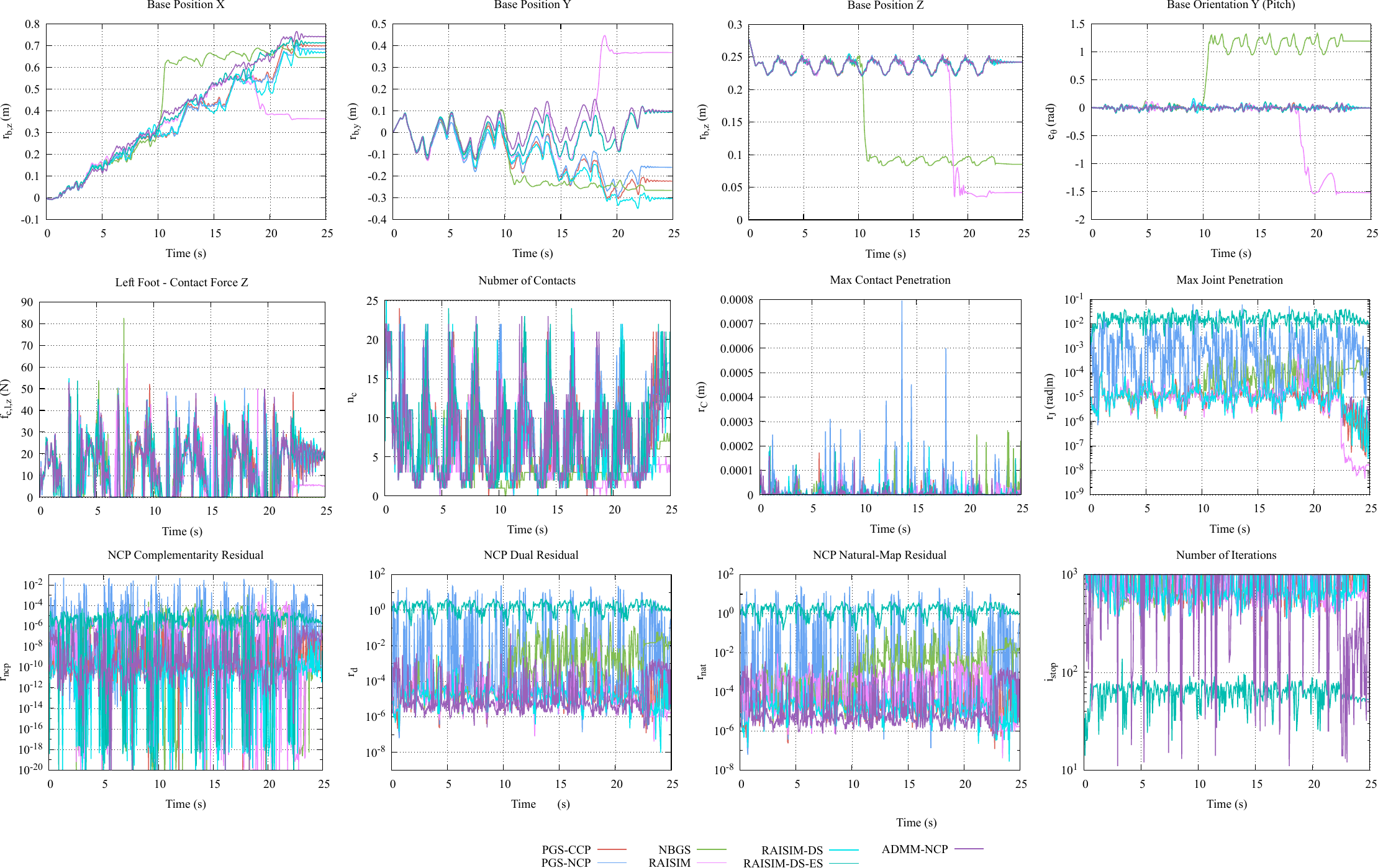}
\caption{\textit{Walker}: Time-series plots of the third run where stabilization is applied to all constraint sets. ADMM-NCP exhibits the best overall performance on this problem with significantly better performance w.r.t the accuracy metrics of the NCP dual, complementarity and natural-map residuals. RAISIM-DS and PGS-CCP perform similarly well, and the motions they generate remain close to that of the former. Meanwhile, RAISIM-DS-ES, although reasonably performance, exhibits excessive constraint violation close to unity, due to the early-stopping termination criteria. PGS-NCP, NBGS and vanilla RAISIM are the least performing solvers.}
\label{fig:walker-stbl}
\end{figure*}
\begin{figure*}[!t]
\includegraphics[width=1.0\linewidth]{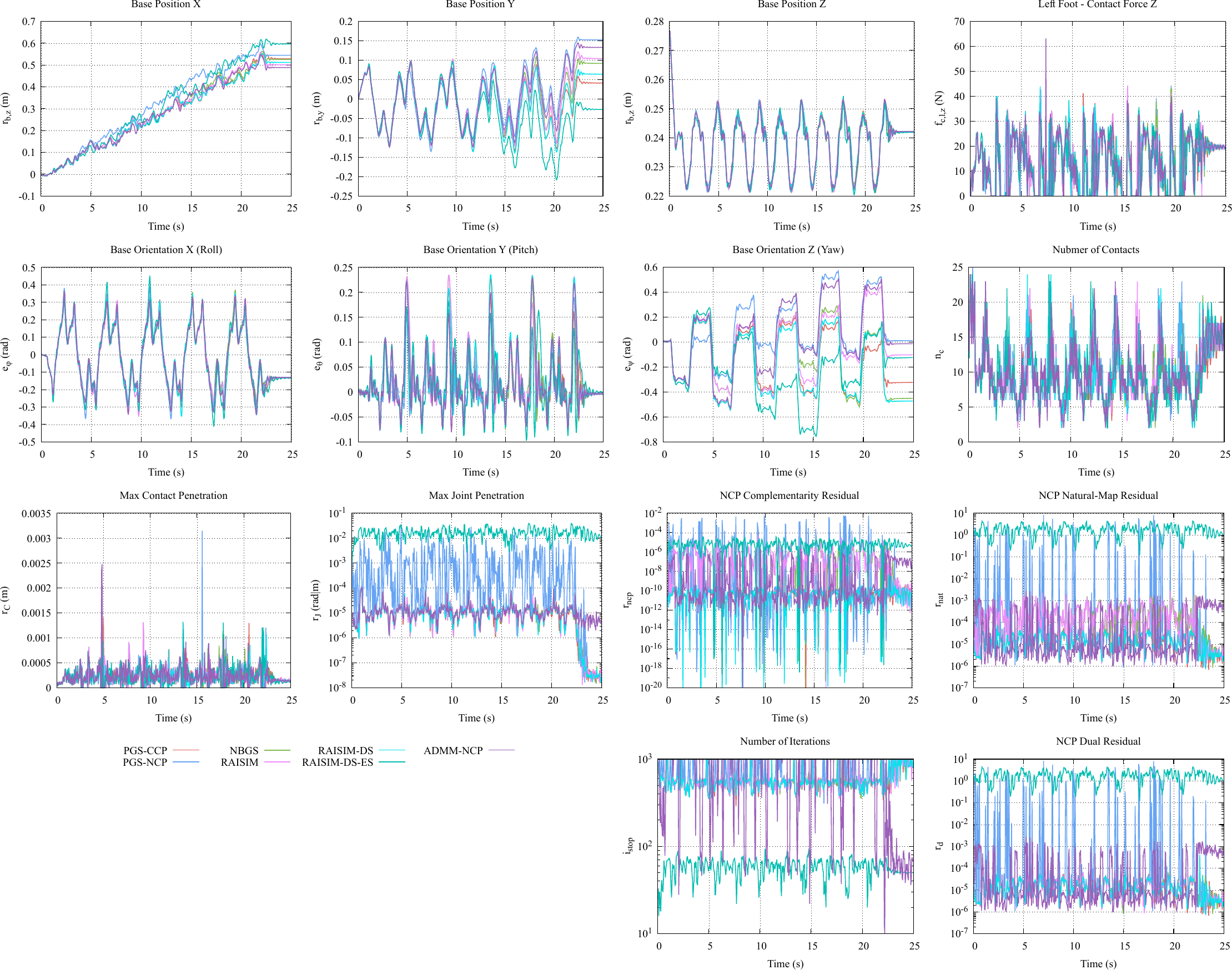}
\raisebox{1.8cm}[0pt][0pt]{%
\hspace{0.0cm}%
\parbox{8.5cm}{\caption{\textit{Walker}: Summary of the third run that combines constraint stabilization with constraint softening for only the contact constraints. The softening significantly improves the convergence rate of all splitting-based solvers, while the constraint stabilization ensures that configuration-level constraint violation is kept at minuscule values. All solvers except for PGS-NCP and RAISIM-DS-ES were able to maintain minimal constraint violation for the joints (middle-center-left). ADMM-NCP and RAISIM-DS performed the best overall on this run.}}}
\label{fig:walker-softc}
\end{figure*}

The first run is depicted in Fig.~\ref{fig:walker-floating}. Since in this case essentially only the bilateral joint constraints are active, the only relevant metric is that of the dual residual $r_{d}$. In this case, this metric represents the residual velocities in constraint-space, which should ideally be zero at all times. As we do not include any constraint stabilization, all solvers accumulate significant configuration-level constraint violation in the form of the joint penetration residual. RAISIM-DS-ES exhibits more than an order of magnitude larger compared to other solvers. ADMM-NCP performs the best in this case, with the PGS and RAISIM variants performing comparatively similar.

The second run introduces contacts and gravity, and is summarized in Fig.~\ref{fig:walker-hard}. In this case RAISIM-DS-ES and PGS-NCP lead to the biped falling over within the first few steps, while all other solvers manage to render simulations were locomotion proceeds until the end of the reference trajectory. It becomes apparent that as these also exhibit the largest amount of violation in the configuration-level joint constraints, and thus the rapid accumulation of error at the ankle joints leads to the tip-over. This scenario is of course not representative of actual applications, but illustrates the rate and scale at which the solvers deviate in the presence of multiple contacts that exacerbate the ill-conditioning of the problem.

The third run with constraint stabilization enabled is presented in Fig.~\ref{fig:walker-stbl}. In this case, most solvers and including RAISIM-DS-ES and PGS-NCP, can render simulations that keep constraint violation to reasonably small values. In this case, NBGS and vanilla RAISIM lead to tip-overs, but we consider this a false negative as it is a mere artifact of their respective simulations. In fact, by setting the collision margin to $\delta_{c} = 10^{-5}$ we were able to make all solvers reach the end of the animation without falling. The most important outcome of this run is that most solvers were sufficiently aided by the constraint stabilization in maintaining violation low. The degree by which this was accomplished, renders simulations whose output is usable in actual applications.

The fourth run is executed with constraint stabilization on all constraints and constraint softening applied only for contacts. Fig.~\ref{fig:walker-softc} depicts a summary of this run. The combination of both augmentations proves effective in rendering simulations that look very similar across all solvers. The softening provides a significant boost to the convergence of all splitting-based solvers but ADMM-NCP seems mostly unaffected. Moreover, for all but PGS-NCP and RAISIM-DS-ES, constraint violation is mostly the same as the first run, demonstrating that both augmentations can be combined effectively on such problems, similar to what was observed for Nunchaku and Fourbar.

The fifth run, depicted in Fig.~\ref{fig:walker-fast}, shows the effects of the high-throughput settings. In this case RAISIM-DS-ES was unable to maintain constraint violation, both kinematic and dynamic, to sufficient levels and very rapidly leads to a tip-over. All other solvers seemed mostly unaffected by this relaxation. ADMM-NCP, in particular, performed nearly identically to the second and third runs, thus exhibiting exceptional robustness w.r.t $N_{max}$ and $\epsilon_{abs}$.

A final curated run on the constraint-stabilized problem was used to generate the performance profiles presented in Fig.~\ref{fig:walker-perfprof}. From these we can conclude that although ADMM-NCP is once again the best solver overall, RAISIM-DS-ES, RAISIM-DS and PGS-CPP, present surprisingly comparable performance, in certain aspects. Specifically, RAISIM-DS-ES proved best overall in terms of number of iterations, RAISIM-DS proved best w.r.t. the NCP complementarity accuracy metric, and PGS-CPP proved to always come within a factor of 10 of the best solver w.r.t the NCP dual and natural-map metrics. Thus, although ADMM-NCP is once again the overall winner, the results indicate that using the problem augmentations with appropriate setting of parameters can prove effective in reducing the performance gap between the former and splitting-based solvers. 

Lastly, we must remark on the influence of the PID controller in terms of simulation stability. We observed that despite the overall robustness of most solvers, setting the derivative gain of the PID controller to large values lead to a rapid explosion of the bodies, even when using ADMM-NCP. This made it exceptionally challenging to tune the controller gains, but most importantly, it is indicative that the present formulation is sensitive to actuator dynamics. Many physics engines alleviate this problem by introducing an implicit PD controller directly into the formulation of the forward dynamics problem~\cite{tan2011stable, yin2020linear}. However, although this effect is well known, from our investigations, it seems to not be well covered in the known literature, and we intend to investigate this aspect further in future work.\\

\begin{figure}[!t]
\centering
\includegraphics[width=1.0\linewidth]{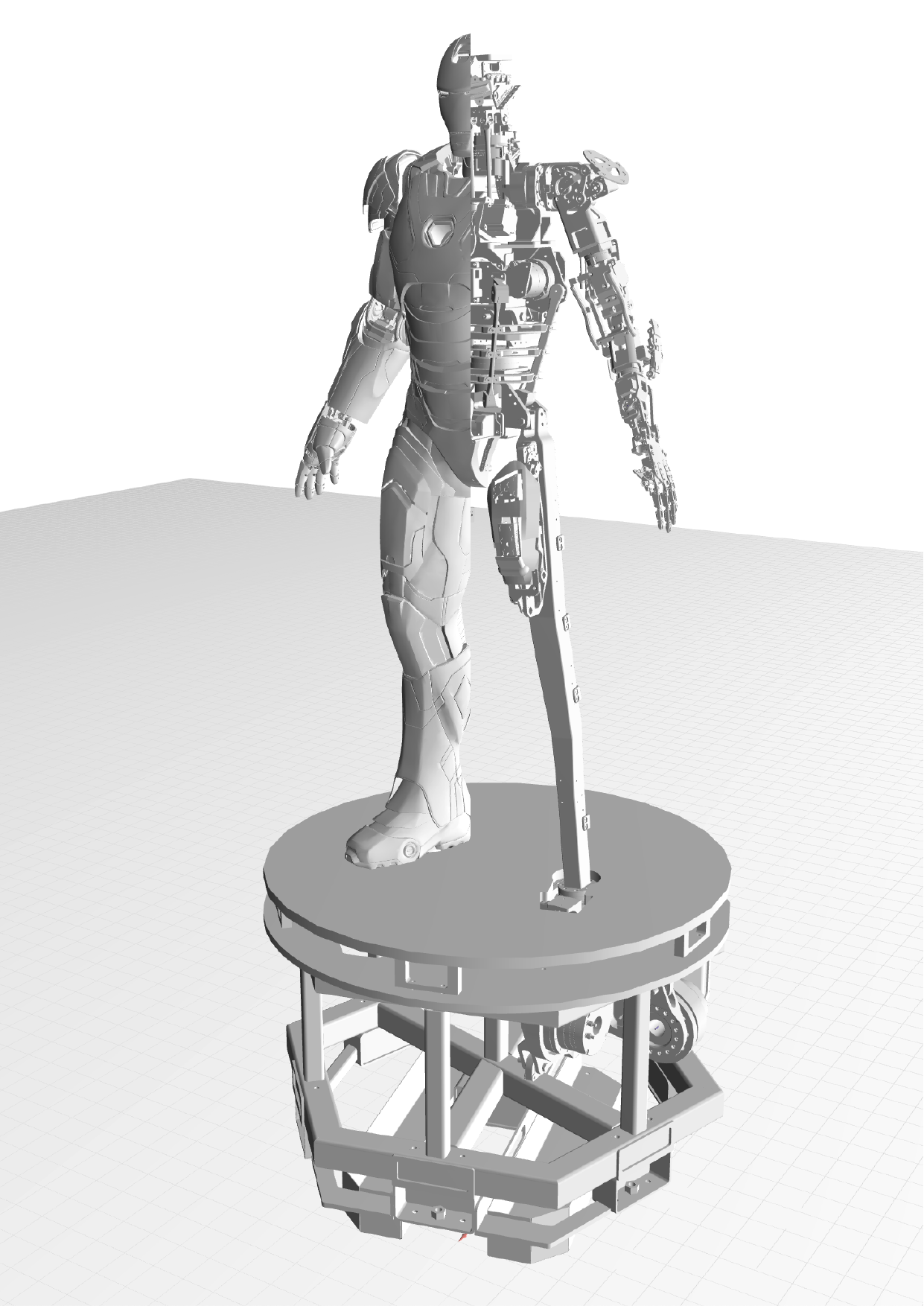}
\caption{\textit{IronMan}\,\,\textcopyright\,\,\textit{MARVEL}: the \textit{Audio-Animatronics}\textregistered\,\, figure, featured in real-life at Disneyland Paris in France, and shown here with (left) and without (right) the external shell geometries. The internal mechanical assembly reveals an exceptionally complex humanoid system.}
\label{fig:ironman-spliced}
\end{figure}
\begin{figure}[!t]
\centering
\includegraphics[width=1.0\linewidth]{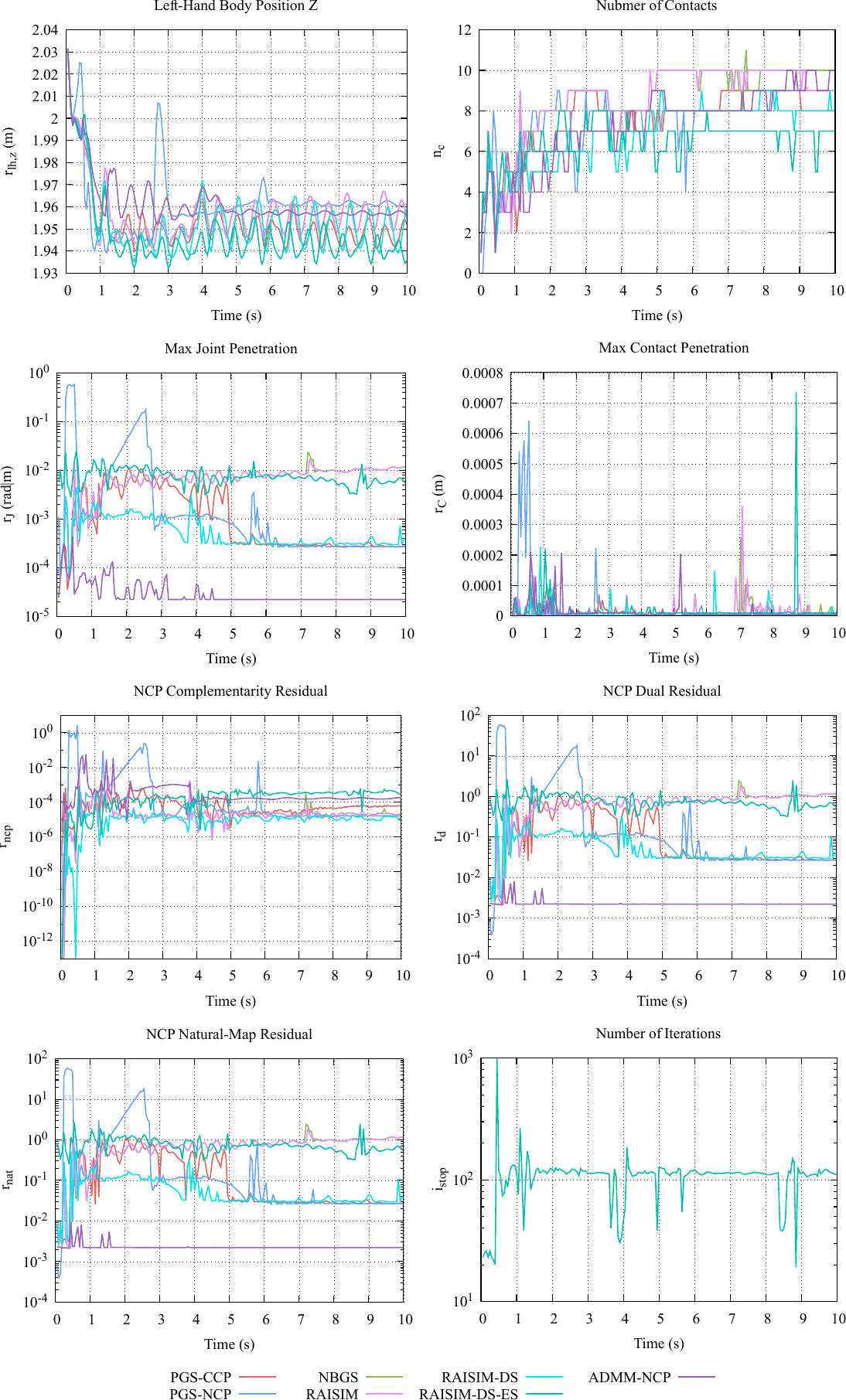}
\caption{\textit{IronMan}: Summary of the first run where the system is unactuated and left to rag-doll. While joint contact penetration remain reasonable low for most solvers, and on the order of a few millimeters, the dynamic constraints represented by the accuracy metrics of the NCP dual, complementarity and natural-map residuals exhibit much larger values compared to other systems in the benchmark suite. In this regard, ADMM-NCP is the best performing one overall, and RAISIM-DS-ES the least.}
\label{fig:ironman-ragdoll}
\end{figure}
\begin{figure}[!t]
\centering
\includegraphics[width=1.0\linewidth]{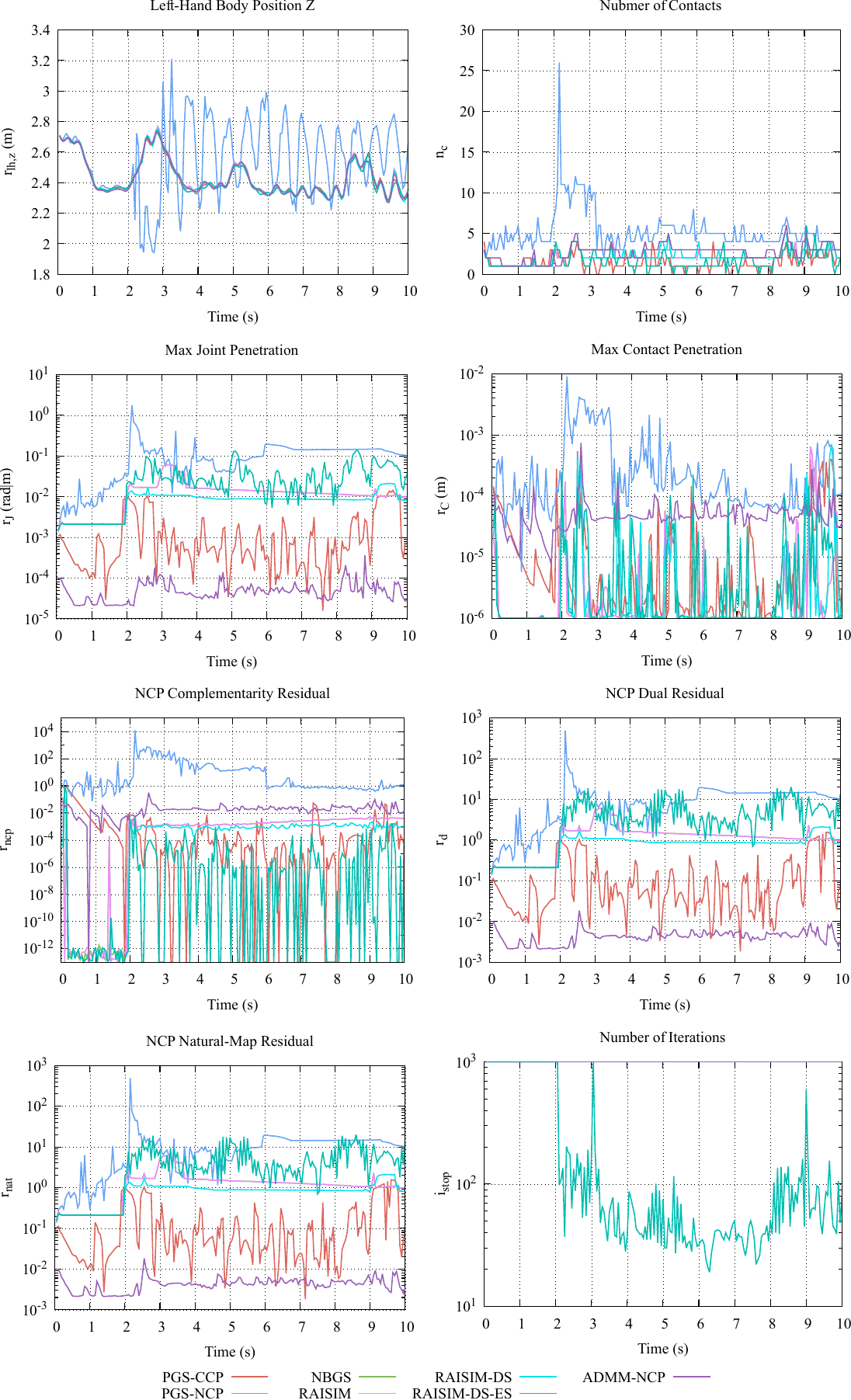}
\caption{\textit{IronMan}: Summary of the second run in which the system is actuated to track a reference trajectories using a naive joint-space PID controller.}
\label{fig:ironman-animated}
\end{figure}
\begin{figure}[!t]
\centering
\includegraphics[width=1.0\linewidth]{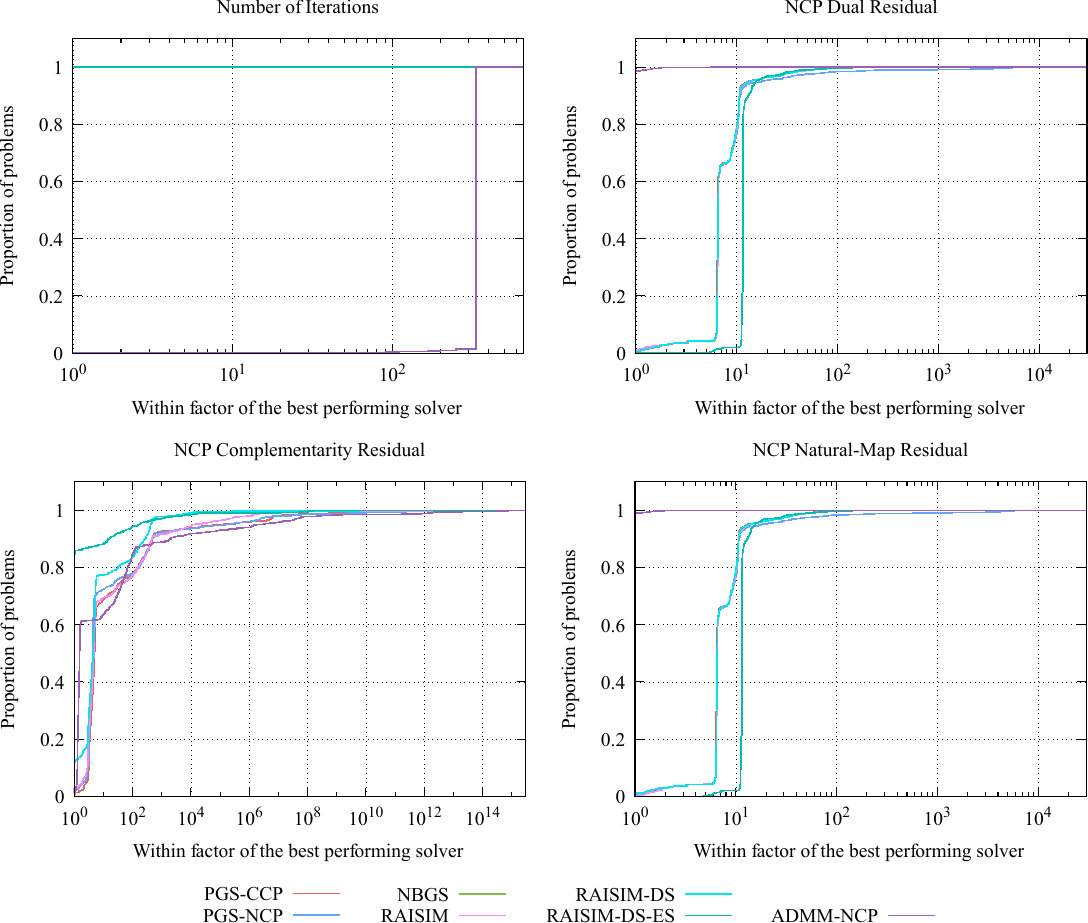}
\caption{\textit{IronMan}: performance profiles}
\label{fig:ironman-perfprof}
\end{figure}
\subsubsection*{\textbf{IronMan}}
\label{sec:experiments:animatronics:ironman}
\noindent
The second full-scale problem is the highly-complex \textit{IronMan}\,\,\textcopyright\,\,\textit{MARVEL} \textit{Audio-Animatronics}\textregistered\,\, figure realized using the state-of-the-art A1000\footnote{\url{https://sites.disney.com/waltdisneyimagineering/a-1000/}} platform and currently being exhibited at Disneyland Paris, France. It is by far the most challenging system in the benchmark suite, and undoubtedly exceeds the mechanical complexity of most humanoid robots by a large factor, with its 54 links and 67 joints. By comparison, the most complicated humanoid figures known to date, such as the first generation Atlas~\cite{kuindersma2016optimization} of Boston Dynamics, consist of 29 links and 28 joints. Fig.~\ref{fig:ironman-spliced} provides a split view that reveals the exceptionally complex mechanical assembly concealed by the exterior shells.  

For this final systems, we will not proceed as exhaustively as before. Our objective here is merely to evaluate how solver performance scales for systems of this size, and to observe potential simulation artifacts. To this end, we executed two runs where: (a) the system is left to \textit{rag-doll}\footnote{This is a reference to the term used computer animations and video games.}, i.e. collapse due to no actuation being active, and (b) use a joint-space PID controller to track reference motions retargeted from an animation. In both runs, all solvers are set to the high-throughput setting with $N_{max}=1000$ and $\epsilon_{abs}=10^{-6}$. 

The first run where the system is allowed to rag-doll is summarized in Fig.~\ref{fig:ironman-ragdoll}. Despite the size of the system, a relatively low number of contacts occur during the motion, due to the fixed-base. Each solver overall performed comparatively similar to the Walker system, with ADMM-NCP exhibiting the best performance on all metrics, despite the lack of convergence. Although RAISIM-DS-ES was the only solver to converge on this problem, it performed the worst, yet the corresponding simulation proved sufficiently stable and with no visible artifacts. Most surprising on this system is that although configuration-level constraint violation is kept relatively low, the dynamic constraints paint a different picture. The dual $r_{d}$ and natural-map $r_{nat}$ residuals approach scales near unity, indicating that some constraints contain significant residual constraint-space velocities. However, given the large inertial disparity of the system, coupled with the lack of solver convergence, these values are most likely due to the joints that couple very massive bodies with relatively small ones, such as the ones in the base of the system. Such a phenomena is justification for considering a pre-conditioning of the problem, and/or employing vector-valued tolerances, so to ensure that residuals are evaluated in a scale-invariant manner. 

The second run including the effects of the joint-space PID controller is depicted in Fig.~\ref{fig:ironman-animated}. In this scenario most solvers performed similarly to the first run, with the exception of PGS-NCP which lead to erratic oscillations propagating throughout the system. Indeed, as shown in Fig.~\ref{fig:ironman-animated}, PGS-NCP resulted in joint constraint violations up to $\unit[1]{m}$ at certain instances. This result, together with all other systems described previously, causes us to conclude that PGS-NCP is the least performing solver of the set we evaluated and least likely to be considered for use in actual applications. 

Finally, an additional curated run on the actuated setup results in the performance profiles depicted in Fig.~\ref{fig:ironman-perfprof}. From these we see that ADMM-NCP performs well only on the NCP dual and natural-map accuracy metrics, while failing to beat RAISIM-DS-ES and RAISIM-DS in terms of the number of iterations and NCP dual residual. The degree of this disparity between accuracy metrics is very surprising, even considering the respective comparison made for Walker in Fig.~\ref{fig:walker-perfprof}. However, the absolute performance presented in Fig.~\ref{fig:ironman-ragdoll} and Fig.~\ref{fig:ironman-animated}, provide context. The relative performance, although seemingly large, is only up to a factor of 100 lower for ADMM-NCP w.r.t residual-based metric on the majority of sample problems. This in combination with the fact that ADMM-NCP and PGS-CCP do not manage to converge, thus makes it difficult for this comparison to be conclusive.

\subsection{Benchmarks}
\label{sec:experiments:benchmarks}
\noindent
The third and final set of experiments we present in this work, involves quantifying the aggregate relative performance of each solver over multiple systems contained in the benchmark suite of Sec.~\ref{sec:benchmarking:suite}. However, realizing this aggregation naively using curated runs over each problem can prove error prone and cumbersome. The most straightforward solution to this predicament is to compile a fixed set of of dual problems that can be stored and retrieved for offline solver execution, in a systematic manner and on-demand. This is the approach pioneered by Acary et al with the development and publication of the FCLIB~\cite{acary2014fclib}, presenting a set of frictional-contact problems for various large-scale mechanical systems, generated using the Siconos~\cite{acary2007siconos} and LMGC90~\cite{dubois2013lmgc90} software platforms. Other similar efforts also exist in the literature, notably that of Lu et al in~\cite{lu2016framework}. However, the problems contained in the aforementioned works cater to systems like granular media and static structures, with a clear absence of those with highly-coupled bilateral kinematic constraints that are most relevant to our application domain. 
\begin{table}[!t]
\centering
\caption{\textit{Benchmarks}: Data contents of each dual problem sample.}
\label{tab:benchmark-sample-contents}
\begin{tabular}{|l|c|l|}
\hline
\textbf{Type}                                                                  & \textbf{Entry}              & \textbf{Description}            \\
\hline
\hline
\multirow{4}{*}{\begin{tabular}[c]{@{}l@{}}Problem \\ Definition\end{tabular}} & $\mathbf{D}$                & Delassus matrix                 \\
                                                                               & $\mathbf{v}_{f}$            & free-velocity vector            \\
                                                                               & $\boldsymbol{\mu}$          & friction coefficients           \\
                                                                               & dt                          & time-step                       \\
\hline
\multirow{7}{*}{\begin{tabular}[c]{@{}l@{}}Problem\\ Dimensions\end{tabular}}  & $n_b$                       & no. of bodies                   \\
                                                                               & $n_j$                       & no. of joints                   \\
                                                                               & $n_l$                       & no. of active limits            \\
                                                                               & $n_c$                       & no. of active contacts          \\
                                                                               & $\{d_{j}\}$                 & joint dimensions                \\
                                                                               & $\{i_{j}\}$                 & joint index offset              \\
                                                                               & $\{i_{l}\}$                 & limit index offset              \\
\hline
\multirow{3}{*}{\begin{tabular}[c]{@{}l@{}}Problem \\ Properties\end{tabular}} & c                           & sample category                 \\
                                                                               & $\textbf{rank}(\mathbf{J})$ & Jacobian rank                   \\
                                                                               & $r_{m}$                     & mass-ratio                      \\
\hline
\end{tabular}
\end{table}
\begin{figure*}[!ht]
\centering
\includegraphics[width=1.0\linewidth]{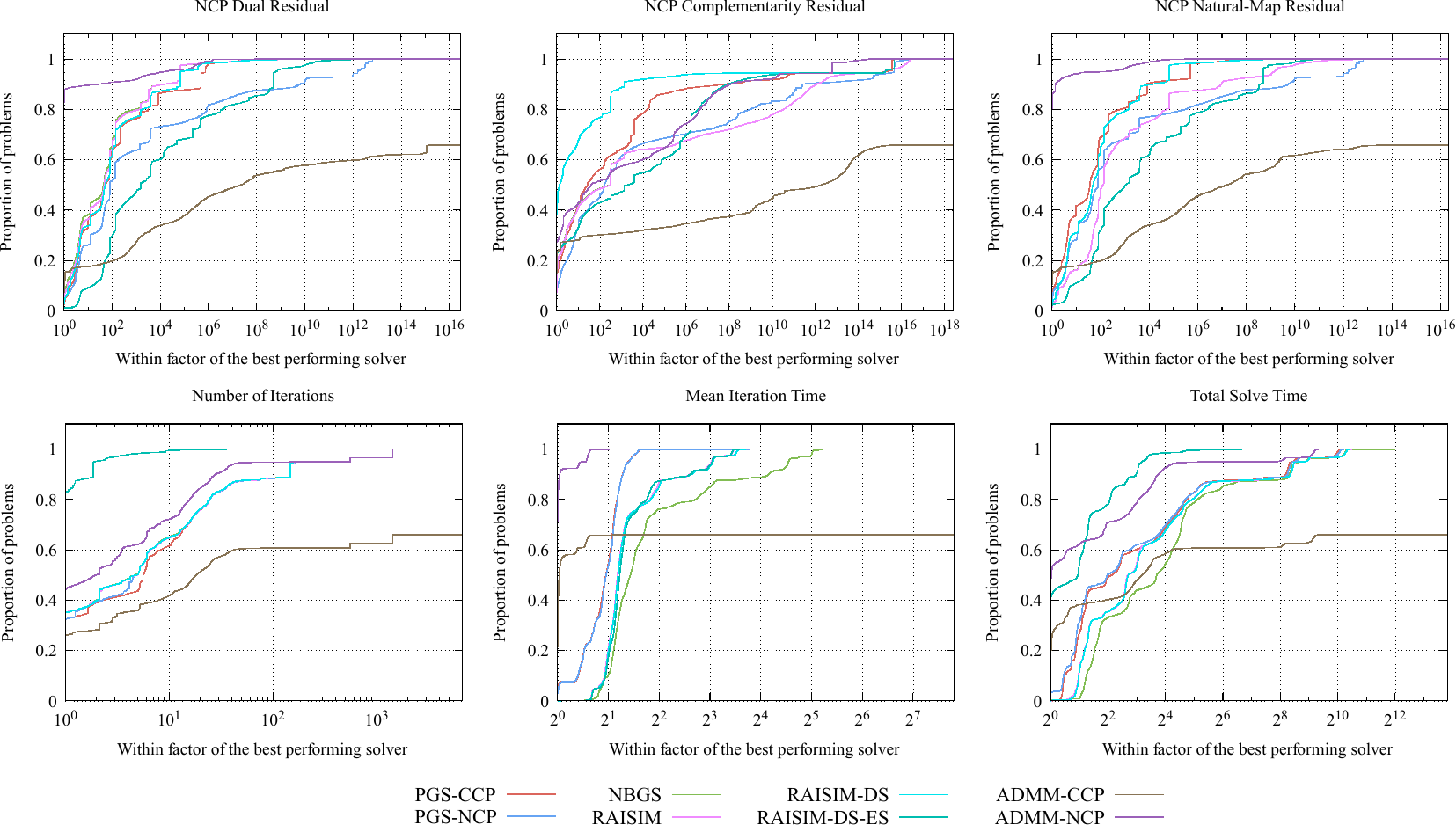}
\caption{\textit{Benchmarks}: performance profiles generated over all problems in the suite.}
\label{fig:perfprof-all}
\end{figure*}

In order to realize such a setup, we executed a single curated run of ADMM-NCP on each of the twelve systems, sampling dual problems at certain frames (i.e. time-steps). The sampling criteria is based on defining sample \textit{buckets} of fixed-size corresponding to each of the categories defined in Sec.~\ref{sec:benchmarking:sample-categories}. This ensures that the sampling process collects a diverse set of samples spanning all cases of ill-conditioning. All captured frames were compiled into a fixed data-set of $N_{p} = 13298$ dual problems stored in the HDF5 file format~\cite{hdf5}, similar to FCLIB. Each problem entry $p$ consists of a problem definition tuple $\{\mathbf{D}, \mathbf{v}_{f}, \boldsymbol{\mu}\}_{p}$ as well as additional meta-data that includes its sample categorization defined in Sec.~\ref{sec:benchmarking:sample-categories}, its ill-conditioning properties in terms of those defined in Sec.~\ref{sec:benchmarking:ill-conditioning}, and individual constraint dimensionalities. Table~\ref{tab:benchmark-sample-contents} outlines the data contents of each dual problem sample. The total relative performance of each solver aggregated over all systems in the suite is presented in Fig.~\ref{fig:perfprof-all}. Moreover, we provide additional aggregations per system categories and sample categories in Appendix.~\ref{sec:apndx:additional-figures}.

\subsection{Discussion}
\label{sec:experiments:discussion}
\noindent
We will conclude this section with a summary of the observations made throughout the course of this evaluation.\\

\subsubsection*{\textbf{Collision Detection}}
\begin{itemize}
\item One aspect of ODE that we found particularly challenging to deal with is that \textit{box-to-plane} collisions yield contacts that are spuriously re-assigned at each time-step-wise query. This resulted in a re-assignment of the index order of each contact constraint, leading to respectively erratic contact reactions for the splitting-based solvers, which are sensitive to this ordering. We thus conclude that when using this type of solver, one must pay close attention to the aforementioned behavior and ensure that a CD pipeline can render the same contact over sequential invocations.\\
\item Employing a collision detection margin is crucial to stabilizing contacts. One challenge of co-planar contacts, such as those present in the Primitive systems but in Walker as well, is that minuscule contact velocities along the respective normal directions will cause separation of the bodies and contact chattering. Defining a collision detection margin, either in the CD pipeline itself, or as a threshold in the Baumgarte-like contact constraint stabilization term, can alleviate this effect. By permitting some marginal body interpenetration, e.g. $\delta=10^{-6}$, contact chattering can be eliminated without significantly impacting the overall results of the simulation.\\
\item When evaluating collisions between bodies with mesh-based geometries an inordinate number of contacts may be rendered, which can be detrimental to simulation performance. A systematic method is thus required that can reduce these to a minimal number per-body pair. However, it is open problem as to how to define an minimal number of contacts that optimally represent the collision. We found that a naive approach using an ad-hoc filtering works very well in practice, but this can still render more contacts than required, as demonstrated on the Walker system. There, when each foot contacted the ground, up to $n_c=20$ contacts occurred on each planar foot, reduced from approximately $n_c=200$ using the minimum-distance rejection sampling-based scheme.\\ 
\end{itemize}

\subsubsection*{\textbf{Splitting-Based Solvers}}
\begin{itemize}
\item Of all splitting-based solver, RAISIM-DS and RAISIM-DS-ES performed the best, with significant differences w.r.t vanilla RAISIM, and NBGS on which they are based. This result indicates that the inclusion of the De Saxc\'e correction significantly improves the quality of the simulation w.r.t all accuracy metrics. Moreover, this result also speaks in favor of preferring holistic local contact models and solvers instead of simpler blocked methods such as PGS-CCP and PGS-NCP.\\
\item RAISIM-DS-ES was the clear overall winner among splitting-based solvers. Although the early-stopping criteria can transiently manifest artifacts due to under-estimation of the contact reactions, using constraint stabilization can provide sufficient correction to render results nearly indistinguishable from the more accurate solvers that ran for more iterations. We must credit the authors of MuJoCo for this particular improvement, as we did not encounter it in the various other works we reviewed. A more in-depth analysis for why it works so well is warranted, however.\\
\item The inclusion of the De Saxc\'e correction in the PGS-CCP was exceptionally easy to incorporate into the original PGS-CCP solver described by Tasora et al in~\cite{tasora2011matrix}, and rendered overall better results than PGS-NCP. This indicates that a decoupling of the normal and tangential dynamics of each local contact model is not as effective as enforcing duality between the augmented contact velocity $\hat{\mathbf{v}}$ and the contact reaction $\boldsymbol{\lambda}$.\\

Furthermore, the DS-corrected PGS-CCP solver seems to perform almost as well as RAISIM-DS w.r.t the dual $r_{d}$ and natural-map $r_{nat}$ accuracy metrics, while being computationally cheaper. This means that PGS-CCP is still a viable candidate for applications where real-time performance requirements may outweigh solver accuracy.\\ 
\item As note by Tasora et al in~\cite{tasora2011matrix} and Sheldon et al in~\cite{siggraph2022contact}, the order in which constraints are updated in splitting-based solvers is crucial for convergence and solution fidelity. We experimented with various combinations for ordering the constraint sets of joints, limits and contacts, and found that the best results were rendered by the order defined in Alg.~\ref{alg:solvers:projective-splitting}, where first bilateral joint constraints are updated, then limits and contact constraints last.\\
\item Sheldon et al in~\cite{siggraph2022contact} also describe additional augmentations to the per-constraint iteration, in the form of dual forward-backward and randomization of the indexing. We experimented with these modifications by respectively augmenting Alg.~\ref{alg:solvers:projective-splitting}. In the former case, we observed no substantive improvement to convergence or difference in the rendered solution. In the latter case, we also randomized the indices of each constraint subset, and did not observe any tangible improvement. In fact, the randomization leads to contact force instabilities as the directions of the corresponding forces along the respective tangential directions changed erratically due to the reordering.\\
\end{itemize}

\subsubsection*{\textbf{ADMM}}
\begin{itemize}
\item ADMM-NCP proved to be the best performing solver overall. Indeed, it was the only one capable of rendering the expected, and most physically plausible, simulations for the primitive systems. Most importantly, it demonstrated the best performance on the highly-coupled systems like Fourbar, Walker and IronMan and w.r.t all accuracy metrics. Moreover, it was the most robust w.r.t using low tolerances and restricting the number to iterations so that simulations could be run in real-time.\\
\item The proximal regularization in ADMM-NCP proved to be exceptionally effective against even severe ill-conditioning of the dual problem, particularly hyperstatic cases where the system Jacobian was significantly rank-deficient. As an added benefit, the regularization also eliminates internal forces to a very large degree, rendering minimum-norm contact reactions, which are crucial aspects of physically plausible simulations in which direct measurements of these might be necessary for control applications.\\
\item ADMM-NCP was also the most robust w.r.t inertial disparity large mass-ratios. Our analysis in Sec.~\ref{sec:experiments:primitives:Box-on-Plane} demonstrated that, in fact, ADMM-NCP is sensitive mainly w.r.t the spectral radius $L = \rho(\mathbf{D})$, of the Delassus matrix, which is also the Lipschitz constant of the NSOCP (\ref{eq:formulation:dual-forward-dynamics-nsocp}) objective function. This result indicates that its convergence properties could be further improved given a pre-conditioning of the problem, such as that proposed by Tasora et al in~\cite{tasora2021admm}. Lastly, we remark that the spectral adaptation strategy of the penalty parameter proposed by Carpentier et al in~\cite{carpentier2024unified} provide an additional mechanism to accelerate convergence, that could potentially be combined with the former.\\
\item Unfortunately, our analysis caused us to conclude that ADMM-CCP is unpredictably unstable. Although performing comparably close to ADMM-NCP on most problems, it diverged in particular instances on several of the applications-relevant systems such as Fourbar and Walker. This led us to exclude it altogether from several of the aforedescribed experiments. We are unsure as to why and when instabilities arise, as the exclusion of the De Saxc\'e correction should not, in principle, impact the solver directly. This warrants further investigation.\\
\end{itemize}

\subsubsection*{\textbf{Solver Comparisons}}
\begin{itemize}
\item Although ADMM-NCP was the best solver overall, RAISIM-DS, RAISIM-DS-ES and PGS-CCP proved to perform comparatively well. Our experiments demonstrate that in terms of raw simulation throughput, RAISIM-DS-ES was the best, and is a viable candidate for lower-fidelity simulations where accuracy is not as important as speed.\\
On many problems RAISIM-DS and PGS-CCP performed better than ADMM-NCP w.r.t the NCP complementarity residual metric $r_{ncp}$. We provide such an example of these differences in Sec.~\ref{sec:experiments:animatronics:walker} were we presented experiments on the Walker system. This result was rather surprising, given the overall better behavior of ADMM-NCP and failure of the others to perform well in cases of severe ill-conditioning. We speculate that this may be due to the lack of samples in the large-scale systems that include sliding contacts. We know from the primitive experiments involving slipping contacts that all solvers exhibit both degraded convergence and small residual velocities along contact normal directions, with ADMM-NCP performing the best in these cases, and RAISIM-DS behaving comparably close.\\
\item Our results are generally in strong agreement with those presented by Lidec et al in~\cite{lidec2024reconciling} and~\cite{lidec2024models}. Moreover, the former work inspired us to consider the introduction of the De Saxc\'e correction to all compatible solvers. We argue that such a trivial modification should definitely be a considered for applications where differentiability is not necessary.\\
\end{itemize}

\subsubsection*{\textbf{Constraint Stabilization \& Softening}}
\begin{itemize}
\item Experiments that included constraint stabilization and softening to the dual problem were in strong agreement with the theoretical result presented in Sec.~\ref{sec:construction:stabilization}, were we should the necessary conditions for an augmentation to retain the properties of the NCP. Our results show that the augmentations are effective means to improve simulation  throughput and fidelity in such a minimal-coordinate formulation.\\
\item When the mass ratios are relatively low, or close to unity, the MuJoCo-based constraint softening proves exceptionally effective in improving convergence and therefore simulation throughput. However, as we saw with the Boxes-Fixed experiment, high mass-ratios make it effectively impossible to use such a relaxation, even with extraneous tuning of the softening parameters.\\
\item Our results indicate that it is both viable and effective to combine classic Baumgarte-like constraint stabilization with the MuJoCo-based constraint softening. Indeed, their combination can bring the benefits of each and thus accelerate simulation throughput while minimizing configuration-level constraint violation.\\
\end{itemize}

\newpage
\section{Conclusion}
\label{sec:conclusion}
\noindent
This work has presented an extensive analysis of constrained rigid multi-body simulation realized using the architecture of separately solving the forward dynamics problem and performing time-integration of the state. We have provided a rigorous literature survey to build as complete as possible of a picture of past and recent recent directions. In addition, we have employed fundamental and advanced principles of mechanics to derive and construct the dual problem of forward dynamics in the form of a NCP that encompasses a majority of the complex mechanical systems that one would need to simulate in the domains of robotics and \textit{Audio-Animatronics}\textregistered\,\,. Through the lens of convex analysis, we have employed the theory of the augmented Lagrange method and proximal operators to depict a unifying perspective of strategies for solving the dual problem of forward dynamics, and have presented a summary of established and current state-of-the-art methods. 

We have contributed a framework for performing comparative analyses of the aforementioned solver strategies, founded on fundamental physical principles and an broad inclusion of systems that are relevant for practical applications. Lastly, we have presented an extensive set of experiments aiming to elucidate both qualitative and quantitative differences between solvers, on both toy and application-level problems, and present our findings to the community. We hope that this work, as the ones that inspired it, can guide others in developing the next generation of physics simulators that can further the fields of robotics, character animation, and computer graphics. 

Our experiments have lead us to conclude that both state-of-the-art methods such as ADMM as well as established approaches based on splitting-methods such as Gauss-Seidel iteration, can be made competitive on all forms of problems, given small modifications to the formulation of the problem, augmentations to stabilize it, and a careful selection of algorithm termination criteria. Most notable is how the inclusion of the so-called De Saxc\'e correction is nearly trivial, yet exceptionally effective in improving simulation accuracy and fidelity, and with negligible added computational cost.

Although we attempted to incorporate as many of the important aspects of realizing forward dynamics solvers for physical simulation, there were aspects which we did not have yet consider. These include introcution of an implicit PD controller, the application of warm-starting, pre-conditioning, and parameter adaptation that can accelerate the numerical convergence of all solvers for whom these are applicable. Moreover, we did not consider the effects of sparse matrix algebra, and intra-operation parallelization, e.g. graph-coloring and parallelization, as means to accelerate the computational time-complexity of our implementations. Lastly, we did not consider the family of approaches tackling the \textit{primal} formulation of the forward dynamics problem, and most notably those employing Newton-based second-order methods such as those employed in Drake~\cite{drake} and MuJoCO. All of the aforementioned, as well as an implementation capable of executing on the GPU, are part of future work.\\

\appendices

\newpage
\section{Properties of the De Saxc\'e Operator}
\label{sec:apndx:desaxce-properties}
\begin{definition}
In the constraint-space of each contact, we can decompose any vector $\mathbf{v} \in \mathbb{R}^{3}$ into its 2D tangential and scalar normal components in the form of
\begin{equation}
\mathbf{v} := 
\begin{bmatrix}
\mathbf{v}_{T} \\[4pt]
\text{v}_{N}
\end{bmatrix}
\,\,.
\label{eq:apndx:any-velocity-decomposition}
\end{equation}
\end{definition}

Thus, we can decompose the post-event contact velocity as
\begin{equation}
\mathbf{v}^{+} := 
\begin{bmatrix}
\mathbf{v}_{T}^{+} \\[4pt]
\text{v}_{N}^{+}
\end{bmatrix}
\,\,,
\label{eq:apndx:post-impact-velocity-decomposition}
\end{equation}
and the augmented contact velocity as
\begin{equation}
\begin{array}{ll} 
\hat{\mathbf{v}} &:=\, \mathbf{v}^{+} + \boldsymbol{\Gamma}_{\mu}(\mathbf{v}^{+}) \\[8pt]
&\,\,=
\begin{bmatrix}
\mathbf{v}_{T}^{+} \\[4pt]
\text{v}_{N}^{+}
\end{bmatrix}
+ 
\begin{bmatrix}
0 \\[4pt]
\mu \, \Vert \mathbf{v}_{T}^{+} \Vert_{2}
\end{bmatrix}
\,\,.\\[8pt]
\end{array}
\label{eq:apndx:augmented-velocity-decomposition}
\end{equation}

\subsection{Velocity Constraint Equivalence}
\label{sec:apndx:desaxce-properties:velocity-constraint-equivalence}
\begin{lemma}
The post-impact velocity $\mathbf{v}^{+}$ lies in the half-space defined by the unit vector $\mathbf{n} \in \mathbb{R}^{3}$ of the contact normal, if and only if, the augmented velocity lies in the dual cone $\mathcal{K}^{*}$, i.e. the following conditions are equivalent: 
\begin{equation}
\text{v}_{N}^{+} \geq 0 \,\,\Leftrightarrow\,\, \hat{\mathbf{v}} \in \mathcal{K}_{\mu}^{*}
\end{equation}
\label{lemma:apndx:velocity-constraint-equivalence}
\end{lemma}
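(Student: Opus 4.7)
The plan is to unwind both sides of the equivalence into scalar inequalities using the explicit decomposition~(\ref{eq:apndx:augmented-velocity-decomposition}) of $\hat{\mathbf{v}}$, and then verify each implication by direct algebraic manipulation. Throughout, I will use that $\mathcal{K}_{\mu}$ is a Lorentz cone whose dual is $\mathcal{K}_{\mu}^{*} = \mathcal{K}_{\mu^{-1}}$, i.e.
\begin{equation*}
\mathcal{K}_{\mu}^{*} = \{\mathbf{x} \in \mathbb{R}^{3} \,:\, \Vert \mathbf{x}_{T} \Vert_{2} \leq \mu^{-1} x_{N}\,,\,\, x_{N} \geq 0\}\,.
\end{equation*}
Applying this characterization to $\hat{\mathbf{v}}$ reduces the inclusion $\hat{\mathbf{v}} \in \mathcal{K}_{\mu}^{*}$ to the two scalar conditions
\begin{align*}
\hat{v}_{N} &= v_{N}^{+} + \mu \, \Vert \mathbf{v}_{T}^{+} \Vert_{2} \geq 0\,,\\
\Vert \hat{\mathbf{v}}_{T} \Vert_{2} &= \Vert \mathbf{v}_{T}^{+} \Vert_{2} \leq \mu^{-1}\,\left( v_{N}^{+} + \mu \, \Vert \mathbf{v}_{T}^{+} \Vert_{2} \right)\,.
\end{align*}

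First, for the forward implication ($v_{N}^{+} \geq 0 \Rightarrow \hat{\mathbf{v}} \in \mathcal{K}_{\mu}^{*}$), I would observe that since $\mu \geq 0$ and $\Vert \mathbf{v}_{T}^{+} \Vert_{2} \geq 0$, the hypothesis immediately gives $\hat{v}_{N} \geq 0$. For the cone inequality, distribute $\mu^{-1}$ on the right-hand side to obtain $\mu^{-1} v_{N}^{+} + \Vert \mathbf{v}_{T}^{+} \Vert_{2}$, which dominates $\Vert \mathbf{v}_{T}^{+} \Vert_{2}$ because $\mu^{-1} v_{N}^{+} \geq 0$.

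Conversely, for the reverse implication ($\hat{\mathbf{v}} \in \mathcal{K}_{\mu}^{*} \Rightarrow v_{N}^{+} \geq 0$), the second scalar condition above, after cancelling the common tangential term, reduces to $0 \leq \mu^{-1} v_{N}^{+}$, which yields $v_{N}^{+} \geq 0$. This is the cleanest route, as it extracts the sign of $v_{N}^{+}$ without needing to separately invoke the first scalar condition.

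The only subtlety I anticipate is the degenerate case $\mu = 0$, where $\mathcal{K}_{\mu}$ collapses to the nonnegative normal axis and $\mathcal{K}_{\mu}^{*}$ becomes the closed half-space $\{x_{N} \geq 0\}$. In this limit the De Saxc\'e correction vanishes so $\hat{\mathbf{v}} = \mathbf{v}^{+}$, and the equivalence becomes the tautology $v_{N}^{+} \geq 0 \Leftrightarrow v_{N}^{+} \geq 0$; I would handle this as a separate one-line case to avoid the $\mu^{-1}$ algebra above. No other obstacle is expected, as the result is fundamentally a restatement of the dual-cone characterization applied to the specific vector~(\ref{eq:apndx:augmented-velocity-decomposition}).
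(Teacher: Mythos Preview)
Your proposal is correct and follows essentially the same route as the paper: both arguments unwind the dual-cone characterization $\mathcal{K}_{\mu}^{*}=\mathcal{K}_{\mu^{-1}}$ applied to $\hat{\mathbf{v}}$, then cancel the common term $\mu\Vert\mathbf{v}_{T}^{+}\Vert_{2}$ to reduce the cone inequality to $v_{N}^{+}\geq 0$, after which the remaining sign condition on $\hat{v}_{N}$ is redundant. The only differences are cosmetic---the paper presents the reduction as a single equivalence chain rather than two implications---and your separate treatment of the degenerate case $\mu=0$, which the paper's proof tacitly assumes away when writing $\mu^{-1}$.
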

\begin{proof}
Since the dual cone of the Coulomb friction cone $\mathcal{K}_{\mu}$ is $\mathcal{K}^{*} \equiv \mathcal{K}_{\mu^{-1}}$, then any vector $\mathbf{x} \in \mathcal{K}^{*}$ satisfies the relations
\begin{equation}
\Vert \mathbf{x}_{T} \Vert_{2} \leq \frac{1}{\mu} \, \text{x}_{N} \,,\,\, \text{x}_{N} \geq 0
\,\,.
\label{eq:apndx:dual-cone-element-conditions}
\end{equation}
Applying (\ref{eq:apndx:dual-cone-element-conditions}) to the decomposition (\ref{eq:apndx:augmented-velocity-decomposition}) of $\hat{\mathbf{v}}$, we see that
\begin{equation}
\begin{array}{ll}
\hat{\mathbf{v}} \in \mathcal{K}^{*} 
\,\,\Leftrightarrow &
\Vert \mathbf{v}_{T}^{+} \Vert_{2} 
\leq \frac{1}{\mu} \left(\text{v}_{N}^{+} + \mu \, \Vert \mathbf{v}_{T}^{+} \Vert_{2} \right) \\[8pt]
& \,,\,\, \text{v}_{N}^{+} + \mu \, \Vert \mathbf{v}_{T}^{+} \Vert_{2} \geq 0
\end{array}
\,\,.
\end{equation}
The $\mu \, \Vert \mathbf{v}_{T}^{+} \Vert_{2}$ terms cancel out in the RHS of the first inequality, leading to a simplification in the form of
\begin{equation*}
\begin{array}{ll}
\hat{\mathbf{v}} \in \mathcal{K}^{*} 
\,\,\Leftrightarrow &
0 \leq \text{v}_{N}^{+} \\[8pt]
& \,,\,\, \text{v}_{N}^{+} + \mu \, \Vert \mathbf{v}_{T}^{+} \Vert_{2} \geq 0
\,\,.
\end{array}
\end{equation*}
Noting that the second inequality is now trivially satisfied, it becomes redundant, and removing it yields the equivalence
\begin{equation*}
\hat{\mathbf{v}} \in \mathcal{K}_{\mu}^{*} \,\,\Leftrightarrow\,\,  \text{v}_{N}^{+} \geq 0 
\end{equation*}
\end{proof}
\begin{remark}
Interpreting these relations as constraints, the equivalence implies that they must yield the same solution.
\end{remark}

\subsection{Invariance of the De Saxc\'e Operator}
\label{sec:apndx:desaxce-properties:desaxce-invariance}
\begin{lemma}
The De Saxc\'e operator is invariant to differences along the normal direction, i.e. for vectors $\mathbf{v}_{1}, \mathbf{v}_{2} \in \mathbb{R}^{3}$ differing by an offset in the form of 
\begin{equation*}
\mathbf{v}_{2} := \mathbf{v}_{1} + \mathbf{v}_{b} \,\,,\,\,\,
\mathbf{v}_{b} = 
\begin{bmatrix}
0\\
\text{v}_{b,N}\\
\end{bmatrix}
\,,\,\,
\end{equation*}
where $\text{v}_{b,N} \in \mathbb{R}$, then it holds that
\begin{equation}
\boldsymbol{\Gamma}_{\mu}(\mathbf{v}_{1}^{+}) = \boldsymbol{\Gamma}_{\mu}(\mathbf{v}_{2}^{+})
\end{equation}
\label{lemma:apndx:desaxce-invariance}
\end{lemma}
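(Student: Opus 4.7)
The plan is to prove Lemma \ref{lemma:apndx:desaxce-invariance} by a direct computation that exploits the fact that $\boldsymbol{\Gamma}_{\mu}(\cdot)$ depends only on the tangential part of its argument. Concretely, I would begin by recalling from (\ref{eq:models:de-saxce-correction}) that the operator is defined as
\begin{equation*}
\boldsymbol{\Gamma}_{\mu}(\mathbf{v}) = \bigl[\,0,\; 0,\; \mu \, \Vert \mathbf{v}_{T} \Vert_{2} \,\bigr]^{T},
\end{equation*}
so the only nontrivial dependence on $\mathbf{v}$ is through $\mathbf{v}_{T}$, i.e.\ the projection of $\mathbf{v}$ onto the tangent plane to the contact.

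Next, I would apply the decomposition (\ref{eq:apndx:any-velocity-decomposition}) to both $\mathbf{v}_{1}^{+}$ and $\mathbf{v}_{2}^{+}$, and observe that since the offset $\mathbf{v}_{b} = [\,\mathbf{0}_{2}^{T},\; \text{v}_{b,N}\,]^{T}$ has a zero tangential component, the tangential parts satisfy
\begin{equation*}
\mathbf{v}_{2,T}^{+} = \mathbf{v}_{1,T}^{+} + \mathbf{0}_{2} = \mathbf{v}_{1,T}^{+}.
\end{equation*}
Taking norms yields $\Vert \mathbf{v}_{2,T}^{+} \Vert_{2} = \Vert \mathbf{v}_{1,T}^{+} \Vert_{2}$, and substituting into the definition of $\boldsymbol{\Gamma}_{\mu}$ gives the desired equality $\boldsymbol{\Gamma}_{\mu}(\mathbf{v}_{1}^{+}) = \boldsymbol{\Gamma}_{\mu}(\mathbf{v}_{2}^{+})$ termwise.

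There is no real obstacle here; the result is essentially tautological once one notes that $\mathbf{v}_{b}$ lives entirely in the null space of the tangential projector used to build the De Saxcé correction. The main value of stating the lemma is what it enables downstream: any augmentation that biases $\mathbf{v}_{f}$ purely along contact normals (such as the Newtonian restitution term $\mathbf{E}\,\mathbf{v}^{-}$ in (\ref{eq:construction:impact-augmented-free-velocity}) or the stabilization term $\mathbf{v}_{B}$ in (\ref{eq:construction:total-stabilization-bias-velocity})) can be absorbed into the free velocity without altering the fixed-point structure of the NCP (\ref{eq:construction:coulomb-signorini-newton-ncp}). I would close the proof by noting this consequence explicitly, so the lemma can be cited later when justifying that the various bias velocities preserve the NCP formulation (\ref{eq:formulation:dual-forward-dynamics-ncp}).
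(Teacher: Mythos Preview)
Your proposal is correct and follows essentially the same approach as the paper: decompose $\mathbf{v}_{1}$ and $\mathbf{v}_{2}$ into tangential and normal parts, observe that the purely normal offset $\mathbf{v}_{b}$ leaves the tangential components equal, and conclude termwise from the definition (\ref{eq:models:de-saxce-correction}). The paper's proof is the same direct computation, and your closing remark about absorbing normal-direction biases into $\mathbf{v}_{f}$ matches the corollary the paper states immediately after the lemma.
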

\begin{proof}
Using the decomposition (\ref{eq:apndx:any-velocity-decomposition}), the vectors $\mathbf{v}_{1}, \mathbf{v}_{2}$ are expressed using their tangential and normal components as 
\begin{equation*}
\mathbf{v}_{2} 
=
\begin{bmatrix}
\mathbf{v}_{2,T}\\
\text{v}_{2,N}\\
\end{bmatrix}
=
\begin{bmatrix}
\mathbf{v}_{1,T}\\
\text{v}_{1,N} + \text{v}_{b,N}\\
\end{bmatrix}
\,,\,\,
\end{equation*}
and applying the De Saxc\'e operator to the first, yields
\begin{equation*}
\boldsymbol{\Gamma}_{\mu}(\mathbf{v}_{2})
=
\begin{bmatrix}
0\\
\mu \, \Vert \mathbf{v}_{2,T} \Vert_{2}
\end{bmatrix}
=
\begin{bmatrix}
0\\
\mu \, \Vert \mathbf{v}_{1,T} \Vert_{2}
\end{bmatrix}
=
\boldsymbol{\Gamma}_{\mu}(\mathbf{v}_{1})
\,.
\end{equation*}
\end{proof}
\begin{corollary}
Lemma~\ref{lemma:apndx:desaxce-invariance} allows all biases to the augmented contact velocity $\hat{\mathbf{v}}$  to be absorbed as biases to the free-velocity $\mathbf{v}_{f}$. Expressing the post-impact contact velocity as a function of the constraint reaction $\boldsymbol{\lambda} \in \mathbb{R}^{3}$ in the form of
\begin{equation*}
\mathbf{v}^{+}(\boldsymbol{\lambda}) := \mathbf{D}\,\boldsymbol{\lambda} + \mathbf{v}_{f}
\,\,,
\end{equation*}
and subsequently the augmented contact velocity as
\begin{equation*}
\hat{\mathbf{v}}(\boldsymbol{\lambda}) := \mathbf{D}\,\boldsymbol{\lambda} + \mathbf{v}_{f} 
+ \boldsymbol{\Gamma}(\mathbf{v}^{+}(\boldsymbol{\lambda})) 
\,\,,
\end{equation*}
then any bias velocity included in the definition of the complementarity conditions asserted between $\hat{\mathbf{v}}(\boldsymbol{\lambda})$ and $\boldsymbol{\lambda}$, can be transferred to $\mathbf{v}_{f}$. For example, the introduction of Newton's model of restitutive impacts, incurs the bias
\begin{equation*}
\mathbf{v}_{N}^{-} =
\begin{bmatrix}
0 & 0 & v_{N}^{-}
\end{bmatrix}^{T}
\,\,,
\end{equation*}
which we use to re-state the Signorini-Coulomb-Newton NCP model (\ref{eq:coulomb-newton-signorini-ncp}), here recapitulated as
\begin{equation*}
\mathcal{K}_{\mu}^{*} \ni \hat{\mathbf{v}}( \boldsymbol{\lambda}) + \mathbf{v}_{N}^{-} \perp \boldsymbol{\lambda} \in \mathcal{K}_{\mu}
\,\,.
\end{equation*}
Lemma~\ref{lemma:apndx:desaxce-invariance} essentially allows us to re-state the problem as

\begin{equation*}
\begin{array}{l}
{\mathbf{v}'}^{+}(\boldsymbol{\lambda}) := \mathbf{D}\,\boldsymbol{\lambda} + \mathbf{v}_{f} + \mathbf{v}_{N}^{-} \\[8pt]
\hat{\mathbf{v}}'(\boldsymbol{\lambda}) := \mathbf{D}\,\boldsymbol{\lambda} + \mathbf{v}_{f} 
+ \boldsymbol{\Gamma}({\mathbf{v}'}^{+}(\boldsymbol{\lambda})) \\[8pt]
\mathcal{K}_{\mu}^{*} \ni \hat{\mathbf{v}}'( \boldsymbol{\lambda}) \perp \boldsymbol{\lambda} \in \mathcal{K}_{\mu}
\,\,.
\end{array}
\end{equation*}
\end{corollary}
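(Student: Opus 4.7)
The plan is to reduce the corollary to a direct application of Lemma~\ref{lemma:apndx:desaxce-invariance} applied per-contact. The key observation is that the Newtonian restitution bias $\mathbf{v}_N^-$ has, by construction, only a normal component in each contact frame, which is precisely the null direction of the De Saxc\'e operator identified by the lemma. So absorbing $\mathbf{v}_N^-$ into the free velocity leaves the operator evaluation unchanged, and the two NCPs must coincide as complementarity conditions.

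First I would unfold the definitions and compute the offset ${\mathbf{v}'}^+(\boldsymbol{\lambda}) - \mathbf{v}^+(\boldsymbol{\lambda}) = \mathbf{v}_N^-$, exhibiting it as a purely normal offset. Applying Lemma~\ref{lemma:apndx:desaxce-invariance} with $\mathbf{v}_1 = \mathbf{v}^+(\boldsymbol{\lambda})$, $\mathbf{v}_2 = {\mathbf{v}'}^+(\boldsymbol{\lambda})$, and $\mathbf{v}_b = \mathbf{v}_N^-$ yields the identity $\boldsymbol{\Gamma}_\mu({\mathbf{v}'}^+(\boldsymbol{\lambda})) = \boldsymbol{\Gamma}_\mu(\mathbf{v}^+(\boldsymbol{\lambda}))$. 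Substituting this into the reformulated augmented velocity, and reading the notation in the spirit of the redefinition $\mathbf{v}_f \leftarrow \mathbf{v}_f + \mathbf{v}_N^-$ introduced in Sec.~\ref{sec:construction:definition}, gives
\begin{equation*}
\hat{\mathbf{v}}'(\boldsymbol{\lambda}) = \mathbf{D}\boldsymbol{\lambda} + \mathbf{v}_f + \mathbf{v}_N^- + \boldsymbol{\Gamma}_\mu(\mathbf{v}^+(\boldsymbol{\lambda})) = \hat{\mathbf{v}}(\boldsymbol{\lambda}) + \mathbf{v}_N^-.
\end{equation*}
Since the two augmented velocities are equal term-by-term, the two complementarity conditions $\mathcal{K}_\mu^* \ni \hat{\mathbf{v}}'(\boldsymbol{\lambda}) \perp \boldsymbol{\lambda} \in \mathcal{K}_\mu$ and $\mathcal{K}_\mu^* \ni \hat{\mathbf{v}}(\boldsymbol{\lambda}) + \mathbf{v}_N^- \perp \boldsymbol{\lambda} \in \mathcal{K}_\mu$ are literally the same inclusion, so their solution sets in $\boldsymbol{\lambda}$ coincide.

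For the system-level extension through the restitution matrix $\mathbf{E}$ of~(\ref{eq:construction:system-restitution-matrix}), the argument proceeds contact by contact. The block structure of the system-level De Saxc\'e operator in~(\ref{eq:construction:system-desaxce-operator}) ensures it acts independently on each contact block and is identically zero on joint and limit rows, while $\mathbf{E}\mathbf{v}^-$ also has support only along the normal components of each contact block. Applying the per-contact invariance to each nonzero block of $\boldsymbol{\Gamma}$ in turn gives $\boldsymbol{\Gamma}(\mathbf{v}^+(\boldsymbol{\lambda}) + \mathbf{E}\mathbf{v}^-) = \boldsymbol{\Gamma}(\mathbf{v}^+(\boldsymbol{\lambda}))$, which promotes the single-contact identity above to the full system level.

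The proof is essentially bookkeeping once Lemma~\ref{lemma:apndx:desaxce-invariance} is in hand; the substantive analytical work lies in that lemma. The only point that warrants care is verifying the alignment assumption: the absorbed bias must have no tangential component on any contact to which $\boldsymbol{\Gamma}$ applies nontrivially. This is automatic for Newton's restitution and for the Baumgarte-type stabilization biases of Sec.~\ref{sec:construction:stabilization}, but as noted in Sec.~\ref{sec:construction:softening} it holds only approximately for MuJoCo-style $\mathbf{v}_r$ due to its tangential damping terms, which is why the softening augmentation is flagged there as an approximation rather than an exact equivalence.
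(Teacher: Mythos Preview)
Your proposal is correct and follows essentially the same approach as the paper: the corollary is presented there as an immediate consequence of Lemma~\ref{lemma:apndx:desaxce-invariance} without a separate formal proof, and your argument simply fleshes out that reasoning by verifying $\boldsymbol{\Gamma}_\mu({\mathbf{v}'}^+(\boldsymbol{\lambda})) = \boldsymbol{\Gamma}_\mu(\mathbf{v}^+(\boldsymbol{\lambda}))$ and concluding that the two complementarity conditions coincide. Your observation that the displayed definition of $\hat{\mathbf{v}}'(\boldsymbol{\lambda})$ must be read with the redefinition $\mathbf{v}_f \leftarrow \mathbf{v}_f + \mathbf{v}_N^-$ in mind (otherwise the $\mathbf{v}_N^-$ term is missing from the linear part) is well spotted and consistent with the paper's intent in Sec.~\ref{sec:construction:definition}.
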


\section{Derivation of the Dual Forward Dynamics NCP}
\label{sec:apndx:ncp-derivation}

\begin{theorem}
The dual FD problem stated as the NSOCP
\begin{equation}
\label{eq:apndx:dual-forward-dynamics-nsocp}
\begin{array}{ll}
\boldsymbol{\lambda} = \displaystyle \operatorname*{argmin}_{\mathbf{x} \in \mathcal{K}} &  
\frac{1}{2} \mathbf{x}^{T} \,\mathbf{D}\, \mathbf{x}
+ \mathbf{x}^{T}\,\mathbf{v}_{f} 
+ \mathbf{x}^{T}\,\boldsymbol{\Gamma}(\mathbf{v}^{+}(\mathbf{x}))\\[8pt]
\end{array}
\end{equation}
is the Lagrange dual of the primal FD problem
\begin{align}
\label{eq:apndx:primal-forward-dynamics-cqp}
\mathbf{u}^{+} &= \displaystyle \operatorname*{argmin}_{\mathbf{x}}\,
\frac{1}{2} \Vert \mathbf{x} - \mathbf{u}_{f} \Vert_{\mathbf{M}} \\[4pt]
& \text{s.t.}\quad \mathbf{J}_{J}\,\mathbf{x} = 0 \nonumber\\[4pt]
& \quad\quad\,\, \mathbf{J}_{L}\,\mathbf{x} \geq 0 \nonumber\\[4pt]
& \quad\quad\,\, \mathbf{J}_{C,N}\,\mathbf{x} \geq 0 \nonumber
\quad,
\end{align}
when the constraint reactions are additionally required to adhere to the Coulomb friction cone model. $\mathbf{u}_{f} := \mathbf{M}^{-1}\,\left(\mathbf{u}^{-} + \Delta{t}\,\mathbf{h} \right)$, and $\mathbf{J}_{C,N}$ is a sub-matrix of the contact constraint Jacobian matrix $\mathbf{J}_{C}$ constructed from the rows corresponding to the contact normal directions.
\end{theorem}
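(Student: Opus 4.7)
The plan is to derive the NSOCP by applying standard Lagrangian duality to the primal CQP and then, in a second pass, to augment the dual with the tangential Coulomb friction structure via the bi-potential method of De Saxcé. The decomposition is natural because the primal as stated only contains the normal components of the contact constraints; the tangential reactions arise from the set-valued Coulomb force law and do not appear as explicit inequalities in (\ref{eq:apndx:primal-forward-dynamics-cqp}). I will therefore first obtain a convex SOCP dual by textbook manipulations, then show that enforcing $\boldsymbol{\lambda}_k \in \mathcal{K}_{\mu_k}$ together with the MDP (\ref{eq:contacts:maximum-dissipation-principle}) amounts precisely to adding the De Saxcé term $\mathbf{x}^{T}\boldsymbol{\Gamma}(\mathbf{v}^{+}(\mathbf{x}))$ to the dual objective.

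For the first step, I would form the Lagrangian
\begin{equation*}
\mathcal{L}(\mathbf{x}, \boldsymbol{\lambda}_J, \boldsymbol{\lambda}_L, \boldsymbol{\lambda}_{C,N})
= \tfrac{1}{2} \Vert \mathbf{x} - \mathbf{u}_f \Vert_{\mathbf{M}}^{2}
- \boldsymbol{\lambda}_J^{T}\mathbf{J}_J \mathbf{x}
- \boldsymbol{\lambda}_L^{T}\mathbf{J}_L \mathbf{x}
- \boldsymbol{\lambda}_{C,N}^{T}\mathbf{J}_{C,N} \mathbf{x},
\end{equation*}
with $\boldsymbol{\lambda}_J$ free, $\boldsymbol{\lambda}_L \geq 0$, and $\boldsymbol{\lambda}_{C,N} \geq 0$. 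Because the primal objective is strictly convex in the $\mathbf{M}$-norm (with $\mathbf{M}\in\mathbb{S}_{++}^{n_u}$) and all constraints are affine, Slater's condition is trivial and strong duality holds. The unconstrained minimization in $\mathbf{x}$ yields $\mathbf{x}^{\star} = \mathbf{u}_f + \mathbf{M}^{-1}\widetilde{\mathbf{J}}^{T}\widetilde{\boldsymbol{\lambda}}$, where $\widetilde{\mathbf{J}}$ stacks $\mathbf{J}_J,\mathbf{J}_L,\mathbf{J}_{C,N}$ and $\widetilde{\boldsymbol{\lambda}}$ stacks the corresponding multipliers. Substituting back, the Lagrange dual function collapses to $-\tfrac{1}{2}\widetilde{\boldsymbol{\lambda}}^{T}\widetilde{\mathbf{D}}\widetilde{\boldsymbol{\lambda}} - \widetilde{\boldsymbol{\lambda}}^{T}\widetilde{\mathbf{v}}_f$, with $\widetilde{\mathbf{D}}=\widetilde{\mathbf{J}}\mathbf{M}^{-1}\widetilde{\mathbf{J}}^{T}$ and $\widetilde{\mathbf{v}}_f = \widetilde{\mathbf{J}}\mathbf{u}_f$. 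Maximizing this over the partial feasible cone $\mathbb{R}^{n_{jd}}\times\mathbb{R}_{+}^{n_l}\times\mathbb{R}_{+}^{n_c}$ is equivalent to the convex SOCP (\ref{eq:formulation:dual-forward-dynamics-socp}) restricted to the normal component of contact reactions.

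The hard part is the second step: promoting this SOCP to the NSOCP by reinstating the full Coulomb cone structure $\boldsymbol{\lambda}_k \in \mathcal{K}_{\mu_k}$ including the tangential components. Because the tangential reaction is determined by the MDP (\ref{eq:contacts:maximum-dissipation-principle}) rather than by an explicit primal inequality, ordinary Lagrangian duality cannot produce it directly—this is the non-associativity obstruction noted in Sec.~\ref{sec:models:contacts:mdp}. The maneuver is to invoke De Saxcé's bi-potential: the pair $(\boldsymbol{\lambda}_k,\mathbf{v}_{c,k})$ satisfies the MDP together with the Signorini conditions if and only if the complementarity relation (\ref{eq:contacts:coulomb-signorini-ncp}) holds, i.e. $\mathcal{K}_{\mu_k}^{*}\ni \hat{\mathbf{v}}_{c,k} \perp \boldsymbol{\lambda}_k \in \mathcal{K}_{\mu_k}$. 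I would show that these KKT-like optimality conditions are exactly those generated by adding $\boldsymbol{\lambda}_k^{T}\boldsymbol{\Gamma}_{\mu_k}(\mathbf{v}_{c,k})$ to the dual objective per contact, which, stacked over all contacts via (\ref{eq:construction:system-desaxce-operator}), gives precisely the term $\mathbf{x}^{T}\boldsymbol{\Gamma}(\mathbf{v}^{+}(\mathbf{x}))$. The feasible set then expands from the partial cone above to the full composite cone $\mathcal{K}$ of (\ref{eq:construction:total-feasible-set}), completing the identification with (\ref{eq:apndx:dual-forward-dynamics-nsocp}).

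Finally, I would verify the construction by writing down the KKT conditions of the resulting NSOCP and checking that they reproduce (\ref{eq:prepost-system-velocities-eom}) for the primal velocity together with the impact-augmented Signorini-Coulomb-Newton NCP (\ref{eq:coulomb-newton-signorini-ncp}) on each contact; Lemma~\ref{lemma:apndx:velocity-constraint-equivalence} ensures the equivalence of the normal unilateral constraint in $\mathbf{v}^{+}$ and the dual-cone inclusion in $\hat{\mathbf{v}}$, closing the loop. The main obstacle remains the justification of the bi-potential step: it is where the derivation leaves standard convex duality behind, and I expect the bulk of the care to be in showing that the augmented objective's stationarity conditions agree with both the MDP and the unilateral Signorini conditions simultaneously, for which I would lean on the treatment in~\cite{desaxce1998bipotential,acary2011formulation}.
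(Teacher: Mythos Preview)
Your approach is workable in spirit but takes a genuinely different route from the paper, and the second step as you describe it has a structural gap. The paper's key move is to invoke Lemma~\ref{lemma:apndx:velocity-constraint-equivalence} \emph{before} forming any Lagrangian: the primal constraint $\mathbf{J}_{C,N}\mathbf{x}\geq 0$ is replaced by the equivalent dual-cone inclusion $\mathbf{J}_{C}\mathbf{x}+\boldsymbol{\Gamma}(\mathbf{J}_{C}\mathbf{x})\in\mathcal{K}_{\boldsymbol{\mu}}^{*}$. This substitution already carries the full 3D contact Jacobian $\mathbf{J}_{C}$ and the De Saxc\'e operator into the primal, so that when slack variables and indicator functions are introduced and the Lagrangian is formed, the multipliers $\boldsymbol{\lambda}_{C}\in\mathbb{R}^{3n_c}$ appear naturally and the conjugacy $\Psi_{\mathcal{K}_{\boldsymbol{\mu}}^{*}}\leftrightarrow\Psi_{\mathcal{K}_{\boldsymbol{\mu}}}$ produces the cone constraint on $\boldsymbol{\lambda}_{C}$ directly. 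The NSOCP then falls out of a single Lagrangian-duality pass, with the De Saxc\'e term surviving from the transformed primal constraint rather than being grafted on afterward.

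Your plan instead dualizes the normal-only primal first, which yields a dual in the \emph{reduced} variables $\widetilde{\boldsymbol{\lambda}}\in\mathbb{R}^{n_{jd}}\times\mathbb{R}_{+}^{n_l}\times\mathbb{R}_{+}^{n_c}$ with a reduced Delassus matrix $\widetilde{\mathbf{D}}=\widetilde{\mathbf{J}}\mathbf{M}^{-1}\widetilde{\mathbf{J}}^{T}$ built from $\mathbf{J}_{C,N}$ only. The difficulty is that ``expanding the feasible set to the full $\mathcal{K}$'' is not merely a change of constraint: you must simultaneously enlarge the decision variable from $\boldsymbol{\lambda}_{C,N}\in\mathbb{R}^{n_c}$ to $\boldsymbol{\lambda}_{C}\in\mathbb{R}^{3n_c}$ and replace $\widetilde{\mathbf{D}},\widetilde{\mathbf{v}}_{f}$ by the full $\mathbf{D},\mathbf{v}_{f}$ built from $\mathbf{J}_{C}$. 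That is not an ``add a term to the objective'' step, and your proposal does not explain how this enlargement is justified as a Lagrange-dual operation; your final verification via KKT matching would confirm the answer but would not constitute a derivation of it as a dual. The paper's route sidesteps this entirely by never working in the reduced contact space, at the cost of the somewhat delicate support-function argument (\ref{eq:apndx:inclusion-sub-problems-equivalence}) needed to flip $\Psi_{\mathcal{K}_{\boldsymbol{\mu}}^{*}}(\mathbf{z})+\boldsymbol{\lambda}_{C}^{T}\mathbf{z}$ into $-\Psi_{\mathcal{K}_{\boldsymbol{\mu}}}(\boldsymbol{\lambda}_{C})$.
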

\begin{remark}
The primal problem (\ref{eq:apndx:primal-forward-dynamics-cqp}) is a re-formulation of the forward dynamics NCP (\ref{eq:formulation:time-stepping-ncp}) as an optimization problem, that is transcribed using the \textit{extended principle of least constraint} and \textit{the duality principle of Gauss}. See Glocker~\cite{glocker2001setvalued} for more details regarding these principles. The resulting primal FD problem is a linearly-constrained QP directly in the post-impact generalized velocities of the system, and is more general than (\ref{eq:apndx:dual-forward-dynamics-nsocp}), as it holds for any model of friction.
\end{remark}
\begin{proof}
Starting from (\ref{eq:apndx:primal-forward-dynamics-cqp}), we first apply Lemma~\ref{lemma:apndx:desaxce-invariance} to each individual unilateral contact constraint to form the equivalence
\begin{equation} \label{eq:apndx:primal-problem-velocity-constraint-equivalence}
\mathbf{J}_{C,N}\,\mathbf{u}^{+} = \mathbf{v}_{C,N}^{+} \geq 0 \,\,\Leftrightarrow\,\, \hat{\mathbf{v}} \in \mathcal{K}_{\boldsymbol{\mu}}^{*}
\,\,\,\,,
\end{equation}
which enables the substitution 
\begin{align} \label{eq:apndx:principle-of-least-constraint-step1}
\mathbf{u}^{+} &= \displaystyle \operatorname*{argmin}_{\mathbf{x}}\,
\frac{1}{2} \Vert \mathbf{x} - \mathbf{u}_{f} \Vert_{\mathbf{M}} \\[4pt]
& \text{s.t.}\quad \mathbf{J}_{J}\,\mathbf{x} = 0 \nonumber\\[4pt]
& \quad\quad\,\, \mathbf{J}_{L}\,\mathbf{x} \geq 0 \nonumber\\[4pt]
& \quad\quad\,\, \mathbf{J}_{C}\,\mathbf{x} + \boldsymbol{\Gamma}(\mathbf{J}_{C}\,\mathbf{x}) \in \mathcal{K}_{\boldsymbol{\mu}}^{*} \nonumber
\,\,\,\,.
\end{align}
\begin{remark}
This step has implicitly introduced the Coulomb friction model via the definition of the augmented velocity.
\end{remark}

The next step involves the technique of introducing \textit{slack variables} to decouple the velocity terms from the inequalities and set-inclusions. This yields the equivalent problem
\begin{align} \label{eq:apndx:principle-of-least-constraint-step2}
\mathbf{u}^{+} &= \displaystyle \operatorname*{argmin}_{\mathbf{x},\mathbf{y},\mathbf{z}}\,
\frac{1}{2} \Vert \mathbf{x} - \mathbf{u}_{f} \Vert_{\mathbf{M}} \\[4pt]
& \text{s.t.}\quad \mathbf{J}_{J}\,\mathbf{x} = 0 \nonumber\\[4pt]
& \quad\quad\,\, \mathbf{y} = \mathbf{J}_{L}\,\mathbf{x} \nonumber\\[4pt]
& \quad\quad\,\, \mathbf{y} \in \mathbb{R}_{+}^{n_l} \nonumber\\[4pt]
& \quad\quad\,\, \mathbf{z} = \mathbf{J}_{C}\,\mathbf{x} + \boldsymbol{\Gamma}(\mathbf{J}_{C}\,\mathbf{x}) \nonumber\\[4pt]
& \quad\quad\,\, \mathbf{z} \in \mathcal{K}_{\boldsymbol{\mu}}^{*} \nonumber
\,\,\,\,,
\end{align}
where $\mathbf{y}\,,\,\,\mathbf{z}$ are the slack variables corresponding to the limit and contact constraints. This decoupling enables us to remove the inequalities and set-inclusions using indicator functions as
\begin{align} \label{eq:apndx:principle-of-least-constraint-step3}
\mathbf{u}^{+} &= \displaystyle \operatorname*{argmin}_{\mathbf{x},\mathbf{y},\mathbf{z}}\,
\frac{1}{2} \Vert \mathbf{x} - \mathbf{u}_{f} \Vert_{\mathbf{M}} 
+ \Psi_{\mathbb{R}_{+}^{n_l}}(\mathbf{y})
+ \Psi_{\mathcal{K}_{\boldsymbol{\mu}}^{*}}(\mathbf{z}) \\[4pt]
& \text{s.t.}\quad \mathbf{J}_{J}\,\mathbf{x} = 0 \nonumber\\[4pt]
& \quad\quad\,\, \mathbf{y} - \mathbf{J}_{L}\,\mathbf{x} = 0 \nonumber\\[4pt]
& \quad\quad\,\, \mathbf{z} - \mathbf{J}_{C}\,\mathbf{x} - \boldsymbol{\Gamma}(\mathbf{J}_{C}\,\mathbf{x}) = 0\nonumber
\quad.
\end{align}
The primal problem (\ref{eq:apndx:principle-of-least-constraint-step3}) now only contains equality constraints, for which defining the vector Lagrange multipliers $\boldsymbol{\lambda} = [\boldsymbol{\lambda}_{J}^{T} \,\, \boldsymbol{\lambda}_{L}^{T} \,\, \boldsymbol{\lambda}_{C}^{T}]^{T}$, we form the Lagrangian functional
\begin{equation} \label{eq:apndx:lagrangian-function}
\begin{array}{ll}
\mathcal{L}(\mathbf{x}, \mathbf{y}, \mathbf{z}, \boldsymbol{\lambda}) := 
& \frac{1}{2} \Vert \mathbf{x} - \mathbf{u}_{f} \Vert_{\mathbf{M}} 
+ \Psi_{\mathbb{R}_{+}^{n_l}}(\mathbf{y})
+ \Psi_{\mathcal{K}_{\boldsymbol{\mu}}^{*}}(\mathbf{z})\\[8pt]
& -\,\, \boldsymbol{\lambda}_{J}^{T} \, \mathbf{J}_{J}\,\mathbf{x} 
- \boldsymbol{\lambda}_{L}^{T} \, \left( \mathbf{J}_{L}\,\mathbf{x} - \mathbf{y} \right) \\[8pt]
& -\,\, \boldsymbol{\lambda}_{C}^{T} \left(\mathbf{J}_{C}\,\mathbf{x} + \boldsymbol{\Gamma}(\mathbf{J}_{C}\,\mathbf{x}) - \mathbf{z} \right)
\quad,
\end{array}
\end{equation}
and thus the post-event generalized velocities $\mathbf{u}^{+}$ are solutions of the \textit{saddle-point problem}
\begin{equation} \label{eq:apndx:lagrangian-saddle-point-problem}
\mathbf{u}^{+} = \operatorname*{argm} \min_{\mathbf{x}, \mathbf{y}, \mathbf{z}} \max_{\boldsymbol{\lambda}} \, 
\mathcal{L}(\mathbf{x}, \mathbf{y}, \mathbf{z}, \boldsymbol{\lambda}) 
\,\,.
\end{equation}

At this point we will begin to introduce what we already know from mechanics in order to start removing variables and terms from (\ref{eq:apndx:lagrangian-function}). First of all, we know that the EoM (\ref{eq:models:time-stepping-eom}) result from the gradient $\nabla_{\mathbf{x}}\mathcal{L}(\mathbf{x}^{*}, \mathbf{y}, \mathbf{z}, \boldsymbol{\lambda}) = 0$. Exploiting this fact, we substitute $\mathbf{x}^{*} = \mathbf{u}^{+}$ in (\ref{eq:apndx:lagrangian-function}), and use (\ref{eq:models:time-stepping-eom}) to expand all quadratic terms. This results in a crucial property of the objective function (\ref{eq:apndx:primal-forward-dynamics-cqp}), which conveniently reduces it to
\begin{equation} \label{eq:apndx:primal-dual-objective-equivalence}
\frac{1}{2} \Vert \mathbf{u}^{+} - \mathbf{u}_{f} \Vert_{\mathbf{M}} = 
\frac{1}{2} \boldsymbol{\lambda}^{T} \mathbf{D} \boldsymbol{\lambda} = 
\frac{1}{2} \Vert \boldsymbol{\lambda} \Vert_{\mathbf{D}}
\,\,\,\,.
\end{equation}
Substituting (\ref{eq:apndx:primal-dual-objective-equivalence}) in (\ref{eq:apndx:lagrangian-function}), the Lagrangian becomes
\begin{equation}
\begin{array}{ll}
\mathcal{L}(\mathbf{u}^{+}, \mathbf{y}, \mathbf{z}, \boldsymbol{\lambda}) &= 
\frac{1}{2} \boldsymbol{\lambda}^{T} \mathbf{D} \boldsymbol{\lambda}
+ \Psi_{\mathbb{R}_{+}^{n_l}}(\mathbf{y})
+ \Psi_{\mathcal{K}_{\boldsymbol{\mu}}^{*}}(\mathbf{z}) \\[8pt]
& -\,\, \boldsymbol{\lambda}_{J}^{T} \, \mathbf{J}_{J}\,\mathbf{u}^{+}
- \boldsymbol{\lambda}_{L}^{T} \, \left( \mathbf{J}_{L}\,\mathbf{u}^{+} - \mathbf{y} \right)\\[8pt]
& -\,\, \boldsymbol{\lambda}_{C}^{T} \left( \mathbf{J}_{C}\,\mathbf{u}^{+} + \boldsymbol{\Gamma}(\mathbf{J}_{C}\,\mathbf{u}^{+}) -\mathbf{z} \right)
\end{array}
\,\,.
\end{equation}

Now comes one of the most difficult and subtle parts of this derivation. By duality of the cones $\mathbb{R}_{+}^{n} \leftrightarrow \mathbb{R}_{+}^{n}$ and $\mathcal{K}_{\boldsymbol{\mu}} \leftrightarrow \mathcal{K}_{\boldsymbol{\mu}}^{*}$, as well as the definition (\ref{eq:solvers:support-function}) of their respective \textit{support functions}, when $\Psi_{\mathcal{K}_{\boldsymbol{\mu}}^{*}}(\mathbf{z}) = 0$ and $\Psi_{\mathcal{K}_{\boldsymbol{\mu}}}(\boldsymbol{\lambda}_{C}) = 0$, then necessarily $\displaystyle \min_{\mathbf{z}}\boldsymbol{\lambda}_{C}^{T}\,\mathbf{z} = 0$. These observations result in the following equivalences between the primal and dual sub-problems for the limit and contact constraints: 
\begin{equation} \label{eq:apndx:inclusion-sub-problems-equivalence}
\begin{array}{cc}
\displaystyle \min_{\mathbf{y}} \Psi_{\mathbb{R}_{+}^{n_l}}(\mathbf{y}) 
\,\,\Leftrightarrow\,\, 
\displaystyle \max_{\boldsymbol{\lambda}_{L}} - \Psi_{\mathbb{R}_{+}^{n_l}}(\boldsymbol{\lambda}_{L})\\[8pt]
\displaystyle \min_{\mathbf{z}} \Psi_{\mathcal{K}_{\boldsymbol{\mu}}^{*}}(\mathbf{z}) + \boldsymbol{\lambda}_{C}^{T}\,\mathbf{z}
\,\,\Leftrightarrow\,\, 
\max_{\boldsymbol{\lambda}_{C}} -\Psi_{\mathcal{K}_{\boldsymbol{\mu}}}(\boldsymbol{\lambda}_{C})
\end{array}
\end{equation}

Thus, when $\mathbf{y}^{*} = \mathbf{v}^{+}_{L}$ and $\mathbf{z}^{*} = \hat{\mathbf{v}}_{C}$, and exploiting (\ref{eq:apndx:inclusion-sub-problems-equivalence}), the problem is effectively equivalent to 
\begin{equation} \label{eq:apndx:principle-of-least-constraint-step4}
\begin{array}{ll}
\mathcal{L}(\mathbf{u}^{+}, \mathbf{v}^{+}_{L}, \hat{\mathbf{v}}_{C}, \boldsymbol{\lambda}) &= 
\frac{1}{2} \boldsymbol{\lambda}^{T} \mathbf{D} \boldsymbol{\lambda}\\[8pt]
& - \Psi_{\mathbb{R}_{+}^{n_l}}(\boldsymbol{\lambda}_{L})
- \Psi_{\mathcal{K}_{\boldsymbol{\mu}}}(\boldsymbol{\lambda}_{C})
\\[8pt]
& -\,\, \boldsymbol{\lambda}_{J}^{T} \, \mathbf{J}_{J}\,\mathbf{u}^{+}
- \boldsymbol{\lambda}_{L}^{T} \, \mathbf{J}_{L}\,\mathbf{u}^{+}\\[8pt]
& -\,\, \boldsymbol{\lambda}_{C}^{T} \, \mathbf{J}_{C}\,\mathbf{u}^{+} - \boldsymbol{\lambda}_{C}^{T}\boldsymbol{\Gamma}(\mathbf{J}_{C}\,\mathbf{u}^{+})
\,\,\,\,.
\end{array}
\end{equation}

We can simplify (\ref{eq:apndx:principle-of-least-constraint-step4}) by grouping all linear terms using the definition of the total Jacobian matrix $\mathbf{J}$ to form 
\begin{equation} \label{eq:apndx:principle-of-least-constraint-step5}
\begin{array}{ll}
\mathcal{L}(\mathbf{u}^{+}, \mathbf{v}^{+}_{L}, \hat{\mathbf{v}}_{C}, \boldsymbol{\lambda}) &= 
- \frac{1}{2} \boldsymbol{\lambda}^{T} \mathbf{D} \boldsymbol{\lambda}\\[8pt]
& - \Psi_{\mathbb{R}_{+}^{n_l}}(\boldsymbol{\lambda}_{L})
- \Psi_{\mathcal{K}_{\boldsymbol{\mu}}}(\boldsymbol{\lambda}_{C})\\[8pt]
&\quad - \boldsymbol{\lambda}^{T} \, \mathbf{J}\,\mathbf{u}^{+}
- \boldsymbol{\lambda}_{C}^{T}\boldsymbol{\Gamma}(\mathbf{v}_{C}^{+}(\boldsymbol{\lambda}))
\,\,\,\,,
\end{array}
\end{equation}
and then use (\ref{eq:prepost-system-velocities-eom}) to substitute $\mathbf{u}^{+}$, resulting in
\begin{equation} \label{eq:apndx:principle-of-least-constraint-step6}
\begin{array}{ll}
\mathcal{L}(\mathbf{u}^{+}, \mathbf{v}^{+}_{L}, \hat{\mathbf{v}}_{C}, \boldsymbol{\lambda}) &= 
\frac{1}{2} \boldsymbol{\lambda}^{T} \mathbf{D} \boldsymbol{\lambda}\\[8pt]
& - \Psi_{\mathbb{R}_{+}^{n_l}}(\boldsymbol{\lambda}_{L})
- \Psi_{\mathcal{K}_{\boldsymbol{\mu}}}(\boldsymbol{\lambda}_{C})\\[8pt]
& - \boldsymbol{\lambda}^{T} \, \mathbf{D}\, \boldsymbol{\lambda} - \boldsymbol{\lambda}^{T}\,\mathbf{v}_{f}\\[8pt]
& - \boldsymbol{\lambda}_{C}^{T}\boldsymbol{\Gamma}(\mathbf{v}_{C}^{+}(\boldsymbol{\lambda}))
\,\,\,\,,
\end{array}
\end{equation}
which is now only a function of the Lagrange multipliers $\boldsymbol{\lambda}$. Before proceeding with the last step, simplifying the quadratic terms, brings us to the dual maximization problem
\begin{equation} \label{eq:apndx:principle-of-least-constraint-step7}
\begin{array}{ll}
\displaystyle \max_{\boldsymbol{\lambda}} &  
-\frac{1}{2} \boldsymbol{\lambda}^{T} \mathbf{D} \boldsymbol{\lambda}
- \boldsymbol{\lambda}^{T}\,\mathbf{v}_{f} 
- \boldsymbol{\lambda}_{C}^{T}\boldsymbol{\Gamma}(\mathbf{v}_{C}^{+}(\boldsymbol{\lambda}))\\[8pt]
&\quad
- \Psi_{\mathbb{R}_{+}^{n_l}}(\boldsymbol{\lambda}_{L})
- \Psi_{\mathcal{K}_{\boldsymbol{\mu}}}(\boldsymbol{\lambda}_{C})
\,\,\,\,.
\end{array}
\end{equation}

The final step involves using the definition of the total cone
\begin{equation*}
\mathcal{K} := \mathbb{R}^{n_{jd}} \times \mathbb{R}_{+}^{n_l} \times \mathcal{K}_{\boldsymbol{\mu}}
\,\,,
\end{equation*}
and inversion of the sign of the objective function to render a minimization problem. These at last yield the Lagrange dual of the primal FD problem, in the form of
\begin{equation} \label{eq:apndx:principle-of-least-constraint-step8}
\begin{array}{ll}
\displaystyle \min_{\boldsymbol{\lambda} \in \mathcal{K}} &  
\frac{1}{2} \boldsymbol{\lambda}^{T} \mathbf{D} \boldsymbol{\lambda}
+ \boldsymbol{\lambda}^{T}\,\mathbf{v}_{f} 
+ \boldsymbol{\lambda}^{T}\boldsymbol{\Gamma}(\mathbf{v}^{+}(\boldsymbol{\lambda}))\\[8pt]
\end{array}
\end{equation}
\end{proof}

\section{Solving of the Single-Contact Problem}
\label{sec:apndx:single-contact}

\subsection{A Semi-Analytical Approach}
\label{sec:apndx:single-contact:analytical-formulation}

This derivation is based on that presented by Preclik et al in~\cite{preclik2014models, preclik2018mdp}, and assumes only a single contact is present. Thus we can assume that  $\mathbf{D} \in \mathbb{S}_{++}^{3}$, $\mathbf{v}_{f} \in \mathbb{R}^{3}$, and $\mathcal{K}_{\mu} \subset \mathbb{R}^{3}$. The dual FD problem can be stated 
\begin{equation}
\begin{array}{rlclcl}
\displaystyle \displaystyle 
\min_{\boldsymbol{\lambda}} \,
    & \frac{1}{2} \, \boldsymbol{\lambda}^{T} \, \mathbf{D} \, \boldsymbol{\lambda}
    + {\boldsymbol{\lambda}}^{T} \, (\mathbf{v}_{f} + \mathbf{s}) \\[4pt]
\textrm{s.t.} & \boldsymbol{\lambda} \in \mathcal{K}_{\mu} \\[4pt]
              & \text{v}_{c,N}^{+}(\boldsymbol{\lambda}) \geq 0 \\[4pt]
              & \lambda_{N}^{T} \, \text{v}_{c,N}^{+}(\boldsymbol{\lambda}) = 0
\end{array}
\,\, ,
\label{eq:single-contact-dual-problem}
\end{equation}
where the Signorini conditions are to be enforced explicitly, as opposed to be assumed to hold only at the solution. Since we are only dealing with a single contact, we can express the problem definition using the coefficients of the Delassus matrix and the free velocity as
\begin{equation}
\mathbf{D} := 
\begin{bmatrix}
D_{tt} & D_{to} & D_{tn} \\
D_{ot} & D_{oo} & D_{on} \\
D_{nt} & D_{no} & D_{nn} 
\end{bmatrix}
\,\, , \,\,
\mathbf{v}_{f} := 
\begin{bmatrix}
\text{v}_{f,t} \\
\text{v}_{f,o} \\
\text{v}_{f,n} 
\end{bmatrix}
\end{equation}
The total, tangential and normal contact reactions expressed in local coordinates are respectively
\begin{equation}
\boldsymbol{\lambda} :=
\begin{bmatrix}
\lambda_{t} \\
\lambda_{o} \\
\lambda_{n} 
\end{bmatrix}
\, , \,\,
\boldsymbol{\lambda}_{T} :=
\begin{bmatrix}
\lambda_{t} \\
\lambda_{o} \\
0
\end{bmatrix}
\, , \,\,
\boldsymbol{\lambda}_{N} :=
\begin{bmatrix}
0 \\
0 \\
\lambda_{n}
\end{bmatrix}
\,\, .
\end{equation}

Next we will project the Delassus matrix and the free velocity onto the local axis of the contact normal $\mathbf{n}$, here denoted as $\boldsymbol{e}_{n} = [0 \, 0 \, 1]^{T}$, to form
\begin{equation}
\mathbf{d}_{n} := \boldsymbol{e}_{n}^{T} \, \mathbf{D}
=
\begin{bmatrix}
D_{tn} \\
D_{on} \\
D_{nn} 
\end{bmatrix}
\,\,\,\,
\text{v}_{f,n} = \boldsymbol{e}_{n}^{T} \, \mathbf{v}_{f}
\,\,,
\label{eq:plane-of-maximum-compression-parameters}
\end{equation}
to simplify the expression of the post-event contact velocity
\begin{equation}
\mathbf{v}_{c,N}^{+} = 
\boldsymbol{e}_{n}^{T} \, (\mathbf{D} \, \boldsymbol{\lambda} + \mathbf{v}_{f})
= \mathbf{d}_{n}^{T} \, \boldsymbol{\lambda} + \text{v}_{f,n} \geq 0
\,\, .
\label{eq:post-event-velocity-normal-projection}
\end{equation}
Thus, the single-contact dual problem is rewritten as 
\begin{equation}
\begin{array}{rlclcl}
\displaystyle \displaystyle 
\min_{\boldsymbol{\lambda}} \,
    & \frac{1}{2} \, \boldsymbol{\lambda}^{T} \, \mathbf{D} \, \boldsymbol{\lambda}
    + \boldsymbol{\lambda}^{T} \, \mathbf{v}_{f} \\[4pt]
\textrm{s.t.} & \lambda_{n} \geq 0 \\[4pt]
              & \mathbf{d}_{n}^{T} \, \boldsymbol{\lambda} \geq 0 \\[4pt]
              & \mathbf{d}_{n}^{T} \, \boldsymbol{\lambda} + \text{v}_{f,n} \geq 0 \\[4pt]
              & \Vert \boldsymbol{\lambda}_{T} \Vert_{2} \leq \mu \, \lambda_{n} \quad .
\end{array}
\label{eq:reduced-single-contact-dual-problem}
\end{equation}

A key aspect of this derivation is the consideration of the geometry that is implied by (\ref{eq:reduced-single-contact-dual-problem}), as depicted in Fig.\ref{fig:contact-geometry}. The friction cone constraint $\Vert \boldsymbol{\lambda}_{T} \Vert_{2} \leq \mu \, \lambda_{n}$ defines a conic surface boundary that extends in both half-spaces of the normal axis. The unilateral force constraint $\lambda_{n} \geq 0$ requires us to exclude the cone below the tangential plane. The inequality $\mathbf{d}_{n}^{T} \, \boldsymbol{\lambda} + \text{v}_{f,n} \geq 0$ of the unilateral contact hypothesis defines a plane that forms a closed upper bound on the contact reaction along the normal axis. If we know that the contact will remain closed, or in slip, then we can expect the contact reaction to necessarily lie on this plane, because (\ref{eq:post-event-velocity-normal-projection}) becomes an equality constraint (i.e. the constraint is active). The parameters (\ref{eq:plane-of-maximum-compression-parameters}) fully specify this plane, as $\mathbf{d}_{n}$ is parallel to the plane normal, and $\boldsymbol{\lambda}_{T=0}=[0 \, 0 \, D_{nn}^{-1}\,\text{v}_{f,n}]^{T}$ defines the point at which the plane intersects the normal axis. A mechanical interpretation of this plane serving as an upper-bound on the normal contact reaction gives us cause to refer to it as the \textit{plane of maximum compression} $\mathcal{P}_{mc}$. 
\begin{figure}[!t]
\centering
\includegraphics[width=1.0\linewidth]{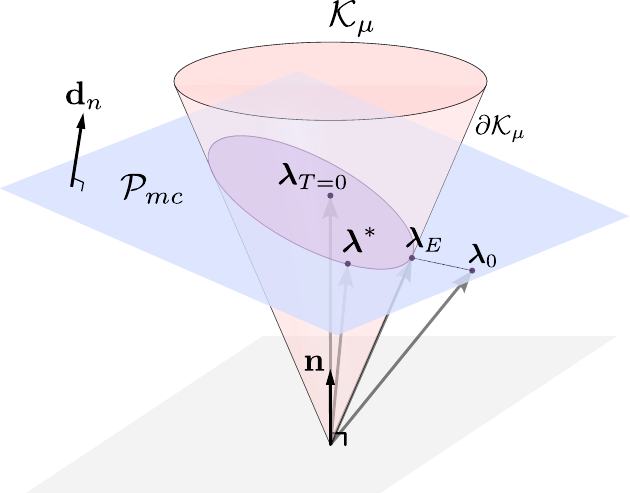}
\caption{A visualization of the geometry implied by the constraints of the single-contact dual problem (\ref{eq:reduced-single-contact-dual-problem}). It depicts the contact normal $\mathbf{n}$, the tangential plane (gray), the friction cone $\mathcal{K}_{\mu}$ (red) with boundary $\partial\mathcal{K}_{\mu}$, the plane of maximum compression $\mathcal{P}_{mc}$ (blue), and the elliptic conic-section $\mathcal{P}_{mc} \cap \partial\mathcal{K}_{\mu}$ (purple). $\mathcal{P}_{mc}$ intersects the local normal axis at $\boldsymbol{\lambda}_{T=0}$. The unconstrained solution $\boldsymbol{\lambda}_{0}$ is projected by the MDP as $\boldsymbol{\lambda}^{*}$, while the naive projection using the Euclidean minimum-distance metric is denoted as $\boldsymbol{\lambda}_{E}$.}
\label{fig:contact-geometry}
\end{figure}

Now, let us also consider when and how this plane intersects the friction cone $\mathcal{K}_{\mu}$. If the contact is sliding, then we know that the contact reaction will lie on the boundary $\partial\mathcal{K}_{\mu}$. This means that the \textit{conic section} $\mathcal{P}_{mc} \cap \partial\mathcal{K}_{\mu}$ defines the feasible set for this case. Thus, we can examine the possible contact states $m$ of the solution to (\ref{eq:reduced-single-contact-dual-problem}), and what they imply geometrically. For this analysis, we also need the unconstrained solution
\begin{equation}
\boldsymbol{\lambda}_{0} = - \mathbf{D}^{-1} \, \mathbf{v}_{f}
\,\, ,
\label{eq:single-contact-unconstrained-solution}
\end{equation}
Note that (\ref{eq:single-contact-unconstrained-solution}) always holds, since in the single-contact case, $\mathbf{D}$ is always invertible\footnote{i.e. as there are no couplings that can cause it to become rank-deficient. Only the nonphysical cases of zero mass, singular moment of inertia, or an infinite lever-arm can cause it to become singular.}. The three contact states defined in Sec.\ref{sec:models:contacts} are:
\begin{itemize}
\item $m=\text{Open}$: 
The Signorini conditions become $\mathbf{v}_{c,N}^{+} > 0$, $\boldsymbol{\lambda}_{N} = 0$, and therefore $\boldsymbol{\lambda}_{T} = 0$ as well. $\boldsymbol{\lambda} \in \mathcal{K}_{\mu}$ (i.e. in the interior of the cone), and thus $\boldsymbol{\lambda} = \boldsymbol{\lambda}_{0}$. Using (\ref{eq:post-event-velocity-normal-projection}), we can reduce this case to $\text{v}_{f,n} > 0$.\\
\item $m=\text{Stick}$: The Signorini conditions become $\mathbf{v}_{c,N}^{+} = 0$, $\boldsymbol{\lambda}_{N} > 0$, and due to Coulomb's law $\boldsymbol{\lambda}_{T} = 0$. $\boldsymbol{\lambda} \in \mathcal{K}_{\mu}$ and thus $\boldsymbol{\lambda} = \boldsymbol{\lambda}_{0}$. Using (\ref{eq:post-event-velocity-normal-projection}), this reduces to $\text{v}_{f,n} \leq 0$.\\
\item $m=\text{Slip}$: The Signorini conditions become $\mathbf{v}_{c,N}^{+} = 0$, $\boldsymbol{\lambda}_{N} > 0$, and because of Coulomb's law $\Vert \boldsymbol{\lambda}_{T} \Vert_{2} \leq \mu \, \lambda_{n}$. Therefore, the only conditions on the constrained and unconstrained solution are respectively
$\boldsymbol{\lambda} \in \mathcal{P}_{mc} \cap \partial\mathcal{K}_{\mu}$ and $\boldsymbol{\lambda}_{0} \in \mathcal{P}_{mc}$.
\end{itemize}
We now have a partial sketch of the semi-analytical solution. The open and sticking contact cases are solely determined by the given $\text{v}_{f,n}$, where in both $\boldsymbol{\lambda} = \boldsymbol{\lambda}_{0}$. The only non-trivial case is that of slipping contacts. Employing the approach of Preclik et al from \cite{preclik2018mdp}, we can apply the MDP, together with the geometry of conic sections, to re-phrase (\ref{eq:reduced-single-contact-dual-problem}) for the slipping case and yield the maximally dissipative contact reaction.  

As conic-sections can be expressed succinctly using \textit{polar coordinates}, let's re-parameterize the contact reaction as
\begin{subequations}
\begin{equation}
\lambda_{t} = r \, \cos\phi
\end{equation}
\begin{equation}
\lambda_{o} = r \, \sin\phi
\end{equation}
\begin{equation}
\lambda_{n} = - D_{nn}^{-1} \, (\text{v}_{f,n} + D_{nt} \, r \, \cos\phi + D_{no} \, r \, \sin\phi )
\,\, ,
\end{equation}
\begin{equation}
\boldsymbol{\lambda}_{p}(r,\phi) =
\begin{bmatrix}
\lambda_{t} & \lambda_{o} & \lambda_{n}
\end{bmatrix}^{T}
\end{equation}
\label{eq:polar-coordinates-contact-reaction}
\end{subequations}
where $r \in \mathbb{R}_{+}$ and $\phi \in \mathbb{R}$ are the radial and angular coordinates, respectively. Using (\ref{eq:polar-coordinates-contact-reaction}) in $\Vert \boldsymbol{\lambda}_{T} \Vert_{2} \leq \mu \, \lambda_{n}$ yeilds
\begin{equation}
r \, ( D_{nn} + \mu \, D_{nt} \, \cos\phi +  \mu \, D_{no} \, \sin\phi ) \leq -\mu \, \text{v}_{f,n}
\,\, .
\label{eq:polar-coordinates-fricion-cone-inequality}
\end{equation}
The multiplicand of $r$ is the \textit{radial scaling} function
\begin{subequations}
\begin{equation}
f_{r}(\phi) := 
D_{nn} + \mu \, D_{nt} \, \cos\phi +  \mu \, D_{no} \, \sin\phi
\label{eq:polar-function-f-def}
\end{equation}
\begin{equation}
= 
D_{nn} + \mu \, \sqrt{D_{nt}^{2} + D_{no}^{2}} \, \cos \left( \phi - \tan^{-1}\left(\frac{D_{no}}{D_{nt}}\right) \right)
\label{eq:polar-function-f-alt}
\,\, ,
\end{equation}
\label{eq:polar-function-f}
\end{subequations}
where $f_{r} : \mathbf{R} \rightarrow \mathbb{R}$. When $f_{r}(\phi) \leq 0$, (\ref{eq:polar-coordinates-fricion-cone-inequality}) holds trivially. However, when  $f_{r}(\phi) > 0$, we need to use (\ref{eq:polar-function-f-alt}) to determine the \textit{feasible range} of angles $\phi \in \mathcal{I}_{F} + 2\pi\mathbb{N}$, for which (\ref{eq:polar-coordinates-fricion-cone-inequality}) holds, i.e.
\begin{subequations}
\begin{equation}
\mathcal{I}_{F} =
\begin{cases}
(-\Delta{\phi},\Delta{\phi}] + \phi_{0},   & , D_{nn} \leq \mu \, \sqrt{D_{nt}^{2} + D_{no}^{2}} \\
[0, 2\pi) &, otherwise
\end{cases}
\,\,,
\label{eq:polar-angle-feasible-range-def}
\end{equation}
\begin{equation}
\Delta{\phi} = \cos^{-1}{\frac{-D_{nn}}{\mu \, \sqrt{D_{nt}^{2} + D_{no}^{2}}}}
\,\,,\,\,
\phi_{0} = \tan^{-1}{\frac{D_{no}}{D_{nt}}}
\,\,.
\label{eq:feasible-region-range}
\end{equation}
\label{eq:polar-angle-feasible-region}
\end{subequations}
According to (\ref{eq:polar-coordinates-fricion-cone-inequality}), $f_{r}(\phi)$ determines an upper-bound on $r$:
\begin{equation}
0 < r \leq \frac{-\mu\,\text{v}_{f,n}}{\max(0, f_{r}(\phi))}
\,\, .
\label{eq:polar-coordinates-radial-bounds}
\end{equation}
By defining the polar radius upper-bound as the function
\begin{equation}
\bar{r}(\phi) := \frac{-\mu\,\text{v}_{f,n}}{f_{r}(\phi)}
\,\,,
\label{eq:polar-angle-radius}
\end{equation}
we can express the tangential contact reaction as
\begin{equation}
\boldsymbol{\lambda}_{T}(\phi) := \bar{r}(\phi) \,
\begin{bmatrix}
    \cos\phi \\
    \sin\phi
\end{bmatrix}
\,\, .
\label{eq:polar-angle-tangent-reaction}
\end{equation}
Projecting the $\mathbf{D}$ and $\mathbf{v}_{f}$ onto the tangential plane as
\begin{equation}
\hat{\mathbf{D} }:=
\begin{bmatrix}
D_{tt} - D_{nn}^{-1}\,D_{nt}^{2} & D_{to} - D_{nn}^{-1}\,D_{nt}\,D_{no} \\
D_{to} - D_{nn}^{-1}\,D_{nt}\,D_{no} & D_{oo} - D_{nn}^{-1}\,D_{no}^{2}
\end{bmatrix}
\label{eq:tangential-delassus}
\end{equation}
\begin{equation}
\hat{\mathbf{v} }:=
\begin{bmatrix}
\text{v}_{f,t} - D_{nn}^{-1}\,D_{nt}\,\text{v}_{f,n} \\
\text{v}_{f,o} - D_{nn}^{-1}\,D_{no}\,\text{v}_{f,n}
\end{bmatrix}
\,\, ,
\label{eq:tangential-free-velocity}
\end{equation}
we can thus state the \textit{single-contact polar-angle dual problem} 
\begin{equation}
\begin{array}{rlclcl}
\displaystyle \displaystyle 
\min_{\phi} \,
    & \frac{1}{2} \, \boldsymbol{\lambda}_{T}(\phi)^{T} \, \hat{\mathbf{D}} \, \boldsymbol{\lambda}_{T}(\phi)
    + \boldsymbol{\lambda}_{T}(\phi)^{T} \, \hat{\mathbf{v}} \\[4pt]
\textrm{s.t.} & \phi \in \mathcal{I}_{F}
\end{array}
\,\, ,
\label{eq:polar-angle-dual-problem}
\end{equation}
where now the only constraint is that the solution must lie in the interval $\mathcal{I}_{F}$. The optimality condition on the gradient of the objective function of (\ref{eq:polar-angle-dual-problem}) is therefore
\begin{subequations}
\begin{equation}
\mathbf{t}(\phi)^{T} \, \left( \hat{\mathbf{D}} \, \boldsymbol{\lambda}_{T}(\phi) + \hat{\mathbf{v}} \right) = 0
\label{eq:polar-angle-optimality-condition-def}
\end{equation}
\begin{equation}
\mathbf{t}(\phi) := 
\begin{bmatrix}
\delta(\phi) \, \cos\phi - \bar{r}(\phi)\,\sin\phi \\
\delta(\phi) \, \sin\phi + \bar{r}(\phi)\,\cos\phi \\
\end{bmatrix}
\label{eq:polar-angle-tangent-vector}
\end{equation}
\begin{equation}
\delta(\phi) := 
\frac{\bar{r}(\phi)}{\text{v}_{f,n}} 
\, \sqrt{D_{nt}^{2} + D_{no}^{2}} 
\, \sin\left(\phi -\tan^{-1}\left( \frac{D_{no}}{D_{nt}} \right) \right)
\label{eq:polar-angle-tangent-cureve-gradient}
\end{equation}
\label{eq:polar-angle-optimality-condition}
\end{subequations}
At this point, plugging-in (\ref{eq:polar-coordinates-contact-reaction}) to (\ref{eq:polar-angle-optimality-condition}) and multiplying by $\frac{\mu^{2}\,\text{v}_{f,n}^{2}}{\bar{r}(\phi)^{3}}$, results in the \textit{polar-angle trigonometric equation} 
\begin{equation}
c_{0} + c_{1} \, \cos(\phi + \phi_{1}) + c_{2} \, \cos(2\,\phi + \phi_{2}) = 0
\label{eq:polar-angle-trigonometric-equation}
\end{equation}

The second-last step involves transforming the trigonometric equation to a complex polynomial. This requires defining the complex \textit{phasor} $x = e^{i\,\phi}$ and applying it to (\ref{eq:polar-angle-trigonometric-equation}). The result is the \textit{complex quartic polynomial equation}
\begin{equation}
c_{2}\,\beta^{*}\,{x}^{4} 
+ c_{1}\,\alpha^{*}\,{x}^{3} 
+ 2\,c_{0}\,{x}^{2}
+ c_{1}\,\alpha\,{x}
+ c_{2}\,\beta
= 0
\,\, ,
\label{eq:polar-angle-quartic-polynomial}
\end{equation}
where $\alpha = e^{i\,\phi_{1}}$ and $\alpha = e^{i\,\phi_{2}}$ are complex phasor constants, and $\alpha^{*}$ and $\beta^{*}$ are their respective complex conjugates. 
\begin{figure}[!t]
\centering
\includegraphics[width=1.0\linewidth]{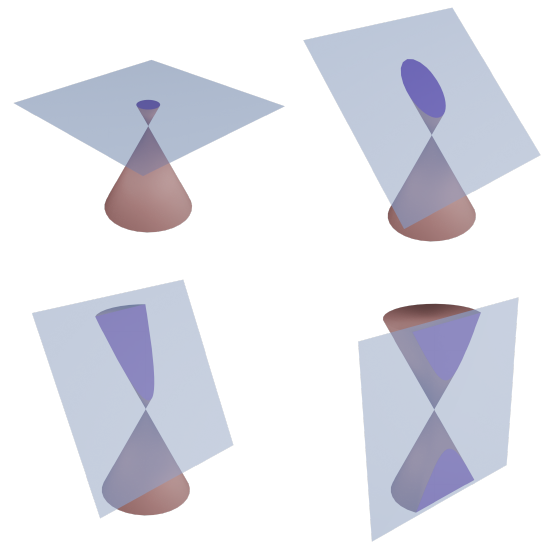}
\caption{Conic-section types: (top left) a circle when $r_{d/\mu} \rightarrow \infty$, i.e. the plane of maximum compression $\mathcal{P}_{mc}$ is exactly parallel to the contact normal $\mathbf{n}$, resulting in a circle. (top right) an ellipse when $r_{d/\mu} > 1$, i.e. $\mathbf{d}_{n}$ always points in directions that do not intersect the cone boundary $\partial\mathcal{K}_{\mu}$. (bottom left) a parabola when $r_{d/\mu} = 1$ , i.e. $\mathbf{d}_{n}$ is parallel to $\partial\mathcal{K}_{\mu}$. (bottom right) a hyperbola when $r_{d/\mu} < 1$, i.e. when $\mathcal{P}_{mc}$ intersects $\partial\mathcal{K}_{\mu}$ in both half-spaces.}
\label{fig:conic-sections}
\end{figure}
We have therefore reduced the dual problem (\ref{eq:reduced-single-contact-dual-problem}), for the case of a sliding contact, to a polynomial equation (\ref{eq:polar-angle-trigonometric-equation}). Solving\footnote{In our implementation we have used \texttt{Eigen::PolynomialSolver}.} it, however, often yields multiple solutions due its quartic order, but only one of these can be maximally dissipative. To determine the one we seek, we can trivially exclude those that do not lie in the interval $\mathcal{I}_{F}$, as well as those that do not satisfy (\ref{eq:polar-angle-optimality-condition}) and (\ref{eq:polar-angle-trigonometric-equation}). From those that remain, we select the one that minimizes the objective function of (\ref{eq:polar-angle-dual-problem}). This is necessarily unique because we know that the original problem in Cartesian coordinates is strictly convex. The maximally dissipative contact reaction $\lambda^{*}$ is computed using (\ref{eq:polar-coordinates-contact-reaction}) for the optimal polar-coordinate values $\phi^{*}$ and $r^{*} = \bar{r}(\phi^{*})$.

There are a few corner cases to mention though. In the geometric interpretation, the conic section exhibits degenerate cases that occur when $\text{v}_{f,n} = 0$, i.e. $\mathcal{P}_{mc}$ intersects the origin. To examine these cases, we'll utilize a quantity which tells us how 
$\mathcal{P}_{mc}$ is oriented w.r.t $\mathcal{K}_{\mu}$. Using the definition of $\Delta{\phi}$ from (\ref{eq:feasible-region-range}), we can define the \textit{conic-section inclination ratio}
\begin{equation}
r_{d/\mu} = (\cos\Delta{\phi})^{2} = \frac{D_{nn}^{2}}{\mu^{2} \, (D_{nt}^{2} + D_{no}^{2})}
\in \mathbb{R}_{+}
\,\, .
\end{equation}

Thus, we can use the values of $r_{d/\mu}$ to identify the four types of conic-sections that $\mathcal{P}_{mc} \cap \partial\mathcal{K}_{\mu}$ can form. These are depicted in Fig.\ref{fig:conic-sections}, and are defined precisely by the cases:
\begin{enumerate}
\item $r_{d/\mu} \rightarrow \infty$: \textit{circular}, when $\mathbf{D}$ is diagonal and/or $\mu = 0$.
\item $r_{d/\mu} > 1$: \textit{elliptic}, $\mathcal{P}_{mc}$ intersects $\partial\mathcal{K}_{\mu}$ in one half-space 
\item $r_{d/\mu} = 1$:  \textit{parabolic}, $\mathcal{P}_{mc}$ is parallel to $\partial\mathcal{K}_{\mu}$
\item $r_{d/\mu} < 1$: \textit{hyperbolic}, $\mathcal{P}_{mc}$ intersects $\partial\mathcal{K}_{\mu}$ in both half-spaces
\end{enumerate}
When $\text{v}_{f,n} = 0 \land r_{d/\mu} > 1$, $\mathcal{P}_{mc}$ intersects $\partial\mathcal{K}_{\mu}$ uniquely at the origin, and the solution is trivially $\lambda^{*} = 0$. This is the case of a degenerate elliptic conic-section. Conversely, when $\text{v}_{f,n} = 0 \land r_{d/\mu} < 1$, the problem admits a closed-form solution that is conditioned on whether the unconstrained solution lies within the friction cone:
\begin{equation}
\phi^{*} =
\begin{cases}
\phi_{0} - \Delta{\phi} & ,\lambda_{0} \in \mathcal{K}_{\mu}\\
\phi_{0} + \Delta{\phi} & , otherwise
\end{cases}
\,\,,
\label{eq:degenerate-hyperbolic-conic-section}
\end{equation}
$r^{*} = \max(0, \bar{r}(\phi^{*}))$, and $\lambda^{*}$ is computed from (\ref{eq:polar-coordinates-contact-reaction}). This is the case of a degenerate hyperbolic conic-section. For the special case of degenerate parabolic conic-section where $\text{v}_{f,n} = 0 \land r_{d/\mu} = 1$, (\ref{eq:degenerate-hyperbolic-conic-section}) still applies, but with $\Delta{\phi} = \pi$, because $\mathbf{d}_{n}$ is exactly perpendicular to the surface of $\mathcal{K}_{\mu}$. The final degenerate case is when $\mu = 0$ (i.e. frictionless contact), where $r_{d/\mu} \rightarrow \infty$ and the solution is trivially
\begin{equation}
\lambda_{\mu = 0}^{*} = 
\begin{bmatrix}
0 \\
0 \\
-D_{nn}^{-1}\,\text{v}_{f,n}
\end{bmatrix}
\label{eq:single-contact-fricionless-solution}
\end{equation}

In summary, we have seen that $\mathbf{d}_{n}$, $\text{v}_{f,n}$ and $\mu$ uniquely determine the geometry of the single-contact problem, i.e. $\mathcal{P}_{mc} \cap \mathcal{K}_{\mu}$. Moreover, only one particular combination requires solving the quartic polynomial (\ref{eq:polar-angle-quartic-polynomial}), while all other cases can be handled analytically.

\subsection{The Bisection-Search Method}
\label{sec:apndx:single-contact:bisection-search}

This derivation is based on that presented by Hwangbo et al in~\cite{hwangbo2018percontact} that describes the formulation used by RaiSim. Instead of solving the quartic polynomial (\ref{eq:polar-angle-quartic-polynomial}) to handle the slipping case, Hwangbo et al solved (\ref{eq:polar-angle-dual-problem}) by employing the well established \textit{bisection-method} to search for a zero-crossing that corresponds to the optimal polar-angle. In doing so, we can avoid performing most of the expensive trigonometric operations that semi-analytical approach requires. This alternate method significantly accelerates the operation that yields the maximally dissipative projection of a given contact reaction. This acceleration also comes without cost of accuracy, as the results are identical to former up to reasonable numerical tolerances, and thus serves as its drop-in replacement.

Recalling elements of the derivation of Appendix.~\ref{sec:apndx:single-contact:analytical-formulation}, the contact reaction necessarily lies on the plane of maximum compression $\mathcal{P}_{mc}$ defined by the quantities 
\begin{equation*}
\mathbf{d}_{n} := \boldsymbol{e}_{n}^{T} \, \mathbf{D}
=
\begin{bmatrix}
D_{tn} & D_{on} & D_{nn} 
\end{bmatrix}^{T}
\,\,\,,\,\,\,
\text{v}_{f,n} = \boldsymbol{e}_{n}^{T} \, \mathbf{v}_{f}
\,\, ,
\label{eq:plane-of-maximum-compression-parameters-recap}
\end{equation*}
where $\mathbf{d}_{n}$ is colinear to plane normal $\mathbf{n}$ and $\boldsymbol{\lambda}_{T=0}=[0 \, 0 \, D_{nn}^{-1}\,\text{v}_{f,n}]^{T}$ is the point at which it intersects the axis $\boldsymbol{e}_{n} = [0 \, 0 \, 1]^{T}$. When the contact is sliding, we seek solutions on the set  $\mathcal{P}_{mc}\cap\partial\mathcal{K}_{\mu}$ that represents the conic-section formed by $\mathcal{P}_{mc}$ as it intersects the boundary of the friction cone $\partial\mathcal{K}_{\mu}$. 
Exploiting the geometry of conic-sections, we parameterize the contact reaction using polar coordinates in the form of
\begin{subequations}
\begin{equation*}
\lambda_{t} = r \, \cos\phi
\,\,\,,\,\,\,\,
\lambda_{o} = r \, \sin\phi
\end{equation*}
\begin{equation*}
\lambda_{n} = - D_{nn}^{-1} \, (\text{v}_{f,n} + D_{nt} \, r \, \cos\phi + D_{no} \, r \, \sin\phi )
\,\, ,
\end{equation*}
\begin{equation*}
\boldsymbol{\lambda}_{p}(r,\phi) =
\begin{bmatrix}
\lambda_{t} & \lambda_{o} & \lambda_{n}
\end{bmatrix}^{T}
\,\, ,
\end{equation*}
\label{eq:polar-coordinates-contact-reaction-recap}
\end{subequations}
The conic-section induces a coupling between the polar coordinates such that the radius becomes a function of the angle, i.e.
\begin{equation*}
r(\phi) := \frac{-\mu\,\text{v}_{f,n}}{D_{nn} + \mu \, D_{nt} \, \cos\phi +  \mu \, D_{no} \, \sin\phi}
\,\, .
\end{equation*}
Depending on how $\mathcal{P}_{mc}\cap\partial\mathcal{K}_{\mu}$ is formed, the set of admissible values for the polar angle $\phi$, is defined as the feasible range
\begin{equation*}
\mathcal{I}_{F} =
\begin{cases}
(-\Delta{\phi},\Delta{\phi}] + \phi_{0},   & , D_{nn} \leq \mu \, \sqrt{D_{nt}^{2} + D_{no}^{2}} \\
[0, 2\pi) &, otherwise
\end{cases}
\,\,,
\label{eq:polar-angle-feasible-range-def-recap}
\end{equation*}
where the central angle $\phi_{0}$ and half-range $\Delta{\phi}$ are defined in (\ref{eq:feasible-region-range}). In the case of a sliding contact ($m=\text{Slip}$), the single-contact dual problem (\ref{eq:polar-angle-dual-problem}) can be formulated solely in terms of the polar-angle as
\begin{equation*}
\begin{array}{rlclcl}
\displaystyle \displaystyle 
\min_{\phi} \,
    & \frac{1}{2} \, \boldsymbol{\lambda}_{T}(\phi)^{T} \, \hat{\mathbf{D}} \, \boldsymbol{\lambda}_{T}(\phi)
    + \boldsymbol{\lambda}_{T}(\phi)^{T} \, \hat{\mathbf{v}} \\[4pt]
\textrm{s.t.} & \phi \in \mathcal{I}_{F}
\end{array}
\,\, ,
\label{eq:polar-angle-dual-problem-recap}
\end{equation*}
where $\boldsymbol{\lambda}_{T}(\phi) \in \mathbb{R}^{2}$ is the polar-angle-dependent tangential contact reaction, and $\hat{\mathbf{D}}\in\mathbb{R}^{2\times2}\,,\,\,\hat{\mathbf{v}}_{f}\in\mathbb{R}^{2}$, are respectively, the Delassus matrix and a free velocity projected onto $\mathcal{P}_{mc}$.

The conic-section, and thus the range of admissible angles, depends solely on the normal projection of the Delassus matrix and the friction cone aperture. Second, where along the conic-section, solutions actually form is also dependent on the free velocity, as it determines the unconstrained solution $\boldsymbol{\lambda}_{0}$. As sliding friction necessarily opposes motion, we can expect that the maximally dissipative contact reaction to lie on a section of the curve that shares some properties with $\boldsymbol{\lambda}_{0}$. Thus, the motivation behind this approach lies in three insightful observations regarding how and where solutions form:
\begin{enumerate}
\item the single-contact polar-angle dual problem (\ref{eq:polar-angle-dual-problem}) is strictly convex with a unique global minimum
\item the feasible region $\mathcal{I}_{F}$ is a bounded interval
\item when multiple solutions (i.e. zeros) of the stationarity condition (\ref{eq:polar-angle-optimality-condition-def}) exist, only the unique global minimum may lie in region of the conic-section curve that is within line-of-sight from the unconstrained solution (see p.31 of \cite{preclik2014models} for a formal proof of this property)
\end{enumerate}
From these, Hwangbo et al extrapolated the following ideas. Firstly, as the worst-case search interval $\mathcal{I}_{F}$ is finite, a direct search for zeros of (\ref{eq:polar-angle-optimality-condition-def}) is possible. Secondly, the line-of-site region presents a reasonable heuristic to both begin the search from as well as to narrow the effective search interval. Moreover, the Euclidean (i.e. minimum-distance) projection of the unconstrained solution provides an automatic mechanism to define a starting point for the search. Lastly, as floating-point trigonometric computations are expensive, an iterative method that only requires simple algebraic expressions can potentially reduce the time complexity of the search.

Next we describe how the bisection-method can be used to solve this polar-angle dual problem without solving the quartic polynomial (\ref{eq:polar-angle-quartic-polynomial}). The objective function of the dual problem (\ref{eq:reduced-single-contact-dual-problem}), and its gradient are
\begin{subequations}
\begin{equation}
h_{0}(\boldsymbol{\lambda}) 
= \boldsymbol{\lambda}^{T} \, \mathbf{D} \, \boldsymbol{\lambda} + \boldsymbol{\lambda}^{T} \, \mathbf{v}_{f}
\label{eq:polar-angle-dual-problem-objective-function}
\end{equation}
\begin{equation}
\nabla_{\boldsymbol{\lambda}} h_{0}(\boldsymbol{\lambda}) 
= \mathbf{D} \, \boldsymbol{\lambda} + \mathbf{v}_{f}
= \bar{\mathbf{D}} \, \boldsymbol{\lambda}_{T} + \bar{\mathbf{v}}_{f}
\,\,,
\label{eq:polar-angle-dual-problem-objective-gradient}
\end{equation}
\end{subequations}
where we've used 3D extensions of the tangentially-projected Delassus matrix and free velocity vector in the form of
\begin{equation}
\bar{\mathbf{D}} = 
\begin{bmatrix}
\hat{\mathbf{D}} \\
0
\end{bmatrix}
\,\in \mathbb{R}^{3\times2}
\,\,\,\,,\,\,\,\,
\bar{\mathbf{v}}_{f} = 
\begin{bmatrix}
\hat{\mathbf{v}}_{f} \\
0
\end{bmatrix}
\, \in \mathbb{R}^{3}
\,\,.
\end{equation}
Next, the constraint imposed by $\mathcal{P}_{mc}$ and its gradient are
\begin{subequations}
\begin{equation}
h_{1}(\boldsymbol{\lambda}) = \mathbf{d}_{n}^{T} \, \boldsymbol{\lambda} + \text{v}_{f,n} = 0
\end{equation}
\begin{equation}
\nabla_{\boldsymbol{\lambda}} h_{1}(\boldsymbol{\lambda}) = \mathbf{d}_{n}
\,\,,
\end{equation}
\end{subequations}
and equivalently, those of the friction cone $\mathcal{K}_{\mu}$ are
\begin{subequations}
\begin{equation}
h_{2}(\boldsymbol{\lambda}) = \lambda_{t}^{2} + \lambda_{o}^{2} - \mu^{2} \, \lambda_{n}^{2} = 0
\end{equation}
\begin{equation}
\nabla_{\boldsymbol{\lambda}} h_{2}(\boldsymbol{\lambda}) = 
\begin{bmatrix}
    2\,\lambda_{t} & 2\,\lambda_{o} & -2\,\mu^{2}\,\lambda_{n}
\end{bmatrix}^{T}
\,\,.
\end{equation}
\end{subequations}

Geometrically speaking, we know that $\nabla_{\boldsymbol{\lambda}} h_{1}(\boldsymbol{\lambda})$ and $\nabla_{\boldsymbol{\lambda}} h_{2}(\boldsymbol{\lambda})$ are always perpendicular to their respective constraint surfaces. We can exploit this to define a vector that is always tangent to the conic-section curve  by forming an orthogonal projection to the aforementioned constraint vectors. This results in the \textit{tangent vector} 
\begin{equation}
\boldsymbol{t}(\boldsymbol{\lambda}) := 
\nabla_{\boldsymbol{\lambda}} h_{1}(\boldsymbol{\lambda}) 
\times 
\nabla_{\boldsymbol{\lambda}} h_{2}(\boldsymbol{\lambda})
\,\,.
\label{eq:conic-section-tangent-vector}
\end{equation}
Furthermore, the KKT conditions require that the gradient of the objective function $\nabla_{\boldsymbol{\lambda}} h_{0}(\boldsymbol{\lambda})$ lie within the space spanned by the constraint gradients $\nabla_{\boldsymbol{\lambda}} h_{1}(\boldsymbol{\lambda})$ and $\nabla_{\boldsymbol{\lambda}} h_{1}(\boldsymbol{\lambda})$. Thus, the stationarity condition required on the Lagrangian of the dual problem can be expressed as
\begin{equation}
\nabla_{\boldsymbol{\lambda}} h_{0}(\boldsymbol{\lambda}^{*})^{T} \, \boldsymbol{t}(\boldsymbol{\lambda}^{*}) = 0
\,\,,
\label{eq:conic-section-gradient-product-optimality-condition}
\end{equation}
which is exactly (\ref{eq:polar-angle-optimality-condition-def}) from Sec.\ref{sec:apndx:single-contact:analytical-formulation}. We denote the function that evaluates this gradient product as
\begin{equation}
g(\boldsymbol{\lambda}) = \nabla_{\boldsymbol{\lambda}} h_{0}(\boldsymbol{\lambda})^{T} \, \boldsymbol{t}(\boldsymbol{\lambda})
\,\,.
\label{eq:conic-section-gradient-product}
\end{equation}

At this point we can introduce the bisection-method and explain how it's applicable. It operates by taking some initial interval $[\phi_{\text{min}}, \phi_{\text{max}}]$, and progressively dividing it in half while checking for changes in the sign of the function. The sign determines from which end of the interval the half-interval will be taken, i.e. start or end, and in this way, it very quickly converges to any detected zero-crossing. In our case, the function it operates on is the gradient product defined in (\ref{eq:conic-section-gradient-product}). Thus, we require two more operations to realize this bisection-based search. First is the function that yields the sign of the gradient product, which we'll denoted as
\begin{equation}
s_{g}(\boldsymbol{\lambda}) = 
\text{sign}(\,g(\boldsymbol{\lambda})\,)
\,\,.
\label{eq:conic-section-gradient-product-signum}
\end{equation}
Second, we require an operation that can provide a starting point for the search. We can use the radial projector
\begin{subequations}
\begin{equation}
\phi_{E} = \tan^{-1}\frac{\lambda_{0,o}}{\lambda_{0,t}} \,\,, r_{E} = \bar{r}(\phi_{E})
\end{equation}
\begin{equation}
\mathcal{P}_{\text{E}}^{\mathcal{S}\cap\mathcal{K}}(\boldsymbol{\lambda}_{0}) := \boldsymbol{\lambda}_{p}(\phi_{E}, \bar{r}(\phi_{E}))
\,\,.
\end{equation}
\end{subequations}
where $\mathcal{P}_{\text{E}}^{\mathcal{S}\cap\mathcal{K}} : \mathbb{R}^{3} \rightarrow \mathbb{R}^{3}$\footnote{$\mathcal{P}_{\text{E}}^{\mathcal{S}\cap\mathcal{K}}(\boldsymbol{\lambda}_{0})$ is a legitimate contact-consistent projector in its own right. However, in contrast to $\mathcal{P}_{\text{MDP}}^{\mathcal{S}\cap\mathcal{K}}(\boldsymbol{\lambda}_{0})$ and $\boldsymbol{P}_{\text{BS}}^{\mathcal{S}\cap\mathcal{K}}(\boldsymbol{\lambda}_{0})$, it does not yield maximally dissipative reactions.}. 
\begin{figure}[!t]
\centering
\includegraphics[width=1.0\linewidth]{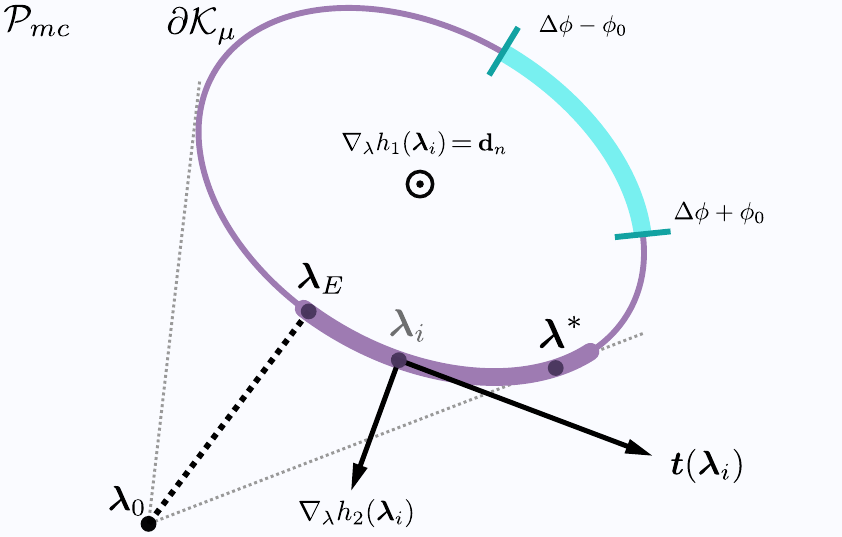}
\caption{The bisection-method applied to a sample conic-section (purple) defined by the intersection of the plane of maximum compression $\mathcal{P}_{mc}$ and the boundary of the friction cone $\partial\mathcal{K}_{\mu}$, depicted using $\mathcal{P}_{mc}$ as a cross-section via the plane normal $\mathbf{d}_{n}$. The unconstrained solution $\boldsymbol{\lambda}_{0}$ is first projected onto the conic-section as the Euclidean projection $\boldsymbol{\lambda}_{E}$. From the perspective of $\boldsymbol{\lambda}_{0}$, the line-of-sight region (gray dotted lines) extends onto each end of the contour of the curve, and the side opposite the gradient is taken as the zero-crossing search interval (think purple curve segment). At each step of the interval-search phase, the iterate $\boldsymbol{\lambda}_{i}$ is used to compute the tangent vector $\boldsymbol{t}(\boldsymbol{\lambda}_{i})$ w.r.t the local contour via the orthogonality of the constraint gradients $\nabla_{\boldsymbol{\lambda}} h_{1}(\boldsymbol{\lambda}_{i})$ and $\nabla_{\boldsymbol{\lambda}} h_{2}(\boldsymbol{\lambda}_{i})$. Thus, the search interval is progressively extended towards the clock-wise direction until the end reaching the end of the line-of-sight region, and while avoiding the infeasible region $[0, 2\pi) \,\setminus\, \mathcal{I}_{F}$ (cyan).}
\label{fig:bisection-method}
\end{figure}

We now have all the elements necessary to construct the bisection-method algorithm to realize $\boldsymbol{P}_{\text{BS}}^{\mathcal{S}\cap\mathcal{K}}(\boldsymbol{\lambda}_{0})$. The full algorithm is detailed in Alg.\ref{alg:bisection-method-optimal-polar-angle-search}, and a visual depiction using a mock example is provided in Fig.\ref{fig:bisection-method}. However, it's worth taking a moment to outline the key steps involved and provide some context. These are: 
\begin{enumerate}
\item compute $\boldsymbol{\lambda}_{E}$ using the Euclidean projection of $\boldsymbol{\lambda}_{0}$
\item compute the line-of-sight region $\mathcal{I}_{LoS}$
\item compute the feasible region $\mathcal{I}_{F}$ using $\mu$ and $\mathbf{d}_{n}$
\item determine the smallest feasible interval $\mathcal{I}_{BS} \subseteq \mathcal{I}_{F} \cap \mathcal{I}_{LoS}$
\item perform bisection on the interval $\mathcal{I}_{BS}$
\item if no solution was found, perform bisection on $\mathcal{I}_{F}$\\
\end{enumerate}

All but step (4) are straight-forward operations on the problem constants. The most delicate, and most important, is step (4) that seeks the smallest feasible interval to perform the bisection-search on. In a nutshell, this procedure proceeds as follows: starting from the angle $\phi_{E}$ corresponding to the Euclidean projection of the unconstrained solution $\boldsymbol{\lambda}_{0}$, we take make a series of exponentially increasing steps opposite of the direction (i.e. sign) of the projected gradient at $\phi_{E}$, and re-compute this gradient sign at each step until it inverts or until we've reached some margin beyond the line-of-sight region. The first case signifies that a zero-crossing has been detected, and the bisection search interval $[\phi_{\text{min}}, \phi_{\text{max}}]$ is collapsed to the previous and current angle of the interval search procedure.

However, there is a critical caveat to this procedure. As the interval necessarily needs to be a compact set for bisection to succeed, we must carefully handle the cases where the infeasible region $[0, 2\pi) \,\setminus\, \mathcal{I}_{F} $, overlaps with or is very close to, the line-of-sight interval $\mathcal{I}_{LoS}$. In fact, we found that the original algorithm described in \cite{hwangbo2018percontact} sometimes fails in such cases. To robustify it, we've incorporated a few modifications that ensure that: (a) we don't accidentally stop the interval expansion within or at the limit of the infeasible region, and (b) if this interval search fails, we use step (6) to perform a brute-force search of all zero-crossings and determine the optimal angle via exclusion similarly to how it was handled for the quartic polynomial solutions in Sec\ref{sec:apndx:single-contact:analytical-formulation}. The resulting operation, even in the worst case of resorting to step (6), drastically reduces the average time-complexity of the projection.

\begin{algorithm}[!t]
\caption{Bisection-Based Optimal Polar-Angle Search}
\label{alg:bisection-method-optimal-polar-angle-search}
\begin{algorithmic}
\Require $\boldsymbol{\lambda}_{0}\in\mathbb{R}^{3},\,\,\mathbf{D}\in\mathbb{R}^{3\times3},\,\,\mathbf{v}_{f}\in\mathbb{R}^{3},\,\,\mu\in\mathbb{R}_{+}$
\Require $N$, $\beta_{1}$, $\beta_{2}$, $\beta_{3}$, $\epsilon_{BS}$\\
\State $\phi_{E} \gets \text{atan2}(\lambda_{0,o}, \lambda_{0,t})$ \textcolor{gray}{\Comment{Euclidean projection}}
\State $\boldsymbol{\lambda}_{E} \gets \boldsymbol{\lambda}_{p}(\phi_{E}, r(\phi_{E}))$ \textcolor{gray}{\Comment{using (\ref{eq:polar-coordinates-contact-reaction})}}
\State $g_{0} \gets g(\boldsymbol{\lambda}_{E})$ \textcolor{gray}{\Comment{using (\ref{eq:conic-section-gradient-product})}}
\State $s_{0} \gets s_{g}(\boldsymbol{\lambda}_{E})$ \textcolor{gray}{\Comment{using (\ref{eq:conic-section-gradient-product-signum})}} \\
\State $\Delta{\phi}_{LoS} \gets \text{acos}\left(\frac{\Vert \boldsymbol{\lambda}_{0,T} \Vert_{2}}{r_{E}}\right)$ \textcolor{gray}{\Comment{line-of-sight range}}
\State $\mathcal{I}_{LoS} \gets [-\Delta{\phi}_{LoS}\,,\,\Delta{\phi}_{LoS}] + \phi_{E}$ \\
\State $\mathcal{I}_{F} \gets$ from $\mathbf{d}_{n}$ and $\mu$ using (\ref{eq:polar-angle-feasible-region}) \textcolor{gray}{\Comment{feasible range}}\\
\State $\phi_{0} \gets \phi_{E}$ \textcolor{gray}{\Comment{initial angle}}
\State $\alpha \gets - \min(|\Delta{\phi}_{LoS}|, \beta_{1}) \, s_{0}$ \textcolor{gray}{\Comment{initial step-size}}
\State $\text{found} \gets \text{false}$\\
\For{$i = 1$ to $N$} \textcolor{gray}{\Comment{find smallest feasible search interval}}
\State $\phi_{i} \gets \phi_{i-1} + \alpha$
\State $r_{i} \gets r(\phi_{i})$
\State $\boldsymbol{\lambda}_{i} \gets \boldsymbol{\lambda}_{p}(\phi_{i}, r(\phi_{i})))$
\State $g_{i} \gets g(\boldsymbol{\lambda}_{i})$
\State $s_{i} \gets s(\boldsymbol{\lambda}_{i})$
\State $p_{i} \gets |\phi_{E} - \phi_{i}| - \Delta{\phi}_{LoS}$ \\
\If{$r_{i} > 0 \land g_{0} \cdot g_{i} < 0$}
    \State $\phi_{f} \gets \phi_{i}$
    \State $\text{found} \gets \text{true}$ \textcolor{gray}{\Comment{zero-crossing detected}}
    \State \textbf{break}
\EndIf\\
\If{$p_{i} < 0 \lor r_{i} < 0$}
    \State $\phi_{i} \gets \phi_{i-1}$ \textcolor{gray}{\Comment{reset interval}}
    \State $\alpha \gets \alpha \, \beta_{2}$ \textcolor{gray}{\Comment{expand step-size}}
\Else
    \State $\alpha \gets -\beta_{3} \, s_{0}$ \textcolor{gray}{\Comment{step conservatively past line-of-sight}}
    \State $g_{0} \gets g_{i}$
    \State $s_{0} \gets s_{i}$
\EndIf
\EndFor\\
\If{found}
    \State $\phi_{\text{min}} \gets \min(\phi_{0}\,,\,\phi_{f})$
    \State $\phi_{\text{max}} \gets \max(\phi_{0}\,,\,\phi_{f})$
    \State $\mathcal{I}_{BS} \gets [\phi_{\text{min}}\,,\,\phi_{\text{max}}]$ \textcolor{gray}{\Comment{bisection interval}}
    \State $\phi^{*} \gets$ from Alg.\ref{alg:bisection-method-search-operation} using $\mathcal{I}_{BS}$ \textcolor{gray}{\Comment{bisection operation}}
\Else
    \State $\Phi_{zc} \gets$ zero-crossings in $\mathcal{I}_{F}$ using Alg.\ref{alg:feasible-region-bisection-search} \textcolor{gray}{\Comment{brute-force search}}
    \State $\phi^{*} \gets \text{arg}\min_{\phi \in \Phi_{zc}} h_{0}(\boldsymbol{\lambda}_{p}(\phi, r(\phi))))$ \textcolor{gray}{\Comment{using (\ref{eq:polar-angle-dual-problem-objective-function})}}
\EndIf\\
\State $\boldsymbol{\lambda}^{*} \gets \boldsymbol{\lambda}_{p}(\phi^{*}, r(\phi^{*})))$\\\\
\Return $\boldsymbol{\lambda}^{*}$
\end{algorithmic}
\end{algorithm}

\begin{algorithm}[!t]
\setstretch{1.1}
\caption{Bisection-Method Search Operation}
\label{alg:bisection-method-search-operation}
\begin{algorithmic}
\Require $\phi_{\text{min}}\in\mathbb{R}$, $\phi_{\text{max}}\in\mathbb{R}$, $\epsilon_{BS}\in\mathbb{R}_{+}$\\
\State $g_{0} \gets g(\boldsymbol{\lambda}_{p}(\phi_{\text{min}}, r(\phi_{\text{min}}))))$ \textcolor{gray}{\Comment{using (\ref{eq:conic-section-gradient-product})}}\\
\Repeat \textcolor{gray}{\Comment{bisection procedure}}\\
    \State $\phi_{\text{mid}} \gets \frac{1}{2}\,(\phi_{\text{min}} + \phi_{\text{max}})$ \textcolor{gray}{\Comment{midpoint}}
    \State $r_{\text{mid}} \gets r(\phi_{\text{mid}})$ \textcolor{gray}{\Comment{using (\ref{eq:polar-angle-radius})}}
    \State $\boldsymbol{\lambda}_{\text{mid}} \gets \boldsymbol{\lambda}_{p}(\phi_{\text{mid}}, r(\phi_{\text{mid}})))$ \textcolor{gray}{\Comment{using (\ref{eq:polar-coordinates-contact-reaction})}}\\
    \If{$g(\boldsymbol{\lambda}_{\text{mid}}) \cdot g_{0} < 0$} \textcolor{gray}{\Comment{check for zero-crossing}}
        \State $\phi_{\text{min}} \gets \phi_{\text{mid}}$ \textcolor{gray}{\Comment{contract left}}
    \Else
        \State $\phi_{\text{max}} \gets \phi_{\text{mid}}$ \textcolor{gray}{\Comment{contract right}}
    \EndIf\\
    \State $\boldsymbol{\lambda}_{\text{min}} \gets \boldsymbol{\lambda}_{p}(\phi_{\text{min}}, r(\phi_{\text{min}})))$
    \State $\boldsymbol{\lambda}_{\text{max}} \gets \boldsymbol{\lambda}_{p}(\phi_{\text{max}}, r(\phi_{\text{max}})))$\\
\Until{$\Vert \boldsymbol{\lambda}_{\text{min}} - \boldsymbol{\lambda}_{\text{max}} \Vert_{2} < \epsilon_{BS}$} \textcolor{gray}{\Comment{check for convergence}}\\\\
\Return $\phi_{\text{mid}}$
\end{algorithmic}
\end{algorithm}

\begin{algorithm}[!t]
\setstretch{1.1}
\caption{Feasible Region Bisection-Search}
\label{alg:feasible-region-bisection-search}
\begin{algorithmic}
\Require $\mathcal{I}_{F}$, $N\in\mathbb{N}_{+}$\\
\State $\Phi_{zc} \gets \emptyset$ \textcolor{gray}{\Comment{initialize set of zero-crossings}}
\State $\mathcal{I} \gets \emptyset$ \textcolor{gray}{\Comment{initialize set of intervals}}\\
\State $\phi_{0} \gets \min(\mathcal{I}_{F})$ \textcolor{gray}{\Comment{start}}
\State $\Delta{\phi} \gets |\phi_{f} - \phi_{0}|$ \textcolor{gray}{\Comment{range}}
\State $\alpha \gets \frac{\Delta{\phi}}{N}$ \textcolor{gray}{\Comment{step-size}}
\State $g_{0} \gets g(\boldsymbol{\lambda}_{p}(\phi_{0}, r(\phi_{0})))$\\
\For{$i = 1$ to $N$} \textcolor{gray}{\Comment{zero-crossing interval construction}}
    \State $\phi_{i} \gets \phi_{i-1} + \alpha$
    \State $\boldsymbol{\lambda}_{i} = \boldsymbol{\lambda}_{p}(\phi_{i}, r(\phi_{i}))$
    \State $g_{i} \gets g(\boldsymbol{\lambda}_{i})$
    \If{$g_{i} \cdot g_{i-1} < 0$}
        \State $\mathcal{I} \gets \mathcal{I} \cup \{[\phi_{i} - \alpha \,,\, \phi_{i}]\}$ \textcolor{gray}{\Comment{extend interval set}}
    \EndIf
\EndFor\\
\For{$i = 1$ to $n_{zc}$} \textcolor{gray}{\Comment{interval-wise bisection}}
    \State $\phi_{i} \gets$ from Alg.\ref{alg:bisection-method-search-operation} using $\mathcal{I}_{i}$ \textcolor{gray}{\Comment{bisection operation}}
    \State $\Phi_{zc} \gets \Phi_{zc} \cup \{\phi_{i}\}$ \textcolor{gray}{\Comment{extend zero-crossing set}}
\EndFor\\\\
\Return $\Phi_{zc}$
\end{algorithmic}
\end{algorithm}

\newpage
\onecolumn
\section{Additional Experiment Figures}
\label{sec:apndx:additional-figures}
\begin{figure*}[!ht]
\centering
\includegraphics[width=1.0\linewidth]{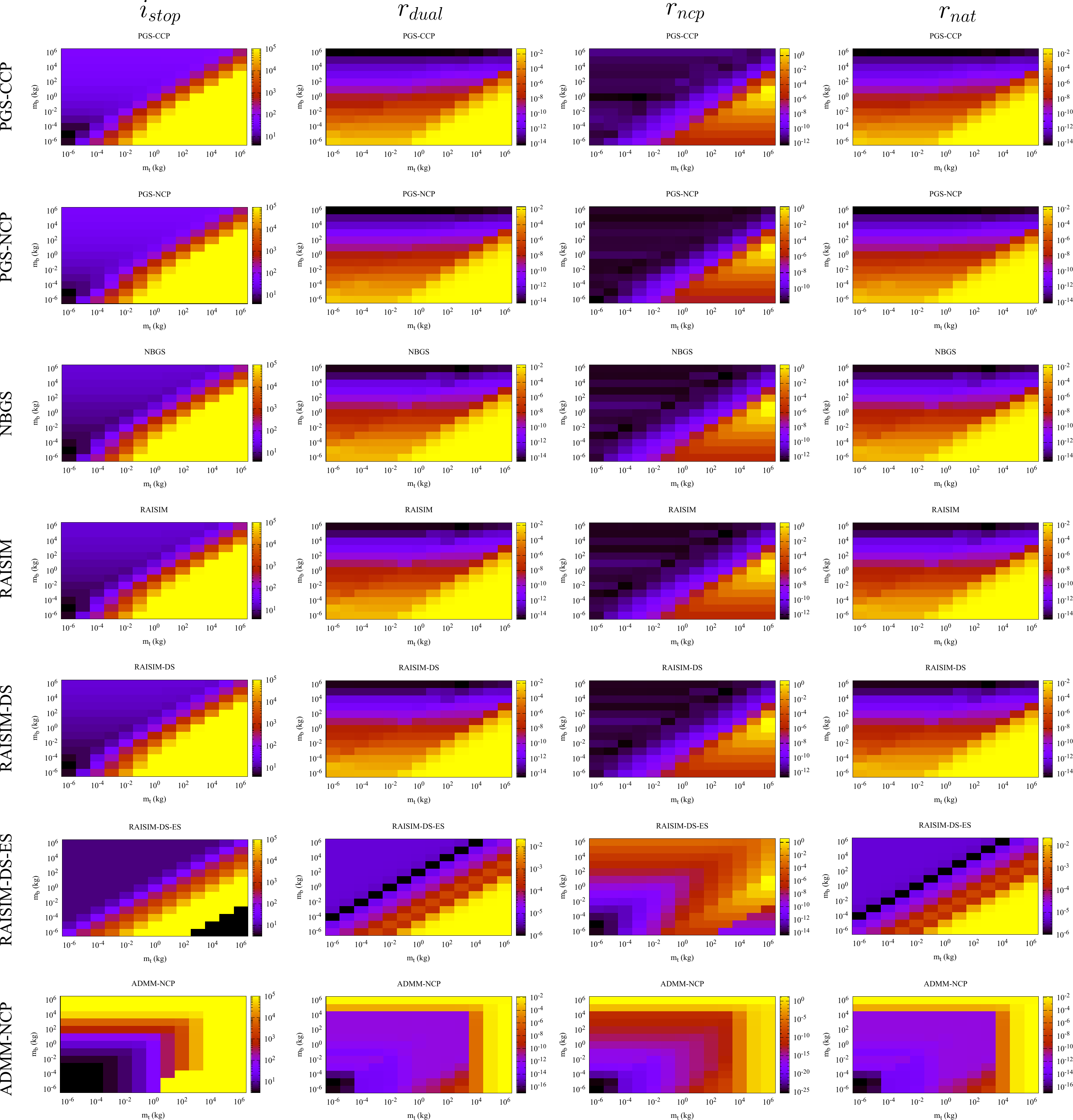}
\caption{\textit{Boxes-Fixed}: A heatmap-like rendering of solver iterations $i_{stop}$ and the complementarity residual $r_{ncp}$ as functions of the bottom and top bodies with respective masses $m_b$ and $m_t$. The data is generated while the bottom box is at rest on the plane.}
\label{fig:boxes-fixed-heatmap-rest}
\end{figure*}
\begin{figure*}[!ht]
\centering
\includegraphics[width=1.0\linewidth]{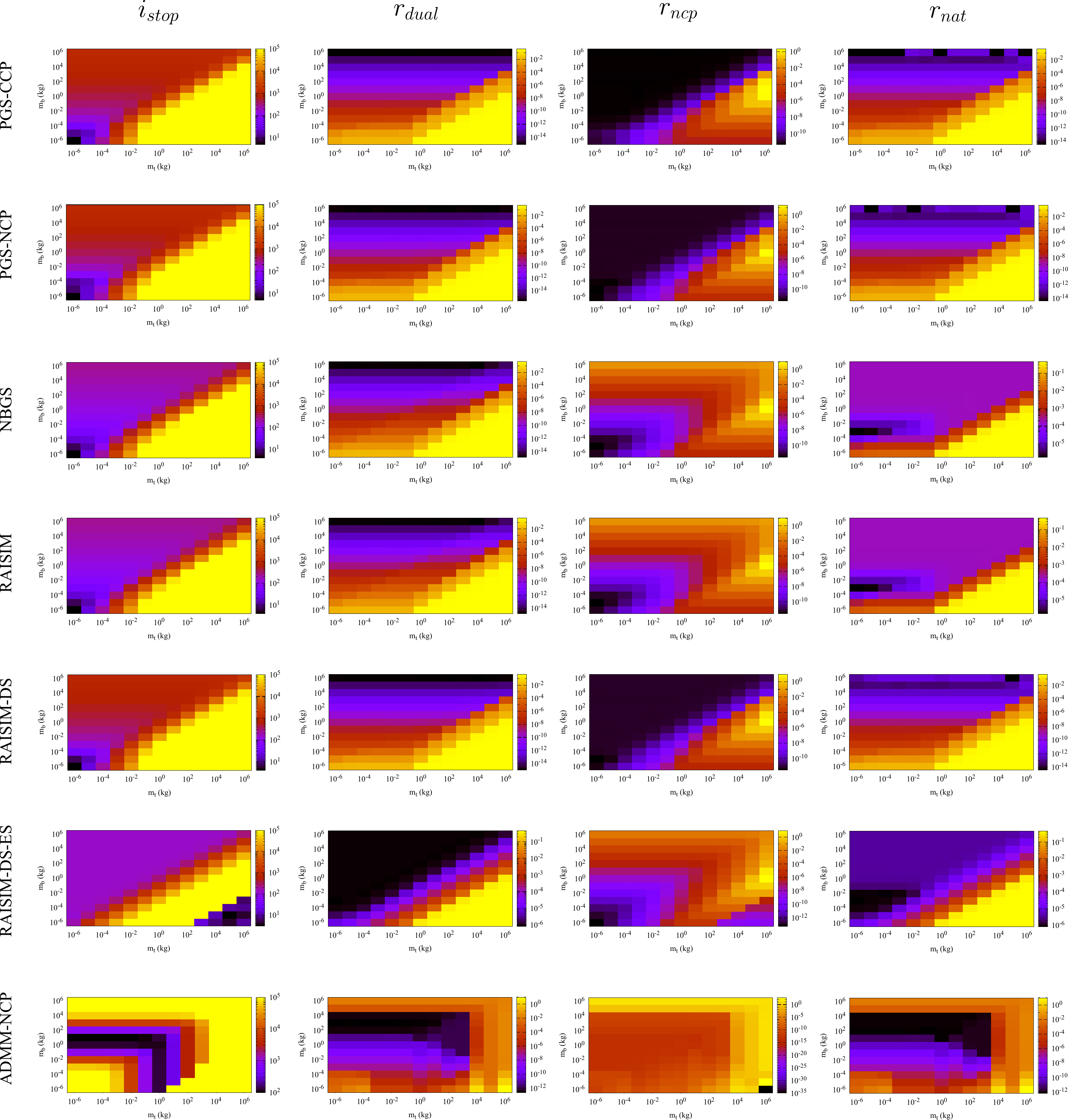}
\caption{\textit{Boxes-Fixed}: A heatmap-like rendering of solver iterations $i_{stop}$ and the complementarity residual $r_{ncp}$ as functions of the bottom and top bodies with respective masses $m_b$ and $m_t$. The data is generated while the bottom box is pushed by a force marginally exceeding the necessary stiction force.}
\label{fig:boxes-fixed-heatmap-push}
\end{figure*}
\begin{figure*}[!ht]
\centering
\includegraphics[width=0.95\linewidth]{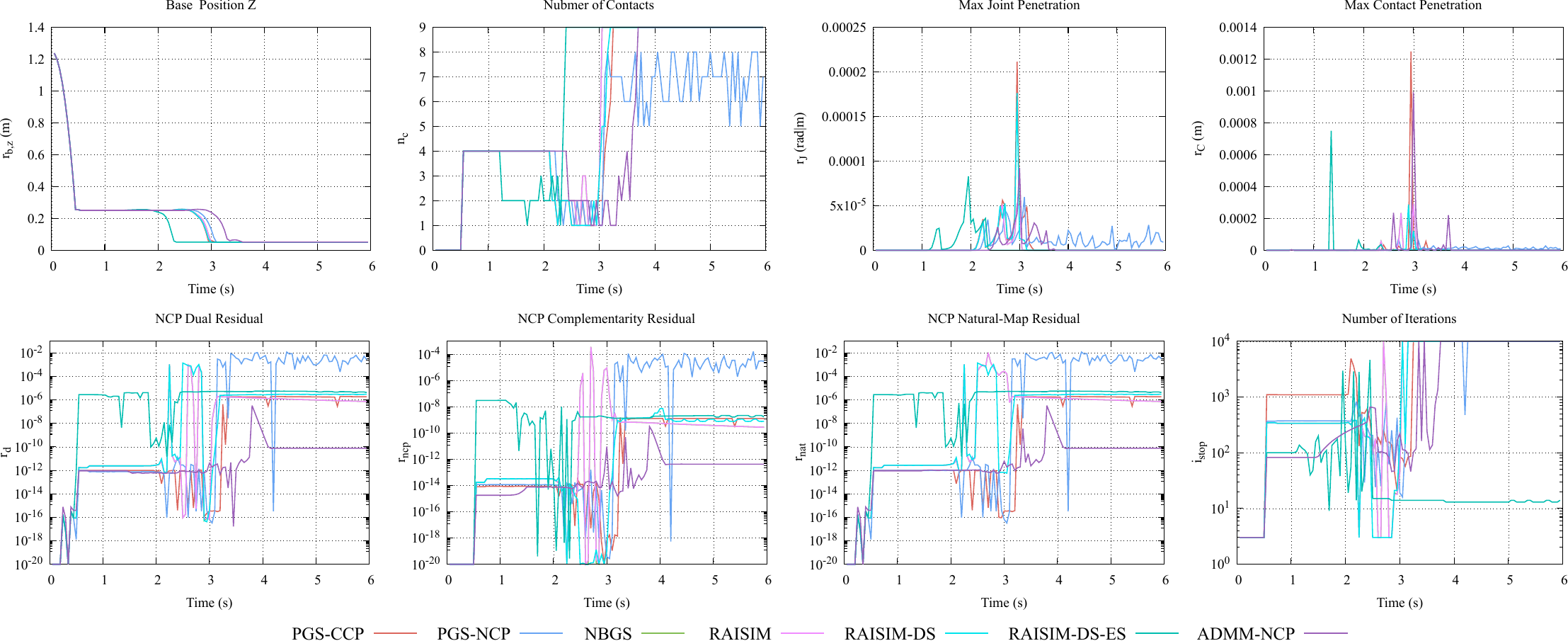}
\caption{\textit{Nunchaku}: Summary of the second run that enables constraint stabilization.}
\label{fig:nunchaku-hard-stbl}
\end{figure*}
\begin{figure*}[!ht]
\centering
\includegraphics[width=0.95\linewidth]{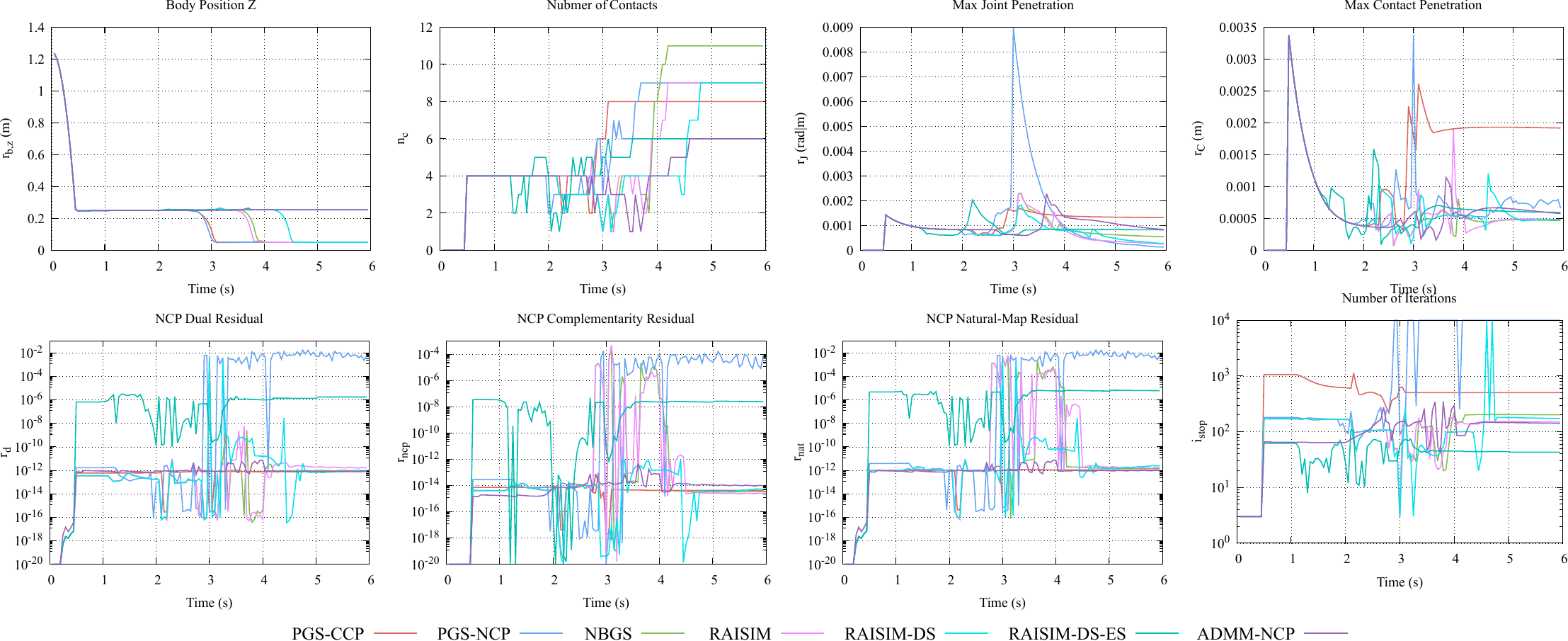}
\caption{\textit{Nunchaku}: Summary of the third run that enables constraint softening.}
\label{fig:nunchaku-soft}
\end{figure*}
\begin{figure*}[!ht]
\centering
\includegraphics[width=0.95\linewidth]{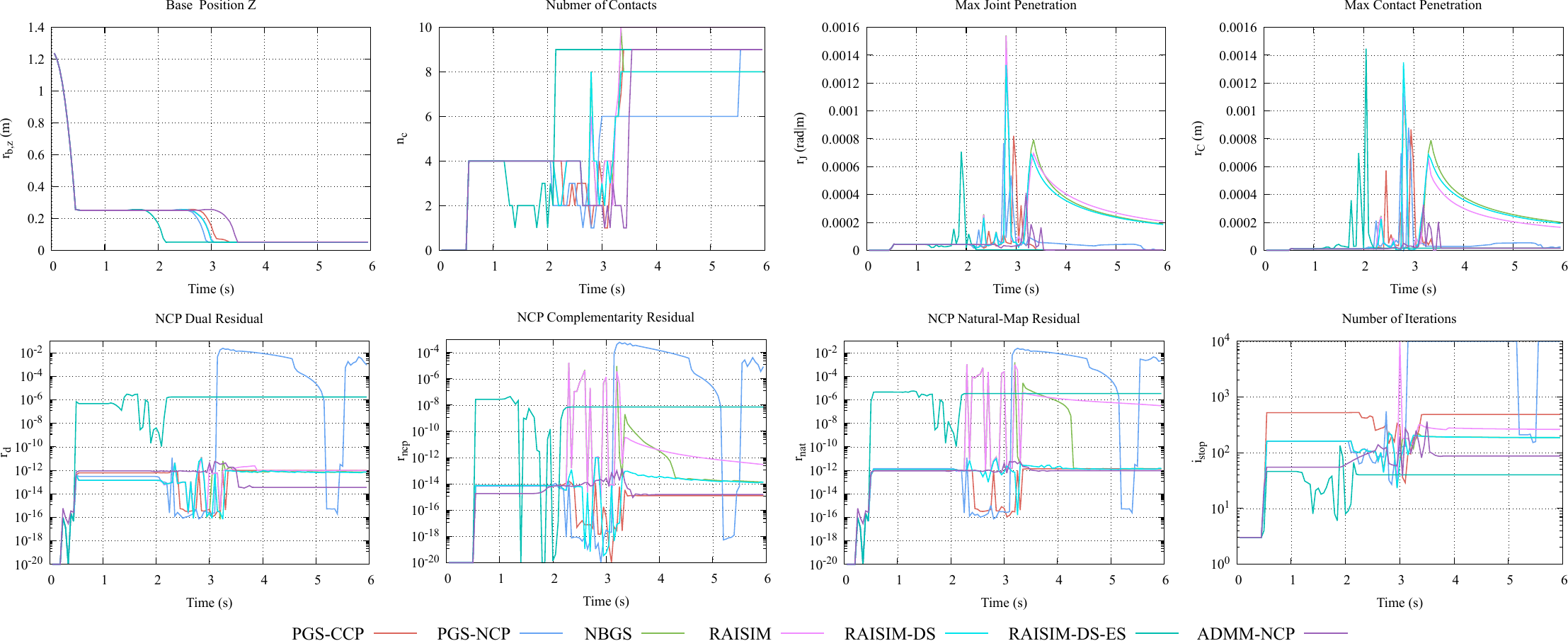}
\caption{\textit{Nunchaku}: Summary of the fourth run that enables both constraint stabilization and softening.}
\label{fig:nunchaku-soft-stbl}
\end{figure*}
\begin{figure*}[!ht]
\centering
\includegraphics[width=1.0\linewidth]{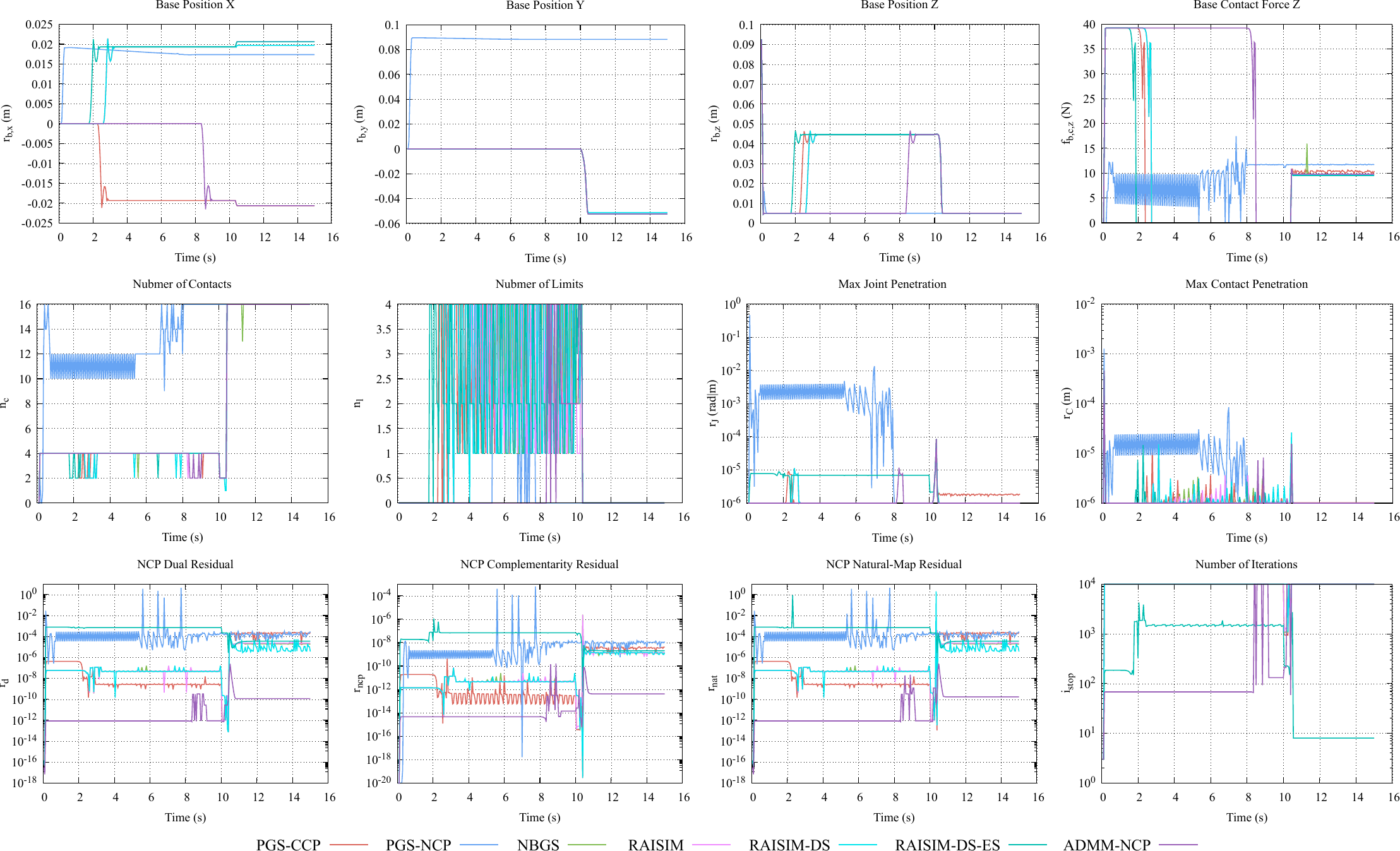}
\caption{\textit{Fourbar}: Summary of the second run that enables constraint stabilization.}
\label{fig:fourbar-hard-stbl}
\end{figure*}
\begin{figure*}[!ht]
\centering
\includegraphics[width=1.0\linewidth]{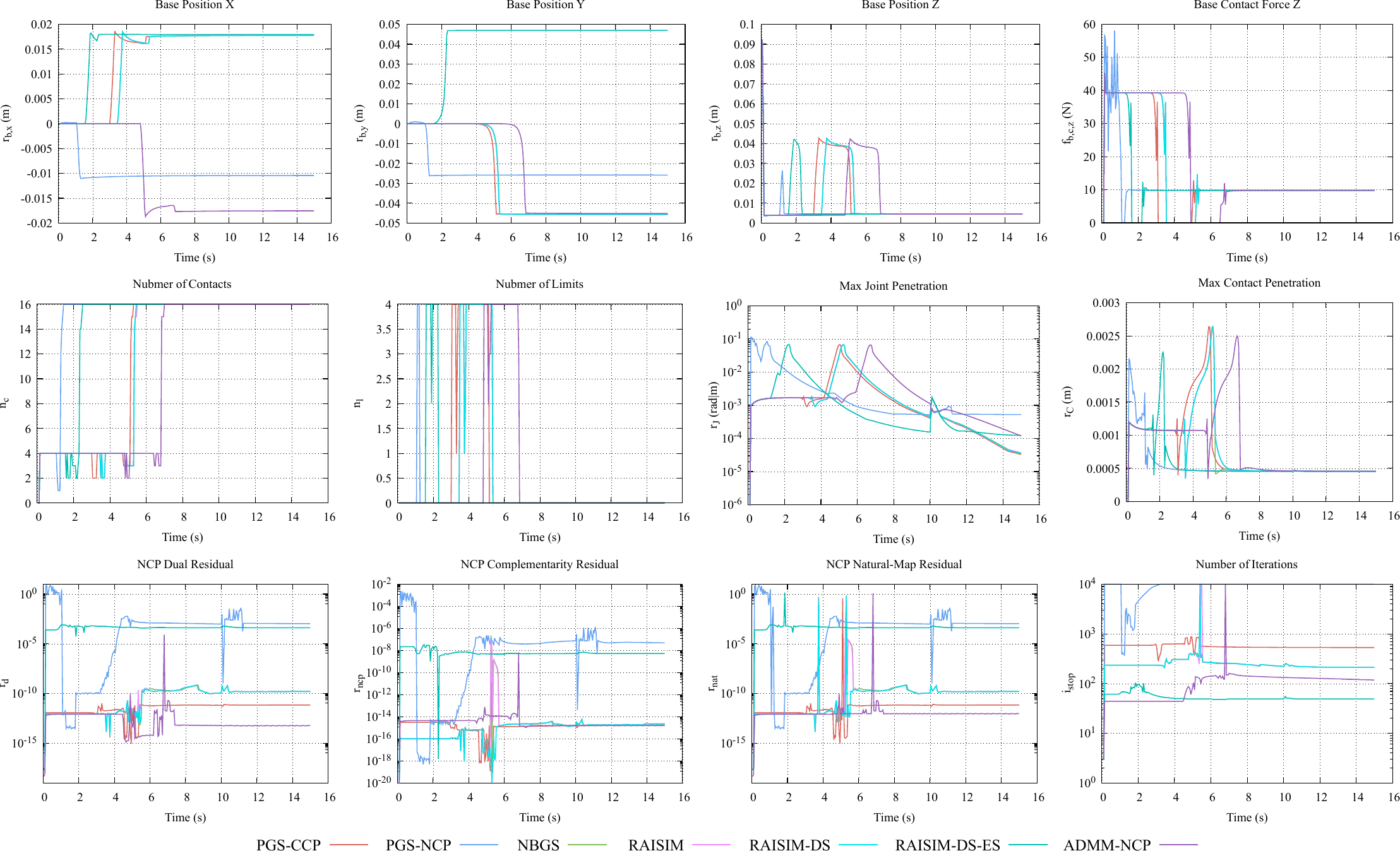}
\caption{\textit{Fourbar}: Summary of the third run that enables constraint softening.}
\label{fig:fourbar-soft}
\end{figure*}
\begin{figure*}[!ht]
\centering
\includegraphics[width=1.0\linewidth]{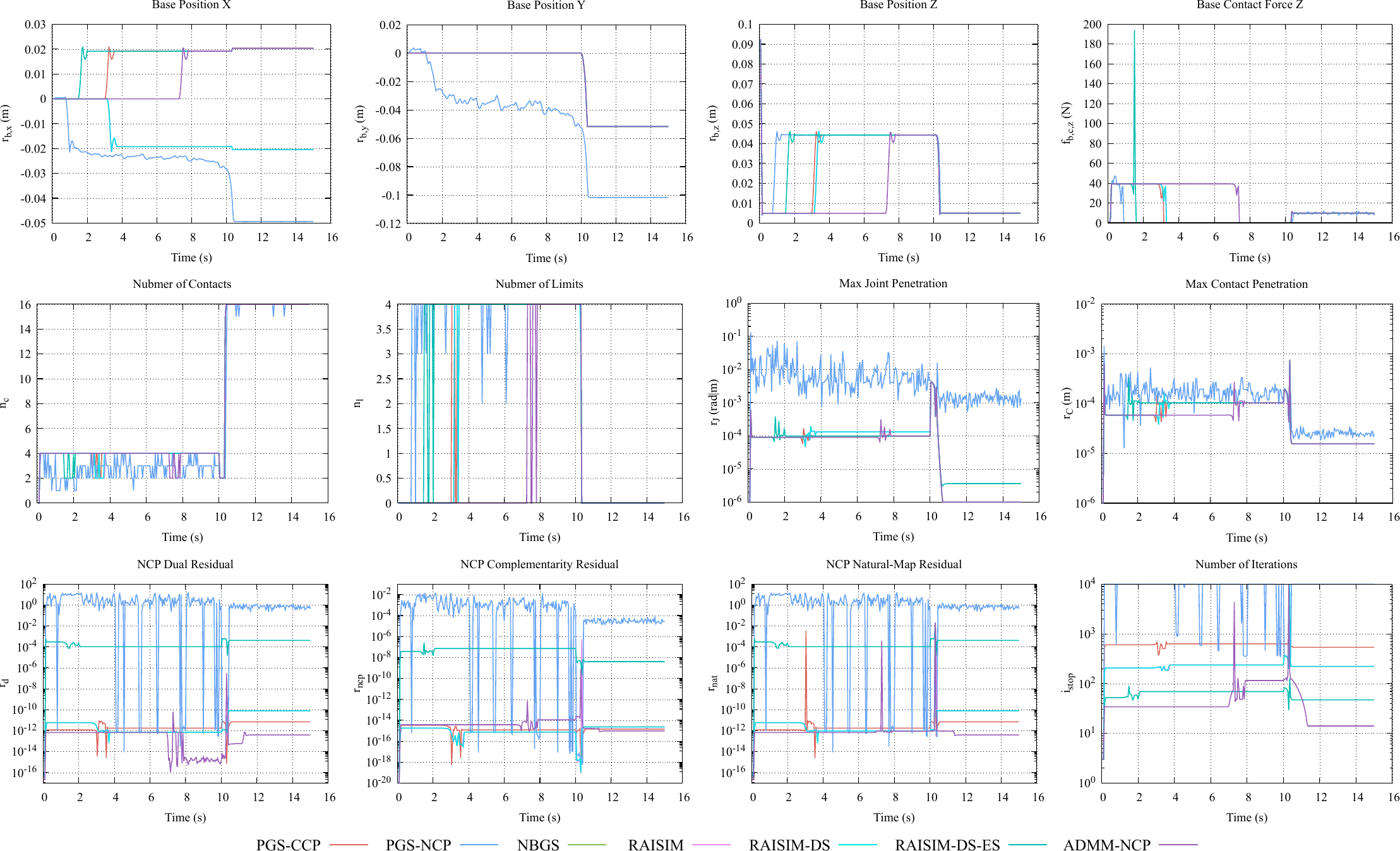}
\caption{\textit{Fourbar}: Summary of the fourth run that enables both constraint stabilization and softening.}
\label{fig:fourbar-soft-stbl}
\end{figure*}
\begin{figure*}[!ht]
\centering
\includegraphics[width=1.0\linewidth]{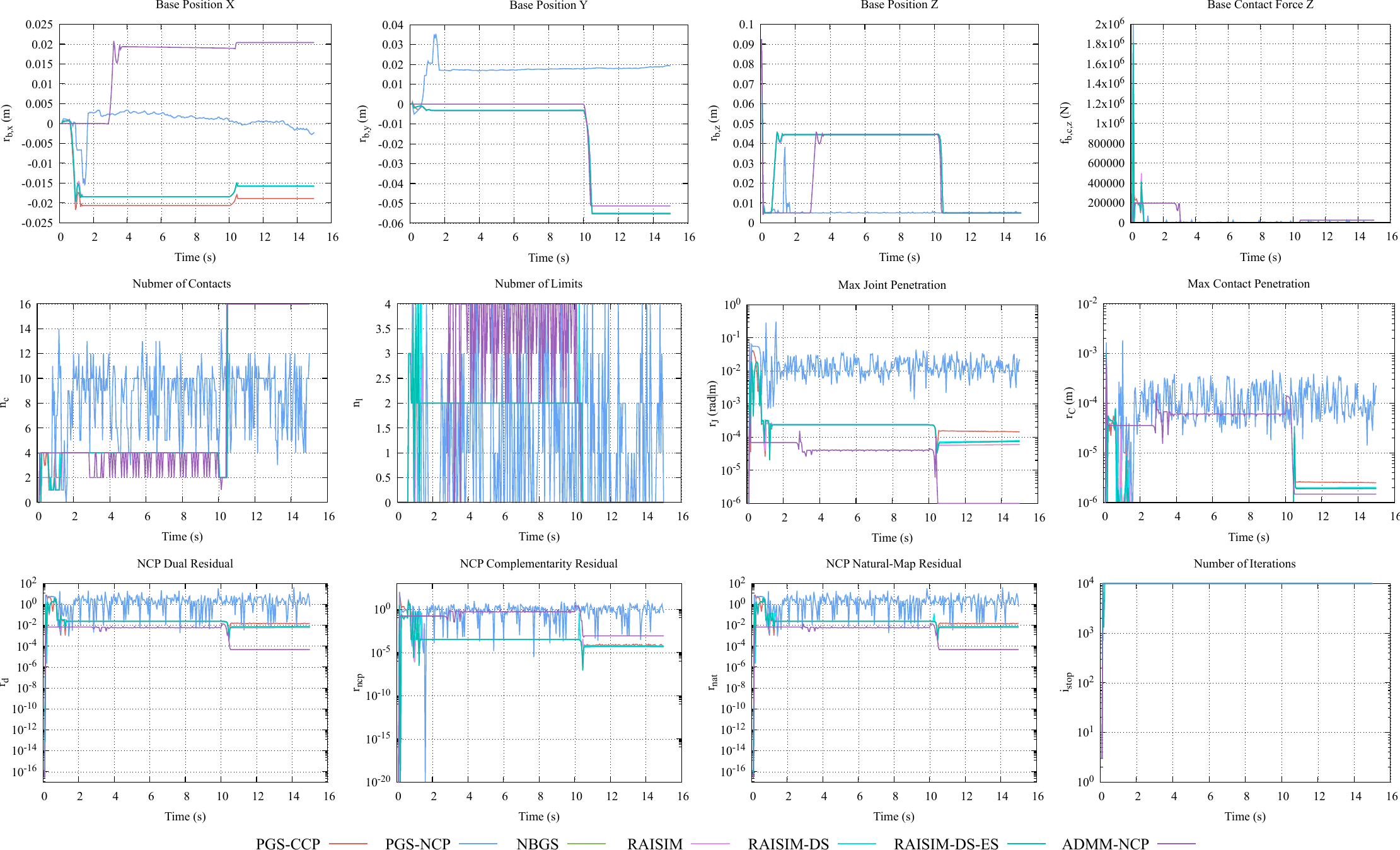}
\caption{\textit{Fourbar}: Summary of solving the un-augmented problem in the presence of a large mass-ratio.}
\label{fig:fourbar-hard-stbl-lmr}
\end{figure*}
\begin{figure*}[!ht]
\centering
\includegraphics[width=0.9\linewidth]{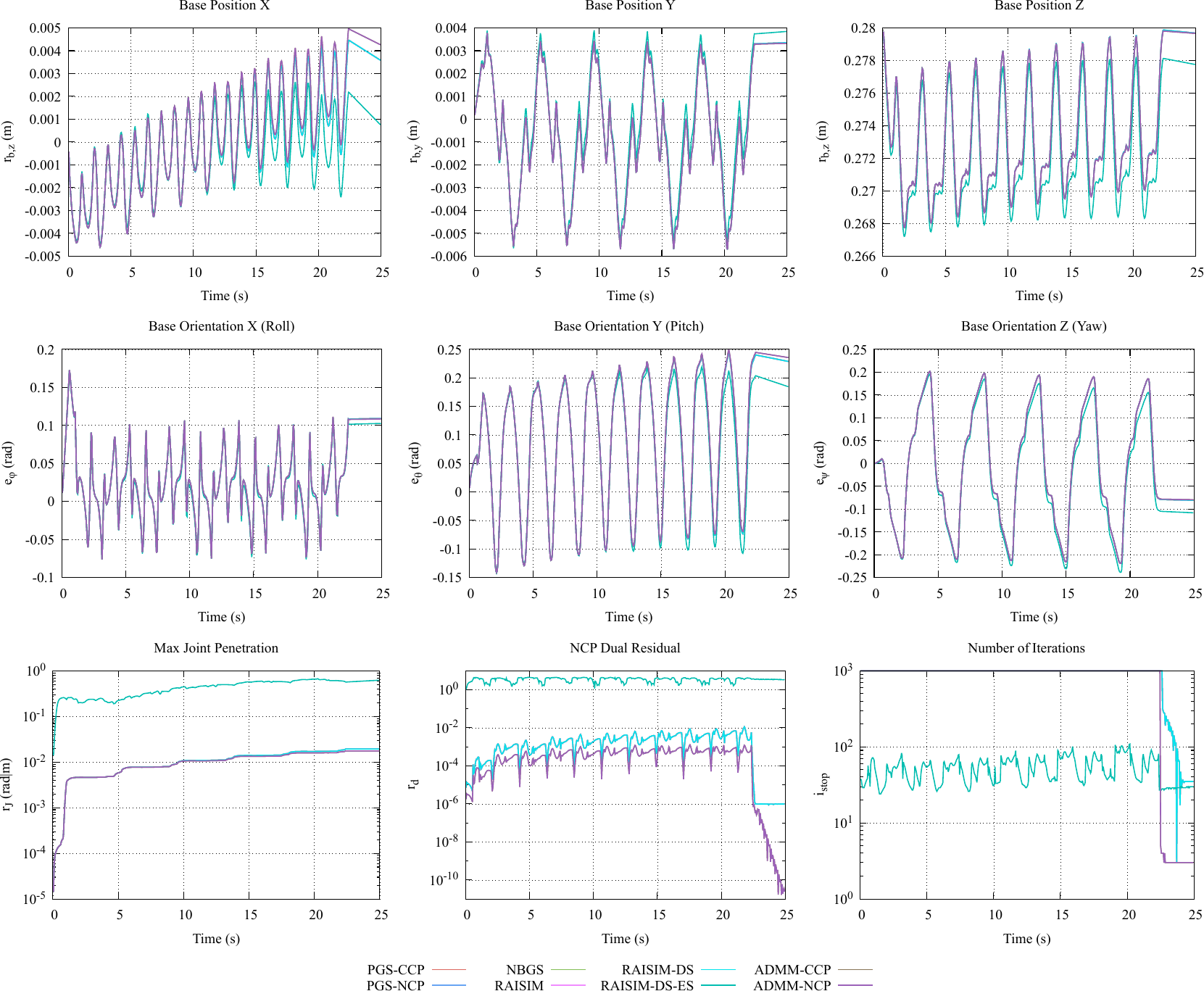}
\caption{\textit{Walker}: Time-series plots of solving the un-augmented problem with environment collisions and gravity disabled.}
\label{fig:walker-floating}
\end{figure*}
\begin{figure*}[!ht]
\centering
\includegraphics[width=1.0\linewidth]{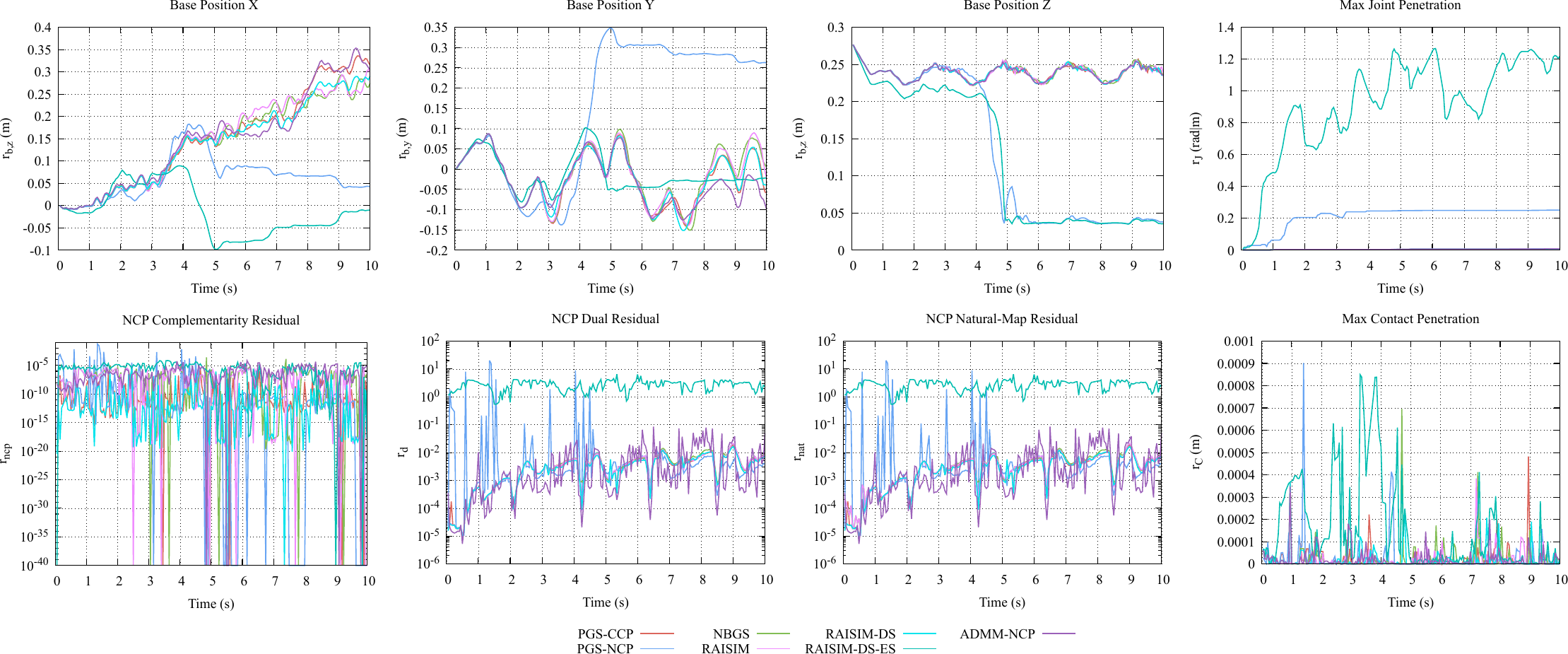}
\caption{\textit{Walker}: Time-series plots of solving the un-augmented problem with external collisions and gravity enabled.}
\label{fig:walker-hard}
\end{figure*}
\begin{figure*}[!ht]
\centering
\includegraphics[width=0.97\linewidth]{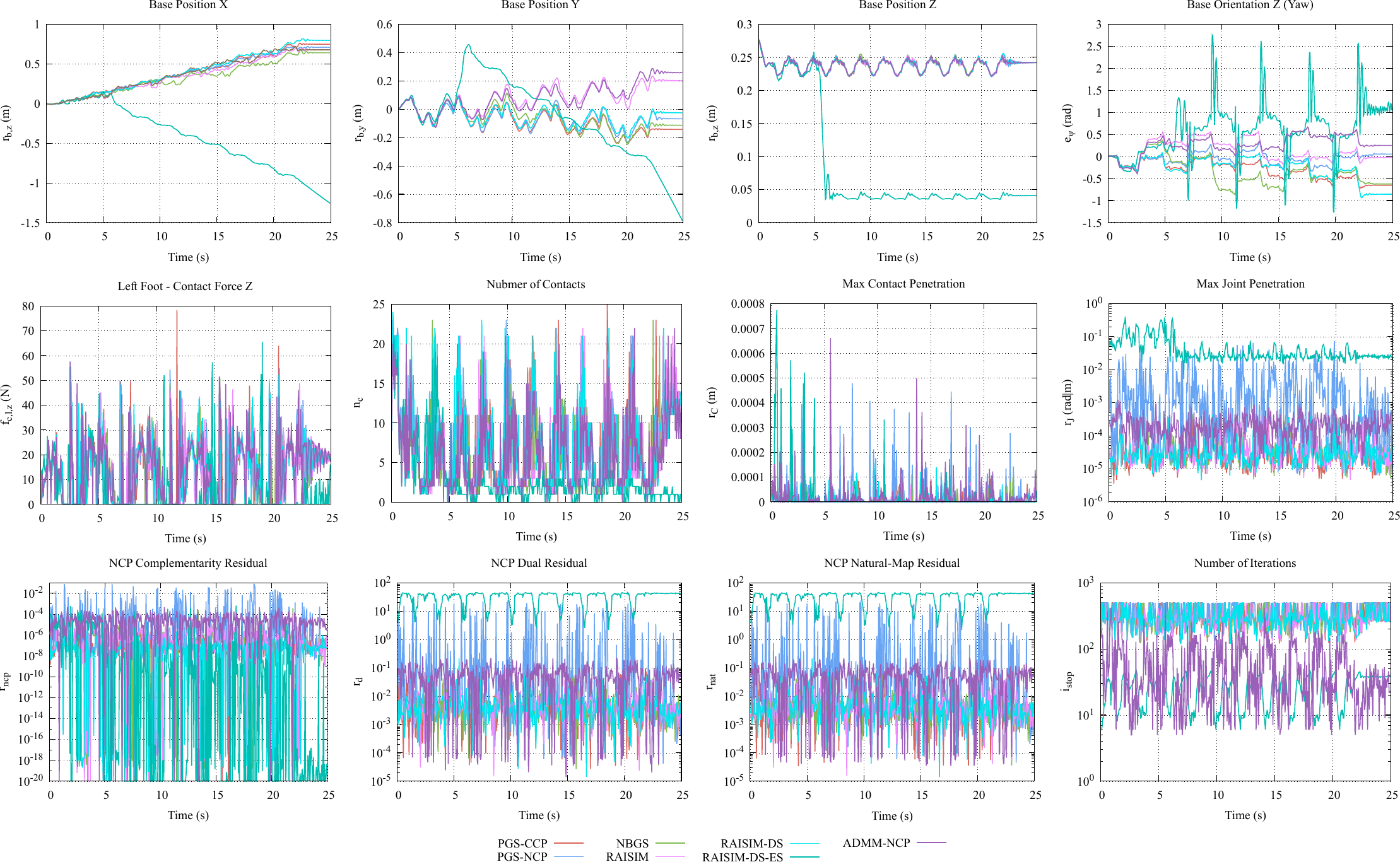}
\caption{\textit{Walker}: Time-series plots of solving the stabilized problem using high-throughput solver settings.}
\label{fig:walker-fast}
\end{figure*}
\begin{figure*}[!ht]
\centering
\includegraphics[width=0.97\linewidth]{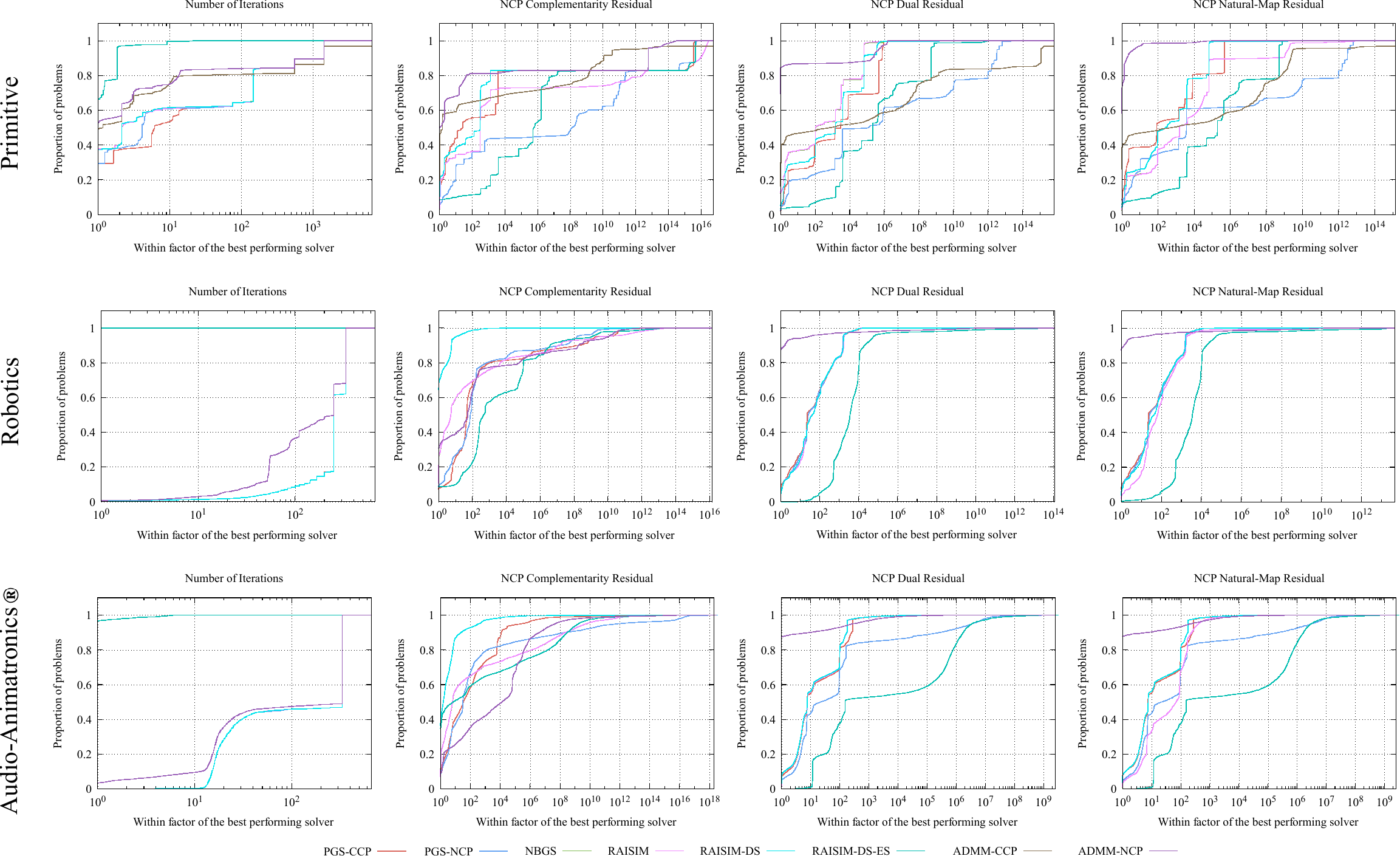}
\caption{Performance profiles w.r.t each benchmark system category.}
\label{fig:perfprof-problem-categories}
\end{figure*}
\begin{figure*}[!ht]
\centering
\includegraphics[width=0.97\linewidth]{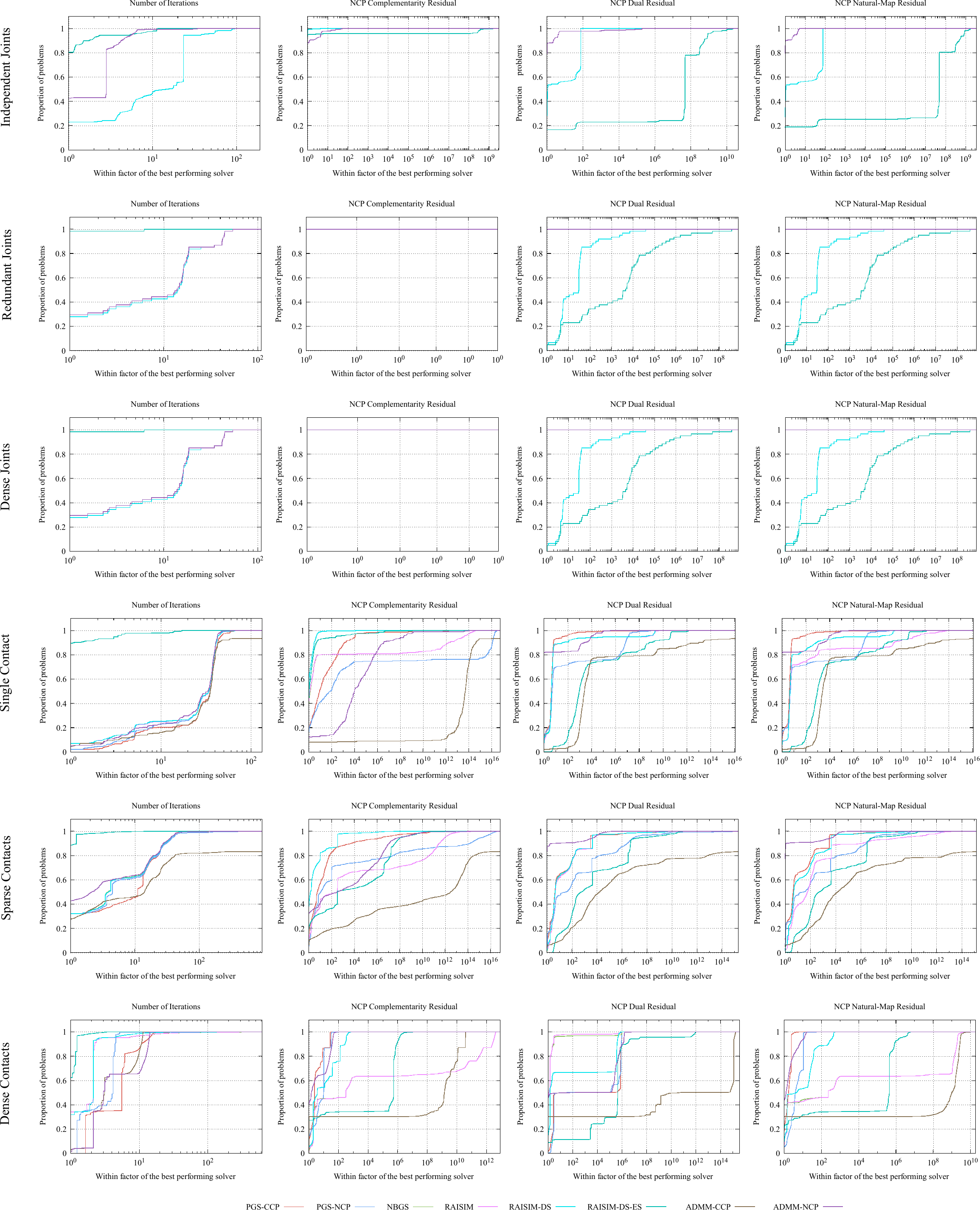}
\caption{Performance profiles w.r.t each benchmark sample category.}
\label{fig:perfprof-sample-categories}
\end{figure*}
%

\twocolumn
\clearpage
\ifCLASSOPTIONcaptionsoff
  \newpage
\fi

\bibliographystyle{IEEEtran}
\bibliography{main.bib}

\end{document}